\title{Efficient Adaptation of Reinforcement Learning Agents to Sudden Environmental Change}
\author{Jonathan Clifford Balloch}
\newtheorem{definition}{Definition} 
\newtheorem{theorem}{Theorem}[section]
  \newcommand\reduline{\bgroup\markoverwith{\textcolor{red}{\rule[-0.5ex]{2pt}{0.4pt}}}\ULon}
\def\blueuwave{\leavevmode \bgroup 
    \ifdim \ULdepth=\maxdimen \ULdepth 3.5\p@
    \else \advance\ULdepth2\p@ 
    \fi \markoverwith{\lower\ULdepth\hbox{\textcolor{blue}{\sixly \char58}}}\ULon}
\def\yellowdotuline{\leavevmode \bgroup 
    \UL@setULdepth
    \ifx\UL@on\UL@onin \advance\ULdepth2\p@\fi
    \markoverwith{\begingroup
       \lower\ULdepth\hbox{\kern.06em \textcolor{yellow}{.}\kern.04em}%
       \endgroup}%
    \ULon}
\def\greendashuline{\leavevmode \bgroup 
    \UL@setULdepth
    \ifx\UL@on\UL@onin \advance\ULdepth2\p@\fi
    \markoverwith{\kern.13em
    \vtop{\color{green}\kern\ULdepth \hrule width .3em}%
    \kern.13em}\ULon}
\begingroup\setlength{\fboxsep}{0pt}
\definecolor{rulecolor}{rgb}{1.0, 0.95, 0.95}
\newcommand{\jb}[1]{\textcolor{orange}{\small [Jonathan: #1]}}
\newcommand{\julia}[1]{\textcolor{green}{\small [Julia: #1]}}
\newcommand\Mark[1]{\textcolor{blue}{\small [Mark: #1]}}
\newcommand\MarkLeft[1]{\textcolor{blue}{$\leftarrow$Mark: #1}}
\begin{document}

\makeTitlePage{December}{2024}


\begin{approvalPage}{6}


\committeeMember{Dr. Mark O. Riedl (Advisor)}{School of Interactive Computing}{Georgia Institute of Technology}
\\
\committeeMember{Dr. Seth A. Hutchinson}{School of Interactive Computing}{Georgia Institute of Technology}
\\
\committeeMember{Dr. Harish Ravichandar}{School of Interactive Computing}{Georgia Institute of Technology}
\\
\committeeMember{Dr. Sehoon Ha}{School of Interactive Computing}{Georgia Institute of Technology}
\\ 
\committeeMember{Dr. Michael L. Littman}{Computer Science Department}{Brown University}

\end{approvalPage}
\makeEpigraph{Non progredi est regredi}{}
\makeDedication{
For my wife Lena, my precious daughter Mariana, my parents Susan and Hugh, my family, my dog WALL-E
You are my world, and without you I would be lost.}

\begin{frontmatter}
    
\begin{acknowledgments}

I would like to thank the members of my thesis committee for their help in preparation of this work -- my advisor Mark Riedl, without whom I would be utterly doomed; Harish Ravichandar, who has been a fount of insight and guidance starting in my second year as a postdoc and through all the changes I experienced; Seth Hutchinson, whose perspective is invaluable and with whom I have great discussions (when I can catch him!); Sehoon Ha, whose work I have had to admire only from afar until I was fortunate enough to convince him to be on this committee; and Michael Littman, who I met before all the rest way back in 2015 when I toured Brown as a prospective PhD student and who has always been an inspiration to me for his contributions in machine learning and especially reinforcement learning. Thank you all.

Special thanks are due to the friends and colleagues who made this work possible. Zhiyu Lin, you have been there in person, as a collaborator, and as someone just to bounce ideas off of for many if not most of the times I needed you. Julia Kim, you are my light of Earendil, guiding me and lighting dark places when all other lights go out; without your help I would have gotten lost many times over. Jessica Inman, Bob Wright, Becky Peng, Upol Ehsan, and Spencer Frazier, thank you all for being such supportive collaborators. James Smith, Andrew Silva, and Cusuh Han, I couldn't ask for better friends, collaborators, and sounding boards; thank you for putting up with me. To everyone else I work with in EI+HCAI Lab, and to those in Irfan Essa's EYE Lab, Sonia Chernova's RAIL Lab, and RoboGrads: one hundred times thank you. You are the community that made this possible for me. 


\end{acknowledgments}

    \makeTOC
    \makeListOfTables
    \makeListOfFigures
\newacronym{starlabs}{STAR Labs}{Scientific and Technological Advanced Research Laboratories}
\newacronym{uv}{UV}{ultraviolet}
\newacronym{s}{s}{state}
\makeListOfAcronyms

\begin{summary}

Real-world autonomous decision-making systems, from robots to recommendation engines, must operate in environments that change over time. While deep reinforcement learning (RL) has shown an impressive ability to learn optimal policies in stationary environments, most methods are data intensive and assume a world that does not change between training and test time. As a result, conventional RL methods struggle to adapt when conditions change. This poses a fundamental challenge: how can RL agents efficiently adapt their behavior when encountering novel environmental changes during deployment without catastrophically forgetting useful prior knowledge? This dissertation demonstrates that efficient online adaptation requires two key capabilities: (1) prioritized exploration and sampling strategies that help identify and learn from relevant experiences, and (2) selective preservation of prior knowledge through structured representations that can be updated without disruption to reusable components.

We first establish a formal framework for studying online test-time adaptation (OTTA) in RL by introducing the Novelty Minigrid (NovGrid) test environment and metrics to systematically assess adaptation performance and analyze how different adaptation solutions handle various types of environmental change. We then begin our discussion of solutions to OTTA problems by investigating the impacts of different exploration and sampling strategies on adaptation. Through a comprehensive evaluation of model-free exploration strategies, we show that methods emphasizing stochasticity and explicit diversity are most effective for adaptation across different novelty types. Building on these insights, we develop the Dual Objective Priority Sampling (DOPS) strategy. DOPS improves model-based RL adaptation by training policy and world models on different subsets of data, each prioritized according to the different learning objectives. By balancing the trade-off between distribution overlap and mismatched objectives, DOPS achieves more sample-efficient adaptation while maintaining stable performance. 

To improve adaptation efficiency with knowledge preservation, we develop WorldCloner, a neurosymbolic approach that enables rapid world model updates while preserving useful prior knowledge through a symbolic rule-based representation. 
WorldCloner demonstrates how structured knowledge representation can dramatically improve adaptation efficiency compared to traditional neural approaches. 
Finally, we present Concept Bottleneck World Models (CBWMs), which extend these insights into an end-to-end differentiable architecture. 
By grounding learned representations in human-interpretable concepts, CBWMs enable selective preservation of unchanged knowledge during adaptation while maintaining competitive task performance. 
CBWMs provide a practical path toward interpretable and efficient adaptation in neural RL systems.

Together, these contributions advance both the theoretical understanding and practical capabilities of adaptive RL systems. 
By showing how careful exploration and structured knowledge preservation can enable efficient online adaptation, this work helps bridge the gap between current RL systems and the demands of real-world applications where change is constant and adaptation essential.

\end{summary}
\end{frontmatter}

\begin{thesisbody}
    
\chapter{Introduction }
\label{chapt:intro}


People often imagine a future in which intelligent, autonomous agents such as robots can help us throughout our daily lives, not just with constrained, isolated tasks. 
In the last decade, deep reinforcement learning (RL) has been used to develop increasingly capable agents for solving complex decision-making tasks such as board games~\cite{silver2017alphagozero,schrittwieser2020muzero}, video games~\cite{vinyals2019grandmaster,berner2019dota,badia2020agent57}, recommender systems~\cite{afsar2022reinforcement}, industrial HVAC control~\cite{evans2016deepmind,chen2019gnu}, and tokamak control in nuclear fusion research~\cite{degrave2022magnetic}. 
In many of these applications, RL agents outperform planning-based agents and classic control agents, and in some cases even outperform humans reliably~\cite{badia2020agent57}. 
As such, RL shows great promise for developing intelligent agents that interact with the world. 

Many real world problems occur in ``open-world'' environments, where dynamics, objects, and the behavior of other agents can change in unexpected ways. 
In an imaginary future full of helpful intelligent agents, the decision making policies of these agents need to be able to accommodate these changes just as humans do.
Consider the task of commuting from home to work. 
People often take the same general route to work everyday;
after some initial practice and guidance, we can be confident that we are on the optimal route. 
However, what if something changes our typical route, such as the start of a new construction project? 
Without outside help, if the commuter never tried any other routes to work, could not remember the other routes, or could not differentiate the disrupted and unaffected parts of the route, adapting to the change would be highly random and inefficient. 
What if an intersection that previously took many minutes or light cycles to clear has become more efficient? 
If the commuter never considers other routes to work and never adapts the route, commuting will be significantly less efficient, wasting time and energy. 


In spite of its recent successes, deep RL solutions remain, like many neural network-based solutions, brittle to shifts in the distribution of inputs and outputs. 
After the widely successful Starcraft RL agent AlphaStar played against humans only a handful of times, the human player MaNa was able to find a strategy for which the AlphaStar bot had no answer~\cite{evans2016deepmind}. 
Similarly, researchers were able to find a strategy that beat KataGo, an open source reimplementation of AlphaGo, 14 out of 15 times~\cite{wang2023adversarial}. 
In neither of these two cases did the RL agent learn to adapt to the new strategies. 
This is not unusual: for most deployed models trained with deep learning, deployment is considered ``test time,'' not ``training time,'' meaning the model is prevented from adapting during deployment. 
As such, neither of these agents was designed explicitly to respond to novel situations by learning during deployment.


In modeling an environment as a stochastic process such, as an Markov decision process (MDP), these novel changes in an environment fall broadly into the umbrella of non-stationary processes.~\cite{littman1991adaptation,sutton2018reinforcement} 
As this dissertation discusses in greater detail in Chapter~\ref{chapt:background}, process non-stationarity can take many forms, and designing learning agents to respond to all of the ways in which an environment can change is often intractable. 
As a result, theoretical solutions for responding to any non-stationary phenomena often require simplifying assumptions about the environment that make them difficult to apply to complex, real-world scenarios.
In an effort to reduce the scope of non-stationary phenomena in this dissertation to study change of more complex systems like robots,  we constrain set of non-stationary environment changes considered to two environments related by a \textit{novelty}:

\begin{definition} 
A \textbf{novelty} is a sudden, unanticipatable, previously unseen change in an interactive environment that represents the transformation from a source domain, task, or environment to a target domain, task, or environment. 
\end{definition}

Fast and sample-efficient response to novel environment changes is always desirable and can be essential, for example, when human safety is involved~\cite{REZAPOUR2021312}. 
All intelligent agents, whether artificial or biological, are capable of responding to change by being \textit{robust}, \textit{adaptable}, or a combination of both. 
Robustness-based solutions are attractive because they are systematically simple to implement. 
Adding robustness to an agent can often be as simple as exposing the agent to a wider variety of possible scenarios during training~\cite{tobin2017domain}. 
By preparing for many variations of a scenario in advance, even if the agent experiences a change in deployment that it never saw during training, it will generally be less sensitive to change. 
However, the changes an agent can learn to be robust to is often limited. 
In complex decision making applications and environments, an intractable amount of data, time, and compute may be required for robustness good enough to make adaptation unnecessary. 
Further, while robustness allows agents to handle changing circumstances with innate behavioral capabilities that apply to more than the original task,  adaptation allows agents to make permanent changes in its behavior that better match the changed environment. 

Adaptation-based solutions are attractive because they are independent of the type of change that may occur. 
This explains why animals such as humans have evolved to be good at adapting to novel circumstances \cite{braun2009learning}. 
What's more, behavioral science and neuroscience show that operant conditioning and reinforcement are critical for adaptation in biological intelligence~\cite{berridge2000reward,reale2001temperament}, suggesting that RL is a potential path forward for adapting AI agents to change. 
The simplest way to adapt models to changing data, as seen in the use of production recommender systems~\cite{45530,VERACHTERT2023100455} and other RL research domains, is often to simply retrain the agent offline once performance has sufficiently dropped. 
These systems can be retrained \textit{tabula rasa}, or ``from scratch,'' but it is desirable to \textit{adapt} the prior model by using it to initialize the new training process, and it is necessary if the retraining data is limited. 
However, even in this simple scenario, adaptation is complex in practice, potentially requiring knowledge about how much data is changing and how much performance drop should trigger retraining. 

Another issue with offline retraining is that it is not always the case that an agent can be taken offline for training.
Taking deployed agents, such as robots, offline for a training update is not always possible.
Moreover, offline training implies that sufficient training data representing the post-novelty environment has already been gathered in advance of training.
If offline data had been collected, it implies that a deployed system was operating suboptimally in a novel environment during the course of that data acquisition, which is undesirable and can be dangerous.
As with intelligent biological agents, online adaptation---also referred to as during ``deployment'' or ``test-time'' adaptation---can be an efficient way to adapt to novelty while interacting with the new environment.
However, adapting online adds additional challenges, chief among them is \textit{catastrophic inference}~\cite{mccloskey1989catastrophic}, also known as catastrophic forgetting.
Catastrophic inference is the complete loss of performance that occurs when a trained parameterized model is trained on data distributed differently than its original training data, such as new training data limited only the changed environment.
Catastrophic inference occurs in offline adaptation as well; however, in online adaptation there is a greater need for the agent to reduce or avoid it altogether as the performance drop from an interacting agent can be more unsafe.
As a result of these complexities, adapting task-specific learned agents has great potential, but in practice is very challenging and sample-inefficient.

\section{Thesis Statement}


This dissertation focuses on demonstrating that adapting models to changing worlds online with reinforcement learning requires evaluating and improving the way RL algorithms approach exploration and sampling, and the way the knowledge from model priors are represented and adapted. 
Thus, this dissertation investigates the following thesis statement: 

\begin{quote}
    \noindent\textbf{
    To efficiently adapt online to changes in the environment, reinforcement learning agents must (1) use exploration and sampling strategies that prioritize task-agnostic interactions and learning data to reduce distribution shift, and (2) identify and selectively preserve reusable prior knowledge in symbolic and learned representations.
    }
\end{quote}

\noindent This thesis statement can be more effectively investigated by further decomposing it into two subclaims:
\begin{enumerate}
    \item If an agent can identify the nature of environmental change through exploration, then the agent is more likely to rapidly adapt to the new optimal goal trajectory. 
    \item If an agent can distinguish which parts of its representations are consistent before the environment changes, then the agent can adapt more efficiently than without prior knowledge by limiting the amount representations change during adaptation.
\end{enumerate}


The work of this dissertation validates these two subclaims by researching solutions to two corresponding subproblems. 
The first subproblem focuses on framing exploration in RL toward discovering specific information important to efficient adaptation.
Exploration in conventional RL problems with a stationary MDP largely serves as a source of diversification in sampling, as the agent's greedy pursuit of reward is how the environment is sampled otherwise. 
A discussion of conventional uses of exploration is covered in Section~\ref{sec:background:rl:sample}.
This dissertation demonstrates that exploration can benefit efficient adaptation RL in two major ways: 
(1)~exploration in model-free RL can improve an agent's ability to adapt by incentivizing diversity and stochasticity, which is discussed in Chapter~\ref{chapt:transx}, and (2)~exploration in model-based RL can improve an agent's ability to transfer by sampling the data appropriate to the different learning objectives of the policy and world model, which is discussed in Chapter~\ref{chapt:dops}.

The second subproblem focuses on selective reuse of prior knowledge for transfer learning. 
In deep RL, prior knowledge is not inherently preserved when adapting a prior policy or model to a new task. 
Moreover, not all prior knowledge \textit{should} be preserved: ``incorrect'' prior concepts related to the novelty need to change while preserving unrelated concepts. 
The forgetting that occurs in neural networks' entangled latent knowledge is a source of great inefficiency. 
Most of the evidence for adaptation behavior in humans shows that knowledge reuse is critical to success~\cite{reale2001temperament,braun2009learning}. 
Focusing on model-based reinforcement learning where the ``world model'' is trained to represent the environment dynamics, this dissertation examines two knowledge-preserving representations for efficient adaptation: (1) preserving knowledge in the world model by representing knowledge symbolically, and (2) enforcing disentanglement of knowledge by structuring and grounding a latent bottleneck in the world model. 
As discussed in Chapter~\ref{chapt:knowledge}, knowledge can be structured in a way that makes it well suited to preservation without necessarily trading off task performance. 
Finally in Chapter~\ref{chapt:cbwm}, the insights of Chapter~\ref{chapt:knowledge} are applied to the development of an end-to-end differentiable deep RL world model, where a bottleneck architecture enforces disentanglement of world model knowledge. 
Our results show that this disentangled world model can better facilitate knowledge reuse with little to no impact on overall performance, and is a means of studying how much knowledge forgetting occurs in adaptation.

\subsection{Outline}

I will begin by presenting the background information necessary to situate the contributions of this dissertation in prior work in Chapter~\ref{chapt:background}. 
I will then discuss my work in formalizing and evaluating the study of online test time adaptation to novelty in sequential decision making problems in Chapter~\ref{chapt:novgrid}, which serves as a foundation for the remaining chapters of the dissertation. 

Following this, the core contributions of the dissertation are split according to the described subproblems of exploration (Chapters~\ref{chapt:transx} and \ref{chapt:dops}) and knowledge preservation (Chapters~\ref{chapt:knowledge} and \ref{chapt:cbwm}).
Starting with the investigation of the impacts of exploration, Chapter~\ref{chapt:transx} describes the work comparing the impact of different characteristics of exploration on online task transfer problems, and how the effects of these characteristics vary depending on the type of novelty and nature of the environment. 
Chapter~\ref{chapt:dops} extends the findings of Chapter~\ref{chapt:transx} by examining the challenges of prioritized sampling of observations in model-based RL, and proposes a solution for improving the adaptation efficiency of  model-based agents. 

Transitioning to the investigation of preserving unaffected knowledge to improve adaptive efficiency, Chapter~\ref{chapt:knowledge} describes the work on enabling a world model to only change necessary knowledge in the face of novelty by implementing a neuro-symbolic model-based RL approach, where the policy is a neural network implementation while the world model is represented as a rules-based induction model. 
Chapter~\ref{chapt:cbwm} then describes the work on representing knowledge as a supervised ``context-bottleneck'' through which decision making gradients must pass. 
By using a context bottleneck enforced knowledge disentanglement, the results allow us to quantify how much knowledge is preserved during adaptation. 

Finally in Chapter~\ref{chapt:conclusion}, I review the contributions and impact of the research efforts outlined in this dissertation and suggest several promising future research directions that seem most exciting in light of the dissertation's contributions.

\chapter{Situating the Work}\label{chapt:background}

This chapter provides the technical background for understanding the prior and proposed work of this thesis, and an overview of related work investigating similar problems and solutions to this work. 
Specifically, this chapter provides an overview of both foundational and state-of-the-art reinforcement learning, and provides additional details and context for the specific aspects of reinforcement learning that this dissertation examines. 
Additionally, this chapter will provide a technical foundation of transfer learning in deep neural networks, the subdomain of test time adaptation, and the specific challenges associated with test time adaptation when learning from data in non-stationary sequential decision making environments. 

\section{Reinforcement Learning}\label{sec:background:rl}

Finding optimal solutions to sequential decision making problems interactively is fundamental to the development of intelligent autonomous agents. 
Unlike non-interactive machine learning where optimal solutions are learned by trying to predict labels (supervised learning) or features (unsupervised learning) of a dataset, reinforcement learning finds optimal solutions by interacting with an environment and maximizing expected future reward in an environment where more reward is associated with better task performance. 
This makes reinforcement learning both powerful and broadly applicable. 
However, successful application of reinforcement learning to a specific problem depends on implementation details including whether one learns a model of the environment or of the policy directly, whether the policy being updated is the same policy interacting with the environment, which interactions are prioritized and which samples are used for learning, whether learning occurs online or offline, and how to trade-off using the current solution with searching for better, yet undiscovered solutions. 

\subsection{Fundamentals of Reinforcement Learning}
\label{sec:background:rl:fund}

Reinforcement learning fundamentally assumes that there exists a repeated interaction between \textit{agents} and \textit{the environment}, and a \textit{task} associated with the agent maximizing a specific reward in that environment.

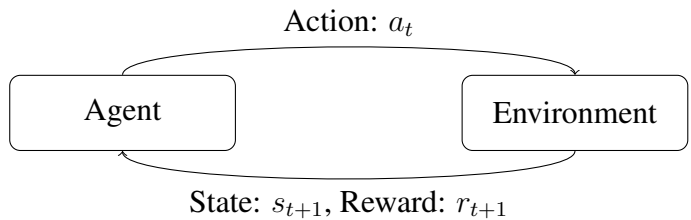
\begin{figure}[ht]
\centering
\begin{tikzpicture}[node distance=3cm, auto]
    \node [draw, rounded corners, minimum width=3cm, minimum height=1cm] (agent) {Agent};
    \node [draw, rounded corners, minimum width=3cm, minimum height=1cm, right=of agent] (env) {Environment};
    \draw [->] (agent) .. controls +(up:1cm) and +(up:1cm) ..  node[above] {Action: $a_t$} (env);
    \draw [->] (env) .. controls +(down:1cm) and +(down:1cm) .. node[below] {State: $s_{t+1}$, Reward: $r_{t+1}$} (agent);
\end{tikzpicture}
\caption{The agent-environment interaction that is fundamental to reinforcement learning. }
\label{fig:rl_loop}
\end{figure}

Reinforcement learning typically assumes that the learning problem, also referred to as the environment, is modeled as a specific type of stochastic process called a Markov decision process (MDP). 
An MDP is represented as a 4-tuple.
\begin{equation}
    M=\langle \mathcal{S}, \mathcal{A}, \mathcal{R}, \mathcal{P}\rangle 
\end{equation}
\label{eq:mdp}

These quantities are defined as follows:
\begin{itemize}
    \item $\mathcal{S}$ is the space of environment \textit{states}.
    \item $\mathcal{A}$ is the space of \textit{actions} that an agent can take. 
    \item $\mathcal{R}: \mathcal{S} \times \mathcal{A} \rightarrow \mathbb{R}$ is the \textit{reward function} that maps states, and sometimes actions, to a scalar reward that RL solutions seek to maximize. 
    \item  $\mathcal{P}: \mathcal{S} \times \mathcal{A} \times \mathcal{S} \rightarrow [0,1]$ is the \textit{transition function}, also referred to as the dynamics model or world model, and defines the distribution of next states conditioned on the current state and selected action. 
\end{itemize}
Often an alternative definition of an MDP is the 5-tuple $M=\langle \mathcal{S}, \mathcal{A}, \mathcal{R}, \mathcal{P}, \gamma\rangle$ to include $\gamma \in [0,1)$, which is the discount factor that determines the importance of future rewards. 
We exclude $\gamma$ for the remainder of this dissertation because, while it is necessary for some aspects of theoretical guarantees of reinforcement learning approaches, its value is in practice often tuned depending on the solution. 

The key assumptions in the MDP formulation are that the environment is fully observable (the agent has complete information about the current state), the environment is Markovian (the next state depends only on the current state and action, not on past states or actions), and that the environment is \textit{stationary} meaning that the transition and reward functions do not change over time. 
While these assumptions may seem restrictive, many real-world problems can be approximated as MDPs and still solved by reinforcement learning algorithms, and many more can be approximated by variants on MDPs where some of these assumptions can be relaxed, such as the observability assumption with partially-observable MDPs (POMDPs). 
This work focuses on investigating learning problems where the stationarity assumption is relaxed and the MDP is non-stationary due to a large, unexpected change in the environment.

The goal in RL is to find an optimal \textit{policy} $\pi^*: \mathcal{S} \rightarrow \mathcal{A}$ that, given any state $s_t \in \mathcal{S}$ at time $t$, can select the action that maximizes the likelihood of discounted cumulative reward, also known as the \textit{return}, $G_t$:
\begin{align*}
    \pi^* &= \arg\max_{\pi} \mathbb{E}\left[\sum_{t=0}^\infty G_t \mid s_0, \pi\right] \\
    G_t &= r_{t+1} + \gamma r_{t+2} + \dots = \sum_{k=0}^{\infty} \gamma^k r_{t+k+1} \text{ where } r_t = \mathcal{R}(s_t,a_t). 
\end{align*}
\textit{Credit assignment}, the problem of figuring out how much certain states and actions contribute to the final reward, is one of the most challenging aspects of learning models for sequential decision making. 
Key to addressing credit assignment in reinforcement learning is defining how valuable it is to visit or take action in a state, defined respectively by the state value function $V_{\pi}(s_t)$---or simply \textit{value function}---and state-action value function---or simply \textit{Q-function} $Q_{\pi}(s_t, a_t)$:
\begin{align*}
    V_{\pi}(s_t) &= \mathbb{E}_{\pi}[G_t \vert s_t \in \mathcal{S}] \\
    Q_{\pi}(s_t, a_t) &= \mathbb{E}_{\pi}[G_t \vert s_t \in \mathcal{S}, a_t \in \mathcal{A}].
\end{align*}
The policy, value function, and Q-function quantities are all related to each other through the \textit{Bellman Expectation Equations}, which describe the recursive functions by which these two value quantities update with respect to discounted future rewards:
\begin{align*}
V_{\pi}(s) &= \sum_{a \in \mathcal{A}} \pi(a \vert s) Q_{\pi}(s, a) \\
Q_{\pi}(s, a) &= R(s, a) + \gamma \sum_{s' \in \mathcal{S}} P_{ss'}^a V_{\pi} (s') 
\end{align*}

As a result, many RL methods optimize for the optimal policy $\pi^*$ indirectly by finding a value function associated with the optimal policy, $Q_{\pi}^*(s,a)$ or $V_{\pi}^*(s) = \max_{a \in \mathcal{A}} Q_{\pi}^*(s,a)$, and evaluating to find the best action at a given state (e.g. $\pi^*(s) = \underset{a \in \mathcal{A}}{\mathrm{argmax}} Q_{\pi}^*(s,a)$). 
If the transition function $\mathcal{P}$ is known, dynamic programming can be applied iteratively to find the optimal solution using techniques such as value or policy iteration~\cite{sutton2018reinforcement}.
However, in many cases, information about the dynamics and environment are unknown; in these situations, we can either directly learn the optimal policy or value function without a world model by approximating its impact on future rewards, or we can learn a world model. 
As we will describe in Section~\ref{sec:background:rl:mfmb}, the world model and policy can also both be learned.

\subsection{Model-free and Model-based Approaches to Reinforcement Learning}
\label{sec:background:rl:mfmb}

When the dynamics model of the environment is not known, the possible reinforcement learning approaches can be separated into two main categories: (1)~\textit{model-free} and (2)~\textit{model-based} reinforcement learning. 
In model-free RL, the optimal policy is learned directly by learning which action maximizes the expected return in the environment. 
In model-based reinforcement learning, the agent's experiences are primarily used to learn the world model; the learned world model can then be used to find the optimal policy, or the optimal policy can be learned concurrently with the help of world model learning.

\subsubsection{Model-Free Reinforcement Learning}
\label{sec:background:rl:mfmb:mf}

Q-learning is one of the most fundamental approaches to model-free reinforcement learning, and the most common ``value-based'' method, meaning and RL method that learn an optimal value function.
In Q-learning, the return is estimated with Temporal Difference (TD) learning~\cite{sutton1988td,tsitsiklis1996analysis}, where the update to the Q-function sum between the estimated next-state return $R_{t+1}+\gamma \max _{a^{\prime} \in \mathcal{A}} Q\left(s_{t+1}, a\right)$, also called the ``TD-target,'' and the estimated value $Q\left(s_t, a_t \right)$~\cite{sutton2018reinforcement}. 
Rewriting to isolate the mixing weight $\alpha$, more commonly referred to as the \textit{learning rate}, TD-update becomes the the change to the estimated value from the ``TD-error'' $\delta_{TD}$: 
\begin{equation}
Q(S_t, A_t) \leftarrow Q(S_t, A_t) + \alpha \underset{\mathrm{TD-Error}}{(R_{t+1} + \gamma \max_{a \in \mathcal{A}} Q(S_{t+1}, a) - Q(S_t, A_t))}
\label{eq:background:td}
\end{equation}
TD learning approximates the exact return with respect to existing value estimates in a practice known as ``bootstrapping''~\cite{sutton2018reinforcement}. 
Although more biased than learning from full episodes, bootstrapping can be far more sample-efficient, especially if rewards are sparse, and allows learning from individual transitions.   
One main advantage of estimating the return with TD learning is that it only requires a single transition, as opposed to other approximation methods, such as Monte Carlo methods, that requires a terminating sequence of many transitions (also called an ``episode''). 
Q-learning also has the benefit of being able to learn ``off-policy,'' meaning that it does not choose the next action according to the current policy, but instead according to the action with the maximum next Q-value. 
These two advantages mean that Q-learning is able to, and in fact benefits from, maintaining a large buffer prior environment transitions and update its Q-function on a random sample of this past experience. 
This sampling advantage and its trade-offs are explained in greater detail in Section~\ref{sec:background:rl:sample}.

Today, in \textit{deep} reinforcement learning, function approximators (most often a multilayer neural network composed of parameters $\theta$) are used to approximate the desired function or distribution. 
The parameters of the neural network are most often updated by minimizing a differentiable loss function with stochastic gradient descent (SGD). 
In a Deep Q-Learning Network (DQN)~\cite{mnih2015human}, a neural network is used to approximate the Q-function, $Q_* \approx Q(S, A ; \theta)$, with the loss function: 

\begin{equation}
\mathcal{L_Q}(\theta)=\mathbb{E}_{\left(s, a, r, s^{\prime}\right) \sim U(B)}\left[\left(r+\gamma \max _{a^{\prime}} \hat{Q}\left(s^{\prime}, a^{\prime}\right)-Q(s, a ; \theta)\right)^2\right]
\end{equation}

\noindent where $U(B)$ is a uniform distribution for sampling random transition tuples from an experience replay buffer $B$.  
Critically, $\hat{Q}(\cdot)$ is a frozen copy of the Q-function called the ``target network,'' which acts as a surrogate target Q-function and is updated less frequently to reduce correlations between the action selection and TD-target estimate Q-functions~\cite{mnih2015human}. 

One of the main downsides of value-based methods is that they mostly assume deterministic policies. 
The Q-learning update function specifically is also intractable to compute over continuous action spaces, and DQN is a classic example of the ``deadly triad'' in RL~\cite{sutton2018reinforcement,van2018deadly}, where a combination of off-policy updates, bootstrapping, and function approximation leads to unpredictable instability in the learning process.
``Policy-based'' model-free methods offer an alternative that, as the name suggests, approximate the optimal policy $\pi^*_{\theta}(a)$ instead as a distribution $\pi^*_{\theta}(s \vert a)$ for which a return-based objective $J$ can be defined. 
In the undiscounted case, this makes the objective function at time $t$ for updating parameters $\theta$:
\begin{equation*}
        J(\theta) = G_t = \sum_{k=0}^{\infty} r_{t+k+1} = \sum_{k=0}^\infty  V_{\pi_{\theta}}(s_t) \pi_{\theta}(a \vert s_{t+k+1})
\end{equation*}
Most policy-based methods are \textit{policy gradient} methods which depend on the \textit{policy gradient theorem}~\cite{williams1992simple}. 
The policy gradient theorem states that, for the on-policy case where both state and action distributions follow the policy being learned, the gradient of $J(\theta)$ over an entire episode can be approximated in expectation as:
\begin{align*}
\nabla_\theta J(\theta) 
&\propto \sum_{s \in \mathcal{S}} p_{\pi}(s) \sum_{a \in \mathcal{A}} Q_{\pi}(s, a) \nabla_\theta \pi_{\theta}(a \vert s)  \\
&= \mathbb{E}_\pi \left[Q_\pi(s, a) \nabla_\theta \ln \pi_\theta(a \vert s)\right]
\end{align*}
Here, $p_{\pi}(s)$ denotes the on-policy state distribution when following policy $\pi$. 

The REINFORCE algorithm~\cite{williams1992simple}, one of the original policy gradient methods, calculated the gradient by estimating $Q_\pi(s, a)$ using complete Monte Carlo rollouts. 
REINFORCE, however, can produce high-variance estimates the harms the efficiency of learning. 
The two most common ways to reduce this variance is to (1) subtract from the $Q_\pi(s, a)$ surrogate return a \textit{baseline}, a corrective term usually assigned to be the value function $V(s)$, and (2) to learn a function for the surrogate return $A_{\omega}(s,a) = Q_\pi(s, a)-V(s)$ with TD learning along with the policy. 
$A(s,a)$ is referred to as the \textit{advantage}, and this method of learning both the policy and the value function is called Actor-Critic, where the learned value function is called the critic.

In this dissertation, the model-free methods primarily considered are on-policy actor-critic policy gradient methods, as they naturally work well with deep neural network function approximators and in a wide variety of environment types. 
However, since on-policy actor-critic methods cannot make use of an experience replay buffer in the same way as DQN, off-policy policy gradient methods and exploration play a very important role in their success, as discussed further in Section~\ref{sec:background:rl:sample:expl}. 

There have been many innovations in the space of deep policy gradients. 
By having distributed parallel workers, algorithms like A2C, A3C~\cite{mnih2016asynchronous}, and IMPALA~\cite{espeholt2018impala} very effectively reduce the high variance of Monte Carlo rollouts by averaging the gradients from multiple ``worker'' actors to update to a central actor policy, and then after some time reset the worker policies back to the central policy. 
By changing a policy gradient method to off-policy---using a different sampling policy than the target policy---methods such as DDPG~\cite{lillicrap2016ddpg}, D4PG~\cite{barth2018distributed}, TD3~\cite{wu2017scalable}, and SAC~\cite{haarnoja2018soft}  greatly improve the sample efficiency and performance of continuous control tasks by taking advantage of the deterministic policy gradient theorem, using experience replay buffer, and ``soft'' updates that constrain how fast functions can change. 

The recent state of the art policy methods that continue to set the standard across the widest set of RL problems, however, are methods that pursue a similar idea of constraining the speed of change using the notion of a \textit{trust region}. 
For on-policy actor critic methods with multiple distributed workers, the notion of a trust region comes from the functional reality that even on-policy distributed RL methods with multiple workers experience some differences, or ``staleness,'' between the worker and an older central policy. 
As a result, trust region methods like TRPO~\cite{schulman2015trust} and ACKTR~\cite{wu2017scalable} show that distributed actor critic frameworks are greatly stabilized (and therefore reach improved policies) when the optimization function is constrained by limiting the amount the worker and central policies may diverge. 
PPO~\cite{schulman2017proximal} simplifies this further by simply bounding or ``clipping'' the ratio $m(\theta)$ of the ``old'' and current policies instead of calculating a formal divergence between the old and current policies:
\begin{equation}
J^{\text {clip}}(\theta)=\mathbb{E}\left[\min \left(m(\theta) \hat{A}_{\theta_{\text {old}}}(s, a), \operatorname{clip}(m(\theta), 1-\epsilon, 1+\epsilon) \hat{A}_{\theta_{\text {old }}}(s, a)\right)\right]
\end{equation}
PPO has the benefit of not requiring any complex gradient calculations and, as a result, is significantly simpler computationally while retaining most of the stability and performance benefits of methods like TRPO.
It is for all the reasons listed here that this dissertation makes regular use of PPO both as a baseline and a ``starting point'' to which our approaches are added.

\subsubsection{Model-based Reinforcement Learning}
\label{sec:background:rl:mfmb:mb}

In \textit{model-based} reinforcement learning (MBRL), the transition and reward functions $P$ and $R$ are modeled and used to plan agent action sequences to reach the goal and to develop a policy based on the effectiveness of those plans. 
So as to avoid confusion with the general term ``model,'' this document will refer to the approximation of the transition and reward functions together in model-based RL as a ``world model.''
Traditional model-based algorithms such as Dyna~\cite{sutton1991dyna} interact with the environment (often according to some fixed policy like random action) to gather data and use those data to learn the world model.
Then using this world model Dyna-like methods then execute a planning process to derive an optimal action, or---as in Dyna-Q~\cite{sutton1991dyna}---learn an optimal policy or Q-function with which the agent can solve the task. 
``Value-expansion'' methods such as AlphaZero~\cite{silver2017alphagozero} and MuZero~\cite{schrittwieser2020muzero} estimate the value of each state as in model-free approaches, and then using a learned or rule-based transition model simulate many outcomes from that state to estimate the value. 
Then after reaching a terminal state, the transition model-based value estimates are used to correct  policy and value models.  
Sampling-based methods like similar approaches in the optimal control literature, learn models with which they can continuously plan using a local optimization method like Model-Predictive Control to dictate the policy. Such methods include PETS~\cite{ecoffet2021first} and PlaNet~\cite{hafner2019planet}.

Model-building control systems~\cite{schmidhuber1991curious} are model-based techniques inspired by Dyna-style algorithms. 
They set themselves apart by learning a transition model through interactions with the environment, while simultaneously learning the policy~\cite{schmidhuber2015learning, ha2018recurrent, hafner2019dream}. 
This ``interleaved'' learning is distinct from learning in ``phases'' of first optimizing the world model, and then optimizing policy as in the original Dyna work. 
The Dreamer family of methods~\cite{hafner2019dream,hafner2020dv2,hafner2023dv3} is one such method; learning a world model based on the Recurrent State Space Machine (RSSM) architecture first proposed in PlaNet~\cite{hafner2019planet}, is one example of this approach.
As the work presented in this dissertation makes repeated use of Dreamer MBRL methods, we will go into greater detail on the specific Dreamer architecture and learning procedure. 

\subsubsection{Dreamer World Model}

The main purpose of the RSSM is to model the dynamics in an encoded latent space that has both stochastic and deterministic components.
The RSSM can be decomposed into three main components: (1) a deterministic \textit{recurrent trajectory model} $f_{\phi}$ that predicts a latent trajectory encoding $h_t$ given the prior action $a_{t-1}$, prior encoded stochastic state $z_{t-1}$, and prior deterministic trajectory encoding $h_{t-1}$, (2) an \textit{observation encoder model} $e_{\phi}$ that predicts a latent stochastic encoding $z_t$ of the current observation $x_t$ and the trajectory encoding $h_t$, and (3) a stochastic dynamics prediction model $g_{\phi}$ that predicts the stochastic state encoding $\hat{z}_{t}$ from solely the current deterministic trajectory encoding $h_t$. 

\begin{equation}
\begin{array}{c}
\text { RSSM }\left\{
\begin{array}{ll}
\text { (Deterministic) Recurrent trajectory model: } & h_t=f_\phi\left(h_{t-1}, z_{t-1}, a_{t-1}\right) \\
\text { (Stochastic) Observation encoder model: } & z_t \sim e_\phi\left(z_t \mid h_t, x_t\right) \\
\text { (Stochastic) Dynamics prediction model: } & \hat{z}_t \sim g_\phi\left(\hat{z}_t \mid h_t\right) \\
\end{array}\right . \\
\end{array}
\end{equation}

These RSSM components, combined with the observation prediction model, a reward prediction model, and a discount prediction model, form what Dreamer methods collectively refer to as the \textit{world model}.

\begin{equation}
\begin{array}{c}
\text { World Model }\left\{
\begin{array}{ll}
\text { RSSM: } & h_t, z_t=\mathcal{P}_\phi\left(h_{t-1}, z_{t-1}, a_{t-1}\right) \\
\text { Observation prediction model: } & \hat{x}_t \sim d_\phi\left(\hat{x}_t \mid s_t\right) \\
\text { Reward prediction model: } & \hat{r}_t \sim \mathcal{R}_\phi\left(\hat{r}_t \mid s_t\right) \\
\text { Discount prediction model: } & \hat{\gamma}_t \sim \Gamma_\phi\left(\hat{\gamma}_t \mid s_t\right) .
\end{array}\right . \\
\end{array}
\end{equation}
\label{eq:dreamer:wm}

\noindent Where the \textit{model state} $s_t = \{h_t, z_t\}$ is the concatenation of the deterministic and stochastic hidden states. All of these component models are implemented as neural networks and, as all of the world model's components are updated jointly, $\phi$ is used to describe their combined parameter. 
The trajectory recurrent model is implemented as an RNN such as a Gated Recurrent Unit (GRU)~\cite{cho2014properties}. 
For image inputs the observation encoder and prediction models are implemented as a Convolutional Neural Networks
(CNN)~\cite{lecun1995convolutional}, and a Multi-Layer Perceptrons (MLP) for non-images. 
Finally, all of the prediction models are implemented as MLPs.

While the trajectory recurrent model is deterministic, the other models sample their outputs by considering the outputs of their networks to parameterize multivariate distributions. 
The observation encoder and dynamics prediction model both parameterize a categorical distribution~\cite{hafner2020dv2}, the discount prediction model parameterizes a Bernoulli distribution, and the reward and observation prediction models parameterize a Gaussian distribution (with unit and parameterized variances, respectively).
Further implementation details for different efforts can be found in the Appendix.

For the world model learning process, image and reward prediction are supervised by ground truth data from the environments, and discount prediction is supervised by a fixed hyper parameter of 0 on terminal steps and 0.999 for non-terminal steps within an episode.
All components of the world model are optimized jointly using a weighted sum of the negative log-likelihood losses for image prediction, reward prediction, and discount prediction, and the Kullback–Leibler (KL) divergence between the dynamics prediction $g_{\phi}$ and observation encoder $e_{\phi}$ samples. 
\begin{equation}
\begin{aligned}
\mathcal{L}(\phi) =& \mathbb{E}_{e_\phi\left(z_{1: T} \mid a_{1: T}, x_{1, T}\right)}\left[ \sum_{t=1}^T \mathcal{L}_{NLL}(\phi) + \beta_{WM} \underbrace{ \mathbb{D}_{\text{KL}}\left[e_\phi (z_t \mid h_t, x_t) \| g_\phi(z_t \mid h_t)\right]}_{ \text {KL loss }}\right] \\
\mathcal{L}_{NLL}(\phi) =& - \left( \underbrace{\ln p_\phi\left(x_t \mid h_t, z_t\right)}_{\text { observation prediction }} + \underbrace{\ln p_\phi\left(r_t \mid h_t, z_t\right)}_{\text { reward prediction }} + \underbrace{\ln p_\phi\left(\gamma_t \mid h_t, z_t\right)}_{\text { discount prediction }} \right)
\end{aligned}
\end{equation}
\label{eq:dreamer:wm_loss}

In the variants of Dreamer including and following DreamerV2~\cite{hafner2020dv2}, the Gaussian stochastic latent is replaced with a categorical latent and learned using approximate ``straight-through gradients''~\cite{bengio2013estimating}. 


\noindent In practice, the expectation is approximated as an average over a batch of samples drawn from a replay buffer. 
Further implementation details for different efforts can be found in the Appendix.

\subsubsection{Latent Space Actor Critic}

For learning a behavior model, Dreamer uses a latent space actor critic policy gradient method trained entirely in the world model's ``imagination,'' only using rollouts predicted by the world model. 
This allows it to be trained completely in parallel with the world model, reduce interactions with the environment, and operate in a space that is a strictly Markovian representation. 
As in a traditional actor critic method, the actor aims to maximize the expected return $G_t = \sum_{\tau \geq t} \hat{\gamma}^{\tau-t} \hat{r}_\tau$.
The actor and critic are implemented as MLP neural networks, where the actor's action output parameterizes a categorical distribution, and the critic's value output is deterministic. 

\begin{equation}
\begin{aligned}
a_t \sim& \pi_{\theta}\left(a_t \mid s_t\right) \\
v_{\psi}\left(s_t\right) \approx& \mathbb{E}_{\phi,\theta} \left[G_t\right] .
\end{aligned}
\end{equation}
\label{eq:dreamer:actorcritic}

The actor uses model states, $s_i = \{h_i, z_i\}$, as input, and is used on-policy to generate rollouts for learning. 
However, differently from world model learning the stochastic state here comes from the dynamics prediction model. 
For each initial latent state in a set $[\boldsymbol{s_0}]$ (often the same as the batch drawn from the replay buffer for world model learning), learning rollouts of horizon $H$ are created iteratively from the actor and world model: 

\begin{enumerate}
\setlength{\itemsep}{0pt}
\setlength{\parskip}{0pt}
\setlength{\parsep}{0pt}
    \item sample reward and discount values from their respective prediction models,
    \item sample and action from the actor (on-policy),
    \item calculate the next model state using the trajectory recurrent model and dynamics prediction model.
\end{enumerate}

For critic learning, the target return is approximated using the TD($\lambda$) method to help balance bias and variance. 
An extension of the ``1-step'' TD-learning method described in Section~\ref{sec:background:rl:mfmb:mf}, the target in the TD($\lambda$) method, called a $\lambda$-return $V_t^\lambda$, is defined recursively as the sum of the reward and the weighted average of future returns. The critic TD($\lambda$)-error and $\epsilon^\lambda$ are therefore:
\begin{align}
    V_t^\lambda =& r_t+\hat{\gamma}_t \left((1-\lambda) v_\psi\left(s_{t+1}\right)+\lambda V_{t+1}^\lambda\right) 
    \quad {\text{with}} \quad 
    V_H^\lambda = v_\psi\left(s_H\right) \\ 
    \epsilon^\lambda =& v_{\psi}(s_t) - \operatorname{sg}\left(V_t^\lambda\right)
\end{align}
\label{eq:dreamer:tdlambda}

The critic loss is the expectation of the mean squared error (MSE) loss of the $\lambda$-error:

\begin{equation}
\mathcal{L}(\psi) = \mathbb{E}_{\phi, \theta}\left[\sum_{t=1}^{H-1} \frac{1}{2} \left(\epsilon^\lambda\right)^2 \right]
\end{equation}
\label{eq:dreamer:critic_loss}

For actor learning, the actor approximates the expected return using a weighted mixture of stochastic backprop through the TD($\lambda$) expected return---better continuous environments---and REINFORCE---better for discrete environments. 
Additionally, the actor loss is regularized by the policy entropy $\mathrm{H}$, which encourages exploration. 
This gives the loss functions for actor as: 
\begin{equation}
\mathcal{L}(\theta) = - \mathbb{E}_{\phi, \theta}\left[\sum_{t=1}^{H-1}\beta_{\text {ret}} \underbrace{ \left( \ln \pi_{\theta}\left(a_t \mid s_t\right) \epsilon^\lambda \right)}_{\text {REINFORCE}} 
+ (1-\beta_{\text {ret}})\underbrace{ V_t^\lambda}_{\text {stochastic}} 
+ \beta_{\text {ent}} \underbrace{ \boldsymbol{\mathrm{H}}\left[a_t \mid \hat{z}_t\right]}_{\text {entropy}})\right] 
\end{equation}
\label{eq:dreamer:actor_loss}

\noindent As in the case of the world model configuration parameters, further implementation details for hyperparameters including can be found in the Appendix.

\subsection{Interaction as Sampling in RL}
\label{sec:background:rl:sample}

One of the attributes of RL that sets it apart from supervised and unsupervised learning is that it is often used interactively. 
While supervised learning are most commonly used ``offline'' with a static gathered dataset, RL is most commonly studied in the ``online'' setting, where the agent is learning from data it has recently gathered by interacting with the environment, and uses what it learns from that learning experience to inform future interaction for gathering more learning data.
However, not all RL algorithms use interaction data the same way; different algorithms make different fundamental assumptions on the way they gather data for learning, called \textit{exploration}, and how they sample data for updating the agent. 

\subsubsection{Exploration}
\label{sec:background:rl:sample:expl}

While reinforcement learning agents generally take actions that greedily maximize future reward, agents must also sometimes move without regard to future reward or \textit{explore}. 
By mixing the greedy selection of maximum-value actions and exploring the environment, one can be more confident that the policy is not missing the most optimal path to the goal. 
This is often referred to as the exploration-exploitation trade-off~\cite{sutton2018reinforcement}.
Exploration is fundamentally necessary to the convergence of reinforcement learning, as it serves to diversify the samples seen by the learning algorithm and facilitates the agent's exposure to many trajectories in search of the optimal path to the goal. 
However, while many means of sample diversification satisfies this fundamental need, exploration methods that simply add diversity are not often not sufficient for finding the optimal solution to harder exploration problems.  
In many problem domains, including DeepMind Control Suite~\cite{tunyasuvunakool2020dmcontrol} and most Atari video games~\cite{bellemare13arcade},  the reward signal is \textit{dense}, meaning that it the agent receives regular feedback based on if it is making progress toward its optimal goal, and when there is only a sparse reward many research efforts sidestep this challenge by designing a task-specific ``shaped reward''~\cite{ng1999policy}.
For problems with dense or
``shaped'' rewards (reward functions designed to provide a dense signal for an otherwise sparse reward~\cite{ng1999policy}), simple undirected exploration methods like $epsilon$-greedy and Boltzmann sampling are sufficient for finding optimal solutions.
Reward shaping, however, undermines the generalizability of solutions as they are task-specific.
Also, many real-world decision making applications do not have any feedback signal beyond task ``success'' or ``failure'' (a \textit{sparse} reward). 
Simple undirected methods cannot be relied on to find the optimal path in these sparse ``hard exploration problems.'' 
More sophisticated exploration methods must be considered, such as those based on philosophies such as coverage, information maximization, and environment modeling. 
Section~\ref{sec:transx:characterization} provides a deeper comparative analysis RL exploration methods in the context of OTTA.

\subsubsection{Sampling for Learning and Experience Replay}
\label{sec:background:rl:sample:er}

In traditional reinforcement learning algorithms such as Q-learning, the agent updates its value function or policy based on the most recent transitions. 
However, as previously discussed, this approach can lead to high variance in the updates and instability in the learning process, particularly when combining off-policy methods like Q-learning with function approximation techniques like deep neural networks~\cite{tsitsiklis1996analysis}. 
As RL algorithms often are learning while interacting, or ``online,'' it is not trivial to reduce variance by simply learning from more samples because the samples will correlated instead of identically and independently distributed (i.i.d). 
Experience replay~\cite{lin1992self} is a key technique in deep reinforcement learning that addresses the problem of correlations in the sequence of observations encountered during the agent's interactions with the environment. 
During the training process, the agent adds experience to the buffer, and then to update the currect policy, value function, or model, the algorithm samples transitions from the entire buffer instead of just recent experience. 
This approach has several benefits, including decorrelating learning samples and increasing the data efficiency by reusing past experiences. 

Despite the effectiveness and ubiquity of experience replay in off-policy methods~\cite{lillicrap2016ddpg,wang2016acer,hessel2018rainbow,haarnoja2018soft}, new sampling methods and theoretical grounding are still being investigated~\cite{zhang2017deeper,fedus2020revisiting}. 
There are several approaches to sampling data from the replay buffer, each with its own advantages and trade-offs. 
The simplest approach is uniform sampling, where transitions are sampled uniformly at random from the buffer. 
However, this approach can be inefficient, as some transitions may be more informative than others. 
Prioritized experience replay addresses this issue by prioritizing transitions based on the TD error, giving higher priority to transitions with larger TD errors~\cite{schaul2016per}. 
In practice, prioritized experience replay represents a family of replay methods which organize and sample from the replay buffer according to different prioritization functions~\cite{gao2021expreward,oh2022modelaugmented,li2024prioritized}. 

Experience replay is not only applied to model-free off-policy methods. 
As model updating in model-based RL is fundamentally off-policy most model-based deep RL methods also take advantage of an experience replay buffer~\cite{silver2017alphagozero,hafner2019planet,hafner2019dream,schrittwieser2020muzero,yarats2021image,hansen2022temporal}. 
However, most of these works sample data naively despite the fact that, in a model-based RL method that learns both a model and a policy, the data might improve the model might not improve the policy and vice versa. 
This dissertation explores this sampling disconnect and its connection to transfer learning in RL in more detail in Chapter~\ref{chapt:transx}. 

\section{Transfer Learning and Novelty}\label{sec:background:transfer}

Reusing the prior ``knowledge'' encoded in a trained model is a common desire in machine learning problems.
Models often need to be updated as distributions shift, environments change, or new data are collected. 
Training new models from scratch is inefficient, costly, time-consuming, and can yield a lower-quality model; what's more if the model is only given access to the set of new data, the prior knowledge encoded in a learned model may be the way to learn about the new data in the context of the old data. 

Problematically, however, when a parameterized model such as a neural network is trained on data of one distribution, attempting to train it on data from a new distribution could induce \textit{catastrophic inference}~\cite{mccloskey1989catastrophic}. 
Catastrophic inference, also called catastrophic forgetting, occurs when training a learned model on a novel distribution causes the model to shift its parameters into a space that poorly models both the prior and novel distributions. 
When this happens prior models can end up transferring little, if any, of its prior knowledge, and can cause the model to actually be less efficient in learning the new distribution. 
\textit{Transfer learning}~\cite{zhu2021transfer} is a broad field of techniques and research that seek to avoid catastrophic inference and maximizing the amount of benefit a learned model's prior knowledge can have on learning new tasks.

\subsection{Fundamentals of Transfer Learning}

Transfer learning is an incredibly broad field, and applies to a wide variety of important problems. 
In \textit{model transfer learning}, sometimes referred to as knowledge distillation~\cite{hinton2015distilling,gou2021knowledge} or teacher-student frameworks~\cite{torrey2013teaching,zhan2015online}, transfers the knowledge of a domain and task encoded in a ``teacher'' model into a ``student'' model with the same task and domain. 
In this work, we focus on transfer learning concerned with adapting a model trained to solve one learning problem to a different learning problem.

In transfer learning in general, learning problems are decomposed into \textit{domains} and \textit{tasks}.
A \textit{domain} is defined as~$\mathcal{D} = \{\mathcal{X}, P(X)\}$, where $P(X)$ is the marginal distribution over the set of all input data $X$ sampled from the input space $\mathcal{X}$.
A \textit{task} is defined as~$\mathcal{T} = \{\mathcal{Y}, P(Y \vert X)\}$, where $P(Y \vert X)$ is the conditional distribution over the set of all output data $Y$ from the output space $\mathcal{Y}$ given the input data $X$~\cite{pan2009survey}. 
In the simplest, two-task setting, tasks and domains are divided into a \textit{source} task $\mathcal{T}_{source}$ and domain $\mathcal{D}_{source}$ for which a model is originally optimized, and a \textit{target} task $\mathcal{T}_{target}$ and domain $\mathcal{D}_{target}$ on which the performance of that model will be measured. 

\subsection{Novelties and Online Test Time Adaptation}

There are many types of transfer learning that focus on adapting models trained on source problems to target problems such as pretraining, domain adaptation, sim-to-real transfer, and skill transfer~\cite{taylor2009transfer}. 
However, in most of these problems either the target task is known in advance, the model is given some ``fine-tuning'' period to adapt to the new distribution, or both. 
This dissertation focuses on \textit{online test time adaptation}~\cite{liang2023comprehensive} (OTTA) to novelty, also known as ``online task transfer''~\cite{zhan2015online} and ``novelty adaptation''~\cite{pimentel2014review}. 
Given a model converged on a source task in a source domain, OTTA methods aim to leverage the knowledge of the source task to maximize performance on and minimize the number of training steps or samples required to adapt to the target task~\cite{liang2023comprehensive}. 
OTTA also assumes that, while there can be overlap between the source and task distributions, the source model has no prior experience training on the target distribution. 

\textit{Novelties}~\cite{pimentel2014review,boult2021towards} are characterized as sudden, previously unseen changes that functionally \textit{transform} the source domain, task, or environment into the target domain, task, or environment. 
Novelties can be big or small, ranging from the physics and mechanics of the world, to object relationships, properties, and interactions, to simply the presence of a new, unknown object~\cite{langley2020open}. 
Novelty research is broken down into three challenges: novelty detection (recognizing a change in the data distribution), novelty characterization (defining the change in the distribution), and novelty adaptation. 
While novelty characterization and detection~\cite{pimentel2014review} are important areas of study, as neither explicit detection nor characterization are necessary for novelty adaptation, this work focuses solely on adaptation. 

The critical difference between novelty adaptation and OTTA in general is that novelty adaptation assumes the existence of a transformation linking the source and target that can be characterized and specified. 
Novelties, therefore, act as a guarantee that the source and target tasks and domains are more connected than two arbitrary learning problems.   
As such, online test time adaptation to novelty, which in this dissertation will be referred to simply as OTTA, entails adapting a source-trained to a target domain and task, given that a novelty relates the source and the target. 


With a few exceptions~\cite{10351689}, the majority of research in OTTA and closely related fields like active test time adaptation~\cite{gui2024active} and open set domain adaptation~\cite{Jahan_2024_CVPR} do not consider sequential decision making problem settings, interactive or otherwise. 
Efforts exist to formulate a unified theory of novelty~\cite{boult2021towards,langley2020open} that applies to both interactive and non-interactive problem settings.
However, these efforts have not been able to fully rectify the challenges unique to interactive and non-interactive problem settings.


\subsection{Online Test Time Adaptation in Sequential Decision Making}

While OTTA for interactive settings is a mostly unstudied area, OTTA concepts are compatible with interactive settings. 
OTTA for interactive settings, required for deep reinforcement learning, can be framed in the context of MDPs \textit{$MDP_{source}$} and \textit{$MDP_{target}$}. 
As the environment is a decision process sampled non-i.i.d through agent interaction instead of a pre-sampled set, we redefine $\mathcal{D}$ and $\mathcal{T}$ for this work according the formulation of MDPs. 
For any $MDP_i$, the domain therefore becomes:
\begin{equation}
    \mathcal{D}_i = \{\mathcal{S},\mathcal{A}, \mathcal{P} \}
\end{equation}
\noindent making it strictly a function exclusively of the MDP.
The task, on the other hand, becomes: 
\begin{equation}
    \mathcal{T}_i = \{\mathcal{R}, \boldsymbol{\Pi^*}(\mathcal{S,A})\}
\end{equation}
\noindent where $\boldsymbol{\Pi^*}$ is the space of optimal solutions to MDP. 
That makes the task a function of both the MDP and the approach of solving the MDP.

This definition is beneficial as it covers and extends the definition space of open-world novelties presented by Boult et. al.'s definition of novelty~\cite{boult2021towards}.
For example, ``nuisance novelties,'' defined as novelties that do not affect the optimal solution are equivalent to changes in the world that only affect the domain $\mathcal{D}_{source} \neq \mathcal{D}_{target}$---without affecting the task---$\mathcal{T}_{source}=\mathcal{T}_{target}$. 
Such novelties can disrupt adaptation methods that strictly looking for changes in the environment without considering the impact of the change. 
Additionally, this model of transfer extends beyond that of Boult et. al. in that it can model the transfer problems of changes to the reward or the solution approach, both of which would change the task---$\mathcal{T}_{source} \neq \mathcal{T}_{target}$---but not the domain $\mathcal{D}_{source} = \mathcal{D}_{target}$. 
The work in this dissertation focuses exclusively on novelties that change the task and the domain,  $\mathcal{D}_{source}\neq\mathcal{D}_{target}$ and $\mathcal{T}_{source}\neq\mathcal{T}_{target}$.

There exists a strong history of prior work similar to reinforcement learning solutions for OTTA in interactive environments. 
Fundamental questions about reinforcement learning in non-stationary environments have been examined by prior works throughout the history of RL~\cite{littman1991adaptation,choi2001hidden,Nareyek2004}.
However, even the contributions of recent work in deep reinforcement learning are strictly theoretical~\cite{pmlr-v134-wei21b,pmlr-v139-mao21b,pmlr-v202-feng23e} or demonstrate limited applicability~\cite{JMLR:v17:14-037,NEURIPS2019_859b00ae,padakandla2020reinforcement,steinparz2022reactive} because of a tendency to focus on finding solutions to non-stationary environments \textit{in general}.
The work presented in this dissertation distinguishes itself by instead constraining the types of non-stationary environments considered in an effort to develop practical, applicable RL methods and solutions.
Transfer learning in RL~\cite{taylor2009transfer,Lazaric2012,zhu2021transfer} is also an area with similarly motivated work, with special interest in the ``sim-to-real'' problem of transferring policies learned in simulation to real robots and devices~\cite{chen2021cross}. 
However, like with the non-interactive OTTA prior work, most of these prior works presume (1) knowledge about (or access to) the target domain in advance and (2) a ``fine-tuning'' period in which the policy can be adapted to the target task, or both~\cite{xie2020deep}. 
The prior work most similar in problem setting to this dissertation are the works novelty-aware sequential decision agents that are not strictly adapted with reinforcement learning.
This work included adaptive mixed continuous-discrete planning, and knowledge graphs used in combination with reinforcement learning techniques to improve both detection and adaptation~\cite{klenk2020model,peng2021detecting,sarathy2021spotter,loyall2021integrated}. 

\section{Other Similar Fields of Research}

Researchers are investigating similar approaches with the techniques proposed for solving in  \textit{lifelong learning}~\cite{silver2013lifelong} and \textit{online learning}~\cite{shalev2012online,hazan2016introduction} problems. 
In lifelong learning, a learner operates under the assumption that the world is too complex or unpredictable to learn offline and must instead be modeled continually, and in online learning the assumption is that induction---the ability to make accurate predictions about future events from past trends---is not possible because train and test data are not drawn from the same distribution.
Functionally, these problems assume that the world is novel at each interaction or task compared to the prior interaction or task, often with no guarantees whether events will or will not be correlated. 
This relaxed set of assumptions is a useful and rich area of study because it can be applied to any problem, however the assumptions are overly-conservative for many transfer applications and yield poor overall results compared to the often-sufficient i.i.d, offline learning alternatives~\cite{kemker2018measuring,hayes2019memory,smith2021memory}.

    \chapter{Defining and Evaluating Agent Response to Novelty}
\label{chapt:novgrid}

\begin{figure}
    \centering
    \includegraphics[width=0.9\linewidth]{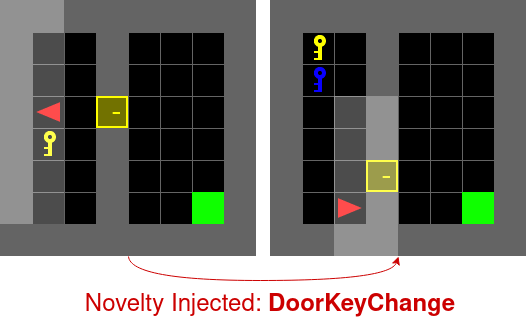}
    \includegraphics[width=0.9\linewidth]{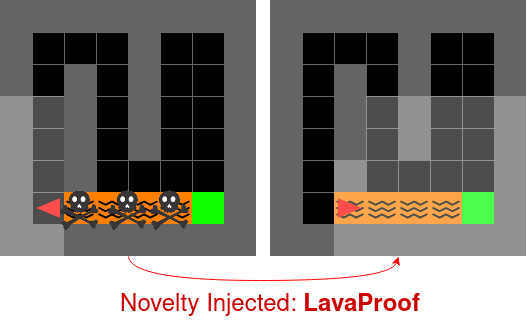}
    \caption{The NovGrid environments, where the agent (red triangle) must get to the goal (green box). The novelties are not directly observable; the agent must experience the novelty to be aware of it. 
    {\bf Top:} 
    pre-novelty only a yellow key opens a door; post-novelty only the blue key opens the door. 
    {\bf Bottom:}
    pre-novelty the lava gives a -1 reward and is a terminal state; post-novelty the lava is safe to walk on. 
    }
    \label{fig:novgrid:novgrid_splash}
\end{figure}

There exists a robust body of machine learning techniques---including but not limited to imitation learning and reinforcement learning (RL)---that can be used to learn models of agent behavior in complex sequential decision making environments. 
These techniques can be applied to find an optimal policy that solves nearly any problem that can be modeled as a Markov Decision Process (MDP), and the policies can be anything from simple look-up tables to Gaussian Processes and Deep Neural Networks~\cite{engel2005reinforcement,sutton2018reinforcement}. 
However, success in these learning methods shares the common assumption that the train-time MDP and the deployment or ``test time'' MDP are as similar as possible, if not the same.  
While this train-test similarity assumption holds in some settings, many real world applications of autonomous agents are associated with environments that cannot be guaranteed to function the same forever. 
As is the case with their human counterparts, learning agents in uncontrolled environments will encounter and need to adapt to unexpected changes \textit{as they experience them}, or ``online.'' 

Described in Section~\ref{sec:background:transfer}, adapting learned models online to unseen environment changes is called online test time adaptation (OTTA). 
OTTA is distinct from other areas of transfer learning in that it studies how a model trained on a \textit{source} task and domain can adapt to a new, unseen \textit{target} task and domain while experiencing it for the first time. 
The focus of this chapter is to develop a framework to study the properties of RL agent adaptation.
Specifically, we provide a definition of the OTTA problem setting for interactive sequential decision making, then introduce techniques and resources to study these specific OTTA challenges. 

First, the chapter presents an ontology of novelties in sequential decision making environments.
The novelty ontology distinguishes between (1) object novelties (new or changed properties of objects), (2) action novelties (changes in how the agent's actions work), and (3) whether the optimal solution for the  $MDP_{\mathrm{target}}$ is more, less, or similarly as complex to solve for a given agent as the $MDP{\mathrm{source}}$.
Second, this chapter describes the implementation of {\sc Novelty MiniGrid} (NovGrid), an extension of the MiniGrid environment~\cite{gym_minigrid} that changes the world properties and dynamics according to a generalized novelty generator based on the ontology. 
The MiniGrid environment is a grid world that facilitates reinforcement learning algorithm development with low environment integration overhead, which allows for rapid iteration and testing. 
NovGrid extends the MiniGrid environment by expanding the way the grid world and the agent interact to allow novelties to be injected into the environment. 
Specifically, this is done by 
creating an environment setup that, at a specified time unknown to the agent, ``injects'' a novelty that transforms the environment in the training process. 
NovGrid also provides a number of example novelties aligned with the dimensions of our novelty ontology and allow developers to create their own novelties. 
Third, this chapter details a set of metrics for measuring and evaluating the adaptability of agents. 


\section{Ontology of Novelties in Sequential Decision Making Problems}
\label{sec:novgrid:ontology}
In keeping with the standard formulation of RL as defined in Chapter~\ref{chapt:background}, let a stationary sequential decision making problem be modeled as a Markov Decision Process (MDP). 
For the problem setting that motivates this work, we consider non-stationary MDPs that can be approximated as two MDPs, $MDP_{\mathrm{source}}$ and $MDP_{\mathrm{target}}$, related by a transformation we call \textit{novelty}.
In considering novelty characteristics, we must consider two fundamentally different types of entities: agents and the environment.
Given this model of environments, we consider all aspects of the problem except a agent's decision-making model to be property of the environment. 
This includes agent morphology, sensors, and action preconditions and effects. 
As a result, the ontology we lay out here can be considered a specification of Boult et. al.~\cite{boult2021towards} world novelties in the context of sequential decision-making problems.

To clearly define the problem, we start with some simplifying assumptions. 
This work assumes that an agent's observation space and action space dimensionality remain consistent before and after novelty is injected. 
That is, the number of actions and the size and shape of the observations are consistent throughout each experiment. 
That said, the manifestation of these fixed sets may change; actions that initially have some specific effect or no effect pre-novelty can take on different effects post-novelty. 
Likewise, there may be observations and states that never occur pre-novelty that start to occur post-novelty. 
This is consistent with a robotics perspective on MDPs where actions and observations are governed by an underlying physics of the real world, even though most novelty experiments use grid worlds and games~\cite{kejriwal2021multi,gamagenovelty}. 
Additionally, this work assumes that the agent's mission $T$ is consistent before and after the novelty, meaning that there is no consideration of changes to the sparse extrinsic reward for reaching the goal. 

In this ontology, novelties are characterized along three dimensions. 
The first dimension is \textbf{object} vs \textbf{action} novelties. 
Objects are environment components, such as keys, doors, balls, etcetera, and object novelties involve the introduction, removal, or changes to the intrinsic properties (like mass and object-to-object interactions) of individual objects or classes of objects. 
Actions are the way agents affect the world through control. 
Action novelties involve changes in the dynamics of actions, such as the speed and force of agent motion or how an agent can interact with an object (like a key). 
This can be thought of as changes to action preconditions---the applicability criteria of actions---or action effects---the way in which the world is changed when an action is executed. 

Second, novelties can be expressed as changes to \textbf{unary} predicates or \textbf{non-unary} (or $n$-ary where $n>1$) relations. 
Unary object novelties can be thought of as added, removed, or changes to intrinsic properties of objects like mass, volume, or shape. 
Non-unary object novelties are changes in the relationship between objects, which is to say properties of objects that are necessarily defined in the context of other entities. 
Unary and non-unary action novelties involve 
(a)~the addition, removal, or change of properties of objects required for action applicability, or 
(b)~changes to the properties of objects or changes to the relationship between objects. 

Third, novelties are categorized according to how they change the distribution of solutions to a task: 
 \begin{itemize}
     \item {\bf Barrier novelty}---the optima in the solution distribution are longer after novelty than before novelty. 
     For example: pre-novelty the agent must acquire one key to pass through a door to achieve a goal, but post-novelty must acquire two keys. 
     \item {\bf Shortcut novelty}---the optima in the solution distribution are on average shorter after novelty than before novelty. 
     For example, a door that required a key pre-novelty then does not require any keys post-novelty. 
     \item {\bf Delta novelty}---the optima in the solution distribution are the same before and after novelty injection. 
     For example, a door that required one key pre-novelty requires a different key post-novelty. 
 \end{itemize}


\begin{table*}
\footnotesize
\centering
\begin{tabular}{c|c|c|c|c|}
\multicolumn{2}{c|}{} & {\bf Barrier} & {\bf Delta} & {\bf Shortcut}\\
\hline
\multirow{6}{*}{\bf Objects} & \multirow{3}{*}{\bf Unary} & {DoorLockToggle} & \multirow{3}{*}{GoalLocationChange} & DoorLockToggle\\
& & unlocked & & locked\\ 
& & $\rightarrow$locked  & & $\rightarrow$unlocked\\
\cline{2-5}
& \multirow{3}{*}{\bf Non-Unary} & {DoorNumKeys} & \multirow{3}{*}{DoorKeyChange} & \multirow{3}{*}{{ImperviousToLava}}\\
& & NumKeys=1 & &\\
& & $\rightarrow$NumKeys=2 & &\\
\hline
\multirow{6}{*}{\bf Actions} & \multirow{3}{*}{\bf Unary} & ActionRepetition & {ColorRestriction} & ActionRadius\\
& & PickCommands=1 & YellowOnly & PickDistance=1\\ 
& & $\rightarrow$PickCommands=2 & $\rightarrow$BlueOnly & $\rightarrow$PickDistance=2\\
\cline{2-5}
& \multirow{3}{*}{\bf Non-Unary} & TransitionDeterminism & \multirow{3}{*}{Burdening} & ForwardMoveSpeed\\
& & Deterministic &  & ForwardStep=1\\
& & $\rightarrow$Stochastic & & $\rightarrow$ForwardStep=2\\
\hline
\end{tabular}
  \caption{Novelty Ontology Exemplars}
  \label{tab:novgrid:exemplars}

\end{table*}


\section{Novelty Minigrid}\label{sec:novgrid:novgrid}

Novelty MiniGrid (aka NovGrid) is a testing environment we created to implement OTTA problems in the context of the above ontology. 
This provides a standard set of environments with which researchers can evaluate OTTA solutions across the spectrum of novelty types. 
NovGrid is built around an OpenAI Gym Wrapper and designed to be compatible with all MiniGrid environments. 
This means that NovGrid additionally works as a platform to evaluate OTTA performance on any of the many 3rd-party environments based on MiniGrid. 
It has three fundamental components: a novelty injection mechanism in the core wrapper class, new and modified entities designed to work with the novelty ontology, and the novelty generator with sample novelties to exemplify our ontology. 

The core novelty injection system is designed to be applicable to as many MiniGrid environments as possible. The wrapper wraps the environment, and the only argument required is the environment. Users can optionally specify 
the episode in which novelty is injected. 
Given a model in train mode, MiniGrid resets its grid at the beginning of every episode. 

Our novelty injection wrapper monitors the training cycle, and when the \texttt{novelty\_injection\_episode} is reached the wrapper class switches to using alternatives for the \texttt{reset} and \texttt{\_gen\_grid} functions. 
Specifically, after the novelty injection episode, the system now uses \texttt{\_post\_novelty\_reset} and  \texttt{\_post\_novelty\_gen\_grid}. 
This allows the wrapper to quickly and easily load in and overwrite the old environment with the new one. 

To exemplify the novelty ontology described in Section~\ref{sec:novgrid:ontology} and to provide example implementations of an OTTA scenarios, 11 exemplar novelties are built into the library that together cover all of the different categories of our ontology. 
This way all researchers using NovGrid can test their agent's adaptation sensitivity to different parts of the novelty ontology. 
The novelties delivered with NovGrid and how the respective objects would usually work in MiniGrid are:

\begin{itemize}
    \item \textbf{GoalLocationChange}: This novelty changes the location of the goal object. In MiniGrid the Goal object is usually at fixed location. 
    \item \textbf{DoorLockToggle}: This novelty makes a door that is assumed to always be locked instead always unlocked and vice versa. In MiniGrid this is usually a static property. 
    If a door that was unlocked before novelty injection is locked and requires a certain key after novelty injection, the policy learned before novelty injection will likely to fail. On the other hand, if novelty injection makes a previously locked door unlocked, an agent that does not explore after novelty injection may always still seek out a key for a door that does not need it. 
    \item \textbf{DoorKeyChange}: This novelty changes which key that opens a locked door. In MiniGrid doors are always unlocked by keys of the same color as the door. This means that if key and door colors do not match after novelty, agents will have to find another key to open the door. This may cause a previously learned policy to fail until the agent learns to start using the other key. 
    This novelty is illustrated in Figure~\ref{fig:novgrid:novgrid_splash}. 
    \item \textbf{DoorNumKeys}: This novelty changes the number of keys needed to unlock a door. The default number of keys is one; this novelty tends to make policies fail because of the extra step of getting a second key. 
    \item \textbf{ImperviousToLava}: Lava becomes non-harmful, whereas in Minigrid lava always immediately ends the episode with no reward. This may result in new routes to the goal that potentially bypass doors. 
    \item \textbf{ActionRepetition}: This novelty changes the number of sequential timesteps an action will have to be repeated for it to occur. In MiniGrid it is usually assumed that for an action to occur it only needs to be issued once. So if an agent needed to command the pick-up action twice before novelty but only once afterwards, to reach its most efficient policy it would need to learn to not command pickup twice. 
    \item \textbf{ForwardMovementSpeed} This novelty modifies the number of steps an agent takes each time the forward command is issued. In MiniGrid agents only move one gridsquare per time step. As a result, if the agent gets faster after novelty, the original policy may have a harder time controlling the agent, and will need to learn how to embrace this change that could make it reach the goal in fewer steps. 
    \item \textbf{ActionRadius}: This novelty is an example of a change to the relational preconditions of an action by changing the radius around the agent where an action works. In MiniGrid this is usually assumed to be only a distance of one or zero, depending on the object. If an agent can pick up objects after novelty without being right next to them, it will have to realize this if it is to reach the optimum solution. 
    \item \textbf{ColorRestriction}: This novelty restricts the objects one can interact with by color. In MiniGrid it is usually assumed that all objects can be interacted with. If an agent is trained with no blue interactions before novelty and then isn't allowed to interact with yellow objects after novelty, the agent will have to learn to pay attention to the color of objects. 
    \item \textbf{Burdening}: This novelty changes the effect of actions based on whether the agent has any items in the inventory. In MiniGrid it is usually assumed that the inventory has no effect on actions. An agent experiencing this novelty, for example, might move twice as fast as usual when their inventory is empty, but half as fast as usual when in possession of the item, which it will have to compensate for strategically. 
    \item \textbf{TransitionDeterminism}: This novelty changes the likelihood with which that actions selected by the agent occur. In MiniGrid it is usually assumed that all actions are deterministic. If an agent is trained with deterministic transitions before novelty and then experiences stochastic transitions after novelty, it will need to learn to take safe routes to the goal or its policy will fail more often
\end{itemize}

\noindent
To implement these novelties custom versions of different standard MiniGrid objects were designed, and these custom objects are also included with NovGrid. 

Table~\ref{tab:novgrid:exemplars} shows a mapping of the exemplar novelties built into NovGrid to dimensions of the novelty ontology.

\section{Metrics for Transfer Adaptation}\label{sec:novgrid:metrics}

Adaptability broadly refers to the ease with which a model trained for one task can be retrained for another task. 
Adaptability is measured on two major axes: efficiency and efficacy. 
Both sample efficiency (i.e., the number of interactions with the task required for convergence) and computational efficiency (i.e. the number of iterations required for convergence) are used to measure adaptation efficiency. 
The efficacy of agent adaptation is measured on performance on the task, the way the agent reacts to the novelty, and the speed with which it recovers. 
To that end, the following metrics are built into NovGrid:

\begin{itemize}
    \item {\em Resilience}: the difference between agent maximum performance pre-novelty and agent performance post-novelty {\em without} adaptation. 
    \item {\em Asymptotic adaptive performance}: converged performance post-novelty. 
    \item {\em Adaptive efficiency}: the number of environment interactions to converge post-novelty. 
    \item {\em One-shot adaptive performance}: the performance of the agent post-novelty after only one episode of interaction with the environment. 
\end{itemize}

\begin{figure}[t]
    \centering
    \includegraphics[width=\linewidth]{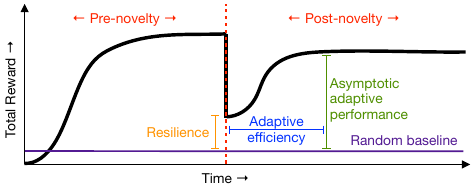}
    \caption{Evaluation metrics illustrated against a notional performance curve for an agent. }
    \label{fig:novgrid:metrics}
\end{figure}

\section{Key Takeaways}\label{sec:novgrid:conclusions}

Novelty in sequential decision making is a rich and under-investigated research area, and research into novelty adaptation of agents will enable autonomous agents to solve more complex, real world problems. 
With the work presented in this Chapter, we address gap in definitions and means of evaluating agent performance of online test time adaptation to novelty in sequential decision making. 
Our definitions and novelty ontology provides a language to discuss the effects of different novelties on agent adaptability.
The NovGrid library, exemplar scenarios that map to this ontology, and metrics proposed for adaptation evaluation provide future researchers the means to repeatably analyze and compare how different policies and strategies will adapt to different types of changes.

The presentation of the ontology and the NovGrid evaluation environment provides a starting point with which researchers can develop their own OTTA solutions. 
Future researchers investigating sequential decision making agents must consider the ways in which, if deployed, the agents' environments may change.
NovGrid, the ontology of novelties, and the proposed metrics for measuring OTTA performance provide a starting point for researchers to test the OTTA performance of existing methods and develop novel solutions.
As non-stationarity is an undeniable reality all real world agents will face, this work provides a means of characterizing adaptive response, but also a template for how online test time adaptation can be measured and investigated in other domains.

Still, this work is limited to the analysis of discrete novelties of known difficulty affecting individual agents. 
Further investigation is needed from future researchers to extend the definitions and evaluation criteria presented in this chapter to continuous change in an environment, quantifying the similarity between two MDPs, and multiple agents.
While the work proposed here provides a strong foundation of simplifying non-stationary problems definitions, many real world problems experience gradual change that builds up without adaptation.
Whether user preferences in recommender system models or sensor drift on a robot, while these phenomena can be modeled as a sequence of discrete changes, model such novelties as continuous changes will lead to solutions that consider the causal factors and progression of this change. 
Relatedly, while the ontology's characterizations of novelty based on notions such as \textit{solution complexity} are helpful and intuitive, the ontology can be strengthened by a quantitative characterization of the difference between source and target MDPs.
Without an underlying quantitative measure of change, there is no way for an agent to, for example, improve its adaptation by knowing or predicting ontological characteristics of the novelty.
Lastly, one of the most prevalent sources of novelty in autonomous agent scenarios insufficiently modeled by this work is external agent behavior change, either as the environment change or due to environment change.
Looking again to recommender systems, user preferences may change as a group or individually. 
Often times behavior and preference change is the result of changing environment factors, but users also may simply start preferring different content independent of an environment. 
The ontology and NovGrid provide a foundation that future researchers can extend to overcome these limitations and measure novelty adaptation more accurately for more scenarios.

This chapter adds to the broader thesis by providing a foundation from which we investigate the ways we might improve how RL agents adapt. 
In line with the broader thesis of this dissertation, in the following Chapters we use these novelty definitions, especially the \textit{shortcut}, \textit{delta}, and \textit{barrier}, and the novelties implemented in NovGrid to develop and assess the quality of adaptation solutions for RL agents that explore novel phenomena and reuse source domain knowledge appropriately. 

    \chapter{Characteristics of Effective Exploration for Adaptation in Reinforcement Learning}
\label{chapt:transx}

As described in Chapter~\ref{chapt:background}, reinforcement learning algorithms trade off exploration and exploitation.
When there is a novel change in the environment, the adaptation efficiency of the RL agent depends on the data collected by interacting with the novel environment and how those data are used. 
In this way, exploration strategies in RL serve a dual purpose in OTTA: they are fundamentally designed to sample the state-action space for more efficient learning in stationary environments, and they also have the potential to facilitate adaptation to environmental changes. 
Similarly, the process of selecting which samples for learning can impact an agent's ability to learn and adapt to novel situations. 
This chapter and Chapter~\ref{chapt:dops} delve into the critical role of exploration and sample selection in enabling efficient online test time adaptation to novelty in reinforcement learning (RL).

In theory, exploration designed for stationary RL can enable agents to adapt to environment novelties with no fundamental changes~\cite{schmidhuber1991curious,Schmidhuber91apossibility,chentanez2004intrinsically}.
In spite of this, exploration algorithms designed to improve the exploration-exploitation trade-off of solving single, stationary MDPs have not been comprehensively analyzed for their impact on efficient online test-time adaptation. 

In this chapter, we answer the question: 
\textbf{which characteristics of traditional exploration algorithms are important for efficient transfer in RL?}
To reach our answer, we conducted experiments with eleven popular RL exploration algorithms on five novelties in discrete and continuous domains.
The algorithms were selected to represent a diverse space of exploration characteristics.
We systematically examine the within- and between-class relationships of the algorithms across all characteristics. 

Our results indicate, foremost, that exploration methods that explicitly emphasize {\em diverse training experiences} and use {\em stochasticity} to avoid overfitting benefit policy transfer the most.
This is true across all types of novelties and for both discrete and continuous domains.
When novelty makes a task easier---called a {\em shortcut novelty}---, exploration methods that rely heavily on stochasticity lose some effectiveness, but the benefits of diversity are more pronounced.
When novelty makes the task harder---called a {\em barrier novelty}---we find that the difference in performance between all exploration methods was severely diminished.
Finally, our continuous control experiments showed even more pronounced benefit of stochasticity and that exploration methods that are time independent or explore based on the entire training process---i.e., {\em temporally global} methods---outperformed methods that explore based on short-term change. 

In this chapter, 
Section~\ref{sec:transx:characterization} defines the exploration characteristics that have observable effects on transfer and maps eleven RL algorithms chosen for the experiment to these characteristics.
Section~\ref{sec:transx:exp} then details our experimental methodology. 
Section~\ref{sec:transx:disc} details our results and discusses implications. 
Finally, Section~\ref{sec:transx:conclusion} revisits the key takeaways from this chapter, explains how this chapter supports the thesis of this dissertation, and the implications of future work.

\section{Related Work}\label{sec:transx:relatedwork}
There is a large body of work characterizing and surveying the impact of exploration on transfer in RL. 
These works consider transfer in RL where exploration is a single variable~\cite{taylor2009transfer,Lazaric2012,da2019survey,zhao2020sim,zhu2023transfer} and exploration as one of several use cases~\cite{ladosz2022exploration,yang2021exploration}.
There is also a body of work that examines the relationship between active learning and adaptation to novelty and open-worlds~\cite{langley2020open,boult2021towards}.
Our work contributes by characterizing exploration methods across multiple dimensions and analyze their transfer performance specifically for RL and sequential decision-making.

Of the techniques investigating exploration methods for transfer in RL, they are tailored to a specific algorithm~\cite{zhan2015online,barreto2017successor}, do not translate to deep RL~\cite{konidaris2012transfer}, or do not compare themselves to stationary MDP exploration methods. 
Our work contributes by providing new analytical frameworks for further developing exploration methods depending on the transfer problem.
Most similar to our work is \cite{burda2018large}, which empirically investigates the implications of different exploration algorithms that share a curiosity objective as their exploration principle. 
Our work distinguishes itself by including a broader group of exploration principles than just intrinsic reward and does so for the purposes of online test time adaptation in RL instead of the typical single-task formulation.

\section{Characterizing Exploration Methods}
\label{sec:transx:characterization}

\begin{figure}[t]
    \begin{center}
        \includegraphics[width=\textwidth]{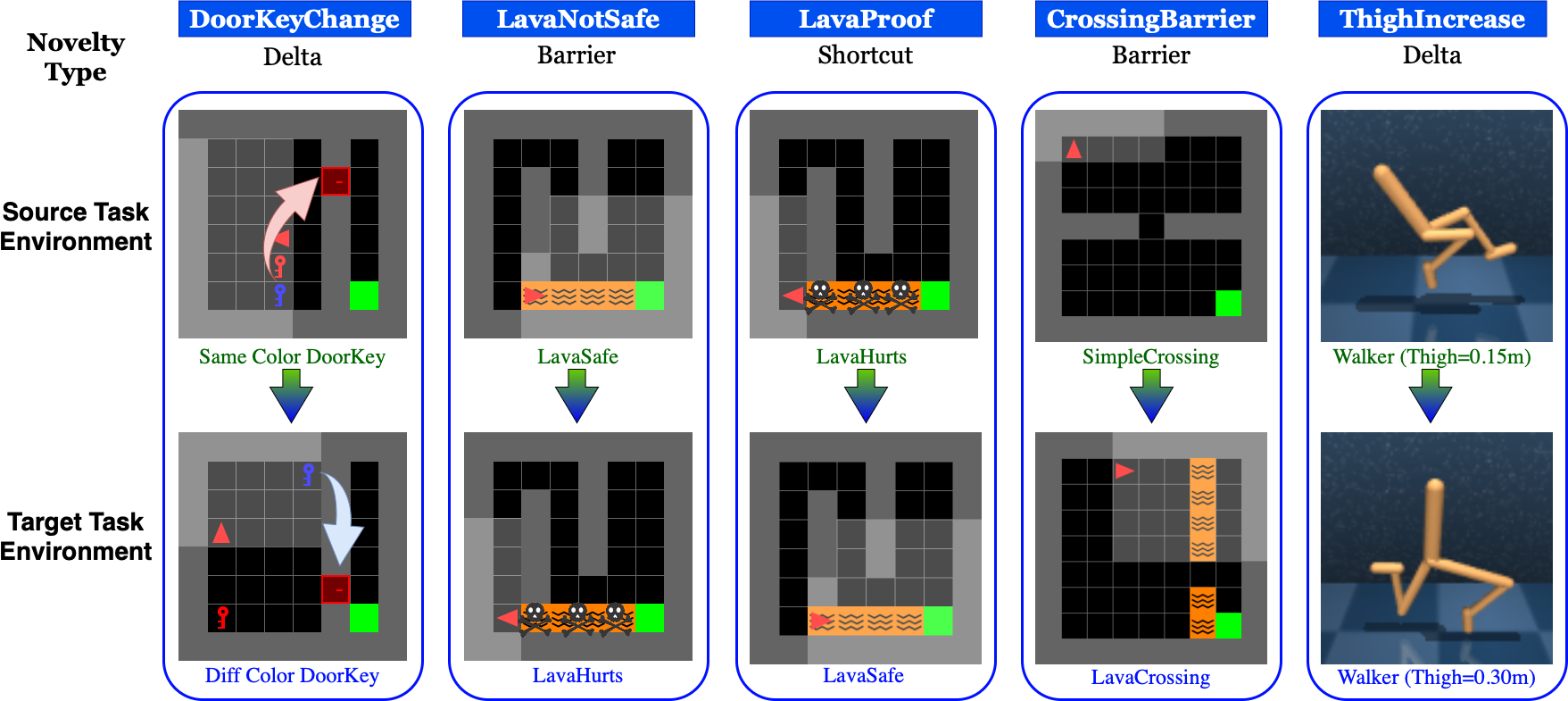}
    \end{center}
    \caption{Environments and novelties used to evaluate the exploration algorithms and their characteristics, including discrete and continuous control environments.} 
    \label{fig:transx:envs}
\end{figure}

There are many ways one might categorize exploration methods.
From the perspective of OTTA, we divide exploration methods into two high-level categories: \textit{exploration principle} and \textit{temporal locality}, which both have subcategories.
These are consistent with the existing taxonomy of \cite{ladosz2022exploration}.


%
\textbf{Exploration principle}
    characterizes an agent's behavior beyond greedy maximization of reward. 
We identified three subcategories of exploration principles. 
(1) Adding \textit{stochasticity} into the learning process.
There are many ways to use stochasticity in exploration, whether by injecting random noise into the input or an intermediate weight layer, using a stochastic task policy, or simply selecting random actions. 
(2) \textit{Explicit diversity} over the different random variables in the process.
Explicit diversity methods encourage models to experience all parts of the domain and task equally, ensuring that a greedy process does not lead the agent into stale transitions.
(3) Having a \textit{separate objective} in addition to greedy pursuit of reward. 
Methods with a separate objective complement the flaws of greedy reward maximization with a non-greedy goal, alternating or combining the objectives. 

\textbf{Temporal locality}
characterizes an exploration algorithm's relationship to time. 
Most exploration algorithms are designed to adapt to the needs of an agent at different points in the learning process.
We identified three temporal locality subcategories. 
(1) Algorithms with short-term or temporally \textit{local} characteristics. These methods implement adaptive behavior as a function of how agent and environment properties evolve time step to time step or episode to episode. 
(2) Algorithms with long-term temporally \textit{global} characteristics. These methods 
influence exploration based on trends in agent and environment properties recorded or aggregated across the entire learning problem or by comparing these global properties with the current agent, environment, or learning state.
(3) \textit{Time-independent} exploration methods. 
Similar to characterizations~\cite{sutton2018reinforcement} of exploration methods as ``directed'' or ``undirected,'' time-independent methods counteract greedy behavior by altering the learning process as a whole or within the agent architecture itself. 
Time-independent methods are critical to evaluation of exploration in transfer applications because online test time adaptation induces a temporal shift, both globally and locally. 

We summarize the exploration principle and temporal locality categories, along with exemplar algorithms, in Table~\ref{tab:transx:characteristics} and Appendix~\ref{app:transx:chars}.

\renewcommand{\arraystretch}{1.2}
\begin{table}[tb!]
\centering
\footnotesize
\begin{tabular}{p{0.15\textwidth}|p{0.2\textwidth}|p{0.45\textwidth}}
\textbf{Categories} & \textbf{Characteristics} & \textbf{Example Algorithms} \\
\hline\hline
\multirow{3}{2cm}{\bf Exploration Principle} & Stochasticity & NoisyNets, DIAYN\\
\cline{2-3}
 & Explicit Diversity & RND, REVD, RISE, RE3, RIDE,
NGU, DIAYN\\
\cline{2-3}
 & Separate Objective & RND, RIDE, ICM, NGU, GIRL\\
\hline\hline
\multirow{3}{2cm}{\bf Temporal Locality} & Global & RND, ICM, RE3, NGU, GIRL\\
\cline{2-3}
 & Local & EVD, RIS, RIDE, NGU\\
\cline{2-3}
 & Time Independent & NoisyNets, DIAYN\\
\hline
\end{tabular}
\caption{This table lays out our decomposition of exploration algorithms into 
two
major categories---exploration principle
and temporal locality---with three core characteristics in each.
The algorithms listed here are evaluated as described in Section~\ref{sec:transx:exp.algos}.
Algorithms are described in detail in the Appendix.}
\label{tab:transx:characteristics}
\end{table}

\section{Experiments}\label{sec:transx:exp}

We selected 11 reinforcement learning algorithms based on their exemplary usage of stochasticity, explicit diversity, separate exploration objectives, and orientation to global or local temporal locality.
We trained and tested each algorithm in discrete and continuous domains and in the presence of shortcut, delta, or barrier novelties.

\subsection{Exploration Algorithms}
\label{sec:transx:exp.algos}

\begin{figure}[t]
    \begin{center}
        \includegraphics[width=0.75\textwidth]{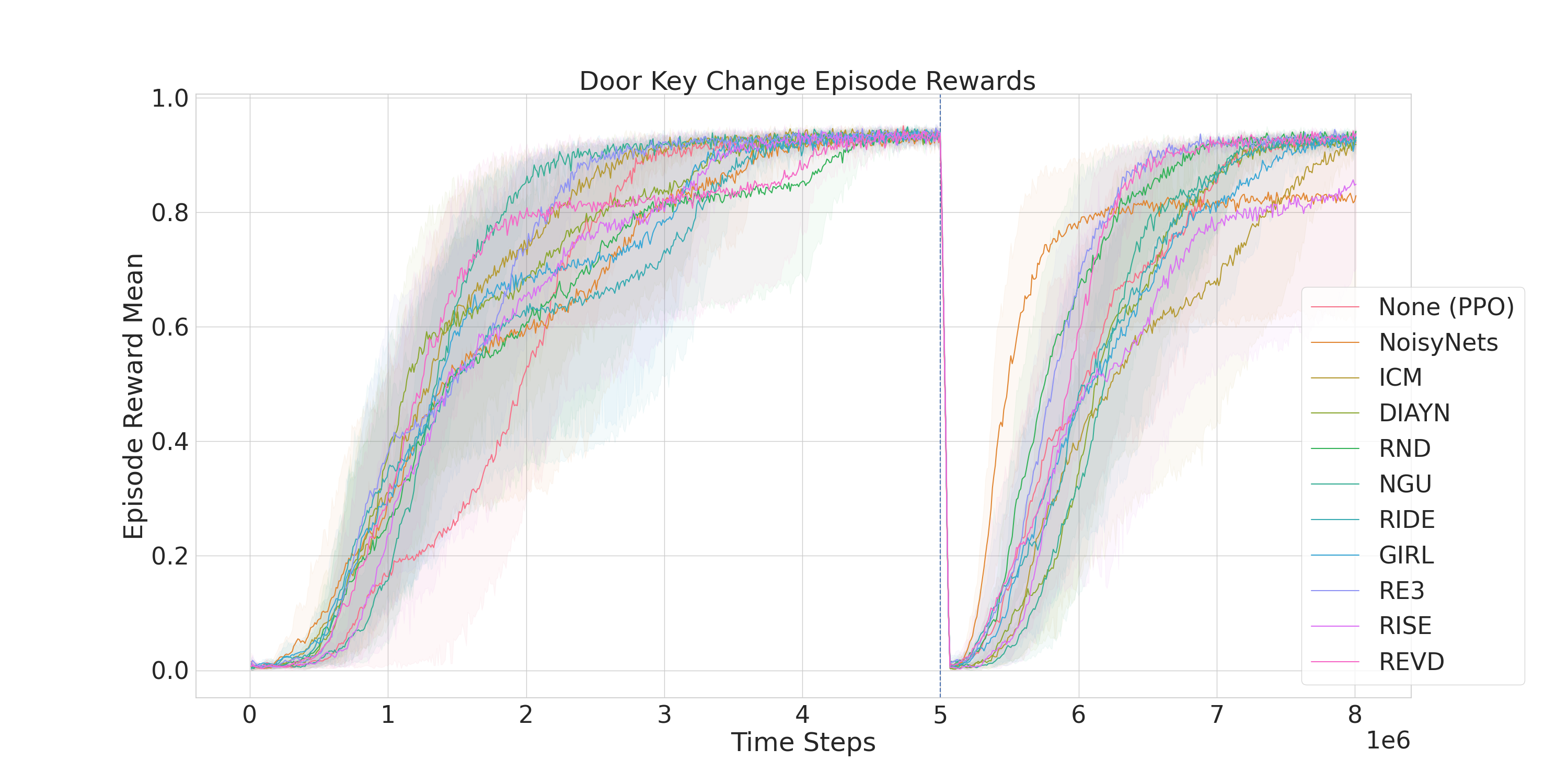}
    \end{center}
    \caption{Full learning and adaptation process of eleven RL exploration algorithms on the \texttt{DoorKeyChange} novelty problem from NovGrid~\cite{balloch2022novgrid}. 
    The agents first learn a task assuming a stationary MDP. 
    The rate of learning at this stage is {\em convergence efficiency}.
    At time step 5,000,000 novelty is injected into the environment, transferring from $MDP_\mathrm{source}$ to $MDP_\mathrm{target}$,  often causing a performance drop-off. 
    The algorithms then recover their performance as they learn the new world transition dynamics. 
    The rate of learning at this stage is {\em adaptive efficiency}.
    The maximum episode reward is the {\em final adaptive performance}, which may not always be as high as pre-novelty performance.
    } 
    \label{fig:transx:full_group_doorkey}
\end{figure}

For our assessment, we focus on model-free, on-policy deep policy gradient methods that apply to a variety of reinforcement learning tasks. 
Specifically, we use proximal policy optimization (PPO)~\cite{schulman2017proximal}, a high-performing actor-critic policy gradient method, as the algorithmic backbone of all the exploration methods we test. 
On-policy actor-critic methods such as PPO are more versatile than off-policy methods, which only apply to a subset of RL problem formulations. 
For example, methods like Deep Q-Networks~\cite{mnih2015human} only apply to problems with discrete action spaces and methods like Soft-Actor Critic~\cite{haarnoja2018soft} and Deep Deterministic Policy Gradients~\cite{lillicrap2019continuous} only work in continuous control environments. 
Additionally, off-policy methods are very sensitive to the management of an experience replay buffer for successful learning~\cite{mnih2015human}, which becomes significantly more complex when adapting online
because hyperparameters such as how often the experience replay buffer should be reset become potential confounding variables.
In an effort to control as many independent variables as possible and focus our investigation on exploration, we only consider the PPO algorithm for this initial investigation.

We select 11 popular exploration algorithms that represent a broad sampling of exploration principle and temporal locality categories, while being compatible with PPO and our environments.
Those algorithms are Random Network Distillation (RND)~\cite{burda2018exploration}, Intrinsic Curiosity Module (ICM)~\cite{pathak2017icm}, Never Give Up (NGU)~\cite{Badia2020Never}, Rewarding Impact-Driven Exploration (RIDE)~\cite{raileanu2019ride}, Renyi State Entropy Maximization (RISE)~\cite{yuan2022renyi}, Rewarding Episodic Visitation Discrepancy (REVD)~\cite{yuan2022rewarding}, enerative Intrinsic Reward Learning (GIRL)~\cite{yu20girl}, Parameter Space Noise for Exploration (NoisyNets)~\cite{plappert2018parameter}, and ``online'' Diversity Is All You Need (DIAYN)~\cite{eysenbach2018diversity}. 

Table~\ref{tab:transx:characteristics} shows how the algorithms relate to exploration characteristics; descriptions of the algorithms can be found in Appendix~\ref{app:transx:algos}.
Our implementation of these algorithms is based on the Stable-Baselines3~\cite{raffin2021stable} and RLeXplore libraries,\footnote{https://github.com/RLE-Foundation/RLeXplore} 
which we modify and expand for the purposes of our investigation.

\subsection{Learning Environments and Transfer Tasks}\label{sec:transx:exp.envs}

To experiment with online transfer, agents are trained to convergence in one environment (the source task), and then a novelty is introduced to create the target task. 
The agent must recover its performance during online execution in the target environment.
We run our experiments with two transfer learning libraries, NovGrid~\cite{balloch2022novgrid} and Real World Reinforcement Learning suite~\cite{Dulac-Arnold2021rwrl}. 

NovGrid, as described in Chapter~\ref{chapt:novgrid}, is a specialization of the MiniGrid~\cite{MinigridMiniworld23} environment designed to promote experimentation in novelty adaptation in RL.
Specifically, NovGrid sets up learning scenarios then injects a novelty---changing the transition dynamics---at a time that is unknown to the agent.
We use three novelty environments within NovGrid---\texttt{DoorKey}, \texttt{LavaMaze}, and \texttt{CrossingBarrier} environment---which are used with the injection of specific novelties.
\texttt{DoorKeyChange} is a delta novelty in which a DoorKey environment is changed so that the key that opens the door is changed.
\texttt{LavaProof} is a shortcut novelty where the lava in LavaMaze is changed from being a zero-reward terminal state into a safe, passable, non-terminal state.
\texttt{LavaNotSafe} is a barier novelty that is functionally the reverse of \texttt{LavaProof}, changing the lava in \texttt{LavaMaze} from non-terminal into a terminal state. Lastly, in \texttt{CrossingBarrier}, the impassable but safe walls are exchanged for standard, terminal-state lava.  
%
We allowed the algorithms to run until the majority of runs on all algorithms converged 
before the novelty was injected. 
We tuned the hyperparameters of the algorithms on the novelty-free \texttt{DoorKey} environment for use with the NovGrid environments, maximizing convergence in the source environment so as to help ensure convergence on the source task. 
The details of the hyperparameter tuning is in Appendix~\ref{app:transx:impl:hyperparams}.  

The Real World Reinforcement Learning suite~\cite{Dulac-Arnold2021rwrl} provided a continuous control environment for evaluating adaptation performance. 
We tuned the hyperparameters of our algorithms on the Cartpole-Swingup environment by changing the pole length, which maintains the same approximate difficulty of the target task.
We evaluate OTTA performance on the more complex Walker2D environment with the ThighIncrease novelty, where the length of the thigh link is increased from 0.15 meters to 0.3 meters.
See Figure~\ref{fig:transx:envs} for illustrations of the environments and novelties.

\subsection{Measuring Online Test Time Adaptation Performance}\label{sec:transx:exp.metrics}

To assess the exploration methods, we measure learning efficiency and performance motivated by the desire to minimize the number of environment interactions required to learn good policies in the target task, as described in Chapter~\ref{chapt:novgrid}.
The primary metrics are: 


\textbf{Adaptive efficiency}: The number of environment steps necessary for the agent to reach 95\% of maximum performance on the target task.


\textbf{Transfer Area Under the Curve (Tr-AUC)}:
Inspired by the performance ratio of \cite{taylor2009transfer}, Tr-AUC is a novelty-agnostic measure of the overall transfer performance as a function of both the source and target task:
\begin{equation}
\text{Tr-AUC} = \frac{1}{2} \left(\max(r_{\text{S}})+\frac{1}{K} \sum_{i \in K} r_{i,\text{T}} \right)
\label{eq:trauc}
\end{equation}
%
where $\max(r_{\text{S}})$ refers to the final performance on the source task and the summation over $r_\text{T}$ gives accumulated adaptive performance until the final adaptive performance point on the target task.  
Tr-AUC balances efficient adaptation with prior task performance by penalizing methods that performed well on the target task due to underperforming on the source task or vice versa. 
For all metrics, we calculate the mean and standard deviation of a bootstrapped sampling of the runs of each method, and calculate the interquartile mean (IQM) and the bootstrapped 95\% confidence interval per~\cite{agarwal_deep_2021}. 

One of the key assumptions that we make in the motivation of this work is that in real-world online test time adaptation scenarios, the policy is assumed to have converged to maximize the performance on the source task before novelty is injected and the policy must be adapted to the target task. 
However, in practice one of the deficiencies of deep reinforcement learning is the highly stochastic nature of convergence, especially in sparse reward tasks like those of NovGrid. 
For an analysis best aligned with our motivations, we measure our results with respect to the full set of experiments that converged on the source task unless otherwise specified.

\section{Results and Discussion}\label{sec:transx:disc}

\begin{figure}
  \centering
  \begin{subfigure}[b]{0.47\textwidth}
    \includegraphics[width=\textwidth]{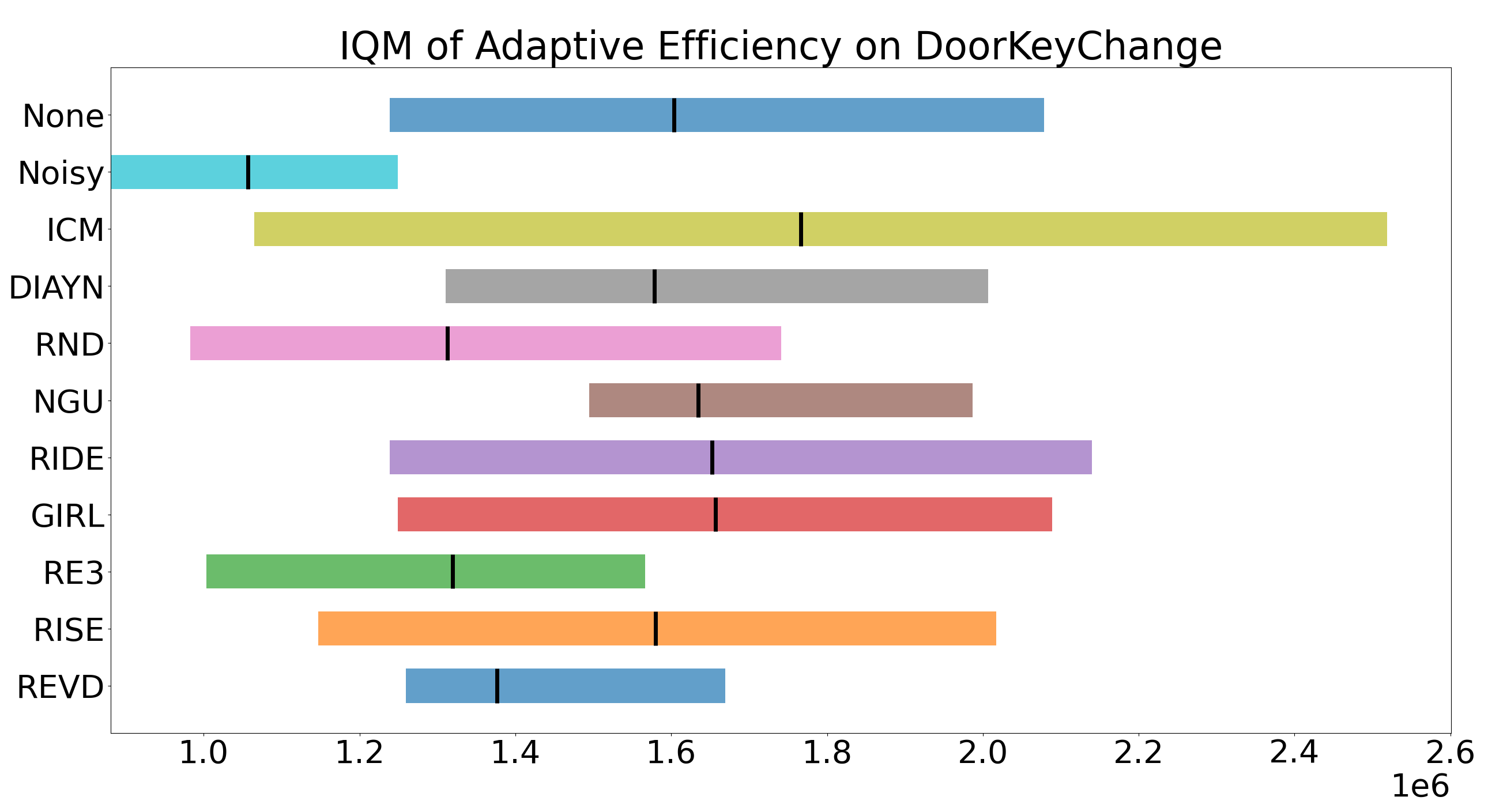}
  \end{subfigure}
  \begin{subfigure}[b]{0.45\textwidth}
    \includegraphics[width=\textwidth]{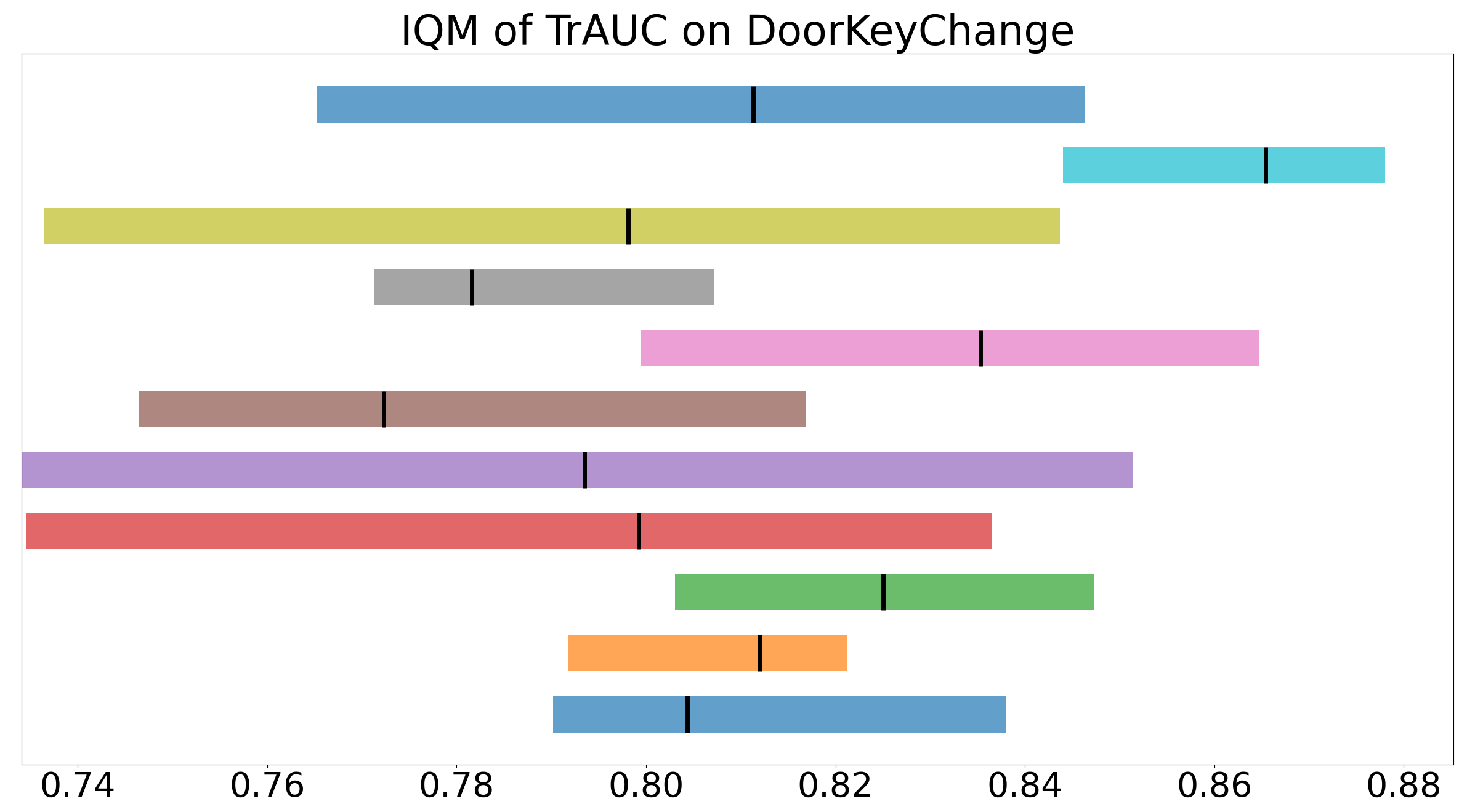}
  \end{subfigure}
  
  \caption{The Adaptive Efficiency and Tr-AUC inter-quartile mean plots for DoorKeyChange. These plots show NoisyNets performing well by both metrics. It should be noted that the Adaptive Efficiency graphs are only showing runs that converged on both tasks and the Tr-AUC graphs are filtering for runs that converged on the first task. 
  }
\end{figure}\label{fig:transx:iqm}

We compared the relationship between source task convergence efficiency with adaptive efficiency for different algorithms in our environments, and validated our analysis of these comparisons with results on the Tr-AUC metrics, exemplified in Figure~\ref{fig:transx:iqm}. 
A complete list of our results across all algorithms, metrics, environments, and novelties can be found in Appendix~\ref{app:transx:chars}.
We discuss and analyze our results in the context of the specific experimental research questions we laid out in Section~\ref{sec:transx:exp}. 

\paragraph{The exploration principle characteristics have a large impact on the effectiveness of online test time adaptation.}

Exploration methods with {\em stochasticity} and {\em explicit diversity} characteristics are slower to converge on the source task, but adapt most efficiently to the target task. 
Representing the exploration principles of explicit diversity and stochasticity, respectively, RE3 and NoisyNets are the two algorithms that consistently performed well.
While not as consistent in performance as RE3 and NoisyNets, other explicit diversity and stochasticity methods REVD, RND, and DIAYN also adapt efficiently in most tasks, as can be seen in Tables~\ref{tab:transx:adaptive_efficiency} and~\ref{tab:transx:trauc}.

Further reflecting the importance of exploration principle, ICM, NGU, and other \textit{separate objective} performed consistently below average on the tasks and novelties. 
This can be attributed to inductive bias caused by the task-dependence of separate objective exploration methods.
The ICM exploration method adds an inductive bias to the typical prediction error-based curiosity metric by focusing only on state change predictions \textit{that result from agent action}.
This is a productive approach in conventional single-task RL because it is robust to arbitrary changes in the environment, like the ``Noisy TV'' problem~\cite{burda2018exploration}. 
However, in online test time adaptation this would mean the exploration algorithm might avoid the novelty as it was not caused by agent action. 
NGU and several other separate objective algorithms use a similar action-focused inductive bias in their embedding spaces and, as a result, also see their performance suffer.

\begin{figure}[t]
    \centering
    \includegraphics[width=0.75\textwidth]{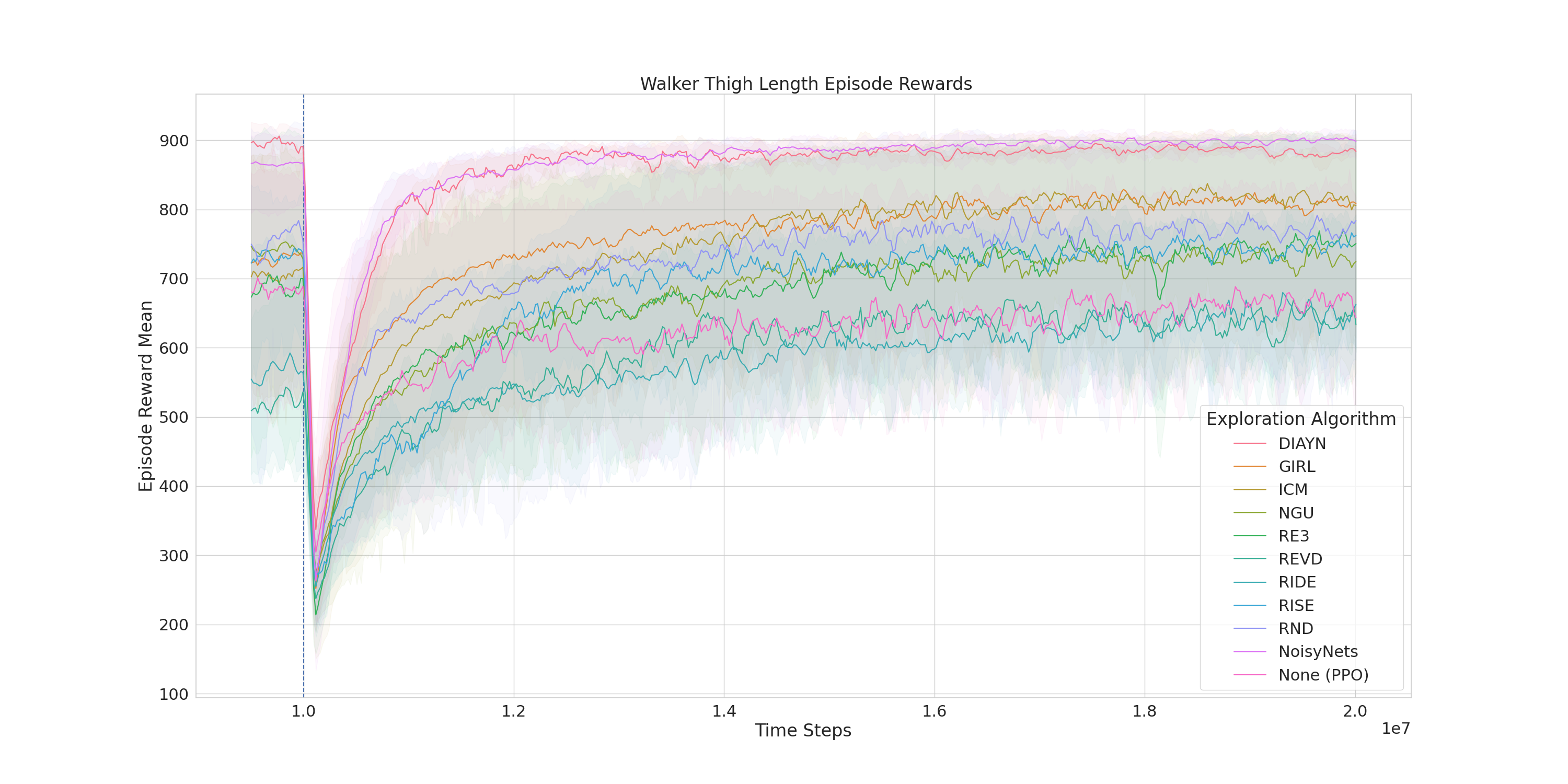}
    \caption{The reward plot from dm\_control Walker-Walk ThighIncrease delta novelty transfer task. 
        The vertical line at 1E7 steps indicates where novelty was injected. 
        The shaded areas represent the variance over all seeds.    
    NoisyNets and DIAYN are the highest performing and most efficient adapting methods. In contrast to the DoorKeyChange discrete delta novelty, there appears to be some correlation between performance before and after the novelty. The shaded areas represent the variance over all seeds.}
    \label{fig:transx:walker}
\end{figure}

\begin{figure}[t]
    \centering
        \includegraphics[width=0.75\textwidth]{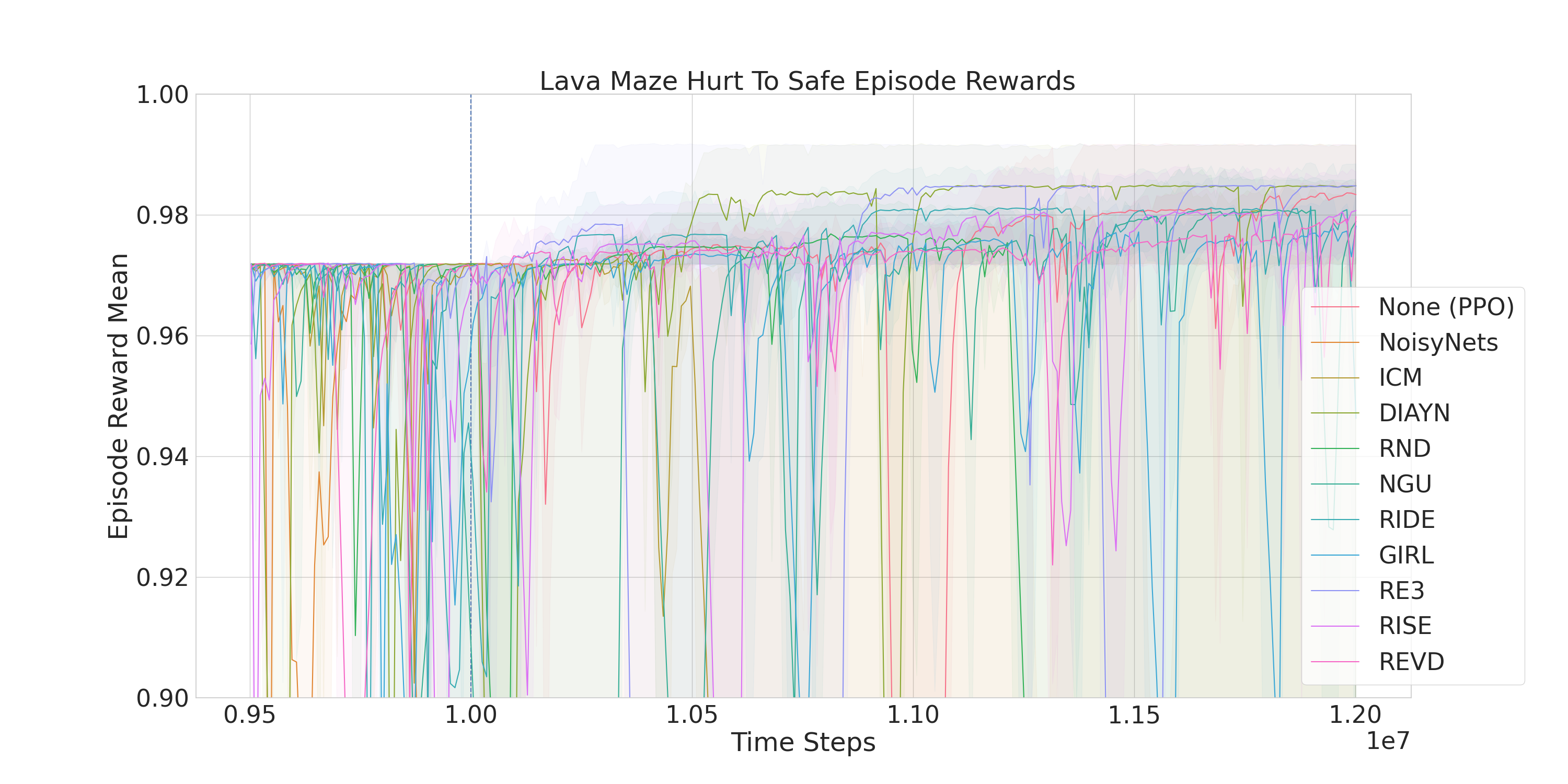}
        \caption{Results from the \texttt{LavaSafe} shortcut novelty. 
        The vertical line at 1E7 steps indicates where novelty was injected. 
        The shaded areas represent the variance over all seeds.
        Some of the exploration algorithms are able to find the shortcut, rising above the pre-novelty performance, while others never discover the shortcut. 
        }
    \label{fig:transx:shortcut}
\end{figure}

\paragraph{In continuous action environments, exploration methods with stochastic principles dominate, and the difference between of temporal locality characteristics is more important than in discrete action environments.}

Stochastic methods dominate in the continuous action domain. 
DIAYN and NoisyNets recover significantly faster than all other methods. 
Diversity in exploration for transfer is less important to efficiency than in discrete experiments, but still performs on par with the non-stochastic exploration methods.
This is most likely a result of high transferability of the continuous control ThighIncrease novelty compared to the discrete environments novelties; because of the nature of a continuous action space, noise in both the random conditioning space of DIAYN and the noisy weights of NoisyNets exposes those policies to ``nearby'' actions corresponding to the new optimal policy.
The explicit diversity principles underlying methods like RE3, REVD, and RND are more impactful in discrete action space environments as the optimal actions in $MDP_{target}$ are not similar to the optimal actions in $MDP_{source}$. 
That said, diversity methods are not significantly worse in adaptive performance than separate objective methods, thus remain useful. 

Temporal locality showed greater impact on performance in our continuous environment compared to our discrete environment. 
As shown in Figure~\ref{fig:transx:walker}, we find that the time-independent strategies---NoisyNets and DIAYN---dominate; the temporally global strategies such as RND, ICM, and NGU perform well; and the temporally local strategies struggle the most both pre- and post-novelty. 
We attribute this result to optimal continuous control policies often only needing small, smooth action differences in time to learn a policy, which favors exploration methods with global and time-independent temporal locality.

\begin{table}[t]
    \centering
    \footnotesize
    \begin{tabular}{|c|c|c|c|c|c|}
        \hline
         & \multicolumn{5}{|c|}{Adaptive Efficiency $\downarrow$} \\ \cline{2-6} 
        Exploration & DoorKeyChange & LavaNotSafe & LavaProof & CrossingBarrier & ThighIncrease \\ 
        Algorithm & ($10^{6}$) & ($10^{6}$) & ($10^{4}$) & ($10^{5}$) & ($10^{6}$) \\ 
        \hline 
        \hline
        None (PPO) & 1.5 $\pm$ 0.477 & 2.56 $\pm$ 2.09 & \textbf{2.05 $\pm$ 0.0} & 6.48 $\pm$ 3.15 & 3.4 $\pm$ 2.51 \\
        \hline
        NoisyNets & \textbf{0.965 $\pm$ 0.204} & 0.963 $\pm$ 0.534 & 7.58 $\pm$ 9.15 & 5.88 $\pm$ 3.72 & 1.69 $\pm$ 0.538 \\
        ICM & 1.57 $\pm$ 0.589 & 7.58 $\pm$ 1.26 & \textbf{2.05 $\pm$ 0.0} & 7.69 $\pm$ 4.69 & 4.18 $\pm$ 1.39 \\
        DIAYN & 1.52 $\pm$ 0.422 & 3.65 $\pm$ 2.47 & 5.8 $\pm$ 5.31 & 5.43 $\pm$ 3.71 & \textbf{1.66 $\pm$ 0.389} \\
        RND & 1.23 $\pm$ 0.385 & 4.64 $\pm$ 3.63 & \textbf{2.05 $\pm$ 0.0} & 5.25 $\pm$ 2.39 & 2.81 $\pm$ 1.46 \\
        NGU & 1.58 $\pm$ 0.317 & 2.39 $\pm$ 1.38 & 6.4 $\pm$ 11.5 & 4.41 $\pm$ 4.02 & 3.71 $\pm$ 1.74 \\
        RIDE & 1.53 $\pm$ 0.527 & 4.51 $\pm$ 2.34 & 2.56 $\pm$ 1.35 & 5.32 $\pm$ 3.71 & 5.18 $\pm$ 2.73 \\
        GIRL & 1.57 $\pm$ 0.541 & 5.49 $\pm$ 3.1 & \textbf{2.05 $\pm$ 0.0} & 6.31 $\pm$ 4.57 & 3.08 $\pm$ 1.98 \\
        RE3 & 1.21 $\pm$ 0.312 & \textbf{0.896 $\pm$ 0.21} & \textbf{2.05 $\pm$ 0.0} & \textbf{4.14 $\pm$ 2.07} & 4.32 $\pm$ 1.81 \\
        RISE & 1.41 $\pm$ 0.374 & 1.37 $\pm$ 0.478 & \textbf{2.05 $\pm$ 0.0} & 4.67 $\pm$ 3.26 & 3.6 $\pm$ 0.597 \\
        REVD & 1.27 $\pm$ 0.319 & 2.43 $\pm$ 1.34 & 2.87 $\pm$ 1.64 & 5.43 $\pm$ 3.24 & 3.92 $\pm$ 0.202 \\
        \hline
    \end{tabular}
    \caption{This table shows the mean and variance of the adaptive efficiency on the post-novelty tasks. It is computed by calculating the number of steps from the start of the novel task until convergence on the second task. Thus, lower numbers are better. Only runs that converged on both tasks are taken into account for this metric.}
    \label{tab:transx:adaptive_efficiency}
\end{table}

\begin{table}[ht]
    \centering
    \footnotesize
    \begin{tabular}{|c|c|c|c|c|c|}
        \hline
         & \multicolumn{5}{|c|}{Transfer Area Under Curve $\uparrow$} \\ \cline{2-6} 
        Exploration & DoorKeyChange & LavaNotSafe & LavaProof & CrossingBarrier & ThighIncrease \\ 
        Algorithm & ($10^{-1}$) & ($10^{-1}$) & ($10^{-1}$) & ($10^{-1}$) & ($10^{2}$) \\ \hline 
        \hline
        None (PPO) & 7.72 $\pm$ 0.792 & 7.43 $\pm$ 1.29 & 9.66 $\pm$ 0.0835 & 8.89 $\pm$ 0.297 & 6.5 $\pm$ 1.63 \\
        \hline
        NoisyNets & \textbf{8.13 $\pm$ 1.23} & \textbf{8.37 $\pm$ 0.885} & 7.69 $\pm$ 3.36 & 8.94 $\pm$ 0.388 & 8.62 $\pm$ 0.39 \\
        ICM & 7.28 $\pm$ 1.07 & 5.43 $\pm$ 0.667 & 9.22 $\pm$ 1.16 & 8.74 $\pm$ 0.537 & 7.25 $\pm$ 1.56 \\
        DIAYN & 7.54 $\pm$ 0.624 & 6.25 $\pm$ 1.22 & \textbf{9.7 $\pm$ 0.0773} & 9.01 $\pm$ 0.493 & \textbf{8.72 $\pm$ 0.203} \\
        RND & 8.09 $\pm$ 0.542 & 6.25 $\pm$ 1.53 & 9.37 $\pm$ 0.66 & 9.0 $\pm$ 0.399 & 7.47 $\pm$ 1.73 \\
        NGU & 7.56 $\pm$ 0.508 & 6.86 $\pm$ 1.38 & 9.48 $\pm$ 0.38 & 9.09 $\pm$ 0.444 & 7.07 $\pm$ 1.68 \\
        RIDE & 7.67 $\pm$ 0.727 & 7.63 $\pm$ 0.895 & 9.5 $\pm$ 0.605 & 9.02 $\pm$ 0.373 & 5.76 $\pm$ 1.67 \\
        GIRL & 7.59 $\pm$ 0.855 & 6.01 $\pm$ 1.08 & 9.55 $\pm$ 0.295 & 8.86 $\pm$ 0.51 & 7.45 $\pm$ 1.74 \\
        RE3 & 8.12 $\pm$ 0.387 & 6.82 $\pm$ 1.48 & 9.37 $\pm$ 0.524 & \textbf{9.1 $\pm$ 0.266} & 6.77 $\pm$ 1.99 \\
        RISE & 7.35 $\pm$ 1.08 & 7.07 $\pm$ 1.77 & 9.42 $\pm$ 0.402 & 9.09 $\pm$ 0.343 & 7.05 $\pm$ 1.03 \\
        REVD & 7.99 $\pm$ 0.402 & 7.3 $\pm$ 1.68 & 9.69 $\pm$ 0.056 & 8.92 $\pm$ 0.384 & 5.58 $\pm$ 1.33 \\
        \hline
    \end{tabular}
    \caption{The mean and variance of the transfer area under the curve metric, which is computed by adding final reward on the first task with the area under the reward curve in the second task. Higher numbers indicate better adaptation. This only includes runs that converged on the first task.}
    \label{tab:transx:trauc}
\end{table}


\paragraph{Compared to delta novelties, shortcut novelties increase the importance of diversity principles, while barrier novelties demonstrate the limitations of exploration to improve transfer in general.}
On delta novelties DoorKeyChange and ThighIncrease, stochastic methods have very good general performance.
However, in the LavaProof shortcut novelty, the stochastic method NoisyNets fails to adapt and find the shortcut novelty, whereas the DIAYN stochastic method excelled. 
DIAYN differentiates itself from NoisyNets by combining elements of stochasticity with explicit diversity. 
As can be seen in Figure~\ref{fig:transx:shortcut}, globally temporal methods NGU, GIRL, and ICM also fail to consistently identify the shortcut over the safe lava in spite of learning how to safely navigate around it in the source task. 
One possible explanation is that shortcut novelties have no performance drop that forces models to explore more.
In that case, it illustrates a scenario in which implementation of an exploration principle would be very important, such as to require a principle of explicit diversity.

On the other extreme, the barrier novelty results from CrossingBarrier and LavaNotSafe showed methods with exploration principles of stochastic and explicit diversity generally continued to be most effective, but there is larger variance between methods within and across the categories.
This difference in variance is especially obvious in the CrossingBarrier task, where adaptive efficiency and Tr-AUC variances are as high as $91.1\%$ of the mean. 
These findings suggest the limits of exploration to improve transfer.
For the barrier novelties, there is the target task solution is significantly longer than the target task solution compared to barrier novelties, meaning that often less prior knowledge can be transferred. 
Thus, at the extreme, online test time adaptation for a barrier novelty is akin to learning two single-tasks with no prior knowledge, as compared to online test time adaptation for a delta or shortcut novelty. 
It illuminates online test time adaptation's implicit assumption that some knowledge learned in the source task can be transferred to the target task, and suggests that most general purpose exploration methods, such as those studied in this work, are unlikely to benefit policy adaptation to difficult barrier novelties in general. 



\section{Key Takeaways}\label{sec:transx:conclusion}

The understanding of online test time adaptation for reinforcement learning developed in Chapter~\ref{chapt:novgrid} leads us to see the importance of exploration for adaptation, but most prior exploration methods are not designed for adaptation.
In an effort to determine which characteristics of traditional exploration algorithms are important for efficient adaptation in RL, we evaluated several deep reinforcement learning exploration algorithms on a number of OTTA problems. 
Our results and analysis reveal three key findings:
(1)~Exploration principles of \textit{explicit diversity}, represented by a method such as RE3, and \textit{stochasticity}, such as NoisyNets, are the most consistently positive exploration characteristics across our novelty and environment types. 
(2)~Time-independent and stochasticity-based exploration methods
are best suited to online test time adaptation in the continuous control tasks, whereas temporal locality characteristics are less important in discrete control tasks. 
(3)~The relative importance of exploration characteristics like explicit diversity varies with novelty type. 
The findings that stochasticity and explicit diversity in exploration lead to more efficient adaptation demonstrate that carefully designed exploration strategies can improve an agent's ability to adapt online to environmental changes. 
The fact that these characteristics outperform separate objective and temporally local exploration approaches suggests that task-agnostic exploration principles (such as stochasticity and diversity) are more beneficial for adaptation than task-specific exploration strategies. 
Taken together, these results highlight the benefits to agent adaptation efficiency of understanding characteristics of the environment, the agent's exploration approach, and potential novelties, as well as the importance of the relationship between these characteristics.

In addition to the results themselves, the scale of this research effort contributes a sweeping baseline of exploration approaches for future OTTA research. 
That said, the limitations of this work provide opportunities for future investigation. 
Specifically, the characterization of exploration algorithms can be used to produce approaches tailor-made for OTTA problems, both for individual novelties or many novelties.
While all of the results presented in this chapter cover known stationary \textit{tabula rasa} RL exploration approaches, the algororithms each have differing but overlapping combinations characteristics. 
The fact that slightly different combinations of characteristics impacts results in specific novelties suggests that designing algorithms by combining the best exploration characteristics for a given environment and novelty could maximize an agent's adaptive efficiency.

Taking that idea of combining characteristics one step further and considering the notion of \textit{objective mismatch}~\cite{lambert2020mismatch} (which we explore in detail in Chapter~\ref{chapt:dops}), an interesting direction of future research would be examining how an agent might effectively combine multiple different exploration algorithms.
By having a modular exploration approach that employed different exploration characteristics based on the nature of the novelty, agent would be able to adapt efficiently to many different novelties with only limited understanding of the novelty.

The key insight provided by the work in this chapter is that design of reinforcement learning OTTA agents must take into consideration the nature of the problem setting and the how this interacts with the agent's capacity to explore. 
The data sampled through exploration is critical for adaptation, and, as the results in this Chapter show, exploration methods ideal 
with respect to pre-novelty policy convergence 
are not necessarily best suited to adaptation.
By considering exploration as an attribute of the agent that depends on the environment and potential for novelty, agents can adapt more capably either through a single ideal exploration method or exploration specifically selected according to a novelty. 

The insights provided by this chapter's results further supports the broader thesis of this dissertation.
Specifically, the results in this chapter demonstrate that efficient adaptation requires exploration strategies that prioritize reducing task-overfitting and the distribution shift between $MDP_{\mathrm{source}}$ and $MDP_{\mathrm{target}}$ learning data.
We built on these findings to do the work described in Chapter~\ref{alg:dops:dops} where we complement our investigation of exploration---how an agent acquires learning data---with how that agent then decides which data is worth learning on.

    \chapter{Dual Objective Priority Sampling in Model-based Reinforcement Learning}
\label{chapt:dops}

Model-based reinforcement learning (MBRL)
is theoretically well suited for the problem of online test-time adaptation (OTTA) as natural environment changes may have a large effect on the optimal policy but only cause a small change to the environment. 
Given that world models may require less adaptation than policy models, an agent should theoretically be able to more efficiently adapt its policy by also modeling the dynamics of the world.
However, even state-of-the-art MBRL approaches such as the Dreamer~\cite{hafner2019dream,hafner2020dv2,hafner2023dv3} family of algorithms struggle to adapt efficiently in part due to the way learning data are sourced and sampled in MBRL.
Because learning data are sourced from environment interactions by agent behavior, the distribution of learning data is biased toward states frequently visited by the optimal policy. 
As a result, the longer training continues the more likely it is that world model will overfit to on-policy environment dynamics. 
On the other hand, the world model samples data from a buffer for world model learning without regard to whether these data are useful to the policy learning objective. 
Prior work~\cite{lambert2020mismatch} describes the tension between the optimization objectives of the world model and the policy as \textit{objective mismatch}. 
To alleviate the problems caused by mismatched objectives, the policy and world model training processes should be independent enough to prioritize differing objectives while aligned enough to avoid large distribution mismatches between policy and world models. 

Compared to stationary MDP problem settings, OTTA settings further complicate the balance between independence and alignment of objectives because adapting to non-stationarity also causes different \textit{distribution shifts} for policy and world models. 
When adapting source MDP models to the target MDP, a small shift in the transition distribution will often correspond to a very differently distributed optimal policy.
As learning new policies will impact the state distribution visited by the agent, efficient policy and world model adaptation requires the world model to accurately represent the environment beyond just the states frequently visited by the source policy. 

As an example, imagine a commuter driving to work in a new city. 
In the typical case of a city where all dynamics are stationary, or changes are short lived and stochastic enough to be solved through robustness, finding a route that minimizes commute time will converge to a single route. 
Once converged, the commuter's mental map of the city will be limited to the optimal route to work, because that is the only part of the city they experience. 
If suddenly known roads are blocked or a new shortcut opens up, the commuter should adapt their route.
However, if the commuter's mental model is strongly biased to only the original route, then the commuter will see no benefit from having a mental model of the city.
In fact, trying to update the mental model at the same time as finding a new route to work is more mentally taxing than if a mental model was not used to find new routes. 

If a world model predicts incorrect future states due to overfitting then the world model may negatively impact the policy. 
Problems with overfitting can be improved by increasing exploration or further biasing sampling toward world model coverage; however, any additional learning or sampling bias towards the world model threatens to further slow the adaptation of the policy.

To resolve the tension between these disjoint learning objectives, we introduce dual-objective priority sampling (DOPS), a novel sampling method for MBRL that enables more efficient learning and adaptation to OTTA problems.
DOPS increases learning and adaptation efficiency by enabling the policy and the world model to learn from training data that best suits a specific objective without undermining the need for shared learning distributions. 
Through theoretical analysis of the Dreamer architecture in the context of the OTTA learning problem, we identify causes of distribution shift within and between the different component models of Dreamer.
Our proposed sampling approach addresses the core problem of aligning distributions of learning data with model learning objectives.
Then we propose a low-overhead algorithm that combines all the disparate sampling solutions and discuss its computational complexity implications.
To verify the effectiveness of this method empirically, we evaluate learning and adaptation performance on OTTA problems implemented in Novelty Minigrid, a novelty injection modification of the DMControl-based~\cite{tunyasuvunakool2020dmcontrol} Real-World Reinforcement Learning environment. 

In summary, we make the following key contributions:
\begin{enumerate}
\item We analyze the different learning categories present in interleaved model-based reinforcement learning that use an actor-critic for optimizing agent behavior, extend the objective-mismatch hypothesis to a consideration of distinctions between models with different learning signals, how models of each learning adversely affected by OTTA problems, and how prioritized sampling can compensate.
\item We formulate and analyze the dual objective priority sampling (DOPS) algorithm for addressing specific challenges of adaptation in interleaved model-based RL methods.
\item We demonstrate that DOPS improves adaptation performance over Dreamer and state-of-the-art Curious Replay in adaptation-focused environments Novelty MiniGrid and the MuJoCo-based Real World-RL Suite.
\end{enumerate}

\section{Preliminaries}
\label{sec:dops:prelim}

\subsection{Sampling Training data in Dreamer MBRL Models}

Recall from Chapter~\ref{chapt:background} that Dreamer~\cite{hafner2019dream} refers to a Dyna-style~\cite{sutton1991dyna} model-based reinforcement learning algorithm and architecture, and forms the basis of a family of MBRL techniques~\cite{hafner2019dream,sekar2020planning,hafner2020dv2,mendonca2021lexa,hafner2022director,hafner2023dv3,kauvar2023curious}.
Dreamer's architecture is broadly divided into two end-to-end updated modules: the RSSM-based \textit{world model} (defined in Equation~\ref{eq:dreamer:wm}) and a latent actor-critic \textit{behavior model} (defined in Equation~\ref{eq:dreamer:actorcritic}).

In the original Dreamer learning algorithm, given a replay buffer of sequences from prior agent interactions with the environment, a batch of sequences is sampled uniformly from the buffer for updating both the world model and the behavior model. 
At each training step, the Dreamer algorithm first updates the world model, which includes representation learning and prediction learning.
After the world model update, Dreamer uses the embeddings of the same data to update the behavior model, which includes policy learning and critic learning. 

Sampling old transitions from a replay buffer can be problematic for model-free variants of on-policy agents because old transitions become ``stale,'' meaning that they may not reflect the current value and policy distributions. 
In the Dreamer algorithm, however, this is not a problem. 
World model learning is not affected by the old data because the representation and prediction of the dynamics are independent of the policy distribution. 

Behavior learning is also unaffected by the problem of stale data distributions.
Instead of learning directly from the sampled interaction data, Dreamer's latent actor-critic learns from ``imagined'' rollouts in latent space. 
The embedded states of the real interaction data are used to initialize the behavior learning rollouts, and then the rollouts are imagined by the actor selecting on-policy actions, transition model predicting the next embedded state, the reward model predicting the reward of that state, and then repeating this process for finite horizon of steps. 
The resulting latent state-action distribution is on-policy and therefore well suited for behavior learning.

\subsection{Objective Mismatch in Model-based Reinforcement Learning}

As discussed at length in Chapter~\ref{chapt:background}, reinforcement learning can be broadly separated into model-\textit{based} and model-\textit{free} reinforcement learning. 
While it has advantages in learning efficiency, model-based reinforcement learning suffers from objective mismatch~\cite{lambert2020objective}. 
This occurs because the objectives maximized by the policy learning and world model learning processes are neither fully aligned nor fully separable.
The policy's objective is to learn the distribution over sequential actions that maximizes future expected task reward, while the world model objective is to minimize error in its representation and prediction of environment transition dynamics independent of the task. 
Yet these two tasks are at odds, where a more accurate, task-agnostic world model will yield a worse policy than a less accurate, task-focused world model~\cite{lambert2020objective}. 
This is in large part attributable to the fact that, unlike other machine learning problems like supervised learning, data for the world model is sampled according to some entirely off-policy or offline data collection, or, as in the case of the Dreamer family of algorithms, both policy and world model learning learning data is collected and sampled non-i.i.d (independently identically distributed) according to the exploration-exploitation strategy of the policy. 

Thus, the objective mismatch in model-based reinforcement learning stems from several key factors:
\begin{enumerate}
\item Compounding Errors: The world model $f_\theta$ is typically trained to minimize one-step prediction errors. 
However, policy optimization often requires multi-step predictions, leading to error accumulation.
\item Reward Sparsity: In many tasks, rewards are sparse, making it challenging for the model to capture reward-relevant dynamics. 
Eysenbach et al. \cite{NEURIPS2022_935151cc} demonstrated that in sparse reward settings, models trained to minimize prediction error often fail to capture critical task-relevant information.
\item Non-uniform State Visitation: The optimal policy $\pi^*$ induces a state visitation distribution that often differs significantly from the distribution in the training data $\mathcal{D}$. 
This discrepancy, formalized by Levine et al. \cite{levine2020offline} as:
\begin{equation}
\mathbb{D}_{\text{KL}}(p_{\pi^*}(s) || p_{\mathcal{D}}(s)) \geq \epsilon
\end{equation}
where $p_{\pi^*}(s)$ and $p_{\mathcal{D}}(s)$ are the state distributions under the optimal policy and training data respectively, can lead to poor model performance in critical regions of the state space.
\end{enumerate}
\noindent State visitation non-uniformity is especially relevant in this dissertation as efficient adaptation from prior knowledge implies a need for accurate the world model prediction in parts state space potentially far from the current optimal policy. 
As in the adapting commuter example, the bias of the commuter's experience ultimately results in an adaptation process that is made more complex by the need to update both the policy and world models.

Although objective mismatch can lead to various issues, its impact on sample efficiency is particularly concerning, as it undermines one of the primary motivations for using model-based methods: sample efficiency. 
For example, sample \textit{inefficiency} can be caused by suboptimal exploration strategies. 
As discussed in Pathak et al.~\cite{pathak2017icm} and examined as one of the methods in Chapter~\ref{chapt:transx}, naive model error-based ``curiosity'' exploration methods can become sample inefficient by focusing on stochastic or irrelevant aspects of the environment.

While the model-policy objective mismatch is often acceptable for single-task reinforcement learning problems as overfitting the world model to the on-policy data distribution is tolerable when the goal, addressing this mismatch is critical when the environment changes as model-based agents are dependent on a model to be accurate on-policy.
Saemundsson et al.~\cite{Smundsson2018MetaRL} showed that models trained to minimize prediction error often struggle to transfer to new tasks, even when the underlying dynamics remain unchanged. 
Dreamer's underlying learning formulation learns a latent stochastic policy that is also used for sampling from the environment. 
However, as observed in prior work~\cite{dorka2023dynamic} and in this dissertation (Chapter~\ref{chapt:knowledge}), these decisions can worsen overfitting and, as a result, test time adaptation efficiency. 
Similarly to the ideas presented in this Chapter, there has been some effort to combat these overfitting problems with exploration~\cite{sekar2020planning,mendonca2021lexa,kauvar2023curious,ma2022maxent} or sampling~\cite{kessler2023effectiveness}. 
However, none of these works directly address the underlying issue of objective mismatch. 
We are not the first to consider using multiple replay priorities or buffers to solve objective mismatch problems. 
Laroche et. al.~\cite{larocheNEURIPS2021}, for example, uses independent policies and buffers for on-policy exploration and off-policy exploitation to enable a more explicit mixture of on-policy and off-policy updates.

\section{Dual Objective Priority Sampling}
\label{method}

Data prioritization methods in model-based RL---including both exploration methods~\cite{kauvar2023curious} and direct sampling methods~\cite{kessler2023effectiveness}---
use a non-uniform distribution of sample importance to provide the agent with data that improves learning. 
In light of objective mismatch, we must then ask the question: these prioritization methods optimize the learning data distribution \textit{with respect to which objective}? 

To avoid interference between the model and policy objectives in online test-time adaptation, we formed the following adaptive sampling requirements: 
\begin{enumerate}
    \item The data used for world model learning and behavior learning should be sampled according to the specific objective of each,  
    \item The sampling learning data for adaptation should balance model-specific priorities with the negative impacts of distribution shift, and 
    \item The data prioritization methods should consider the need of the replay buffer to supply both the world model and behavior policy with data.
\end{enumerate}


In this section, we propose the dual objective priority sampling (DOPS) method that we designed to reflect these adaptive sampling requirements.
We first identify how the specifics of Dreamer's architecture and learning algorithm present unique challenges in OTTA problems, and analyze these challenges with insights inspired by Curious Replay (CR)~\cite{kauvar2023curious} and Actor-Prioritized Experience Replay (LA3P)~\cite{saglam2023la3p}. 
We develop a sampling method to prioritize data according to the separate constraints and objectives of the model, actor, and critic. 

\subsection{Adaptive Sampling for Dreamer}

The Dreamer architecture can be decomposed into learning categories according to the distinct gradient distribution during learning. 
Related to the objective mismatch hypothesis of Lambert et. al.~\cite{lambert2020mismatch}, we believe that the objective mismatch in Dreamer extends to more than the model vs. the policy.  
In Dreamer, different components of the architecture are trained based on varying means of estimating the quality of a model, which leads to distinct groups that respond differently to adaptation. 
The Dreamer model components fall into one of four learning categories: 
\begin{enumerate}
    \item \textit{Prediction learning}, which includes the observation, reward, and discount prediction models, is characterized by the fact that the models are only experience gradients originating from estimating quality as error in regression of real data values. 
    \item \textit{Representation learning}, which includes the observation encoder, the trajectory model, and the dynamics prediction model.  
    The representation learning models are all configured to project an input into a learned compact embedding space, and the quality of this projection dictating the gradients are the divergences of these embeddings, either from different priors or based on the change of the embeddings through time. 
    \item \textit{Policy learning} of the latent actor policy, where because Dreamer uses an actor-critic learning approach,  gradients are a function of the critic-weighted ``policy gradient.'' 
    This means that the gradients reflect an approximation of the rate of change to the policy parameters that maximizes the expected reward. 
    \item \textit{Critic learning} of the latent critic value function used to approximate the discounted future reward of state action pairs. 
    The estimate of critic quality that determines these gradients is the difference between the change in reward and the change in discounted critic estimate through time. 
    As discussed in Chapter~\ref{chapt:background} this is called the TD error.  
\end{enumerate}

The Dreamer algorithm uses ``interleaved'' training, meaning that, unlike some model-based approaches~\cite{ha2018recurrent,micheli2023iris}, the behavior and world models are both for each iteration of a training loop on a single, shared batch of data drawn from the replay buffer. 
This leads to gradients from some losses impacting the multiple learning categories both directly and indirectly.
However, in spite of gradient mixing, the distinctions in gradient source are enough to differentiate the behavior of the gradient in models of different learning categories.

For efficient adaptation to sudden novel change, uniform sampling can be suboptimal for both the world model and behavior learning processes. 
Sampling of pre-novelty data may reinforce the errors in modeling and decision making only noticeable with respect to recent, post-novelty data relevant to the novelty.
Moreover, as the data is sourced from agent interaction that are highly biased towards solving the $MDP_{\mathrm{source}}$, uniformly sampled data will be neither diverse nor distributed with respect to the new $MDP_{\mathrm{target}}$.

In addition, adaptation changes the learning problem for models of all learning categories.
In \textit{tabula rasa} learning, small adjustments to initial random parameters in the direction of increased quality mean the likelihood of parameter adjustments leading to a model is low.
Recall from Chapter~\ref{chapt:background} that in the formulation of novelty-based OTTA a key difference from other forms of non-stationarity is the constrain that the $MDP_{\mathrm{source}}$ is related to the $MDP_{\mathrm{target}}$ by a \textit{knowable} transformation.
We refer to novelties as ``knowable'' to convey the fact that they can be defined as such a transformation. 
However, a novelty being knowable doesn't not mean that the transformation from $MDP_{\mathrm{source}}$ to $MDP_{\mathrm{target}}$ is trivial to model with gradient descent. 
As a result, uniform distributed data constitutes a large distribution shift with respect to the pre-novelty model leading to catastrophic forgetting. 

\subsection{Sampling for the World Model}

As discussed previously, any small change to $MDP_{\mathrm{source}}$ in OTTA can cause a very large distribution shift for the actor or critic.
However, the impact of data shift on representation and prediction learning when changing from $MDP_\mathrm{source}$ to $MDP_\mathrm{target}$ is related to the attributes that characterize the change. 
These attributes are complementary to novelty characterization theories from~\cite{boult2021towards} as well as those discussed in Chapter~\ref{chapt:novgrid}:
\begin{enumerate}
    \item \textbf{Differences in the dynamical process} of the domain, or the way the state changes from time step to time step. 
    This includes differences in control, which changes the way agent decisions affect the state. 
    For example, an agent trained to drive on roads re-tasked to operate off-road in dirt and mud. 
    The more extreme the difference in the dynamics, the greater the distribution shift experienced by the representation learning models.
    \item \textbf{Differences in the distribution of percept features} of the domain, such as the appearance of new entities, known entities occurring in unknown contexts, or a change in the appearance of known entities. 
    For example, an agent trained to drive only during the day re-tasked to operate at night will be surrounded by a different distribution of vehicles and perceive those vehicles differently. 
    The more extreme the difference in the distribution of percept features, the greater the distribution shift experienced by the prediction learning models.
    \item \textbf{Differences in task}, including changes in the distribution of rewards, initial conditions, and terminal conditions of the MDP. 
    An example of this is adapting an agent trained to map a glass-walled maze to find the shortest path to the center of a maze with pits instead of walls. 
    Changes in the distribution of terminal conditions and rewards like this does have a slight impact on transition, discount, and reward prediction models.
    However, as long as in both the $MDP_\mathrm{source}$ and $MDP_\mathrm{target}$ the task is roughly the same length, and the total return magnitude, and the majority of experiences involve non-terminal states, the prediction and representation learning models will not experience a major distribution shift regardless of how extreme the change. 
    This is because, while the task is dependent on the domain, the domain is not dependent on the task.
\end{enumerate} 

Curious Replay (CR)~\cite{kauvar2023curious} provides insights to address the issues that affect primarily prediction learning models, but also to some extent representation learning models. 
The first suggestion of CR is to prioritize samples based on the number of times an experience has been used for training. 
This count-based~\cite{bellemare2016unifying} sampling emphasizes the importance of learning with newly-collected data, but if the agent ever stops collecting data count-based prioritization will, in the training limit, converge to uniform. 
As training data is collected over the course of an agent's lifetime in online RL and resampled frequently during interleaved training, this prioritizes recent data without adding additional learning bias with respect to a uniform sampler. 
This is beneficial in general to adaptive agents because of its emphasis on recent experience.

The second method employed in CR sampling is adversarial prioritization. 
Similar to traditional Prioritized Experience Replay (PER)~\cite{schaul2016per}, this simply prioritizes samples according to their most recent world model learning loss (Equation~\ref{eq:dreamer:wm_loss}. 
As discussed in Chapter~\ref{chapt:transx}, such ``curiosity''-based incentives can make agents susceptible to the ``Noisy-TV problem,''~\cite{pathak2017curiousity}, where an agent will continually just seek out and train on states that contain unpredictably changing observations, like someone scrolling through programs on a television. 
CR, however, is less vulnerable to this problem because the count-based priority balances out this effect, and because this incentive is provided during sampling not agent interaction. 
CR samples data according to the balanced sum of these two priorities. So for a given sample $s_i$, the CR priority is~\cite{kauvar2023curious}:
\begin{equation}
    pr_{\mathrm{CR}}(s_i) = c \beta^{\nu_i} + (|\mathcal{L}_i| + \epsilon)^{\alpha}
\end{equation}
\label{eq:dops:cr}
Although OTTA is beyond the scope of the original CR work, because the combination of count-based and adversarially-based priorities provides a means by which to use old replay data without harming the adaptation process, we reason that it is well-suited to OTTA problems.
By emphasizing novelty in a way that balances old and new replay data and slowly phases out the older data, CR sampling can alleviate the distribution shift experienced by the world model. 
Work in offline-to-online reinforcement learning (discussed in greater detail in Chapter~\ref{chapt:cbwm} and the Appendix) shows that mixing the tapered mixing of RL loss gradients with (BC) loss gradients can lessen distribution shift when fine tuning BC policies with online RL.
So too with CR sampling, blending data from the old and new distributions provides training batches that ease the distribution shift that can lead to catastrophic gradients in representation and prediction learning. 

\subsection{Sampling Data for the Actor and Critic}

When adapting, the actor and critic models experience distribution shift regardless of the novelty characteristics. 
Dreamer's latent actor-critic can lessen this somewhat because the sudden shift in observations and dynamics can manifest as a more smooth shift in embedding space. 
However, 
because the actor learns to select actions with respect to the critic as a surrogate for future reward, the critic is key to distribution shifts in both the actor and critic models.

There are two main ways that behavioral distribution shift manifests through the critic. 
Firstly, the change from $MDP_\mathrm{source}$ to $MDP_\mathrm{target}$ in OTTA problems causes a distribution shift in the critic because imaginary rollouts are incorrectly valued with respect to the new reward and termination distributions. 
This will result in unusually large TD-errors that, instead of being smoothed out by TD-lambda, are actually amplified by TD-lambda as error will accumulate over longer trajectories. 
As data continues to be sampled that has a preference toward areas where the model is unfamiliar, because a converged Dreamer critic will both be biased towards the old policy, the critic will experience a number of poor value estimates with high gradients which, being disproportional to global critic accuracy, can lead to gradient overshooting and therefore catastrophic forgetting. 

We can correct for this shift and maximize adaptive efficiency by following the recipe of Prioritized Experience Replay (PER)~\cite{schaul2016per}, which uses importance sampling to prioritize high-TD error states for behavior learning. 
For Dreamer to work the latent actor-critic must remain on policy. 
So instead, we sample the initial states by prioritizing the total TD error of \textit{trajectory sequences}. 
By ensuring that the entire trajectory used to initialize the imaginary rollout has a high TD-error, we maximize the likelihood of learning on high TD-error imaginary trajectories.

There is additional bias in loss calculations when  using TD-based prioritization.
Schaul et. al.~\cite{schaul2016per} recognized this and suggest that exchanging an MSE loss typical in actor-critic for a Huber loss can help reduce errors from this bias~\cite{schaul2016per}:
\begin{equation}
    L_{\text{Huber}}(\delta_{TD}(\tau_i)) = \begin{cases} 
    \frac{(\delta_{TD}(\tau_i))^2}{2} & \text{if } |\delta_{TD}(\tau_i)| \leq 1 \\
    |\delta_{TD}(\tau_i)| - \frac{1}{2} & \text{otherwise}
    \end{cases}
\label{eq:dops:huber}
\end{equation}
Fujimoto et. al.~\cite{fujimoto2020equivalence} builds on this insight and suggested that a loss that is more to TD-error bias is a Huber loss modified to only calculate $L_2$ gradients over uniform data. 
Limiting the data exposed to $L_2$ can be done without modifying the loss when blending prioritized and uniform sampling by thresholding the priority at a $1$~\cite{saglam2023la3p}:
\begin{equation}
    pr_{\mathrm{PER}} = \left(\frac{\max(|\delta_{TD}(\tau_i)|^{\alpha}, 1)}{\sum_j \max(|\delta_{TD}(\tau_j)|^{\alpha}, 1)} \right)
\label{eq:dops:per}
\end{equation}
For our purposes, we find that this is additionally beneficial for solving OTTA problems. 
By making the TD-error loss linear in the limit instead of quadratic, we reduce the risk of gradient overshoot when adapting the critic.

The second cause of distribution shift in behavior learning is caused by the propagation of large TD-errors to the actor.
These high-TD errors can result from critic distribution shift or other causes like the prioritized selection of high-TD samples for learning. 
Saglam et. al.~\cite{saglam2023la3p} finds that if there exist transitions for which TD-error increasing corresponds to Q-value estimation error increasing for future states, the computed policy gradient will diverge from the true policy gradient for at least the current step (~\cite{saglam2023la3p} Theorem 1).
We already know that this occurs in the TD-lambda estimates in OTTA, and Saglam et. al.~\cite{saglam2023la3p} demonstrate in their work that this also occurs naturally when using PER~\cite{schaul2016per} for off-policy actor critics. 
Dreamer's latent actor-critic is on-policy.
However, we argue here that due to the nature of Dreamer's sampling process the conclusions of Saglam et. al.~\cite{saglam2023la3p} still apply by biasing behavior learning through the initialization based on a replay buffer. 
Specifically, because each step in a continuous trajectory is considered an initial starting point, even with uniform sampling the initial behavior states are \textit{non-i.i.d.}. 
While reinforcement learning is no-regret in theory, in practice  diversity is helpful for both learning and robustness of policy learning~\cite{sutton2018reinforcement} (see Chapter~\ref{chapt:transx} for more discussion on the values of diversity in adaptation of on-policy RL). 
The priority-based sampling of both the world model and the critic further worsens this problem as transitions will, in general, have higher than average TD-error and be less distributed. 

To address the distribution shift in the actor caused by high TD-error, we propose to follow the findings of Saglam et. al.~\cite{saglam2023la3p} and emphasize \textit{low-TD error} transitions for actor learning. 
We therefore follow draw samples for the actor prioritizing the inverse of the TD-error priority. 
\begin{equation}
    pr_{\mathrm{iPER}}= \frac{\sum_j \max(|\delta_{TD}(\tau_j)|^{\alpha}, 1)}{\max(|\delta_{TD}(\tau_i)|^{\alpha}, 1)}
\label{eq:dops:iper}
\end{equation}
By sampling for low TD-error transition sequences to initialize the imagined sequences for actor training, we dramatically reduce the risk distribution shift in the actor from any sources of high TD-error, whether from sampling or adapting to $MDP_{\mathrm{target}}$.

\subsection{Shared Transitions with Multiple Priorities}

Returning to our adaptive sampling requirements, 

\begin{enumerate}
    \item We have data prioritization methods that consider the mismatched needs objectives of representation learning, prediction learning, critic learning, and policy learning. 
    \item We have sampling approaches specific to adaptation that balance learning priorities with the negative impacts of distribution shift manifest of OTTA.  
\end{enumerate}

Three different prioritization methods, when implemented individually as separate SumTrees~\cite{schaul2016per} (as is typical) of $N$ transitions, each will take $O(log N)$ time to prioritize separations and triple the memory cost. 
If the actor and critic use a shared SumTree as suggested by Saglam et. al.~\cite{saglam2023la3p} will reduce the memory overhead but increase the sampling time to $O(N)$ in the worst case due to the computational requirements of calculating an inverse TD priority from a standard TD priority. 

In addition, we know that training all of these learning models on completely different data distributions violates the assumptions of ``interleaved'' model-based reinforcement learning theory~\cite{hafner2019planet} and actor-critic theory~\cite{konda1999actorcritic}.
This occurs because, fundamentally, the in-distribution performance of neural network-approximated functions does not reflect the out-of-distribution neural net performance. 
In Dreamer's learning configuration, there is complete codependency of these different elements.
The world model depends on the actor to provide new experiences that become progressively more optimal, the actor requires the world model and critic to correctly predict latent state transitions and approximate the reward respectively, and the critic depends on the world model and the actor to move the agent to more and more reward. 

We propose to merge these priorities by subsampling each batch of learning data according to the objectives and constrains of the world model, actor, and critic. 
First, a blend of CR and uniform samples is sampled by the world model. 
By blending some uniform samples in with CR samples, we can ensure that the world model training data still emphasizes novel transitions, but never so aggressively as to cause the critic to never predict correct values during adaptation. 
For the latent behavior learning, imagined latent trajectories are computed for each sample in the world model's learning data, and the TD-errors are calculated for each trajectory based on the initial error of the world model sample. 
Then, a percentage of those trajectories are masked to create two equal-sized batches of that distributed specific to the objectives of the critic and policy learning processes. 
Given the fraction of actor-critic data overlap $W\in [0,1]$, the critic and policy learning batches each overlap with $\frac{1}{2-W}$ of the data used in world model learning.
This method is significantly faster with much less overhead: per step the buffer is only prioritized once by CR and the batch data is only sorted according to TD-error. 
This gives this algorithm an efficiency of $O(log N + log B)$, where $B$ is the batch size.

This algorithm for Dual Objective Prioritized Sampling (DOPS) is described in the context of the Dreamer algorithm in Algorithm~\ref{alg:dops:dops},
where we have highlighted the steps that distinguish DOPS from traditional Dreamer in \textcolor{blue}{blue}.
While we analyze, present, and implement the DOPS algorithm in the context of Dreamer DOPS can in theory benefit any interleaved MBRL algorithm that uses an actor-critic to model behavior.
That said, Dreamer is well-suited to the masking strategy in DOPS because---due to the compact nature of the latent actor-critic training samples and the nature of rolling out H-step imaginary trajectories for each real initial state---there are a lot of actor-critic samples generated from each sample batch. 
As a result, there is more flexibility to reduce the actor-critic batch size.

\begingroup
\singlespacing 

\begin{algorithm}[ht]
\caption{Dreamer with \textcolor{blue}{Dual Objective Priority Sampling}}
\KwIn{\textcolor{blue}{Curious Replay-prioritized replay buffer $\mathcal{D}$.}}
\KwIn{An interactive environment ``env''.}
\KwIn{Neural network parameters $\theta$, $\phi$, $\psi$, including model components: representation model $p_\theta(s_t \mid s_{t-1}, a_{t-1}, o_t)$, transition prediction model $q_\theta(r_t \mid s_{t-1}, a_t)$, reward model $q_\theta(r_t \mid s_t)$, policy model $\pi_\phi(a_t \mid s_t)$, and value model $v_\psi(s_t)$.}
\KwData{Given hyperparameters: collect interval $C$, batch size $\parallel B\parallel$, sequence length $L$, imagination horizon $H$, and learning rate $\alpha$.}

\While{not converged}{
    \For{update step $c = 1$ \KwTo $C$}{
        Sample batch of $B$ transitions from $\mathcal{D}$ weighting selecting transition $i$ according to normalized CR score $pr_{\mathrm{CR}}(s_i)$\;
        Compute model states $s_t \sim p_\theta(s_t \mid s_{t-1}, a_{t-1}, o_t)$\;
        Update $\theta$ using representation learning with Equation~\ref{eq:dreamer:wm_loss} \;
        \textcolor{blue}{Update the latent actor-critic with \textbf{Algorithm~\ref{alg:dops:behavior}: Subsampled Behavior Learning}}\;
        \For{each transition $i$ in batch}{
            Update visit count $\nu_i \leftarrow \nu_i + 1$\;
            Calculate priority $pr_i$ using Equation~\ref{eq:dops:cr}\;
            Update priority $pr_i$ and $\delta_{TD}$ for samples in $B$\;
        }
    }
    $o_1 \leftarrow \text{env.reset()}$\;
    \For{time step $t = 1$ \KwTo $T$}{
        Compute $s_t \sim p_\theta(s_t \mid s_{t-1}, a_{t-1}, o_t)$ from history\;
        Compute $a_t \sim q_\phi(a_t \mid s_t)$ with the actor model\;
        Add exploration noise to action\;
        $r_t, o_{t+1} \leftarrow \text{env.step}(a_t)$\;
    }
    Add experience to dataset $\mathcal{D} \leftarrow \mathcal{D} \cup \{(o_t, a_t, r_t)_{t=1}^T\}$, with each new transition added with priority $p_i \leftarrow p_\text{MAX}$ and visit count $\nu_i \leftarrow 0$\;
}
\label{alg:dops:dops}
\end{algorithm}
\endgroup

\begingroup
\begin{algorithm}[ht]
\caption{\textcolor{blue}{DOPS Subsampled} Behavior Learning}
\KwIn{Current policy parameters $\phi$, critic parameters $\psi$, batch of latent states $(s_t) \in B$, }
\KwData{Given hyperparameters: imagination horizon $H$, overlap fraction $W$, and learning rate $\alpha$.}
Imagine trajectories $\mathbf{\hat{\tau}}_t = \{(s_\tau, a_\tau)\}_{\tau=t}^{t+H} \in \hat{B}$ from each $s_t$ by iteratively computing $a_\tau \sim q_\phi(a_\tau \mid s_\tau)$ with the actor model then $s_\tau \sim p_\theta(s_{\tau+1} \mid s_{\tau}, a_{\tau})$ \;
Predict values and rewards for the imagined trajectories: \\
\For{time step $\tau = t$ \KwTo $t+H$}
{
    Predict rewards $\mathbb{E}(q_\theta(r_\tau \mid s_\tau))$ and values $v_\psi(s_\tau)$\;
    \If{$\tau == H+t$}
    {
    Compute $V_\tau^\lambda$ as   $V_\tau^\lambda \gets r_\tau + \gamma_\tau v_\xi(s_H)$ 
    }
    \Else
    {
    Compute $V_\tau^\lambda$ as $V_\tau^\lambda \gets r_\tau + \gamma_\tau \big((1 - \lambda) v_\xi(s_{\tau+1}) + \lambda V^\lambda_{\tau+1}\big)$\;
    }
}
\textcolor{blue}{Compute the of the imagined trajectories $\delta_{TD}$ with Equation~\ref{eq:background:td}}\;
\textcolor{blue}{Compute the TD-priority $pr_{\mathrm{PER}}$ of the trajectories in $\hat{B}$ with Equation~\ref{eq:dops:per}} \;
\textcolor{blue}{Subsample actor transitions $B_\pi$ as the \textbf{min-k} TD-priority samples:} \\
\textcolor{blue}{$\hat{B}_\pi = \underset{\mathbf{\hat{\tau}_t} \subseteq \hat{B}: |\hat{B_\pi}| = k*\mid \hat{B}\mid }{\mathrm{argmin}} \left(pr_{\mathrm{PER}}(\mathbf{\hat{\tau}}) \right)$, where $k=\frac{1}{2-W}$ \;}
Update policy parameters $\phi \leftarrow \phi + \alpha \nabla_\phi \sum_{\tau=t}^{t+H} \lambda(s_\tau)$\; 
\textcolor{blue}{Subsample critic transitions $B_\pi$ as the \textbf{max-k} TD-priority samples:} \\
\textcolor{blue}{$\hat{B}_\pi = \underset{\mathbf{\hat{\tau}_t} \subseteq \mathbf{\hat{B}}: |\hat{B_\pi}| = k*\mid \hat{B}\mid }{\mathrm{argmax}} \left(pr_{\mathrm{PER}}(\mathbf{\hat{\tau}}) \right)$, where $k=\frac{1}{2-W}$ \;
}  
Update critic parameters with Equation~\ref{eq:dops:huber} $\psi \leftarrow \psi - \alpha \nabla_\psi L_{\mathrm{Huber}}$;
\textcolor{blue}{Update the TD-error for $s_t \in B$ : $\delta_{TD} \leftarrow \max(|\delta_{TD}(s_t)|^{\alpha}, 1)$}\;
\label{alg:dops:behavior}
\end{algorithm}
\endgroup
\doublespacing

\section{Experiments}

As in Chapter~\ref{chapt:transx}, our experiments utilize two transfer learning frameworks: NovGrid, described in Chapter~\ref{chapt:novgrid}, and the Real World Reinforcement Learning (RWRL) suite~\cite{Dulac-Arnold2021rwrl} with NovGrid novelty injection. 
To evaluate online test time adaptation  capabilities, agents are initially trained to convergence in a source environment before introducing a novelty to create the target task.
At a certain number of environment interactions after the agent has converged, the novelty occurs---changing the environment from the $MDP_\mathrm{source}$ to $MDP_\mathrm{target}$ and thereby altering transition dynamics and the optimal policy.
This process is illustrated in  Figures~\ref{fig:novgrid:novgrid_splash} and~\ref{fig:novgrid:metrics}
The agent's ability to recover performance during online execution in the target environment is then assessed.

Extending the set of environments used in the work described in Chapter~\ref{chapt:transx}, this work is evaluated on a number of novelties. 
In NovGrid we tested the following novelties: 
\begin{enumerate}
    \item DoorKeyChange: A delta novelty where the key that opens the door in the DoorKey environment is altered.
    \item CrossingBarrierChange: Replaces safe, impassable walls with standard, terminal-state lava in the original Minigrid Crossing environment.
\end{enumerate}

In addition, in RWRL we tested the following novelties: 
\begin{enumerate}
    \item ThighLengthChange: a novelty in the Walker2D environment where the thigh link length is increased from 0.175 to 0.3 meters, or reduced from 0.3 to 0.175 meters. 
    \item TorsoDensityChange: a novelty in the Quadruped environment where the density of the torso element is doubled from 1000 to 2000 grams, and halved from 2000 to 1000 grams.
\end{enumerate}
For each of these environments, each algorithm was trained in five runs, each with a different random seed.  

The algorithms we used were implemented based on the original author's implementations of the DreamerV3 version of Curious Replay, which itself is based on the original author's implementation of DreamerV3 in Jax. 
We compared to Curious Replay and Dreamer as baselines. 
While other baselines such as Plan2Explore~\cite{sekar2020planning} are well suited to comparison in theory, in practice the results presented in the original Curious Replay work~\cite{kauvar2023curious} demonstrate that Curious Replay universally outperforms Plan2Explore.
Algorithms were allowed to run until convergence in the source MDP task, which required no more than 2 million environment steps in the RWRL experiments and no more than 5 million in the NovGrid experiments. 
After transfer to the target MDP task, all algorithm performance was assessed on 100k adaptation steps in RWRL and 200k steps in NovGrid.

\begin{figure}[ht]
  \centering
\includegraphics[width=0.85\linewidth]{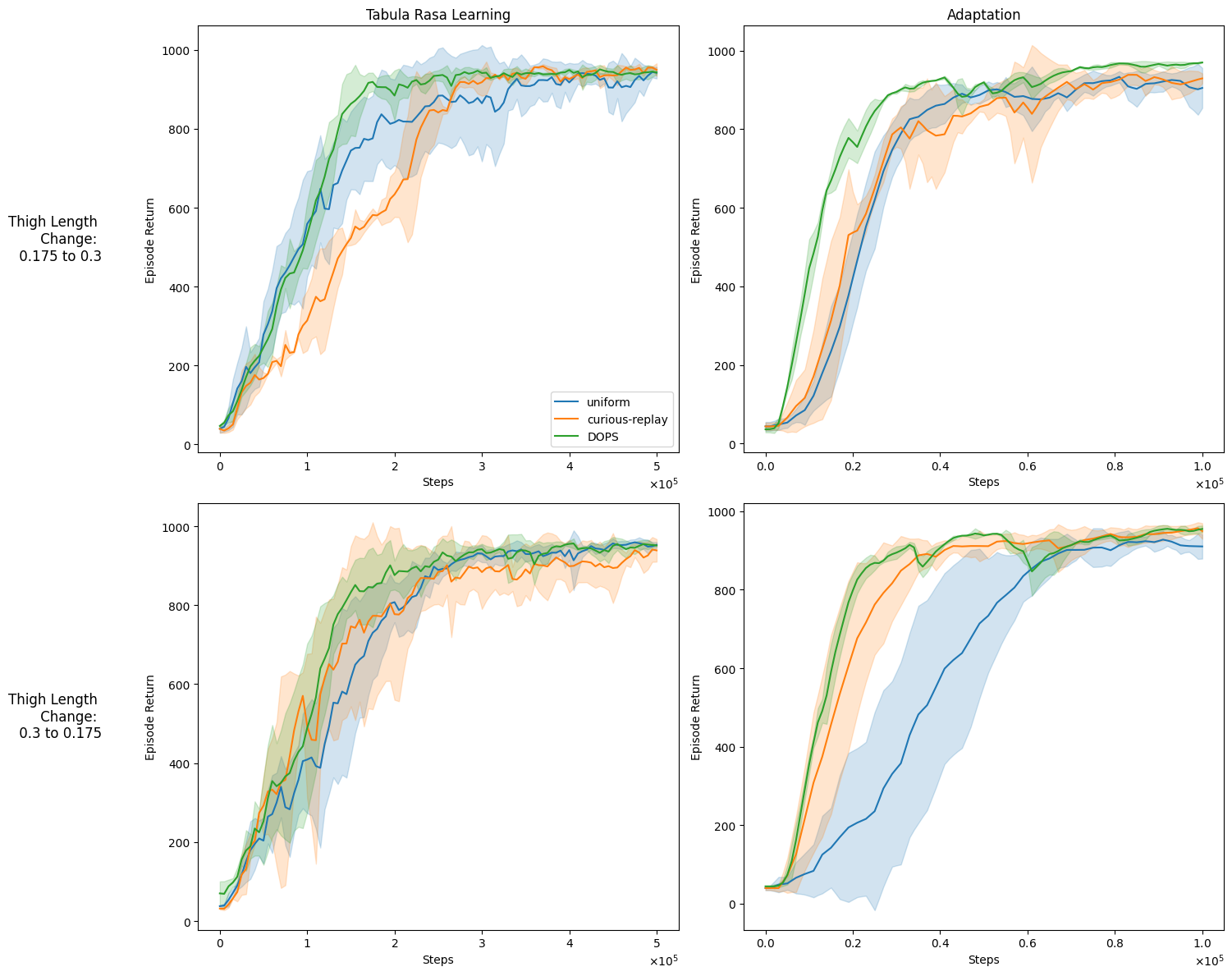}
  \caption{This graphic shows the learning curves of DOPS and the baselines solving Walker2d from the RWRL environment with the ThighLengthChange novelties.
  Each row is a different novelty scenario, and for each novelty the left plot represents the \textit{tabula rasa} learning while the right plot represents the adaptation process.   
  In the first row, the length of the Walker2d thigh link is 0.175 meters, and then adaptation of the agent's policy to a thigh length of 0.3 meters. The second row shows the reverse: learning an optimal policy for a thigh length of 0.3 meters and then adapting to 0.175 meters. 
  From five trials with different random seeds for each method the line plot represents the mean of the learning process smoothed with an EMA window of 5 steps, and the shaded region represents a 95\% bootstrapped confidence interval. }
  \label{fig:dops:full}
\end{figure}

For DOPS we sample 20\% of the world model data uniformly and 80\% with CR.
Based on our search over hyperparameter values for the masking overlap function $W$ we confirm the finding from Saglam et. al.~\cite{saglam2023la3p} that an overlap value of $W=0.5$ works well.
All other hyperparameters for RWRL were kept from the DMControl experiments in Curious Replay and DreamerV3, and the hyperparameters for NovGrid were taken from the default hyperparameters of DreamerV3 from Atari with the default replay hyperparameters from Curious Replay from their interaction assay.  
Further detailed hyperparameter information is available in the Appendix. 

\subsection{Results}

Figure~\ref{fig:dops:full} demonstrates DOPS's performance on both \textit{tabula rasa} learning and adaptation in the Walker2d environment from RWRL. 
The figure shows two scenarios: increasing the thigh length from 0.175 to 0.3, and decreasing it from 0.3 to 0.175. In both cases, we compare DOPS against uniform sampling (the Dreamer baseline) and Dreamer with Curious Replay sampling. 

In \textit{tabula rasa} learning (left figures), DOPS demonstrates notably improved sample efficiency compared to both baselines, particularly in early learning. While Curious Replay eventually achieves similar final performance, it requires approximately 2x more environment steps to reach equivalent reward levels. 
This efficiency gain can be attributed to DOPS's dual-objective sampling strategy---by explicitly separating world model and policy learning objectives, the agent can more effectively leverage both prediction error signals and behavioral learning signals during the initial learning phase.

The adaptation results (right column) reveal even more striking differences. 
When adapting to both larger and smaller thigh lengths, DOPS maintains the strong performance characteristics of Curious Replay while demonstrating improved stability, as evidenced by the tighter confidence intervals. 
This suggests that DOPS's approach of blending uniform samples with prioritized sampling helps prevent the catastrophic forgetting that can occur with pure priority-based methods. 
Particularly notable is the case of adapting to the shorter thigh length (bottom right), where DOPS achieves roughly 1.5x faster adaptation than uniform sampling while matching Curious Replay's efficiency. 
This scenario represents a more challenging adaptation problem as it requires the agent to learn more precise control with less mechanical advantage.

\begin{toexclude}

\begin{figure}[ht]
  \centering
  \begin{subfigure}[b]{0.5\textwidth}
    \includegraphics[width=\linewidth]{example-image-a}
    \caption{a} 
    \label{fig:dops:rwrl:a}
  \end{subfigure}%
  ~
  \begin{subfigure}[b]{0.5\textwidth}
    \includegraphics[width=\linewidth]{example-image-a}
    \caption{b} 
    \label{fig:dops:rwrl:b}
  \end{subfigure}
  \caption{Adaptation curves for DOPS and the baselines for the OTTA problem settings ThighLengthChange and TorsoDensityChange from the RWRL environment. 
  The shaded region represents a 95\% bootstrapped confidence interval over five trials.  }
  \label{fig:dops:rwrl}
\end{figure}

We also plot the adaptation curves for the results on the RWRL OTTA problem settings in Figure~\ref{fig:dops:rwrl}. 
From this figure, we can see that in adaptation as well DOPS adapts faster than Curious Replay and Dreamer. 

\begin{figure}[ht]
  \centering
  \begin{subfigure}[b]{0.5\textwidth}
    \includegraphics[width=\linewidth]{example-image-a}
    \caption{a} 
    \label{fig:dops:novgrid:a}
  \end{subfigure}%
  ~
  \begin{subfigure}[b]{0.5\textwidth}
    \includegraphics[width=\linewidth]{example-image-a}
    \caption{b} 
    \label{fig:dops:novgrid:b}
  \end{subfigure}
  ~
  \begin{subfigure}[b]{0.5\textwidth}
    \includegraphics[width=\linewidth]{example-image-a}
    \caption{c} 
    \label{fig:dops:novgrid:c}
  \end{subfigure}
  ~
  \begin{subfigure}[b]{0.5\textwidth}
    \includegraphics[width=\linewidth]{example-image-a}
    \caption{d} 
    \label{fig:dops:novgrid:d}
  \end{subfigure}
  \caption{Adaptation curves for DOPS and the baselines for the OTTA problem settings DoorKeyChange and CrossingBarrierChange from the NovGrid environment. 
  As in the RWRL experiments, the shaded region represents a 95\% bootstrapped confidence interval over five trials.  }
  \label{fig:dops:novgrid}
\end{figure}

Across all tasks, of DOPS converges more efficiently than Dreamer and Curious Replay in both \textit{tabula rasa} learning and  adaptation.
The results documented in Table~\ref{tab:dops:all} demonstrate that this trend is consistent across all environments, albeit more pronounced in some than others.
\Mark{I'm a bit lost by table 5.1. Transfer area under the curve between DOPS and non-DOPS sampling on top of the exploration algorithms? What happened to Dreamer and Curious Replay?}
At the algorithm level, we can take the following information\Mark{there is no following information}.

\begin{table}[ht]
    \centering
    \footnotesize
    \begin{tabular}{|c|c|c|c|c|c|}
        \hline
         & \multicolumn{5}{|c|}{Transfer Area Under Curve $\uparrow$} \\ \cline{2-6} 
        Exploration & DoorKeyChange & LavaNotSafe & LavaProof & CrossingBarrier & ThighIncrease \\ 
        Algorithm & ($10^{-1}$) & ($10^{-1}$) & ($10^{-1}$) & ($10^{-1}$) & ($10^{2}$) \\ \hline 
        \hline
        None (PPO) & 7.72 $\pm$ 0.792 & 7.43 $\pm$ 1.29 & 9.66 $\pm$ 0.0835 & 8.89 $\pm$ 0.297 & 6.5 $\pm$ 1.63 \\
        \hline
        NoisyNets & \textbf{8.13 $\pm$ 1.23} & \textbf{8.37 $\pm$ 0.885} & 7.69 $\pm$ 3.36 & 8.94 $\pm$ 0.388 & 8.62 $\pm$ 0.39 \\
        ICM & 7.28 $\pm$ 1.07 & 5.43 $\pm$ 0.667 & 9.22 $\pm$ 1.16 & 8.74 $\pm$ 0.537 & 7.25 $\pm$ 1.56 \\
        DIAYN & 7.54 $\pm$ 0.624 & 6.25 $\pm$ 1.22 & \textbf{9.7 $\pm$ 0.0773} & 9.01 $\pm$ 0.493 & \textbf{8.72 $\pm$ 0.203} \\
        RND & 8.09 $\pm$ 0.542 & 6.25 $\pm$ 1.53 & 9.37 $\pm$ 0.66 & 9.0 $\pm$ 0.399 & 7.47 $\pm$ 1.73 \\
        NGU & 7.56 $\pm$ 0.508 & 6.86 $\pm$ 1.38 & 9.48 $\pm$ 0.38 & 9.09 $\pm$ 0.444 & 7.07 $\pm$ 1.68 \\
        RIDE & 7.67 $\pm$ 0.727 & 7.63 $\pm$ 0.895 & 9.5 $\pm$ 0.605 & 9.02 $\pm$ 0.373 & 5.76 $\pm$ 1.67 \\
        GIRL & 7.59 $\pm$ 0.855 & 6.01 $\pm$ 1.08 & 9.55 $\pm$ 0.295 & 8.86 $\pm$ 0.51 & 7.45 $\pm$ 1.74 \\
        RE3 & 8.12 $\pm$ 0.387 & 6.82 $\pm$ 1.48 & 9.37 $\pm$ 0.524 & \textbf{9.1 $\pm$ 0.266} & 6.77 $\pm$ 1.99 \\
        RISE & 7.35 $\pm$ 1.08 & 7.07 $\pm$ 1.77 & 9.42 $\pm$ 0.402 & 9.09 $\pm$ 0.343 & 7.05 $\pm$ 1.03 \\
        REVD & 7.99 $\pm$ 0.402 & 7.3 $\pm$ 1.68 & 9.69 $\pm$ 0.056 & 8.92 $\pm$ 0.384 & 5.58 $\pm$ 1.33 \\
        \hline
    \end{tabular}
    \caption{Table of learning and adaptation results}
    \label{tab:dops:all}
\end{table}

\end{toexclude}

\section{Key Takeaways}

In this work, we develop and test DOPS, our sampling algorithm designed to improve the adaptive efficiency of RL agents by considering both the interactions between \textit{tabula rasa} training and adaptation and interactions between the different parts of the Dreamer architecture and learning algorithm.
Through our theoretical analysis we extend the objective-mismatch hypothesis to a consideration of distinctions between models with different learning signals, how models of each learning adversely affected by OTTA problems, and how prioritized sampling can compensate.
Through our tests on 
RWRL compared to Curious Replay and Dreamer, we demonstrate that DOPS improves the sample efficiency of \textit{tabula rasa} learning and adaptation.

The empirical results support our core thesis that efficient online test-time adaptation requires careful management of exploration and sampling. 
DOPS achieves this through two key mechanisms. 
First, by separating the sampling objectives for world model and behavior learning, DOPS enables more targeted exploration that validates past assumptions while building task-agnostic representations. 
This is evidenced by the improved sample efficiency in \textit{tabula rasa} learning, where DOPS consistently outperforms both uniform sampling and Curious Replay baselines. 
Second, the blended sampling approach, which combines uniform samples with prioritized transitions, helps regulate which parts of the model are updated during adaptation. 
This selective updating process is particularly apparent in the adaptation curves, where DOPS demonstrates faster recovery while maintaining narrower confidence intervals than competing methods.
Just as Chapter~\ref{chapt:transx} demonstrates the importance of exploration characteristics in adaptation of model-free RL, this work shows that sampling strategies in model-based RL must be tailored to the distinct learning objectives they serve. 

Our findings have important implications for model-based reinforcement learning. 
The success of DOPS suggests that the conventional approach of using identical sampling distributions for world model and policy learning may be fundamentally limiting. 
Specifically, DOPS' strong learning performance indicates the importance of considering how different neural modules in a complex architecture benefit from different data, especially in adaptation.
Instead of conceptualizing end-to-end architectures as monolithic, researchers should consider how gradients from different objectives impact different parts of an architecture, and train with data that balances the needs of the overall architecture with specific model parts.
In the same way that we designed DOPS by first examining the distinctions between the learning, designers of all neural architectures with multiple objectives or ``heads''---not just deep RL---should consider how to properly handle parts of an architecture that are differently affected by these objectives. 


The results also highlight some limitations and areas for future work. 
While DOPS consistently improves adaptation efficiency, the gains are more pronounced in some scenarios than others. 
Further investigation is needed to understand how the relationship between world model and policy learning objectives varies across different types of environmental changes. 
Additionally, while our implementation focuses on the Dreamer architecture, the principles underlying DOPS could potentially be extended to other model-based RL frameworks such as the TDMPC family of algorithms~\cite{hansen2022temporal}. 
A comparison of DOPS applied to a more comprehensive group of model-based RL techniques will help us to understand how objective mismatch affects each technique differently.

This chapter adds to the broader thesis by demonstrating that efficient online test-time adaptation requires more consideration than just prioritizing new data. 
Learning phenomena such as the distribution shift from objective mismatch exemplify the reason why prioritized exploration and sampling, while important to efficient adaptation, demands that prioritization methods are designed in the context of the learning process. 
DOPS provides a concrete mechanism for achieving this balance in model-based RL, complementing the exploration insights from Chapter~\ref{chapt:transx} and setting the stage for the investigation of structured knowledge representations in subsequent chapters.

    \chapter{Neuro-Symbolic Model-based Reinforcement Learning for Efficient Adaptation}
\label{chapt:knowledge}

Looking beyond the need to explore in novel scenarios, a critical aspect of adaptation to novelty is reusing what we already know about the world. 
Specifically, by separating knowledge that is impacted by a novelty from knowledge that is not affected, agents can update model components that have changed without needing to update all components of a learned model. 
For example, when people adapt to new technologies such as smart phones that change the way we communicate, it is important that this adaptation does not also impact physical skills like walking or carrying items. 
Moreover, it is beneficial when we have skills, like text messaging and sending emails; having prior similar knowledge  makes the adaptation to the novel scenario earier.

World-model based reinforcement learning offers possible reuse between the model and the behavior policy, but existing state-of-the-art approaches such as Dreamer cannot always update rapidly in the face of sudden change. 
To address this limitation, we developed WorldCloner, an efficient \textit{world model} reinforcement learning system with a neural policy consisting of two online test time adaptation improvements to the standard deep RL execution loop: 
(1)~A  symbolic world model for learning a model of the transition function---how features of the environment change and can be changed over time---that can be updated with a single post-novelty observation, allowing faster adaptation than neural world models. 
(2)~An {\em imagination-based adaptation} method that improves the efficiency of deployment-time policy adaptation using the updated world model to simulate environment transitions in the post-novelty world. 
By employing a symbolic world model parameterized by bounded intervals in feature space, the world model can adapt to novelty with a single example.  
Augmenting the policy adaptation process with synthetic data from a world model that adapts faster than neural models reduces the number of real environment interactions required to update the policy.


We evaluated the sample efficiency of WorldCloner in the NovGrid environment~\cite{balloch2022novgrid} with multiple novelty types. 
We show that post-novelty adaptation with WorldCloner requires fewer policy updates and environment interactions
than model-free and neural world model reinforcement learning techniques. 
To summarize, our contributions are as follows:
\begin{itemize}
    \item We present WorldCloner, a neuro-symbolic world model for novelty detection and adaptation. 
    \item We define a new symbolic representation with an efficient learning algorithm and a way to use this representation to help world models adapt to novelty. 
    \item We show that WorldCloner adapts to novelties more efficiently than state-of-the-art reinforcement learners. 
\end{itemize}

\begin{figure}
    \centering
    \includegraphics[width=1\linewidth]{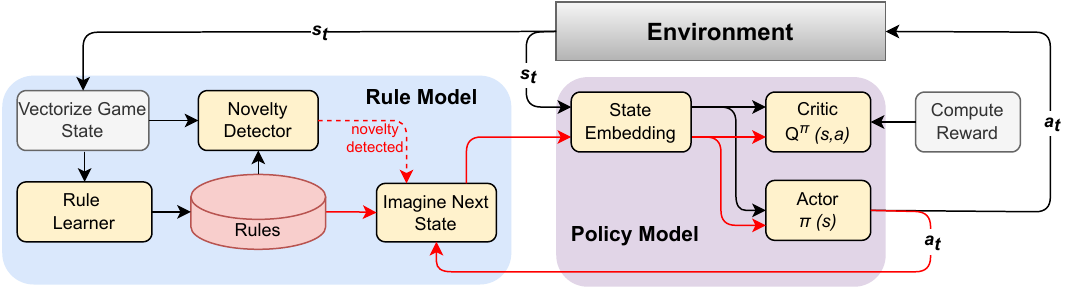}
    \caption{The WorldCloner architecture. 
    The purple module 
    and black arcs represent the conventional RL execution loop with loss back-propagating backward through black arcs in the purple module. 
    The blue module 
    contains rule model learning and novelty detection. 
    The red arcs represent information flow in a post-novelty environment, using learned rules to simulate the new environment. 
    Post-novelty, loss is back-propagated backward along the red arcs and black arcs within the policy model.}
    \label{fig:knowledge:architecture}
\end{figure}

\section{Approach}\label{sec:knowledge:approach}

WorldCloner is an end-to-end trainable neuro-symbolic world model comprised of two components: (1) a neural policy and (2) a symbolic world model. 
The symbolic world model (which we also refer to as the ``rule model'') consists of {\em rules} that, in aggregate, approximate the environment's latent transition function. 
The rule model serves two core functions. 
First, the rule model learns to predict state transitions pre-novelty. 
Rule violations  thus indicate the introduction of novelty and the need to update the rule model and the policy. 
Second, once in a post-novelty environment, WorldCloner uses the rule model to simulate the environment, enabling rollouts for retraining the neural policy model so as to require fewer interactions with the real environment. 
Shown in Figure~\ref{fig:knowledge:architecture}, this interaction between the world model and the policy allows WorldCloner to trust its policy pre-novelty, then depend more heavily on its world model post-novelty.  
Our world model algorithm is designed so that the rule model is independent of the neural policy implementation, making our approach compatible with any policy framework that uses the same data inputs as the rule model. 
For our implementation of WorldCloner, we use Proximal Policy Optimization (PPO)~\cite{schulman2017proximal} on an Advantage Actor-Critic (A2C) neural architecture. 

\subsection{Interval-Based Symbolic World Model}

In WorldCloner the symbolic world model--modeling the transition function $P$--is represented as a set of $K$ rules $\{\rho_k\}$ of the form $\langle c_s, c_a, e\rangle$. 
In this representation, $c_s$ is a state precondition, $c_a$ is the action precondition (similar to a do-calculus precondition \texttt{do(a)}), and $e$ is an effect. 
A rule $\rho$ is determined to apply if the input state $s$ and action $a$ match that rule's preconditions. 
The state preconditions contain a set of values corresponding to a subset of state features $\phi_1...\phi_m$. 
When both the state and action preconditions of a rule $\rho$
are satisfied, then $\rho$ is applicable and can be executed if chosen. 
Effects $e$ are the difference between the input state and the predicted state: $e=s^{'}-s$.  
This formulation has similarities to logical calculus frameworks such as ADL and PDDL~\cite{McDermott2000The1A} by encoding preconditions and effects. 
Our approach is designed to be learned, rather than engineered, similar to ``game rule'' learning~\cite{guzdial:ijcai2017}. 

WorldCloner uniquely formulates preconditions as a set of \textit{axis-aligned bounding intervals} (AABIs), also known as hyperrectangles or
$n$-orthotopes~\cite{coxeter1973regular} in feature space that cover the training data. 
AABIs are simple, $d$-dimensional convex geometries that, given a set of sample points to group $x_1...x_n$, define the minimum interval along each dimension that encloses the entire set. 
Regardless of the size of the interval, AABIs can be defined by two $d$-dimensional points---a minimum and maximum bound---which makes them very efficient to query for both training and inference. 
They can accommodate a mixture of continuous and categorical (non-continuous) variables, both of which are common in symbolic methods, where categorical AABI values are simply the exact set of matching values. 
For example, see the bottom of Figure~\ref{fig:knowledge:rule-creation}, which shows the AABIs for a rule for unlocking a door in the NovGrid grid world. In this case, the intervals limit the action to a single agent location $(3,5)$.
Figure~\ref{fig:knowledge:rule-relaxation} shows another example where the action is applicable to an interval of locations.

\begin{figure}
\centering
\scriptsize
    \begin{tikzpicture}[roundnode/.style={circle, draw=black!60, fill=gray!5, very thick, minimum size=7mm},
    squarenode/.style={rectangle, draw=black!60, fill=gray!20, very thick, minimum size=5mm}]
    \node[roundnode, align=center] (state1) {\underline{\bf State $T$}\\AgentLocation=(3,5)\\AgentFacing=East\\Inventory=\{YellowKey\}\\DoorState=Locked\\DoorLocation=(3,6)};
    \node[squarenode, align=center] (action) [right=of state1] {UnlockDoor};
    \node[roundnode, align=center] (state2) [right=of action]{\underline{\bf State $T+1$}\\AgentLocation=(3,5)\\AgentFacing=East\\\blueuwave{Inventory=None}\\\blueuwave{DoorState=Closed}\\DoorLocation=(3,6)};
    \node (txt) [below=of action,yshift=-1cm] {\textcolor{blue}{This state transition yields the creation of the following rule:}};
    \draw[->] (state1.east) -- (action.west);
    \draw[->] (action.east) -- (state2.west);
    \draw[->, dotted] (action.south) -- (txt.north);
    \end{tikzpicture}
    
    \colorbox{rulecolor}{
    \begin{tabular}{|l|l|}
    \hline
    \multicolumn{2}{|c|}{\bf Rule}\\
    \hline
    \underline{\bf Preconditions:} & \underline{\bf Effect:} \\
    AgentLocation: min=(3,5), max=(3,5) & DoorState: set=\{locked\} $\rightarrow$ DoorState: set=\{closed\}\\
    AgentFacing: set=\{East\} & Inventory: set=\{YellowKey\} $\rightarrow$ Inventory: set=None \\
    Inventory: set=\{YellowKey\} &\\
    DoorState: set=\{Locked\} & \\
    DoorLocation: min=(3,6), max=(3,6) &\\
    \cline{1-1}
    \underline{\bf Action Precondition:} &\\ 
    UnlockDoor & \\
    \hline
    \end{tabular}
    }
\caption{Top shows example environmental states passed to the rule learner (changes underlined). Bottom shows the learned world model rule describing the key opening a door. }
\label{fig:knowledge:rule-creation}
\end{figure}

The AABIs of the preconditions do not need to be intersecting; each unique rule can have multiple disjoint intervals. Our rule update algorithm (see next section) minimizes the number of different intervals. 
Multiple rules per action will exist when actions have different effects depending on the current state. 
For example, in a grid world, the \texttt{forward} action changes the agent's $\phi$
positional feature in the state along the direction the agent is facing when there is no obstruction, 
but \texttt{forward} will have no effect on the state if there is a wall directly in front of the agent. 
This same functionality enables us to account for probabilistic transitions; multiple rules will have the same action and state precondition but different \textit{effect distributions}. 
Using the example of opening a locked door with a key, there is the possibility that a lock is ``sticky'' and an agent may require several tries. The effect of the rule that predicts the opening of the lock would then be a distribution over the 
rules with identical preconditions but different effects. 


\begin{algorithm}[t]
\scriptsize
\SetAlgoLined
\KwIn{Prior State $s_{t-1}$, Action $a_{t-1}$, NextState $s$, WorldModel rule set $P$}
\KwOut{Applied Rule $\rho$}
StateChange $\delta s = s - s_{t-1}$\;
$RuleHit = \texttt{False}$\;
\For{Rule $\rho_k = \langle c_{s,k}, c_{a,k}, e_k\rangle$ in $P$}{
    \If{$a_{t-1} == c_{a,k}$}{
        \eIf{$\delta s == e_k$}{
            \eIf{$CollisionCheck(s_{t-1}, c_{s,k})$}{
                $RuleHit \leftarrow \texttt{True}$ \;
                return $\rho$ \;
            }
            {
                $RuleRelaxation(\rho, s_{t-1})$ \;
                $RuleHit \leftarrow \texttt{True}$ \;

            }
        }
        {
            \eIf{$CollisionCheck(s_{t-1}, c_{s,k})$}{
                $RuleCollisionResolution(\rho, s_{t-1})$ \;
            }
            {
                $RuleCreation(c_{s} = s_{t-1}, c_{a} = a_{t-1}, e = \delta s)$ \;
                $RuleHit \leftarrow \texttt{True}$ \;

            }
        }
    }
}
\If{$RuleHit == \texttt{False}$}{
    $RuleCreation(c_{s} = s_{t-1}, c_{a} = a_{t-1}, e = \delta s)$ \;
}
\caption{The rule model update algorithm exemplifies how rules can be inductively updated with a single change.}
\label{alg:knowledge:update}
\end{algorithm}

\subsection{Rule Learning}

The rule learning process 
constructs a compact, collision-free set of rules that provide maximum coverage of the state-action space while minimizing the complexity of the symbolic world model. 
Moreover, it is an online updating process; once a rule is learned, it can be updated without knowledge of past observations. 

The rule update process begins with the rule model initialized as an empty set. 
After an action is taken in the environment, the rule learner receives the prior state of the environment from which to derive a precondition, 
the action taken, and a new state of the environment from which to derive an effect. 
Comparing the prior state, action, and new state with the state preconditions, action preconditions, and effects (respectively) of any existing rules in the model, 
the update algorithm enters one of four cases: 
\begin{enumerate}
    \item {\em No Change}: The prior state falls inside the state precondition AABI of an existing rule with a matching action and effect. 
    \item \textit{Rule Creation}: There is no rule where the action precondition is satisfied or the state difference matches the effect. 
    A new ``point'' rule is created that exactly describes the prior state. 
    \item \textit{Rule Relaxation}: A rule exists where the action precondition is satisfied and state difference matches the effect, but the prior state is not covered by the existing rule's state precondition AABI. 
    The rule is ``relaxed'' by expanding the AABI. 
    \item \textit{Rule Collision Resolution}: A rule exists where the action precondition and state precondition AABI are satisfied but the effect is different. 
    The AABI of the existing rule is split with a minimum cut (min-cut) operation. 
\end{enumerate}

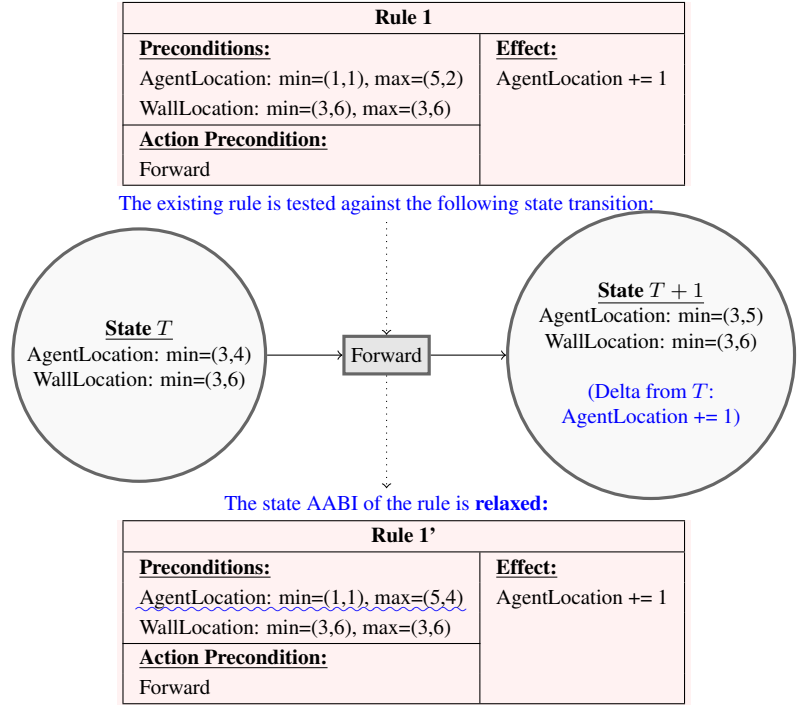
\begin{figure}
\centering
\scriptsize
    \colorbox{rulecolor}{
    \begin{tabular}{|l|l|}
    \hline
    \multicolumn{2}{|c|}{\bf Rule 1}\\
    \hline
    \underline{\bf Preconditions:} & \underline{\bf Effect:} \\
    AgentLocation: min=(1,1), max=(5,2) & AgentLocation += 1\\
    WallLocation: min=(3,6), max=(3,6) & \\
    \cline{1-1}
    \underline{\bf Action Precondition:} &\\ 
    Forward & \\
    \hline
    \end{tabular}
    }
    
    \begin{tikzpicture}[roundnode/.style={circle, draw=black!60, fill=gray!5, very thick, minimum size=7mm},
    squarenode/.style={rectangle, draw=black!60, fill=gray!20, very thick, minimum size=5mm}]
    \node[roundnode, align=center] (state1) {\underline{\bf State $T$}\\AgentLocation: min=(3,4)\\WallLocation: min=(3,6)};
    \node[squarenode, align=center] (action) [right=of state1] {Forward};
    \node[roundnode, align=center] (state2) [right=of action]{\underline{\bf State $T+1$}\\AgentLocation: min=(3,5)\\WallLocation: min=(3,6)\\\\\textcolor{blue}{(Delta from $T$:}\\\textcolor{blue}{AgentLocation += 1)}};
    \node (txt1) [above=of action,yshift=0.5cm] {\textcolor{blue}{The existing rule is tested against the following state transition:}};
    \node (txt2) [below=of action,yshift=-0.5cm] {\textcolor{blue}{The state AABI of the rule is \textbf{relaxed:}}};
    \draw[->] (state1.east) -- (action.west);
    \draw[->] (action.east) -- (state2.west);
    \draw[->, dotted] (txt1.south) -- (action.north);
    \draw[->, dotted] (action.south) -- (txt2.north);
    \end{tikzpicture}
    
    \colorbox{rulecolor}{
    \begin{tabular}{|l|l|}
    \hline
    \multicolumn{2}{|c|}{\bf Rule 1'}\\
    \hline
    \underline{\bf Preconditions:} & \underline{\bf Effect:} \\
    \blueuwave{AgentLocation: min=(1,1), max=(5,4)} & AgentLocation += 1\\
    WallLocation: min=(3,6), max=(3,6) & \\
    \cline{1-1}
    \underline{\bf Action Precondition:} &\\ 
    Forward & \\
    \hline
    \end{tabular}
    }
\caption{Rule Relaxation example, where the blue underlined precondition AABI corresponding to the agent location has been expanded in the modified Rule 1' 
to include agent location from state S.}
\label{fig:knowledge:rule-relaxation}
\end{figure}

When the agent takes an action, the rule learner observes the prior state at time $T$, the action executed, and the next state at time $T+1$. 
If rules exist where the state transition satisfies the action precondition, the state precondition, or the effect, we first try to modify existing rules using Rule Relaxation and Rule Collision Resolution. 
Rule Creation is only necessary if collision resolution and relaxation do not apply. 

Given the agent's performed action, we initially only consider rules where the action precondition is satisfied. 
We then identify whether the state prior to the action is contained in any of these existing rules' AABI. 
This is achieved using the geometric hyperplane separation theorem (also called the separating axis theorem)~\cite{hastie2009elements, ball1997elementary}. 
Geometrically, we can assert that for all features $\phi_d \in \Phi$, if there exists a feature for which the point is less than the min or more than the max of an interval, then a separating hyperplane exists. 
The hyperplane separation theorem states simply that if a hyperplane exists in feature space between the point and the geometric shape, there is no collision and the point is outside the AABI. 
Specifically, for prior state $s_{t-1}$ and an AABI $I = [I_{min},I_{max}]$:

\begin{equation}
    s_{t-1} \not\in I \iff \exists \phi~s.t.~[ s_{t-1,\phi} > I_{max,\phi}  \cup  s_{t-1,\phi} < I_{min,\phi} ]. 
\end{equation}
\label{eq:wc:hyperplane}

\subsubsection{Rule Creation}
Rule creation occurs when the combination of prior state and action in an observed state-action-state transition does not fall within the AABI of any existing rule. 
A new rule is added to the rule model where the AABI for continuous state features are assigned \texttt{min} and \texttt{max} values equal to their current value, and categorical state feature values are singleton members of their features. 
Similarly, the rule's action precondition is set to the action in the state transition, and the effect is equal to the difference between $s_{t-1}$ and $s$. 
An example of this process is illustrated in \ref{fig:knowledge:rule-creation}. 


\subsubsection{Rule Relaxation} 
Rule relaxation expands an AABI of an existing rule to cover a newly encountered action precondition and effect. 
For the prior state $s_{t-1}$ and AABI $I$ represented by points $I_{min}$ and $I_{max}$, the relaxed minima and maxima are
$I^*_{min} = \texttt{min}(s_{t-1},I_{min})$ and $I^*_{max} = \texttt{max}(s_{t-1},I_{max})$. 

Figure~\ref{fig:knowledge:rule-relaxation} illustrates an example of rule relaxation.
In Figure~\ref{fig:knowledge:rule-relaxation}, a previously learned rule models the change in \texttt{AgentPosition} caused by a \texttt{forward} action. 
This rule matches the action precondition and effect but not the state precondition, possibly because it simply had not been observed yet in that part of the environment. 
As a result, the \texttt{AgentPosition} AABI is expanded to include the state precondition associated with the observed transition.

After expanding the AABI, the new AABI $I^*$ is checked for collisions with the AABIs of other rules, again using the hyperplane separation theorem described in Equation~\ref{eq:wc:hyperplane}. 
For comparing intervals, however, we check for a hyperplane between $I*$ and the AABI $I^k$ of another rule $\rho_k$ by comparing the maxima to the minima of the intervals. 
Given maximum and minimum points $[I^*_{min},I^*_{max}]$ and $[I^k_{min},I^k_{max}]$, 
if there exists a feature $\phi$ for which $I^*_{min} > I^k_{max}$, or vice versa, then there is no collision. 
If a collision does exist, instead of trying to compromise between the two rules, we execute Rule Collision Resolution.

\subsubsection{Rule Collision Resolution} finds the min-cut partition of the existing rule's state precondition AABI. 
Consider the AABI as a graph, where the graph nodes are the bounding hyperplanes (``faces'' of the hyperrectangle) and the edges are lines that connect opposing hyperplanes weighted by their length. 
This min-cut of the AABI is simply the division along the largest feature axis that intersects with the prior state. 
The new, divided AABIs are added to the existing rule's preconditions, the original AABI is removed, and the prior state is assessed for accommodation with Rule Creation or Rule Relaxation because it may not be included in either split. 

Figure~\ref{fig:knowledge:rule-collision} illustrates a situation where the \texttt{forward} rule is observed to not correctly predict the outcome of a state transition \texttt{forward} because the agent hits a wall. 
Because the rule describing this state transition has a precondition subsumed by the existing \texttt{forward} rule but with a different effect, there is a rule collision. 
The collision is resolved by splitting the rule. 
A new point rule can be created because the prior state is now outside the AABIs of the split rules. 

Post-novelty, splitting rules in Rule Collision Resolution can result in one of the split rules having a precondition with a feature with empty interval or empty categorical set. 
If this occurs, the ``empty split'' is discarded. 


\subsection{Novelty Detection}

Once pre-novelty neural policies converge and the symbolic world model is created as described above, learning is turned off for both; this saves compute and allows the policy to focus on exploitation. 
As the agent performs tasks in the environment, it looks for state-action-state transitions that are inconsistent with the rules in the world model and triggers adaptation when one of two cases occur. 
\begin{enumerate}
    \item \textbf{A single rule is violated $n$ consecutive times}. 
    A violation is defined as the observed subsequent state of the environment not matching a rule's expected state effect even though the state and action preconditions both match. 
    Violations occurring consecutively is a heuristic for novelty injection because it indicates that a previously correct rule might be poorly modeling local behavior.
    \item \textbf{A single observed state causes a failed prediction in more than $n$ consecutive visits.}. 
    Consecutive visits to the same state that result in only failed predictions is a heuristic for novelty injection because it indicates that a state expected to be covered by the model is in fact not. 
\end{enumerate}

$n$ is a hyperparameter tuned based on the desired sensitivity of novelty detection. 
Based on testing multiple values for $n$, and in an effort to not miss any novelties, we found setting $n=2$ in our experiments was a good compromise between false positives detections and missed novelty detections.

Once a novelty is detected, 
the neural policy and the symbolic world model begin to update online again. 
The post-novelty rule update process is exactly the same as pre-novelty rule learning (Algorithm~\ref{alg:knowledge:update}). 
The rule model can thus be updated with as little as a single iteration of the rule learning algorithm, with guaranteed improved next-state prediction. 


\begin{algorithm}[t]
\scriptsize
\SetAlgoLined
\KwIn{Pre-Novelty Policy $\pi$, World Model $P$, Mix Ratio $\eta$}
$PostNov \leftarrow False$\;
$s_t \leftarrow$ initial observation\;
\While{$true$}
{
Select action $a_{t}=\pi(s_{t})$ \;

Predict next state $\hat{s}_{t+1} = P(s_{t}, a_{t})$ \;

Execute $a_{t}$ in $Env$ and observe 
next state and reward 
$s_{t+1}, r_{t}$\;

$PostNov \leftarrow [PostNov$ {\bf OR} $DetectNovelty(P,\hat{s}_{t+1}, s_{t+1})$] \;

\If{$PostNov$}
{
Add $\langle s_t, a_t, s_{t+1}, r_t\rangle$ to $UpdateBuffer$\;

$P \leftarrow RuleModelUpdate(s_t, a_t, s_{t+1})$\;

Add $ImagineRollouts(P)$ per $\eta$ to $UpdateBuffer$\;

Periodically update $\pi$ with $UpdateBuffer$ \; 
}
$s_t \leftarrow s_{t+1}$
}

\caption{Imagination-Based Adaptation}
\label{alg:knowledge:imagination-based_adaptation}
\end{algorithm}


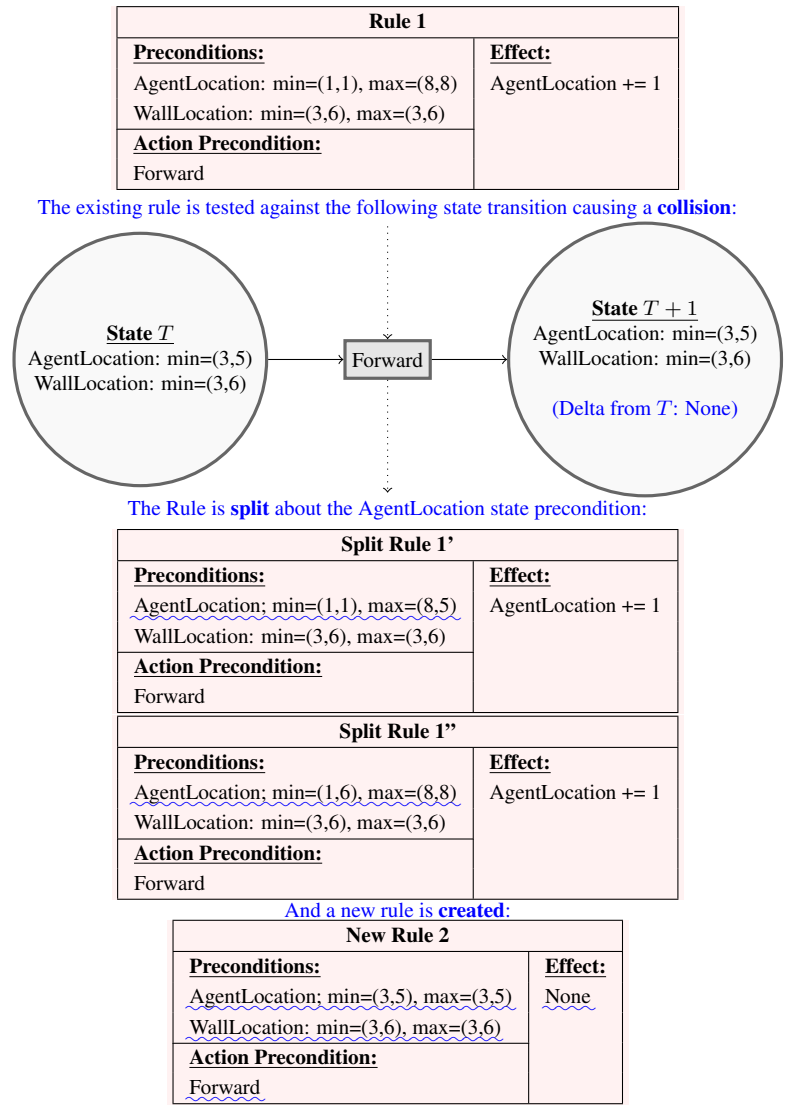
\begin{figure}
\centering
\scriptsize
    \colorbox{rulecolor}{
    \begin{tabular}{|l|l|}
    \hline
    \multicolumn{2}{|c|}{\bf Rule 1}\\
    \hline
    \underline{\bf Preconditions:} & \underline{\bf Effect:} \\
    AgentLocation: min=(1,1), max=(8,8) & AgentLocation += 1\\
    WallLocation: min=(3,6), max=(3,6) & \\
    \cline{1-1}
    \underline{\bf Action Precondition:} &\\ 
    Forward & \\
    \hline
    \end{tabular}
    }
    
    \begin{tikzpicture}[roundnode/.style={circle, draw=black!60, fill=gray!5, very thick, minimum size=7mm},
    squarenode/.style={rectangle, draw=black!60, fill=gray!20, very thick, minimum size=5mm}]
    \node[roundnode, align=center] (state1) {\underline{\bf State $T$}\\AgentLocation: min=(3,5)\\WallLocation: min=(3,6)};
    \node[squarenode, align=center] (action) [right=of state1] {Forward};
    \node[roundnode, align=center] (state2) [right=of action]{\underline{\bf State $T+1$}\\AgentLocation: min=(3,5)\\WallLocation: min=(3,6)\\\\\textcolor{blue}{(Delta from $T$: None)}};
    \node (txt1) [above=of action,yshift=0.5cm] {\textcolor{blue}{The existing rule is tested against the following state transition causing a \textbf{collision}:}};
    \node (txt2) [below=of action,yshift=-0.5cm] {\textcolor{blue}{The Rule is \textbf{split} about the AgentLocation state precondition:}};
    \draw[->] (state1.east) -- (action.west);
    \draw[->] (action.east) -- (state2.west);
    \draw[->, dotted] (txt1.south) -- (action.north);
    \draw[->, dotted] (action.south) -- (txt2.north);
    \end{tikzpicture}
    
    \colorbox{rulecolor}{
    \begin{tabular}{|l|l|}
    \hline
    \multicolumn{2}{|c|}{\bf Split Rule 1'}\\
    \hline
    \underline{\bf Preconditions:} & \underline{\bf Effect:} \\
    \blueuwave{AgentLocation; min=(1,1), max=(8,5)} & AgentLocation += 1\\
    WallLocation: min=(3,6), max=(3,6) & \\
    \cline{1-1}
    \underline{\bf Action Precondition:} &\\ 
    Forward & \\
    \hline
    \end{tabular}
    }
    
    \colorbox{rulecolor}{
    \begin{tabular}{|l|l|}
    \hline
    \multicolumn{2}{|c|}{\bf Split Rule 1''}\\
    \hline
    \underline{\bf Preconditions:} & \underline{\bf Effect:} \\
    \blueuwave{AgentLocation; min=(1,6), max=(8,8)} & AgentLocation += 1\\
    WallLocation: min=(3,6), max=(3,6) & \\
    \cline{1-1}
    \underline{\bf Action Precondition:} &\\ 
    Forward & \\
    \hline
    \end{tabular}
    }
    
    \textcolor{blue}{And a new rule is \textbf{created}:}
     
    \colorbox{rulecolor}{
    \begin{tabular}{|l|l|}
    \hline
    \multicolumn{2}{|c|}{\bf New Rule 2}\\
    \hline
    \underline{\bf Preconditions:} & \underline{\bf Effect:} \\
    \blueuwave{AgentLocation; min=(3,5), max=(3,5)} & \blueuwave{None}\\
    \blueuwave{WallLocation: min=(3,6), max=(3,6)} & \\
    \cline{1-1}
    \underline{\bf Action Precondition:} &\\ 
    \blueuwave{Forward} & \\
    \hline
    \end{tabular}
    }
\caption{\textit{Rule Collision}, and the resulting rule \textit{split} and \textit{creation}. The blue-underlined preconditions in the newly split Rule 1' and Rule 1'' indicate the feature dimension along which the original Rule 1 is split. The newly created Rule 2 accounts for the state transition that caused the collision with the original Rule 1.}
\label{fig:knowledge:rule-collision}
\end{figure}

\subsection{Imagination-Based Policy Adaptation}

Post-novelty, the newly updated rules within the symbolic world model reflect the agent's belief about the post-novelty state transition function. 
The agent now uses that rule model to ``imagine'' how sequences of actions will play out---which we refer to as ``imagination-based simulation''---and update its policy without interacting or executing actions in the true environment. 
Specifically, as we show in Algorithm~\ref{alg:knowledge:imagination-based_adaptation},
we use the rule model to simulate state-action-state transitions that then populate the agent's update buffer--the data on which the policy will be trained. 
The policy training algorithm generates a loss over samples drawn from the update buffer and back-propagates loss through the policy model. Figure~\ref{fig:knowledge:architecture} shows how the standard Actor-Critic neural architecture and imagination-based simulation work together to feed real and imagined state observations. 

The agent follows its policy in the imagined environment and repeatedly experiences the first rule change's consequences, receiving a reduced (or increased) expected reward, pushing the policy away from (or toward) the impacted actions. 
As the symbolic world model detects new discrepancies and represents the post-novelty environment more accurately, the policy may be able to ``imagine'' experiencing and accommodating novelty with minimal exposure to the novelty in the environment. 

Using the example of the re-keyed door lock (Figure~\ref{fig:novgrid:novgrid_splash}, top), the agent has executed its pre-novelty policy of navigating to the yellow key and then to the yellow door only to discover that the door no longer opens. 
Upon arriving at the state $\langle$\textit{AgentInFrontOfDoor,  AgentCarryingYellowKey, DoorLocked}$\rangle$ 
and performing the action \texttt{unlock}, the agent expects $DoorUnlocked$ to become true. 
However, since the door has been re-keyed the \texttt{unlock} action results in no change. 
When this occurs, there is a rule collision, resolved by creating a rule with a $\{\emptyset\}$ effect delta, and the old rule is ``split''. In this case, the split results in states with empty AABIs, in which case the rule is deleted. 
The rule collision is detected as a novelty, and the agent begins updating its policy. 
Specifically, the agent no longer receives the utility of walking through the door and the policy updates to reflect reduced utility of being directly in front of the door. 
Over time, the utility of the state will decrease enough that the agent will prefer other states, increasing exploration and eventually coming across the blue key. 
This switch from exploitation to exploration is accelerated by the agent's ability to repeatedly imagine arriving before the door and choosing alternative actions because there is no valid rule in which the door opens. 

\sloppypar{Eventually through exploration the agent will find itself before the door with the blue key: $\langle$\textit{AgentInFrontOfDoor, \blueuwave{AgentCarryingBlueKey}, DoorLocked}$\rangle$ 
Trying to open the door with the blue key will this time result in the effect of $DoorUnlocked$ changing to $DoorClosed$. 
This, in conjunction with the \texttt{unlock} action precondition does not match any existing rule, and a new rule is created, at which point imagination will facilitate faster policy learning. 
Once again having access to the utility and reward of states beyond the door, the agent will converge to a new policy involving the blue key. 
}

To account for world model error, our policy is also trained on real environment interactions. 
A mixing ratio parameter controls the ratio of imagined and real environment rollouts in the policy update buffer;
For every $t$ steps in the real environment, the agent runs $\frac{t}{\eta}$ steps in the imagined environment. 
This mixing effectively adds noise to the policy update and helps drive the policy back into ``explore'' mode, where actions will be selected more randomly by the policy. 

\begin{table}[t!]
\footnotesize
\centering
\begin{tabular}{c|c|c|c|c|}
{} & {\bf Adaptive Efficiency } & {\bf Pre-novelty} & {\bf Asymptotic } &  {\bf Update Efficiency}\\
& {\bf @0.95 (steps) $\downarrow$} & {\bf Performance} $\uparrow$ & {\bf Performance} $\uparrow$ &   {\bf (policy updates) $\downarrow$}\\
\hline
\multicolumn{5}{c}{\texttt{DoorKeyChange} novelty}\\
\hline PPO & 2.25E6 & 0.973 & 0.971 & 2.25E6 \\
\hline
 DreamerV2 & 5.3E5 & 0.971 & 0.973 &  3.82E8 \\
\hline
 \textbf{Ours} & 9.8E5 & 0.972 & 0.970 & \textbf{1.63E6} \\
\hline
\multicolumn{5}{c}{\texttt{LavaProof} novelty}\\
\hline
PPO & 1.39E5 & 0.972 & 0.991 & 1.39E5 \\
\hline
 DreamerV2 & Failed to adapt & 0.965 & Failed to adapt & Failed to adapt\\
\hline
 \textbf{Ours} & 8.3E4 & 0.972 & 0.991 & \textbf{1.38E5} \\
\hline
\multicolumn{5}{c}{\texttt{LavaHurts} novelty}\\
\hline
PPO & 2.08E6 & 0.992 & 0.971 & 2.08E6 \\
\hline
 DreamerV2 & 1.05E6 & 0.992 & 0.968 & 7.56E8 \\
\hline
 \textbf{Ours} & 1.07E6 & 0.992 & 0.972 & \textbf{1.78E6} \\
\hline
\end{tabular}
  \caption{Novelty metric results
  averaged over three runs. 
  DreamerV2 did not adapt to the novelty on \texttt{LavaProof}. 
}
\label{tab:knowledge:wc_results}
\end{table}

\section{Results}\label{sec:knowledge:results}

The experiments are performed in the NovGrid~\cite{balloch2022novgrid} environment, which extends the MiniGrid~\cite{gym_minigrid} environment with novelty injection and enables controlled, replicable experiments with stock novelties. 
We use two 8x8 Minigrid environments as the base environments: 

\begin{enumerate}
    \item \texttt{DoorKey} a standard environment where an agent must pick up a key, unlock a door, and navigate to the goal behind that door, and
    \item \texttt{LavaShortcutMaze}, a custom environment where an agent must navigate a maze that has a pool of lava lining the side of the maze nearest to the goal. 
\end{enumerate}
In all cases, we used the default sparse MiniGrid reward, which gives $1-t/(h*w*10)$ reward when agents navigate onto the terminal goal location, and no reward shaping. 
In the reward function, $t$ is the number of environment steps taken and $(h*w*10)$ is the default max number of steps for a Minigrid environment, where $h$ and $w$ are the height and width of the grid.

Performance of our method and the baselines was evaluated on three novelty types~\cite{balloch2022novgrid}:
\texttt{LavaProof} is a \textit{shortcut} novelty that makes lava in the \texttt{LavaShortcutMaze} environment to be harmless to the agent (where pre-novelty it destroyed the agent), offering a shorter path to the goal. 
\texttt{DoorKeyChange} is a \textit{delta} novelty that changes which of two keys unlock a door in the \texttt{DoorKey} environment, 
not changing the difficulty of reaching the goal but requiring different state-action sequences of similar length and complexity. 
\texttt{LavaHurts} is a \textit{barrier} novelty and the inverse of \texttt{LavaProof}, 
changing the effect of lava in the \texttt{LavaShortcutMaze} to destroy the agent (where pre-novelty, lava was harmless), thereby eliminating the shorter lava path to the goal. 
The \texttt{DoorKeyChange} and \texttt{LavaProof} novelties are conceptually shown in Figure~\ref{fig:novgrid:novgrid_splash}, and results are summarized in Table~\ref{tab:knowledge:wc_results} and Figures~\ref{fig:knowledge:env_steps_doorkey},~\ref{fig:knowledge:env_steps_lava_proof}, and~\ref{fig:knowledge:env_steps_lava_hurts}.    

\begin{figure}
\centering
    \includegraphics[width=0.99\linewidth]{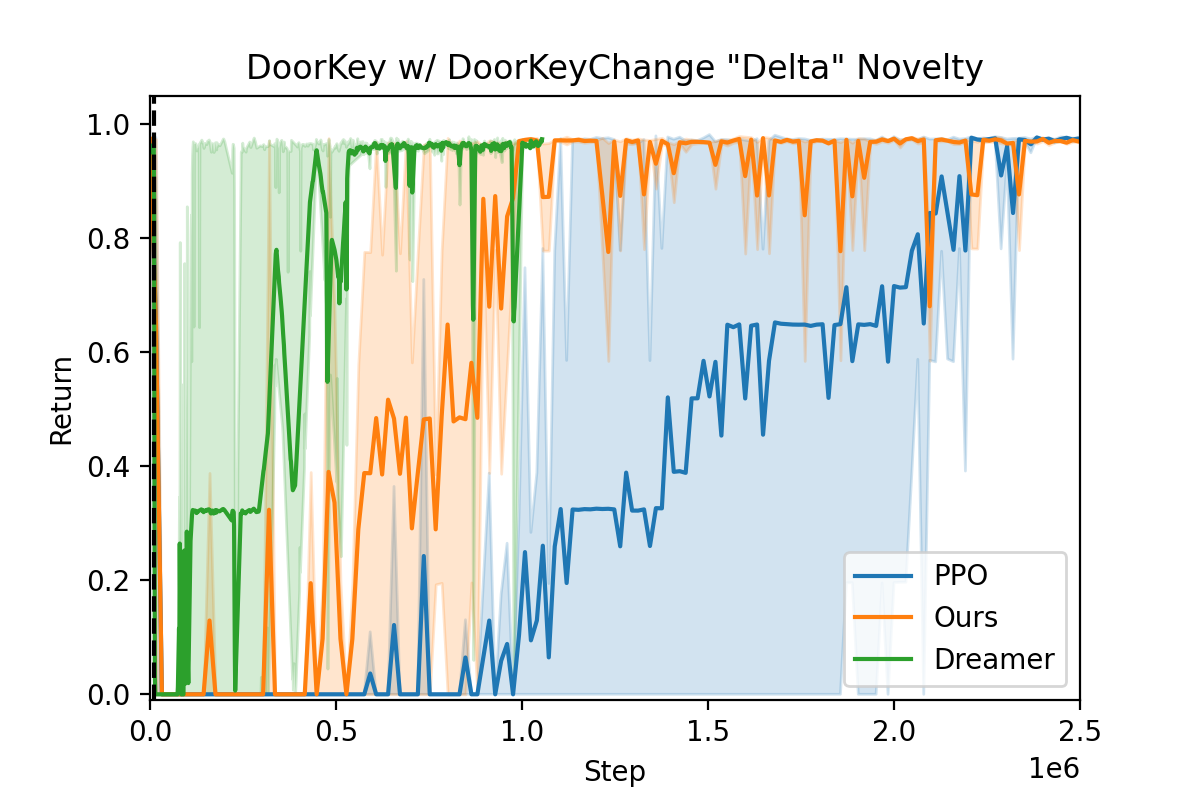} \\
    \caption{
    This plot shows the adaptive performance of agents post-novelty in the \texttt{DoorKeyChange} novelty. 
    The plot charts 10,000 pre-novelty environment steps followed by the number of environment steps required for agent convergence. Novelty injection is signified by the vertical dotted black line. 
    In the adaptation response to the \texttt{DoorKeyChange} ``delta'' novelty in the \texttt{DoorKey} environment, Dreamer adapted before WorldCloner, and both adapted before PPO. 
    }
    \label{fig:knowledge:env_steps_doorkey}
\end{figure}

\begin{figure}
\centering
    \includegraphics[width=0.99\linewidth]{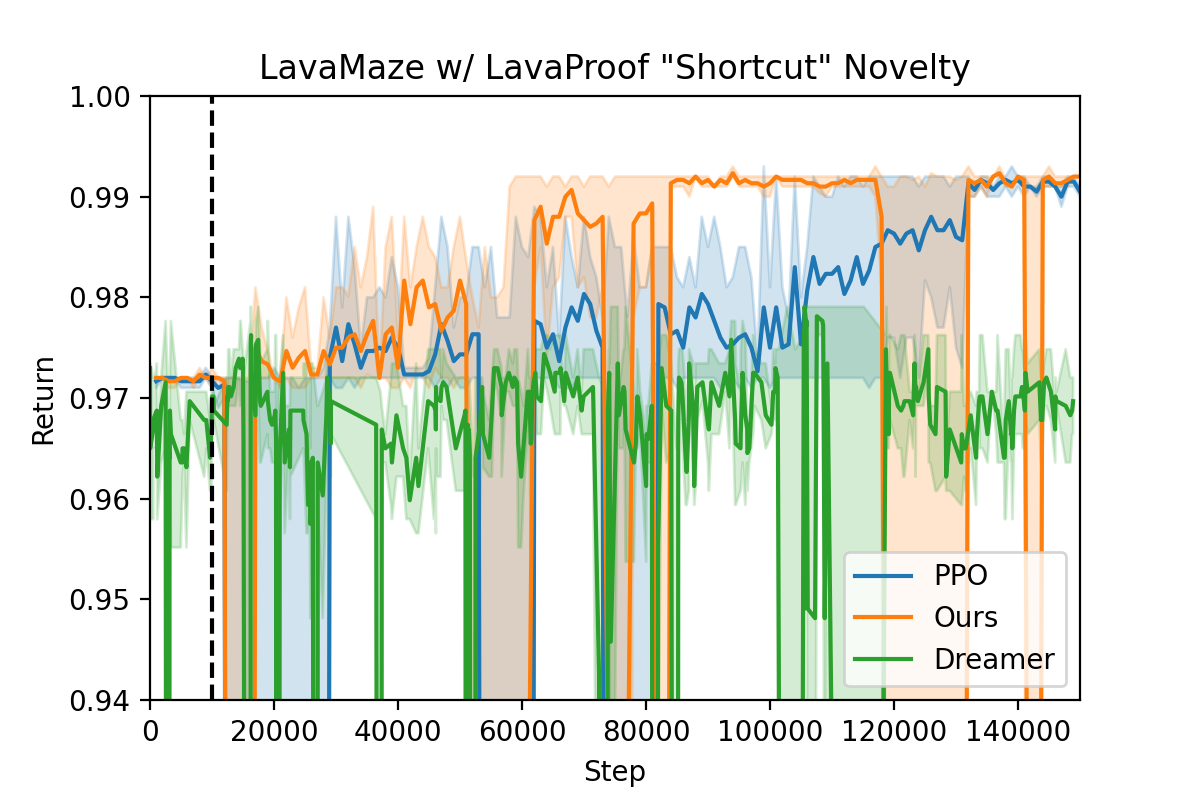} \\
    \caption{
    This plot shows the adaptive performance of agents post-novelty in the \texttt{LavaProof} novelty. 
    The plot charts 10,000 pre-novelty environment steps followed by the number of environment steps required for agent convergence. Novelty injection is signified by the vertical dotted black line. 
    In the adaptation response to the \texttt{LavaProof} ``shortcut'' novelty in the \texttt{LavaShortcutMaze} environment WorldCloner adapted faster than before PPO. 
    Interestingly, Dreamer never finds the shortcut. 
    }
    \label{fig:knowledge:env_steps_lava_proof}
\end{figure}

\begin{figure}
\centering
    \includegraphics[width=0.99\linewidth]{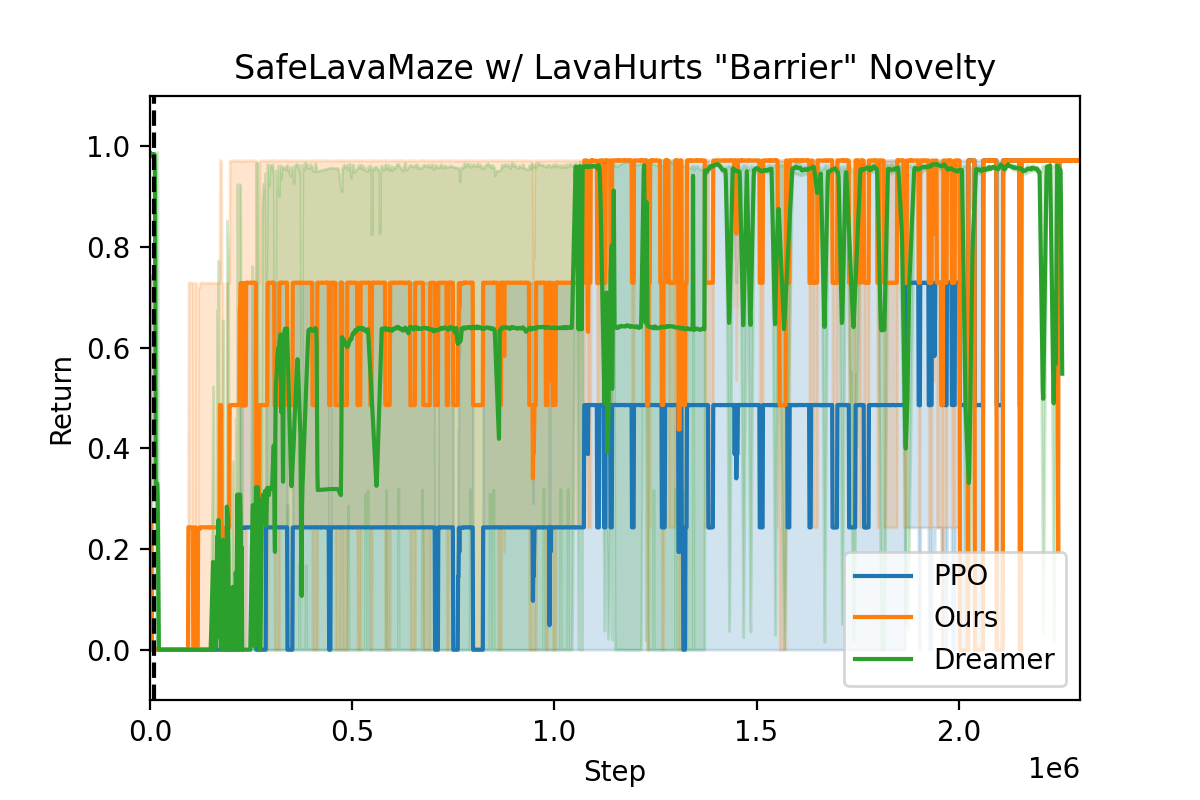} \\
    \caption{
    This plot shows the adaptive performance of agents post-novelty in the \texttt{LavaHurts} novelty. 
    The plot charts 10,000 pre-novelty environment steps followed by the number of environment steps required for agent convergence. Novelty injection is signified by the vertical dotted black line. 
    In the adaptation response to the \texttt{LavaHurts} ``barrier'' novelty in the \texttt{LavaShortcutMaze} environment, where lava only becomes harmful post-novelty, Dreamer and WorldCloner fully adapt at the same time, both faster than PPO which fails to reach maximum performance during adaptation. 
    }
    \label{fig:knowledge:env_steps_lava_hurts}
\end{figure}

To evaluate performance in these test environments, we adopt two metrics for novelty adaptation from \cite{balloch2022novgrid} that builds on \cite{alspector2021representation}. 
\begin{enumerate} 
\item {\em Asymptotic adaptive performance} is the final performance of the agent post-novelty relative to a random baseline, where higher is better. 
This is used to observe whether a method fully adapted to the post-novelty environment. 
\item {\em Adaptive efficiency} is the number of time steps in the real, post-novelty environment required to converge to asymptotic adaptive performance, where fewer steps is better. 
In our work, this is achieved when the 10-step moving average method reaches $95\%$ of asymptotic adaptive performance. 
We add a third measure, {\em update efficiency}, which is the number of policy updates required post-novelty to reach asymptotic adaptive performance, where--again--fewer steps is better. 
\end{enumerate}

We compare WorldCloner post-novelty performance with two baselines. 
The first is a standard reinforcement learning agent using Proximal Policy Optimization (PPO)~\cite{schulman2017proximal}, the same model-free reinforcement learning approach used by the neural policy in our WorldCloner method. 
The second baseline is DreamerV2~\cite{hafner2020dv2}, a state-of-the-art world modeling agent that learns an end-to-end neural world model. 
The agents were not given any knowledge about the novelties at training time. 
All methods were allowed to train for as many as 10 million time steps to ensure convergence, and the results for each method were averaged over three runs. 
Novelty was injected at episode 50k, well after all agents had converged. 
Since the baseline agents lack novelty detection capabilities, we keep their learning on during evaluations so that agents can react immediately after novelty is injected.

The policy architectures for all agents use a convolutional neural net feature extractor and two fully connected output networks, one to estimate the value and one to serve as the policy functions of the agent. 
All hyperparameters and architectures for DreamerV2 were consistent with the original publication~\cite{hafner2020dv2}, 
and all PPO hyperparameters are consistent with MiniGrid-suggested hyperparameters~\cite{willems_2020}. 
For WorldCloner we use a mixing ratio of 60\% real rollouts to 40\% imagined rollouts. 
This ratio was determined empirically by looking at the trade-off between asymptotic adaptive performance and adaptive efficiency. 
At higher amounts of imagination, the agent did not recover full post-novelty performance. 
The WorldCloner world model uses symbolic features from MiniGrid for rule learning, including the object type, color, and position of the agent and objects, the agent orientation, the agent's inventory, and whether doors are locked, unlocked, or open.

We document the results of these evaluations in Table~\ref{tab:knowledge:wc_results}, with the adaptation process of our method further illustrated in Figures~\ref{fig:knowledge:env_steps_doorkey},~\ref{fig:knowledge:env_steps_lava_proof}, and~\ref{fig:knowledge:env_steps_lava_hurts}.  
The table shows that pre-novelty, as expected, all three methods converge in all three novelty scenarios to effectively the same performance. 
This means that all methods were able to find solution sequences of equal length to the goal for all environments. 

For the \texttt{DoorKeyChange} ``delta'' novelty, DreamerV2 slightly outperforms WorldCloner in adaptive efficiency, while both dramatically outperform PPO. 
From this result we can observe that imagination is strongly beneficial to post-novelty adaptation. 
However, adaptive efficiency doesn't tell the entire story. 
DreamerV2 updates its policy on imagination only, which means that for each update to the world model, DreamerV2 must update its policy using many imagination iterations. 
As a result, update efficiency of DreamerV2 demands nearly two orders of magnitude more policy updates than WorldCloner. 

For the \texttt{LavaProof} ``shortcut'' novelty, WorldCloner substantially outperforms PPO in adaptive efficiency while DreamerV2 never finds the shortcut novelty. 
Dreamer was trained for 2.5 million post-novelty environment steps in multiple runs with the same result. 
We attribute DreamerV2's failure to the unique 
way in which its policy learner
depends on the accuracy of its world model. 
Unfortunately, the world model continues to predict negative consequence of the lava. 
Since the longer, pre-novelty solution still reaches the goal, Dreamer's world model remains fixed, imagination never varies, and the policy never updates. 
As a result, the world model can overfit easily if the policy produces the same sequence of actions every time. 
PPO and WorldCloner do not encounter this issue because small variations in the policy occur that cause policy updates, which allows for more sensitivity to the shortcut novelty. 
However, these methods do not completely resolve the adaptation problem.
As can be seen in  Figure~\ref{fig:knowledge:env_steps_lava_proof}, both methods take more than 25,000 environment steps to react to the shortcut novelty injection, unlike in the barrier and delta novelties where adaptation takes longer, . 
This demonstrates a unique challenge in shortcut novelties: when the the pre-novelty optimal path still exists, converged, non-exploring agents will struggle to detect novelty. 
As noted by \cite{balloch2022role}, future research is needed on novelty-aware exploration techniques. 

\begin{figure}
	\centering
\includegraphics[width=0.99\linewidth]{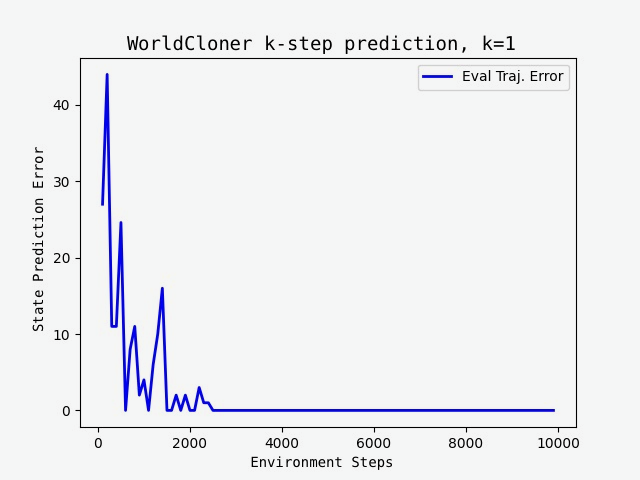}
\caption{This shows the WorldCloner 1-step prediction error vs environment steps during rule learner training in the Empty MiniGrid Environment. 
}
\label{fig:cloner_steps}
\end{figure}

Finally, the \texttt{LavaHurts} ``barrier'' novelty is similar to the 
\texttt{DoorKeyChange} novelty, where WorldCloner and DreamerV2 have very similar adaptive efficiency both outperforming PPO by a wide margin, but in all cases taking a long time to converge. 
What's more, in the \texttt{LavaHurts} and \texttt{DoorKeyChange} novelty, the high variance in both DreamerV2 and WorldCloner shows that in some of the trials both of these methods found the new solution very early in the adaption process, but could not consistently converge to it. 
Transfer learning tells us that both of these results are expected, as the simple pre-novelty solution does not prepare the policies for the more complex post-novelty environment. 

In general, our model is able to make much more efficient use of our updates than DreamerV2. 
Because DreamerV2 only trains its policy in imagination, it runs hundreds of imagination policy updates for every world model update it executes. 
As shown in the last column of Table~\ref{tab:knowledge:wc_results}, blending the environment interaction data with the imagination data, WorldCloner requires significantly fewer policy updates. 

We also evaluate the efficiency and accuracy of the rule learner (results shown in Figure~\ref{fig:cloner_steps}). 
After every 100 training steps we would validate the model accuracy by running a random policy for 1000 steps and measure the average 1-step predictive error of the world model. 
The pre-novelty rule learner requires only about 2000 training steps before it converges to near-perfect prediction accuracy. 
Grid worlds are simple, deterministic environments, so this is an unsurprising result. 
The key is that it converges rapidly and is sample-efficient compared to neural world model learning techniques~\cite{hafner2019dream}. 


These results show that WorldCloner with only 40\% imagination improves adaptation efficiency across all novelties over a neural policy with no world model. 
Furthermore, WorldCloner is competitive with DreamerV2 across these novelties and adapts with shortcuts that DreamerV2 fails to detect. 
Most interestingly, we can see from the last column of Table~\ref{tab:knowledge:wc_results} that WorldCloner achieves these results with fewer policy updates than PPO and DreamerV2. 
This, however, will vary with the amount of imagination injected into the policy learner by WorldCloner, and should be the subject of further study. 

\section{Key Takeaways}\label{sec:knowledge:conclusions}

As autonomous agents are deployed in open-world decision-making situations, techniques designed deliberately to handle novelty will be required. 
This can include novelties from learning to unlock a door with a new key to discovering a new shortcut to reduce travel time or avoiding new hazards on previous safe solutions. 

To this end, we showed that reinforcement learning agent policies can be adapted more efficiently to novelties using symbolic world models that (1)~can be updated rapidly and (2)~simulate rollouts that then can be added to the policy learning process, thereby reducing the number of direct interactions with the post-novelty environment. 
Specifically, our results show that WorldCloner is comparable in adaptation efficiency to state-of-the-art neural world modeling techniques while requiring only a fraction of policy updates. 
This suggests that, unlike neural world models, symbolic world models are good for distinguishing which knowledge must change and what can be preserved.
As a result, symbolic representations are an effective complement to neural representations for adapting reinforcement learning agents to novelty.

Although this work breaks new ground for architectures specific to online test-time adaptation, the limitations of this work leave room for improvement by future researchers. 
Critically, the WorldCloner AABI symbolic modeling approaches and rule-learning algorithms proposed in this chapter lack performance guarantees and are only applied to discrete deterministic environments with structured environment features. 
Either by replacing the AABI-based world model with a symbolic model learning approach with known covergence guarantees or by providing a deeper theoretical analysis of the AABI-base approach we can better understand the efficiency-performance trade-off in future neuro-symbolic world models. 
On the applied side, rule learning for complex continuous dynamics can be challenging without prior knowledge. 
For example, if modeling the dynamics of a complex system such as a parallel robot, simply learning rules predicting precondition-action-effect intervals will likely result in a very large number of rules, especially without any ability to use prior knowledge of the dynamics.
By extending the rule learner to utilize more complex induction and modeling techniques such as dynamic movement primitives, WorldCloner can improve the adaptive efficiency of agents in a wider variety of environments and OTTA scenarios.

The key insight the work in this Chapter provides is that architecture design of data-driven OTTA agents should be contingent on the specific problems with adaptation. 
In the specific case of WorldCloner, the decision to employ a symbolic world model only benefits the agent because the forgetting caused by distribution shift is so disruptive that even WorldCloner's imperfect world model was beneficial.
Highlighted by PPO's poor adaptation to the delta and barrier novelties and Dreamer's failure to adapt to the shortcut novelty, we see that naively expecting a neural network architecture to learn and transfer the appropriate prior knowledge in adaptation can negatively affect adaptive efficiency. 
When considering whether parts of a model architecture ought to be symbolic, neural, or something else, we need to think about how that particular architecture is suited to this specifics of the OTTA setting.

The insights provided by this chapter's results further supports the broader thesis of this dissertation.
Specifically, WorldCloner's improvments to adaptative efficiency demonstrate that efficient adaptation requires selectively preserving prior knowledge from $MDP_{\mathrm{source}}$ to be used in $MDP_{\mathrm{target}}$.
We built on the insights of this chapter in our final chapter, Chapter~\ref{chapt:cbwm}, where we examine how we can produce similar positive impacts on adaptive efficiency by preserving prior knowledge in neural-based models.
Instead of examining how black-box neural model adaptation efficiency can be complemented and improved using symbolic models, Chapter~\ref{chapt:cbwm} answers the question: "How can use ideas from symbolic modeling to ground and control the latent knowledge in neural models?"

    \chapter{Concept Bottleneck World Models}
\label{chapt:cbwm}

Reinforcement learning (RL) policies that use world models show great promise in efficient task transfer. 
The ability to update a world model through a single exposure to environmental changes, followed by policy adaptation using the updated model without additional environmental interaction, offers significant potential for rapid learning and adaptation~\cite{balloch2023worldcloner,sarathy2021spotter}. 
However, the ``black box'' nature of neural network-based world models presents a challenge for understanding the relationship between knowledge and adaptation performance.

A common approach to working with interpretable ``white box''~\cite{chen2020concept} representations is to associate high-level ``concepts'' with parts of an agent's decision-making process~\cite{chen2020concept,das2023state2explanation}. 
``Concepts'' are symbolic predicates for grounding latent neural network features to human-parsable facts related to the task or domain in question~\cite{das2023state2explanation}.
Concepts are most frequently used in the context of classification systems; for example, a ``beak shape'' might be a concept used in bird classification. 
However, even if a conventionally learned neural representations contain the same knowledge that could be associated with a discrete concept, the knowledge within the neural representation cannot be reliably grounded to specific neurons or sequences of neurons (often called ``circuits'')~\cite{olah2020zoomcircuits}.
Even if one was able to identify a discrete concept in the weights of a particular neural network, that concept will manifest differently in every other neural network ~\cite{locatello2019challenging,carbonneau2022measuring,eaton2024icbinb}. 
Neural network interpretability literature refers to this phenomenon of concept inseparability as ``entanglement.''~\cite{locatello2019challenging,carbonneau2022measuring,nanda_2022, rauker2023toward}
Given the association of a neural circuit with a discrete concept, prior literature finds that the primary cause of entanglement is neural ``polysemanticity,''~\cite{nanda_2022}, meaning that multiple circuits associated with unrelated concepts share neurons.
Polysemanticity makes it impossible to separate concepts using neurons in the layers of a conventionally trained network. 


As a result of concept entanglement and polysementicity, each time a trained network is adapted to a new task, gradient updates modify all of these circuits and thereby all of the concepts, regardless of whether an entangled concept needed to change. 
The impact of this on online test-time adaptation (OTTA) is that while the knowledge embedded in the agent's model of $MDP_{\mathrm{source}}$ may directly apply to $MDP_{\mathrm{target}}$, any adaptation updates will modify those, weights thereby discarding the reusable knowledge.  
Updates to the world model resulting from novel, previously unseen scenarios will inadvertently impact concepts unrelated to the novelty. 

To illustrate this problem, consider an autonomous, holonomic drone trained to follow \textit{non-holonomic} vehicles such as cars on roads. 
If this drone is suddenly tasked with following the movements of \textit{holonomic} vehicles---such as other drones---that do not use roads the process of adapting its policy will likely unnecessarily erode its understanding of roads.
This unnecessary loss of knowledge is adaptive work that did not need to happen and makes future adaptation harder. 
If our drone agent is reassigned and adapted a second time, this time to follow drones that survey infrastructure, it must relearn valuable concepts about roads despite the fact that this knowledge did not need to be forgotten in the first place. 

Some world model methods like Dreamer attempt to avoid overwriting reusable knowledge using methods like Beta-VAE~\cite{higgins2017betavae} to encourage disentanglement through constraints on the loss function. 
However, prior work~\cite{locatello2019challenging} has shown that such methods, while helpful in some circumstances, rarely have an impact on a broad set of learning problems and data distributions. 

In this Chapter we introduce \textbf{Concept Bottleneck World Models (CBWMs)}, a model-based reinforcement learning approach where an internal layer of the world model is constrained to encode human-interpretable concepts related to the task. 
The CBWM architecture incorporates an internal layer constrained to encode human-interpretable concepts, forcing the pre-bottleneck weights to map from the input to the concepts.
These concepts are learned with additional loss terms during the model-learning process that predict concept values and force concept embeddings to be dissimilar. 
This concept layer ``bottlenecks'' the downstream reward, discount, and observation prediction tasks by replacing the majority of the latent state with this concept prediction information. 

By explicitly representing interpretable concepts within the world model, CBWM aims to mitigate the problem of concept entanglement and unnecessary knowledge loss during task transfer, forcing the model to be ``right for the right reasons''~\cite{ross2017right}. 
This approach not only preserves task-relevant information but also enhances the model's adaptability and efficiency in scenarios involving multiple or evolving tasks. 
Our work focuses on examining how concept bottlenecks can improve adaptation to novel scenarios never seen during training by preserving knowledge of concepts that do not need to change.

In the following sections, we will detail the CBWM architecture, present our methodology, and demonstrate its effectiveness through a series of experiments and comparative analyses.

\section{Preliminaries}
\label{sec:cbwm:prelims}

\subsection{Concept Bottleneck Models}
\label{sec:cbwm:prelims:cbm}

Concept Bottleneck Models (CBMs) are neural network architectures that incorporate an intermediate layer of human-interpretable concepts \cite{koh2020concept}. In a CBM, the network is structured such that information must pass through a "concept bottleneck" layer before reaching the output. Formally, given input $x$, concepts $c$, and output $y$, a CBM learns functions $f$ and $g$ such that:

\begin{equation}
    c = f(x), \quad y = g(c)
\end{equation}

By structuring the architecture of the model such that the output task is strictly constrained to being conditioned on the input task, $y=g(f(x))$, the model can be forced to make decisions on the output that are consistent with the concepts. 
The distinction between a concept bottleneck representation such as this and similar discretized representations like ``codebooks''~\cite{van2017vqvae,eaton2024icbinb} and slot attention~\cite{locatello2020slot} is while those methods rely on the structure alone to facilitate separability, the concepts in concept bottlenecks are supervised directly.
As a result, concept embedding model allows for more flexible concept representations while maintaining interpretability.

The implementation of the concept embedding model (CEM)~\cite{espinosa2022concept}, an improved implementation of a CBM, involves learning a mapping from input space to a continuous concept embeddings space, rather than discrete concept predictions.
The CEM contains $k$ concept \textit{networks} corresponding to predefined human-interpretable concepts, plus an additional network for encoding unknown or \textit{residual} concepts. 
For each concept, the CEM maintains two embedding vectors in representing active and inactive states of that concept. 
A context network maps the input to concept-specific embeddings through two functions that produce these active and inactive state embeddings. 
A probability network then predicts concept activation probability from the joint embedding space, producing values between 0 and 1 that can then be supervised with cross entropy loss. 
The final context embedding for each concept is computed as a weighted mixture of the active and inactive embeddings based on this predicted probability.
The key difference between the performance of the CEM and the original CBM is that by supervising predictions from those embeddings, but passing the embeddings instead of the predictions to the downstream task, the CEM still bottlenecks the downstream task while allowing a more rich representation.

In the context of RL and world models, integrating CBMs can provide a way to structure the learned representations around interpretable concepts, potentially improving adaptability of the models to novelty.
Additionally, model-based reinforcement learning is well suited to the addition of a concept bottleneck architecture. 
Model-free reinforcement learning focuses solely on policy learning, developing only enough understanding of the environment state to optimize the policy. 
However, intermediate interpretable policy features such as ``skills'' and state features strictly necessary for control are difficult to precisely define~\cite{StulpSigaud2013skill}. 
As model-based techniques learn a more complete approximation of the environment state, features such as objects, relative object position, and those related to environment modeling are easier to define and supervise, and these features can be useful to interpreting and grounding policies in addition to world models.

\begin{figure}[t]
    \centering
    \includegraphics[width=0.95\textwidth]{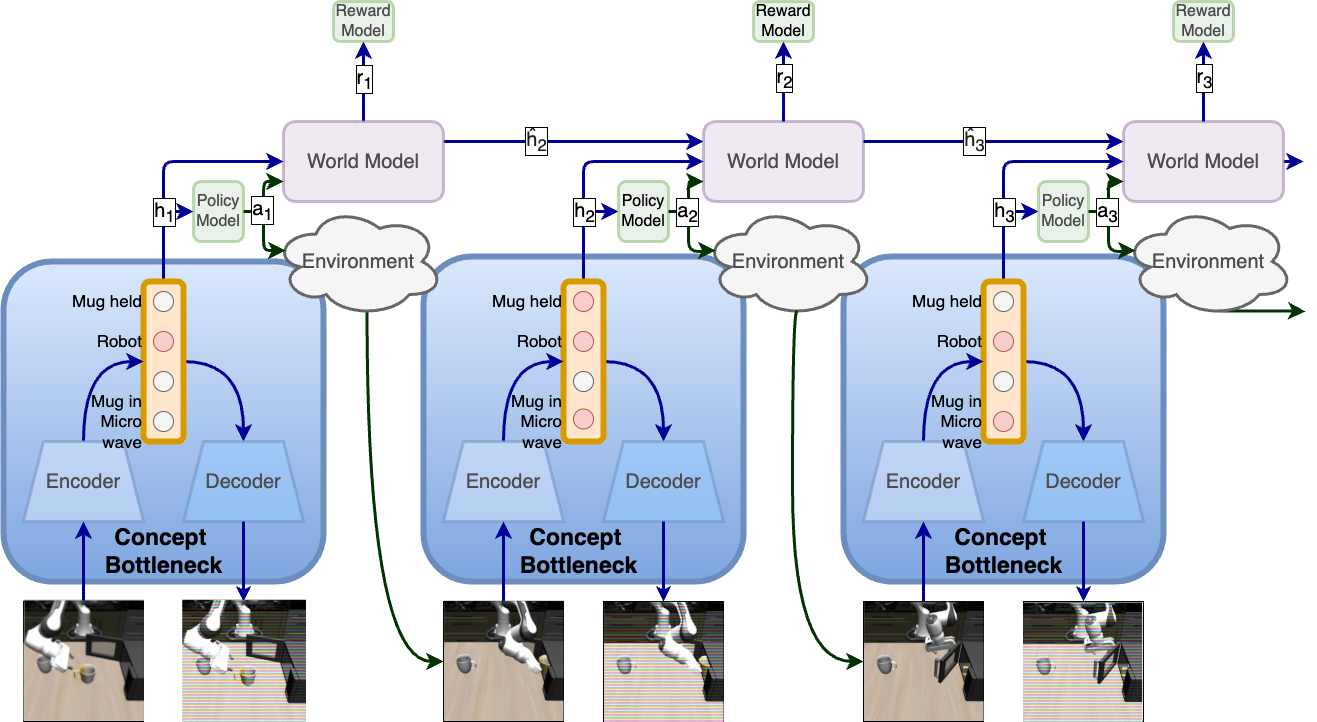}
    \caption{This shows our novel CBWM architecture as it interacts with the agent and environment over three time steps. 
    The bottleneck model, highlighted in blue, is unique as it uses both the stochastic and deterministic components of the world model latent as input. 
    The concept bottleneck itself is represented as the orange vector, where values in the bottleneck predict individual concept predicates, such as whether a robot is present or whether a mug is in the microwave. 
    We indicate the change in the state of the concepts for each new time step as red, meaning the concept is true, or grey, meaning it is false. 
    }
    \label{fig:cbwm:cbwm}
\end{figure}

\section{Concept Bottleneck World Models}
\label{sec:cbwm:cbwm}

We integrate the CBM architecture into a world model-based reinforcement learning framework. 
Our overall architecture, illustrated in Figure \ref{fig:cbwm:cbwm}, consists of three main components: (1) the pre-concept bottleneck network, which processes the input state; (2) the CBM layer itself; and (3) the post-concept bottleneck network, which predicts future states and rewards. 
While the pre- and post-bottleneck networks are specific to the particular world model architecture employed, the CBM layer remains consistent across different world model implementations. In the following subsections, we detail the CBM layer's architecture, associated loss functions, and a procedure for intervening in the world model's predictions.

\subsection{Model Architecture}
\label{sec:cbwm:cbwm:arch}

We adapt the Concept Embedding Model (CEM) layer proposed by  Zarlenga et al. \cite{espinosa2022concept} to the world model-based reinforcement learning context. 
However, unlike in supervised learning tasks where the concept set is often assumed to be near-complete \cite{koh2020concept, espinosa2022concept}, it is unrealistic to expect pre-defined human-understandable features---that is, concepts---to exhaustively capture all relevant aspects of the environment in a reinforcement learning setting. 
For instance, in a task involving manipulation, while we might have available concepts such as the objects that exist in the scene, there may be additional factors influencing those object concept representations, like whether they are visible or serve an addition function like being the target of the reward.

To address this challenge, we extend the CEM layer to incorporate both the stochastic and deterministic components of the world model latent as input. 
The modified world model, depicted operating in a sequence of three steps in Figure \ref{fig:cbwm:cbwm}, represents how the CBWM architecture interacts with sequential concept and environment data.
For a given time step CBWM takes an observation $o_t$ in from the environment and encodes it---shown in the blue block. 
This image encoding $z_t$ and the output of the recurrent model from the prior step $h_t$ form the stochastic and deterministic parts of the latent, respectively.
In the CBWM, we further encode this latent with a CEM layer, signified in orange in Figure~\ref{fig:cbwm:cbwm}, and described in Section~\ref{sec:cbwm:prelims:cbm}. 
The concept-latent is constructed from the concatenated embeddings from all concepts, including the concept residual, creating an $m(k+1)$-dimensional latent that is then passed to the prediction modules of the world model, shown in purple, and the policy model for decision making. 

This complete architecture is trained end-to-end using a composite loss function that combines The typical Dreamer world model loss with a binary cross-entropy loss on concept predictions and an orthogonality constraint between known and unknown concept embeddings.

\subsection{Training Objective}
\label{sec:cbwm:cbwm:objective}

We train Concept Bottleneck World Models (CBWMs) in an end-to-end manner by jointly minimizing the following loss function:

\begin{equation}
    \mathcal{L}_{total}(\phi) = \mathcal{L}_{task}(\phi) \textcolor{blue}{\bf +\beta_{con} \mathcal{L}_{con}(\phi) + \beta_{orth} \mathcal{L}_{orth}(\phi)}
\end{equation}

\begin{equation}
\begin{aligned}
\mathcal{L}_{task} = \mathbb{E}_{\phi}[ \sum_{t=1}^T \mathcal{L}_{obs}(\phi) +& \mathcal{L}_{rew}(\phi) + \mathcal{L}_{dis}(\phi) + \beta_{KL} \mathcal{L}_{KL}(\phi)
] 
\end{aligned}
\end{equation}

\noindent where the the elements in {\bf black}, $\mathcal{L}_{obs}(\phi)$, $\mathcal{L}_{rew}(\phi)$, and $\mathcal{L}_{dis}(\phi)$, are the prediction losses from conventional Dreamer world model learning, $ \mathcal{L}_{KL}(\phi) $ is the latent disagreement loss from conventional Dreamer world model learning. 
The elements in \textcolor{blue}{\bf blue} are losses are added by this work. 
Specifically, $\mathcal{L}_{con}$ is the concept loss, which is a negative log likelihood over one-hot concept predictions,

\begin{equation}
    \mathcal{L}_{\text {con}}=-\sum_{i \in \mathbf{c}} c_i \log{\hat{y}_i}
\end{equation}

and $\mathcal{L}_{orth}$ is the concept orthogonality constraint~\cite{ranasinghe2021orthogonal} that pushes concepts apart. 
\begin{equation}
    \mathcal{L}_{\text {orth }}=\sum_{j \in B} \frac{\sum_{i=1}^{i=k}\left|\left\langle w_i, w_{k+1}\right\rangle\right|}{\sum_{i=1}^{i=k} 1}
\end{equation}
The hyperparameters $\beta$ control the relative importance of the different loss terms, respectively.

\subsection{Offline-to-Online Training}



\begin{figure}[ht]
    \centering
    \includegraphics[width=0.8\textwidth]{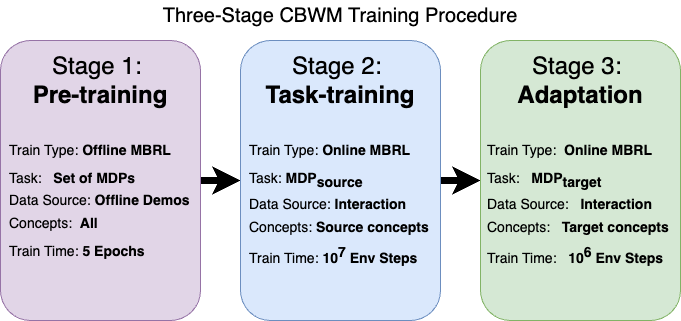}
    \caption{Figure illustrating the three-stage CBWM training procedure for balancing task specific policy learning and adaptation with task-agnostic dynamics and concept knowledge.
    The blue arrow edges represent training processes and are labeled with the task on which the model at the origin point of the arrow is trained.
    The black edges indicate the models that provide the concept representations that are analyzed.}
    \label{fig:cbwm:train_process}
\end{figure}

OTTA performance of an agent can be measured based on the adaptation of the source task 

To compare the impact of knowledge preservation on adaptation efficiency across multiple OTTA settings, CBWM agents trained to have a task-specific policy should still have world models with a task-agnostic understanding of dynamics and concepts.
To balance these two attributes, we train CBWM with an offline-to-online model-based reinforcement learning (MBRL) procedure. 
The training procedure of offline-to-online MBRL generally consists of two stages: offline \textit{pre-training} using a data set of interactions followed by online \textit{source task training} to optimize the policy for the source task and environment. 
In this work, after learning a source task, agents are trained on the target task in a third \textit{online adaptation} stage for measuring OTTA performance. 
We illustrate this training process in Figure~\ref{fig:cbwm:train_process}
By pre-training using task-agnostic data that has no overlap with any source or target tasks, we can maximize the benefits of pre-training while minimizing the risk pre-training benefits some tasks more than others.

While there is no conventional process for training Dreamer-based agents using offline data, with small modifications we are able to pre-train any Dreamer-based models on offline data before online task training.  
At a high level, the offline RL process is similar to the online RL process in Dreamer.
The offline pre-training is performed on a static dataset \( \mathcal{D}_{\text{offline}} \) collected from prior trajectories or demonstrations. 
The world models samples a batch of learning data from the dataset as it would from a replay buffer in reinforcement learning. 
As the latent actor-critic agent used by Dreamer-based approaches operates in the ``imagined'' latent state-space of the world model, the actor-critic is also pre-trained in this phase.

The modifications we made to train Dreamer with offline RL primarily address the challenge of a lack of diversity in offline demonstration data due to lack of ability to explore and the lack of state-action coverage in expert demonstrations.   
As with all offline learning, the key difference from the typical online RL is that learning from offline data provides no means for the agent to examine ``counterfactual actions''~\cite{levine2020offline} to the actions in the dataset. 
Data with counterfactual actions help the agent learn to distinguish optimal and suboptimal actions by their differing value. 
In online RL, agents encounter both optimal actions and suboptimal actions through the use of exploration.  
Because offline agents cannot diversify its learning data through exploration of unfamiliar states and actions, offline RL operates under the false assumption that the offline trajectories form a representative sample over the distribution of potentially optimal states and actions. 
Using expert demonstrations for offline RL also worsens data diversity. 
This is the reason why many popular offline reinforcement learning benchmark data sets, such as Atari100k and D4RL, are collected from the learning history of an online reinforcement learning process.
Compared to expert demonstration, learning from a learning history will provide more diversity through the rollouts from early, unconverged policies.

This lack of diversity in offline data causes a distribution shift when transitioning to online training, making errors in bootstrapping and function approximation~\cite{sutton2018reinforcement} even more pronounced than with online RL. 
Specific to Dreamer, the lack of diversity in expert demonstrations also negatively impacts world-model learning.
World model learning from expert data risks overfitting the world model to $p_{\pi^*}(s)$, the state distribution following the optimal policy $\pi^*$.
While learning in the latent space of a well-trained world model can help overcome issues with distribution shift in actor-critic learning, overfitting the world model can lead to serious problems predicting the transitions resulting from off-policy actions.

The two main steps we take in this work to alleviate distribution shift caused by offline-to-online transfer are (1)~supplementing the offline demonstration with random behavior data and (2)~data augmentation during training. 
To add diversity to the offline demonstration data, we used a simple random policy to collected a dataset of rollouts in the same environments that used to collect the demonstration data. 
Deploying a random agent on each task in the pre-training dataset, we collected 10 sequences of the same length as the average sequences in original pre-training dataset of random interaction with the environment. 
While a dataset of random behavior in a continuous control environments like Robosuite covers very little of state-action space, the added diversity from the random policy data still helps prevent overfitting in the world model during pre-training.
In addition to preventing overfitting, training the world model on off-policy actions near the initial state is helpful because it eases the distribution shift experienced by the latent actor-critic when transitioning to online training. 
Unconverged policies, either from adaptation or weight resetting, take suboptimal actions near the initial state. 
Whereas an overfit world model would be biased towards predicting optimal next states even for suboptimal actions, familiarity with random behavior near the initial state enables the world model to aid policy adaptation. 

For data augmentation, we were primarily interested in what approaches would best regularize the world model and policy to not overfit to the pre-training tasks. 
Data selection and augmentation have been shown to have a large impact on the tractability of pre-training~\cite{laskin2020reinforcement,laskin2020curl}.
During training, we built on the findings of the   APT~\cite{stooke2021decoupling,wolczyk24finetuning} method and used ``random shifting,'' which is a pad followed by a random cropping back to the original size, to prevent overfitting. 
For random cropping and translation, we looked at shift sizes of 2 and 4 pixels of the standard sized 64x64 Dreamer visual inputs.  
While we investigated temporal augmentations like frame stacking and temporal masking, our primarily investigations confirmed findings of prior work that these types of temporal augmentations yield inconsistent results across tasks and environments~\cite{laskin2020curl,stooke2021decoupling}. 

Prior work suggests that diverse offline RL pre-training of an agent can greatly improve the efficiency of online task-training with RL~\cite{schwarzer2021pretraining}.
However, this is not consistently found in all offline-to-online RL research; other prior works show that the effectiveness of pre-training is highly task dependent and can even negatively influence online task-training. 
As the primary motivation for pre-training the CBWM is to establish a task-agnostic base understanding of concepts, it is important that pre-training not have any negative impacts specifically on task training. 
As such, in addition to these modifications described above, we also experimented with alleviating distribution through partial model transfer, weight resetting, weight freezing, and other modifications. 
In order to focus this Chapter on CBWM as a means of knowledge preservation, we refer the reader to the Appendix for our theoretical justification for partial model offline-to-online transfer and further details and experiments regarding transfer, resetting, freezing, and other considerations specific to offline-to-online RL.

\section{Experiments}
\label{sec:cbwm:exp}

In our experiments, we sought to test the following hypotheses:
\begin{enumerate}
    \item Concept bottlenecks can be implemented in challenging, sequential robot learning scenarios and still show critical bottleneck attributes,  balancing concept and downstream task performance and concept intervention leading to impact on that concept in the downstream tasks.
    \item In reinforcement learning problems, grounding knowledge about task-independent semantic concepts in a bottleneck has minimal negative impact on final performance while adding interpretability and intervention capability.
    \item Enforcing the learning of task-independent semantic concepts in a bottleneck improves online test time adaptation (OTTA) to novelty in reinforcement learning through knowledge preservation.
\end{enumerate}

\noindent We used three separate experimental procedures to validate each of these hypotheses about concept bottlenecks in model-based reinforcement learning for OTTA. 
Here we describe the setup of these procedures and the baselines against which we evaluate CBWM.

\subsection{Concepts in Reinforcement Learning Environments}
\label{sec:cbwm:exp:training}

To examine the impact of a concept bottleneck on model-based reinforcement learning, we run experiments in realistic robot learning OTTA scenarios in Robosuite~\cite{robosuite2020} based on the LIBERO manipulation settings. 
The Robosuite simulation framework, based on the MuJoCo simulator~\cite{todorov2012mujoco}, is a realistic environment for vision-based robot manipulation. 
LIBERO is a dataset collected from humans that provides expert demonstrations of ``pick-and-place'' manipulation tasks. 
Both are motivated as tools to help transfer robot policies from simulation to the real world~\cite{zhao2020sim}. 
Vision-based pick-and-place is a relevant problem for combining prior knowledge with test time adaptation---while LIBERO demonstrated the effectiveness of imitation learning if you had access to task data before hand, none of the RL baselines  in the original Robosuite effort could fully solve the pick-and-place tasks.
This means that vision-based pick-and-place manipulation problems are a likely case where pre-training may be necessary to learn strong RL policies on previously unseen tasks.

Prior to this work, the vast majority of concept bottleneck research has been applied to simpler scenarios such as image classification. 
As the original LIBERO dataset does not include any notion of ``concepts'' as defined in this work, we derived a set of concepts comprised of 
the multi-hot encoding of whether objects are or are not present in the scene.
Loss balancing terms and training hyperparameters were selected in part to keep the final concept prediction accuracy over 80\%.
This ensures that the bottleneck concept predictions had a high impact on downstream tasks.

As in LIBERO, all of our tasks are pick-and-place tasks, where a target object from a variety of objects is manipulated to a goal location using a simulated model of the Franka Panda robot arm.  
The tasks we select are motivated to have the novel change between two pick-and-place tasks fall into one of three high-level categories: 
\begin{enumerate}
    \item Changes in the target objects and target object initial location
    \item Initial robot pose that varies in distance from the target object
    \item No change in the robot, target object, or goal location, but changes to the surrounding environment that vary in how much the policy is affected. 
\end{enumerate}
\noindent These task changes enable analysis of the CBWM approach for reinforcement learning tasks in the context of the novelty-ontology formulated in Chapter~\ref{chapt:novgrid} of this dissertation. 
Novelties where the changes in the target objects and target object initial location are designed to represent delta novelties.
The policy for grasping an object at a slightly different initial position or different objects in the same location are similarly difficult task to learn from scratch.
However, when the novelty changes the target object location or type of target object, the adaptation process will need to retain an understanding of the high level task similarities while adapting to these small changes. 

Novelties where the initial robot position is closer to or father from the target objects are designed to represent shortcut and barrier novelties. 
Due to the continuous action space, in pick-and-place manipulation tasks, learning from scratch to control a robot arm over longer distances is more challenging from a credit assignment and exploration standpoint.
However, when the novelty changes the initial distance to the target object location, the adaptation process will need to retain an understanding that traveling to the target object is not important compared to the the distance traveled. 

Novelties where the changes to the surrounding environment not including the target object or robot have varying impact on the optimal policy, and therefore examines all three types: shortcut, delta, and barrier.
If there is a change to objects that have no impact on the  policy, this is a delta novelty (that Boult et. al.~\cite{boult2021towards} refers to as a ``nuisance''), and the agent must be able to update the world model without impacting the policy. 
If there is a change to environment objects that block the path of the policy in one place, this is a barrier novelty and the agent must be able to update the policy and world model without discarding the unimpacted parts of the policy. 
Conversely, if the change to environment objects removes an object from the direct path to the target, this is a shortcut novelty and the agent must be able to update the policy to no longer avoid an object that is no longer there. 

A complete list and description of tasks and task pairs can be found in Appendix~\ref{app:cbwm:tasks}.

\subsection{Offline Pre-training Implementation Details}

With the exception of \textit{tabula rasa} baselines, all RL tasks are trained starting from world models pre-trained offline model-based RL. 
Recent research suggest the positive impact of diverse pre-training on efficiency in RL~\cite{schwarzer2021pretraining} (For further background on offline pre-training for model-based RL fine tuning, see the Appendix). 
To ensure that the world model, agent, and particularly the concept bottleneck are pre-trained with a diverse, task-agnostic prior, all models are pre-trained on the LIBERO\_90 dataset~\cite{liu2024libero}.
LIBERO\_90 is a collection of offline decision-making data from 90 unique pick-and-place tasks, where the data for each task contains 50 expert demonstrations per task of at least 120 frames per demonstration. 
The pretraining phase is designed solely to enable the world model to learn basic, task-independent dynamics, observation reconstruction, and the general connection between action and reward. 
As such, no differentiating semantic information about the different tasks is provided to the agent during pretraining, and none of the fine-tuning tasks exist in the pretraining dataset. 

One challenge of offline-to-online reinforcement learning in general, but especially in the case of learning from expert demonstration data, is the lack of diversity afforded by exploration~\cite{levine2020offline,nair2021awac} (for a deeper discussion of the importance of diversity in exploration for all transfer learning see Chapter~\ref{chapt:transx}). 
This is the reason many popular offline reinforcement learning benchmark datasets like Atari100k and D4RL are collected from an online reinforcement \textit{learning process}, not only an expert demonstration. 
However, it is not always the case that someone would have sufficient access to an environment to train interactively.
As such, to complement the expert demonstration data we collected a simple random policy dataset in our target environments in Robosuite. 
In our experiments, we study the importance of mixing this random data in with the expert data.

We utilize 64x64x3 resolution ``agentview'' images as is standard in training Dreamer models. 
To ensure all experiments were trained with diverse expert data, all models are pre-trained on the LIBERO\_90 dataset, which has offline RL data for 50 expert demonstrations of at least 120 frames for 90 unique pick-and-place tasks.
We split the data sets into a 95\%-5\% split for validation and training, where we validate four times per epoch. 
This same split was used when pre-training using our random behavior dataset. 
Pre-training lasts for 5 epochs of offline model-based RL. 
For our data augmentation experiments we padded these images with ``reflection padding,'' and then randomly cropped the image back down to 64x64. 
Each model is fine-tuned with online model-based RL in the Robosuite environment for an additional 500,000 environment steps, allowing for a thorough adaptation to the online setting. 
The models are fine tuned on multiple different Robosuite tasks that aims to assess how well pre-trained representations transfer to new tasks within Robosuite generally, capturing the impact of pre-training in boosting performance on the target domain.

Task training then involves fine-tuning the pretrained model with reinforcement learning on a single task. 
The inputs and outputs to the model are entirely the same, except that the observation, observation target, reward, and concept values come from the environment, and the action is sampled from the CBWM actor.

\begin{toexclude}

\subsection{Knowledge Preservation in Model-Based RL and its Impact on OTTA}
\label{sec:cbwm:exp:taskpairs}

Taken together, these previous efforts allow us to 
examine how well the CBWM can learn concepts and tasks, and more specifically whether it affects common failure modes in task transfer. 
Furthermore, we are interested in a key question that has been suggested by  prior works~\cite{balloch2023worldcloner,eaton2024icbinb} and directly pertains to this thesis: \textbf{to what extent does disentanglement of concepts in a neural network allow the network to transfer unchanged concepts between tasks?}

To investigate these phenomena, we construct a novel training procedure that uses the tasks selected for this work as pairs (representing a novelty), and retrains multiple times using these tasks to examine the difference between these models. 
Specifically, tasks are arranged into Task Pairs, $(T_1, T_2)$, where Task 1 ($T_1$) represents the task that the pretrained model, ``$M_0$,'' is fine-tuned on, and Task 2 ($T_2$) is a separate task that is used to further tune the Task 1 model, ``$M_1$''. 
We call the model that results from Task 2 fine-tuning ``$M_2$.''
Finally, the ``$M_2$'' model is then trained \textit{again} on Task 1, resulting in a fourth model ``$M_1'$.'' 
This process is illustrated in the diagram in Figure~\ref{fig:cbwm:process}.

\begin{figure}[ht]
    \centering
    \includegraphics[width=0.8\textwidth]{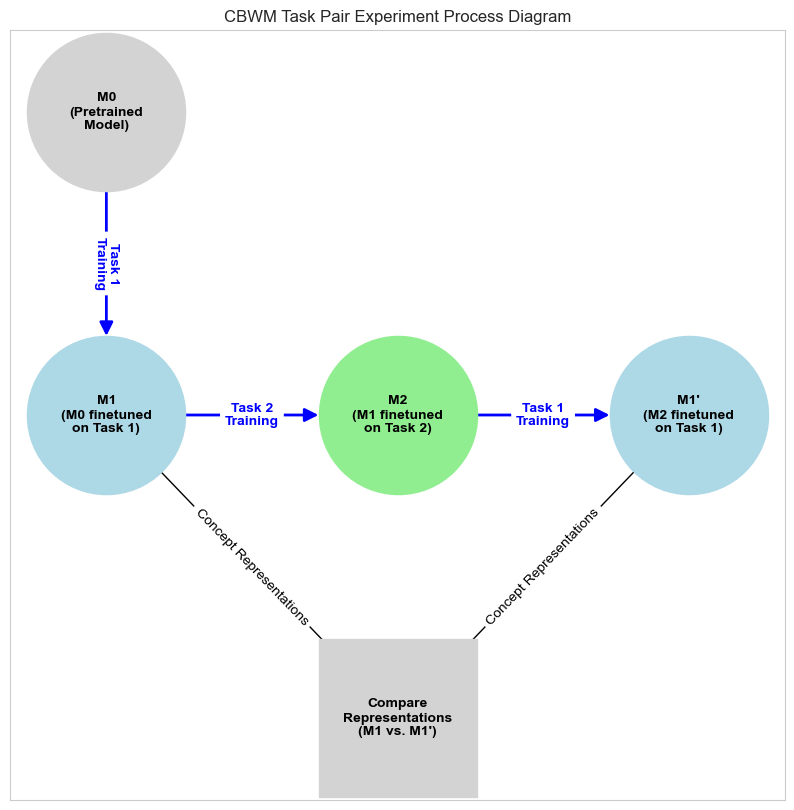}
    \caption{Figure illustrating the novel CBWM training process for investigating OTTA and the relationship between OTTA performance and concept knowledge preservation. 
    The blue arrow edges represent training processes and are labeled with the task on which the model at the origin point of the arrow is trained.
    The black edges indicate the models that provide the concept representations that are analyzed.}
    \label{fig:cbwm:process}
\end{figure}

This training procedure enables us to examine two phenomena critical to better understanding how knowledge preservation impacts OTTA performance. 
Firstly, this experiment sheds light on the impact of repeated fine-tuning of a pretrained model on both task performance and adaptation performance. 
By comparing the first $T_1$ training process to the second $T_1$ training process we can see whether adaptability improves or deteriorates in the most ideal setting: when returning to a previous known task. 
Furthermore, by comparing the task performance of model $M_1$ to multi-step fine-tuned models like $M_1'$ we can examine how the additional fine-tuning impacts final task performance of these models. 

This training procedure allows us to examine how the intermediate knowledge representations change when adapting to novelties, the degree to which supervision impacts these changes, and the relationship between knowledge preservation and adaptation performance. 
Each pair of tasks was designed so that changes during adaptation in the task are related to a known subset of concepts expected to change. 
For example, if a ``butter'' object at location $p$ in the environment is the target object in $T_1$, and then in $T_2$ the only difference is that target object is now a ``milk'' object, having replaced butter as the object at location $p$, we can expect that, in an ideal disentangled representation, the latent embedding of the ``butter'' and ``milk'' concepts should be different between $M_1$ and $M_2$, and the other concepts should be nearly identical. 
Furthermore, if we compare $M_1$ to $M_1'$, for ideally disentangled models we should see no difference in the concepts at all. 
While we do not expect any of these models to be ideally disentangled as neural network concept entanglement is highly complex, this learning setup with these baselines allows us to study the possibilities posed by our hypotheses. 

\end{toexclude}

\subsection{Model-Based RL Baselines}
\label{sec:cbwm:exp:baselines}

As baselines for comparison in all of these experiments, we trained
DreamerV3 and two variants of the CBWM architecture: \textit{BWM} and \textit{BWM+O}.
DreamerV3, as the base of our CBWM method, will allow us to examine how much the concept bottleneck framework impacts the overall task performance.
\textit{BWM} uses the CBWM bottleneck architecture, but without concept supervision or an orthogonality loss. 
By having no additional losses the gradient flow during training should be identical to that of the original DreamerV3 with extra linear units, enabling us to examine the impact of the bottleneck architecture alone on knowledge retention.
\textit{BWM+O} also uses the CBWM bottleneck architecture without concept supervision, but includes orthogonality loss. 
With the addition of the orthogonality loss but not the concept supervision we can determine the effectiveness of knowledge separation on knowledge retention.

The algorithm implementations are based on the PyTorch implementation of DreamerV3 in the SheepRL~\cite{EclecticSheep_and_Angioni_SheepRL_2023} repository. 
To accommodate the unique complexity of visual manipulation tasks, we use a custom layer configuration based on the DreamerV3 \texttt{medium} 100M parameter configuration,  replacing the configuration of all MLPs with a stack of 3 layers with 640 hidden parameters.
These values were selected based on prior results from pre-training~\cite{seo2022pretraining,wolczyk24finetuning} similar models on similar visual manipulation environments.
For all other hyperparameters in all baselines, we use the default DreamerV3 values as tuned for DMLab.


\section{Results}
\label{sec:cbwm:results}

In this section, we examine and analyze the results of these experiments and describe how the results relate to our hypotheses about CBWMs as well as the thesis as a whole. 

\subsection{Learning with Concept Bottleneck Models on Sequential Robot Data}
\label{sec:cbwm:results:prelim}

Looking at the performance of all the concepts in the LIBERO-90 dataset, we found that the accuracy of the concepts was high, even though the concepts were challenging.
Objects in the scene were often small and occluded, sometimes entirely, with only some concepts affected by robot manipulation.
In spite of these challenges, the mean accuracy across all concepts was 91.9\%
This shows that difficult, non-curated concepts can still be learned and used for this means of adding concepts.     
The mean per-concept prediction accuracy is shown in Figure~\ref{fig:cbwm:base_acc}.

\begin{figure}[t]
    \centering
    \includegraphics[width=0.8\textwidth]{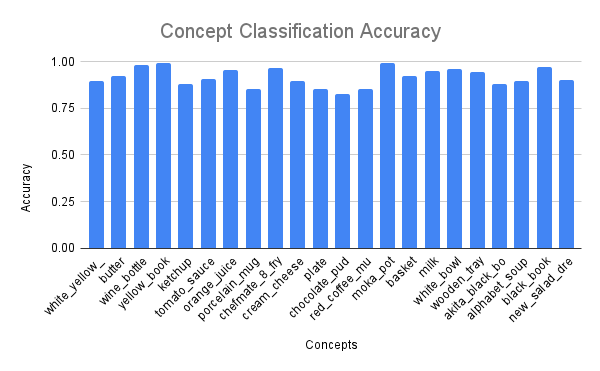}
    \caption{Concept classification accuracy across different object and state concepts in the LIBERO-90 dataset. 
    Each bar represents the accuracy of the concept bottleneck model in predicting the presence/absence of a specific concept (e.g., objects like bowls, cups, and wine glasses, or states like 'grasped'). 
    The model achieves consistently high accuracy ($>$75\%) across most concepts, with a mean accuracy of 91.9\%, demonstrating that the concept bottleneck can effectively learn and represent diverse task-relevant concepts despite the challenges of partial occlusion and varying object sizes in manipulation scenarios. Concepts are measured on the validation split of the dataset after pre-training.}
    \label{fig:cbwm:base_acc}
\end{figure}


When looking at the model's predicted observations---the downstream task that is bottlenecked---we see little qualitative deterioration of performance. 
In Figure~\ref{fig:qual}, which shows the predicted outputs, the predicted observation is able to reproduce the salient objects in the scene faithfully although there are some differences.
This shows that the concept module of the CBWM does not have a significant impact on the downstream world model modules like the observation prediction model. 

\begin{figure}[ht]
    \centering
    \includegraphics[width=0.8\textwidth]{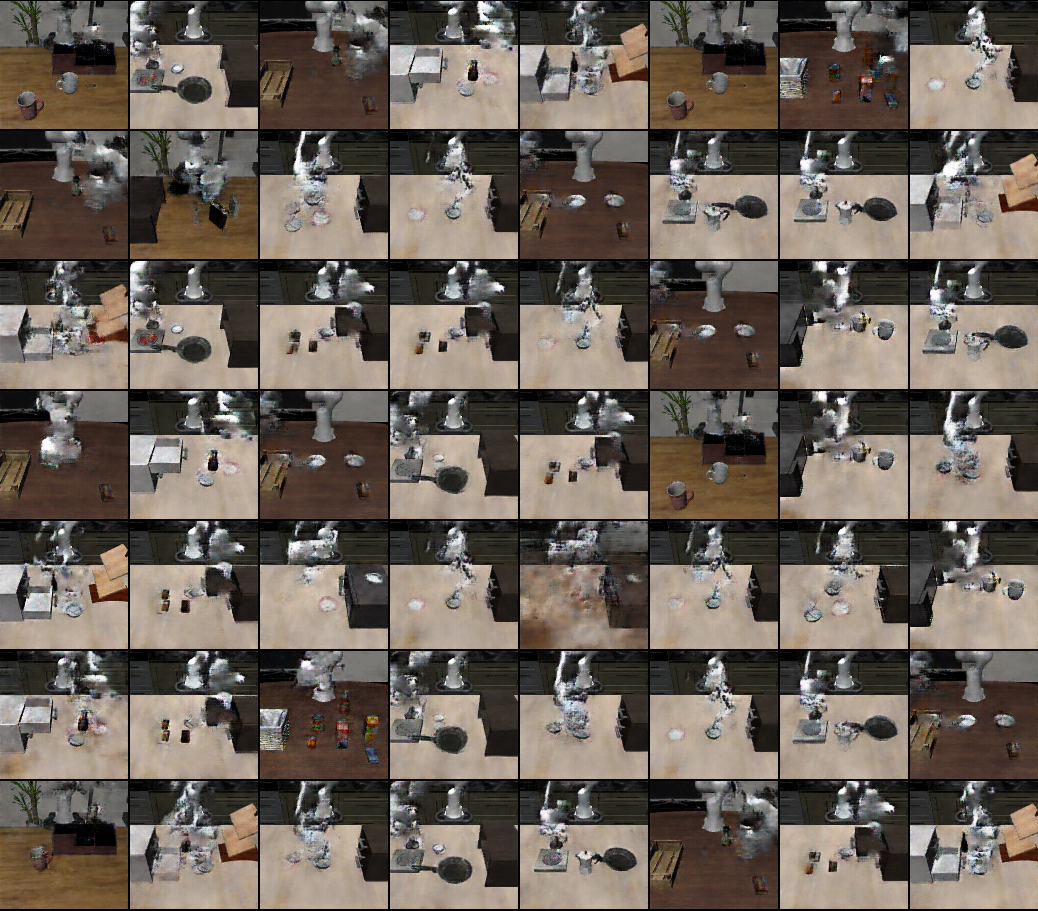}
    \caption{This shows the observation predictions of the LIBERO space. While the image fidelity varies across samples, we see that the objects, which are supported by the concepts, are very clearly predicted.
    }
    \label{fig:qual}
\end{figure}


\begin{figure}[t]
    \centering
    \includegraphics[width=0.8\textwidth]{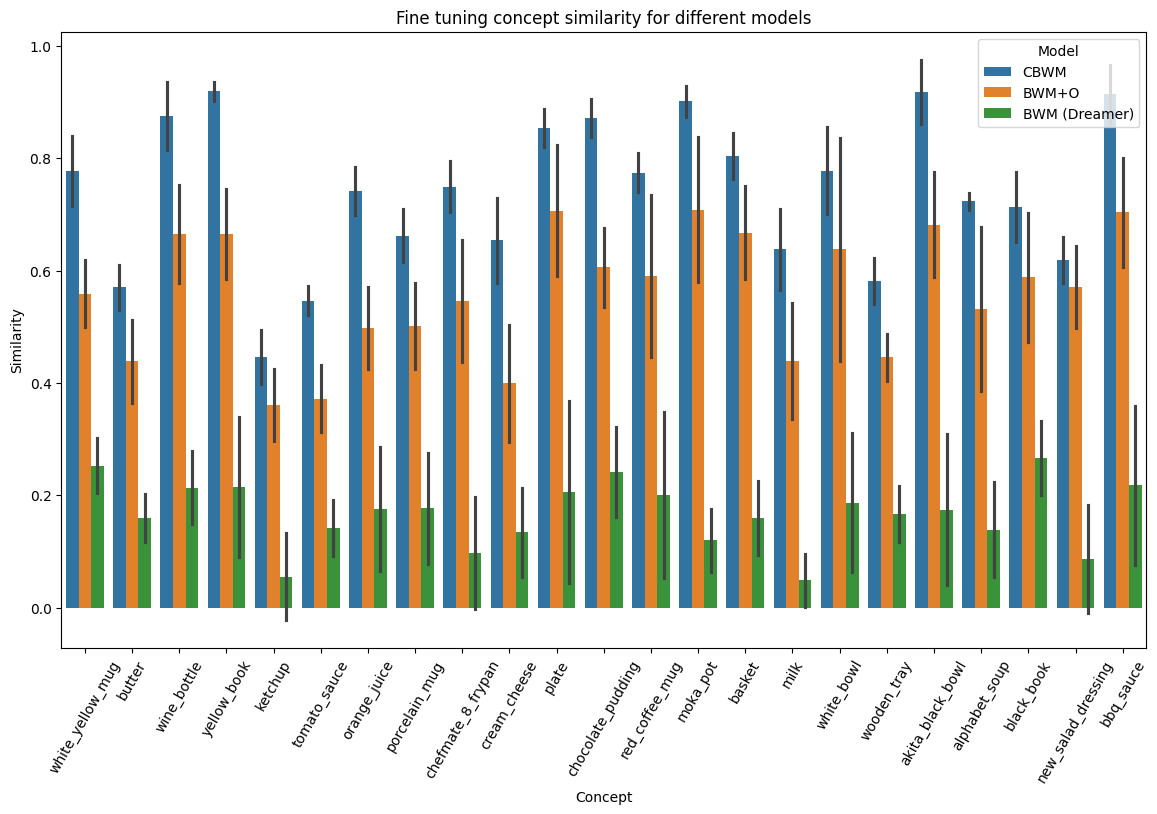}
    \caption{Plotting concept cosine similarity for individual concepts for the BWM, BWM+O, and CBWM models.  
    This demonstrates that CBWM is vastly superior at retaining concept information across adaptation. Interestingly, the orthogonality loss also exhibits strong concept similarilty across adaptation.
    This suggests that there may be a path forward for unsupervised concept discovery using orthogonality loss.}
    \label{fig:cbwm:con_acc}
\end{figure}

\begin{toexclude}

\jb{Placeholder: plot figure of average task training for all baselines, or perhaps just ones that show training well}

\begin{table}[ht]
  \centering
  \begin{tabular}{|c|c|c|c|c|c|}
    \hline
    & \multicolumn{5}{|c|}{\textbf{Mean Converged Task Reward} $\uparrow$} \\ 
    \cline{2-6} \textbf{Tasks} & DV3 & P2E & CBWM & BWM & BWM+O  \\ 
    \hline
    \hline
    Task 1 & $\pm$ & $\pm$ & $\pm$ & $\pm$ & $\pm$ \\ 
    Task 2 & $\pm$ & $\pm$ & $\pm$ & $\pm$ & $\pm$ \\ 
    Task 3 & $\pm$ & $\pm$ & $\pm$ & $\pm$ & $\pm$ \\ 
    Task 4 & $\pm$ & $\pm$ & $\pm$ & $\pm$ & $\pm$ \\ 
    Task 5 & $\pm$ & $\pm$ & $\pm$ & $\pm$ & $\pm$ \\ 
    Task 6 & $\pm$ & $\pm$ & $\pm$ & $\pm$ & $\pm$ \\ 
    Task 7 & $\pm$ & $\pm$ & $\pm$ & $\pm$ & $\pm$ \\ 
    Task 8 & $\pm$ & $\pm$ & $\pm$ & $\pm$ & $\pm$ \\ 
    Task 9 & $\pm$ & $\pm$ & $\pm$ & $\pm$ & $\pm$ \\ \hline
  \end{tabular}
  \caption{This table shows the mean final performance of the DreamerV3, Plan2Explore, CBWM, BWM, and BWM+O models on all tasks after 1x$10^6$ training steps for each task.}
  \label{tab:cbwm:trainingperf}
\end{table}

\begin{table}[ht]
  \centering
  \begin{tabular}{|c|c|c|c|c|c|}
    \hline
    & \multicolumn{5}{|c|}{\textbf{Adaptive Efficiency} $\downarrow$} \\ 
    \cline{2-6} \textbf{Tasks} & DV3 & P2E & CBWM & BWM & BWM+O  \\ 
    \hline
    \hline
    Task 1 & $\pm$ & $\pm$ & $\pm$ & $\pm$ & $\pm$ \\ 
    Task 2 & $\pm$ & $\pm$ & $\pm$ & $\pm$ & $\pm$ \\ 
    Task 3 & $\pm$ & $\pm$ & $\pm$ & $\pm$ & $\pm$ \\ 
    Task 4 & $\pm$ & $\pm$ & $\pm$ & $\pm$ & $\pm$ \\ 
    Task 5 & $\pm$ & $\pm$ & $\pm$ & $\pm$ & $\pm$ \\ 
    Task 6 & $\pm$ & $\pm$ & $\pm$ & $\pm$ & $\pm$ \\ 
    Task 7 & $\pm$ & $\pm$ & $\pm$ & $\pm$ & $\pm$ \\ 
    Task 8 & $\pm$ & $\pm$ & $\pm$ & $\pm$ & $\pm$ \\ 
    Task 9 & $\pm$ & $\pm$ & $\pm$ & $\pm$ & $\pm$ \\ \hline
  \end{tabular}
  \caption{This table shows the mean adaptive efficiency of the DreamerV3, Plan2Explore, CBWM, BWM, and BWM+O models on all tasks.}
  \label{tab:cbwm:adaptperf}
\end{table}

\subsection{Task Pair Multi-Step Training Demonstrates Improved Knowledge Preservation}
\label{sec:cbwm:results:taskpairs}

The task pair training process showed very interesting results. \jb{ANALYSIS EXPECTED RESULTS:}
Given the fine-tuned models whose results were described in Section~\ref{sec:cbwm:exp:training}, each of these models was fine-tuned on pair tasks as described in Section~\ref{sec:cbwm:exp:taskpairs} to produce nine sets of $(M_1,M_2,M_!')$ models for the CBWM, BWM, and BWM+O methods. 

In the cases of all pairs across all algorithms, we observed a monotonic improvement in adaptive efficiency as models were repeatedly fine-tuned. 
This confirms the widely held belief that even entangled knowledge is helpful in adapting models; however, as we discuss in Chapter~\ref{chapt:transx}, specifically with the discussion of challenging barrier novelties in Section~\ref{sec:transx:disc}, this could also be a product of the fact that the MDPs joined by this novelty are in all cases very similar. 
Repeated fine-tuning likely did remove some of the general knowledge from pretraining that served as a good initialization for training $M_1$, which has been observed concretely in the training of LLMs~\cite{mukhoti2023fine}. 
We expect that efficiency consistently improved in spite of this because the $M_2$ had retained more of the knowledge of the task $T_1$ from $M_1$ than was originally present in $M_0$, allowing it to ``code switch''\MarkLeft{This is a sensitive term, and maybe consider avoiding} to $T_1$ more easily when training $M_1'$. 
\julia{if this holds in the final data, this is a very cool finding!!!}
While this phenomenon is almost certainly sensitive to model size, we believe these data suggest that there may be a determinable minimum number of task switches between pairs of tasks after which the fine tuned model will simply learn to be a multi-task model.

\begin{table}[ht]
  \centering
  \begin{tabular}{|c|c|c|c|}
    \hline
    & \multicolumn{3}{|c|}{\textbf{Adaptive Efficiency} $\downarrow$} \\ 
    \cline{2-4} \textbf{Models} & CBWM & BWM & BWM+O  \\ 
    \hline
    \hline
    $M_1$ & $\pm$ & $\pm$ & $\pm$ \\ 
    $M_2$ & $\pm$ & $\pm$ & $\pm$ \\ 
    $M_1'$ & $\pm$ & $\pm$ & $\pm$ \\ \hline
  \end{tabular}
    \begin{tabular}{|c|c|c|}
    \hline
    \multicolumn{3}{|c|}{\textbf{Mean Converged Task Reward} $\uparrow$} \\ 
    \hline
    CBWM & BWM & BWM+O  \\ 
    \hline
    \hline
    $\pm$ & $\pm$ & $\pm$ \\ 
    $\pm$ & $\pm$ & $\pm$ \\ 
    $\pm$ & $\pm$ & $\pm$ \\ \hline
  \end{tabular}
  \caption{This table shows the mean adaptive efficiency of CBWM, BWM, and BWM+O models for sets of $(M_1,M_2,M_!')$ averaged over all task pairs. \jb{This is one of the few scenarios where I truly believe the data could be better represented as a bar graph. There should be 3 chunks of bars, one for each of M1,M2, and M1', and in each chunk there would be 6 bars, 2 for each method where the efficiency bar would be solid and the task reward would be lined,  and each method would have a different color. I added placeholder Figure~\ref{fig:cbwm:taskpairbar} to demonstrate}}
  \label{tab:pm_table}
\end{table}

\begin{figure}[t]
    \centering
    \includegraphics[width=1\textwidth]{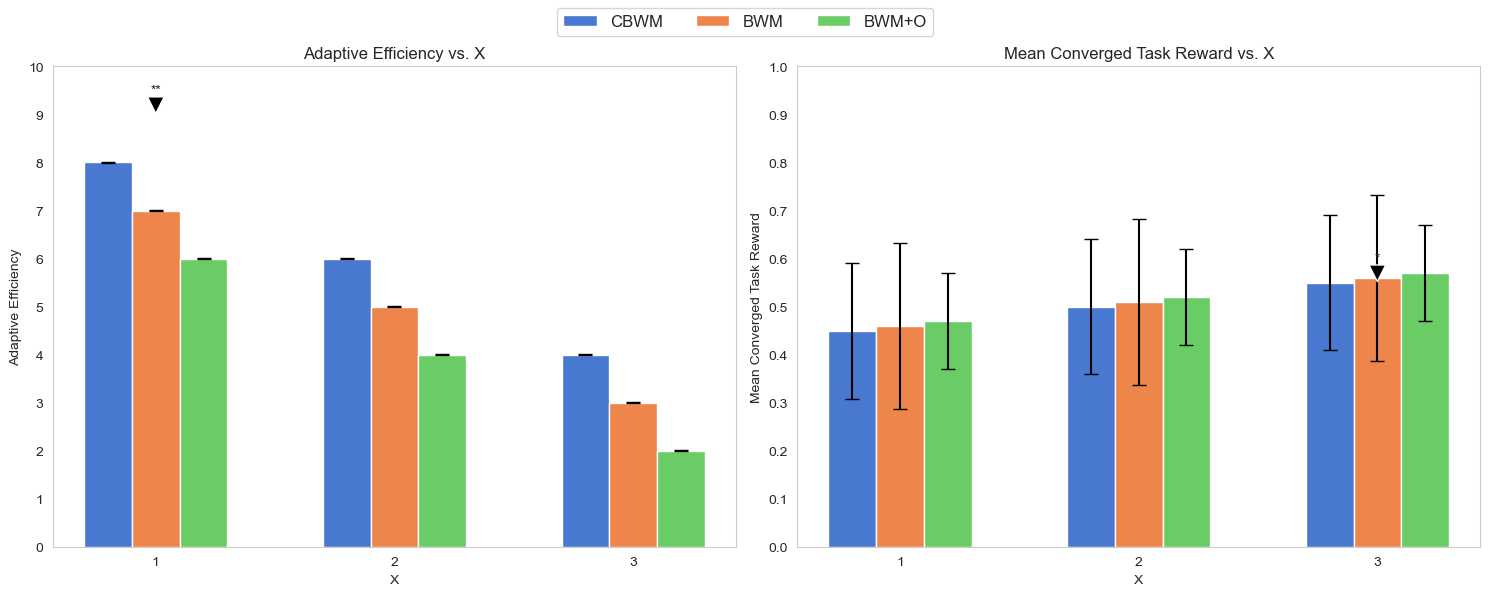}
    \caption{PLACEHOLDER PLACEHOLDER PLACEHOLDER DO NOT LEAVE IN DELETE}
    \label{fig:cbwm:taskpairbar}
\end{figure}

CBWM showed superior adaptative efficiency to BWM and BWM+O. Interestingly, however, BWM+O also consistently outperformed BWM. 
While not as impactful as the concept supervision loss, this shows that, consistent with the literature around bases in mechanistic interpretability of neural networks~\cite{elhage2022superposition}, an embedding that is nearly orthogonal has strong representation power~\cite{johnson1984extensions}.

\end{toexclude}

\subsection{Embedding Concepts Helps Knowledge Preservation and Adaptation}

To study the models' ability to preserve knowledge during adaptation, we revisit novelties we selected to study as described in Section~\ref{sec:cbwm:exp}. 

For adaptation to novelties that changes in target object initial locations, we observe a similar ability to adapt in BWM, BWM+O, and CBWM.
The fact that there is little implicit capture of unsupervised location information in concept bottleneck models is consistent with the findings in prior work.~\cite{raman2023do} 
For adaptation to grasping a different target object, on the other hand,  BWM and BWM+O underperformed CBWM.
This is likely because while the knowledge associated with the target object impacted all other concepts in BWM and BWM+O, in CBWM the network impact of the distribution shift is eased by the consistency of the concept loss being present in source task training and adaptation. 


 \begin{figure}[t]
    \centering
    \includegraphics[width=0.8\textwidth]{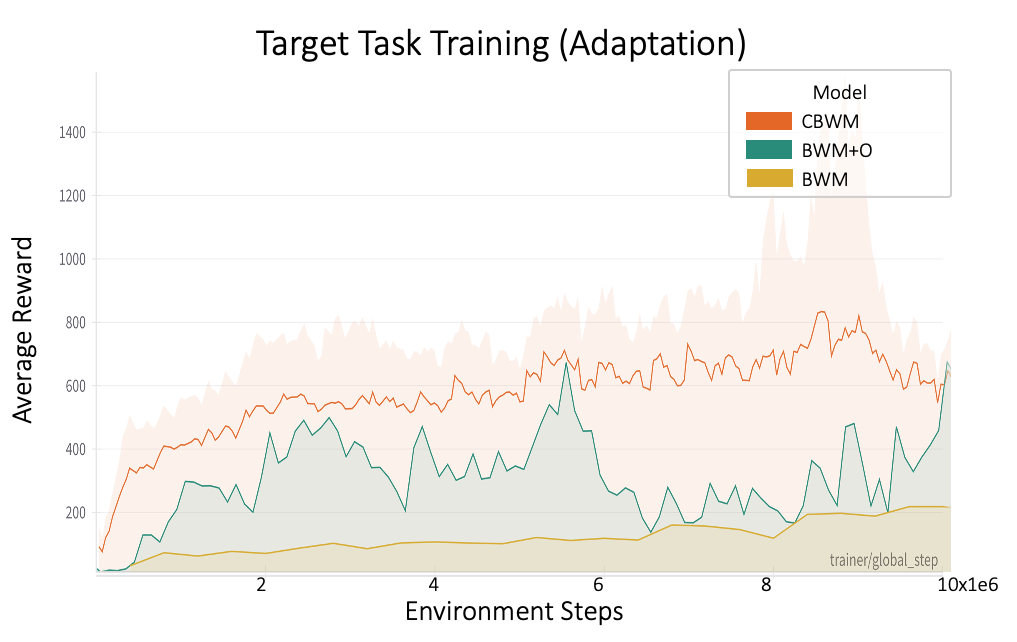}
    \caption{The OTTA learning curves averaged over all tasks for CBWM, BWM+O, and BWM when transferring from source to target task. 
    The speed with which the average return increases for CBWM and BWM+O, in addition to the final performance after 10 million steps, shows that concept and orthogonality losses help transfer reusable concept knowledge.
    While BWM+O shows the efficacy of the orthogonality loss without concept supervision, the high variance shows the instability of this approach.
}
    \label{fig:cbwm:adaptive_eff}
\end{figure}

We see similar patterns emerge when comparing the adaptation performance of BWM, BW+O, and CBWM models in initial robot position novelties and novelties that change the surrounding environment not including the robot and target objects.
Adaptive performance for initial robot position novelties is similar for BWM, BWM+O, and CBWM models. 
This echoes the comparative performance of adaptation to target object initial location changes and suggests that concepts related to object location or the agents policy may be necessary to improve performance in novelties involving differences in spatial distribution. 
In contrast, for novelties where changes to the surrounding environment cause an object to block or unblock the path of the policy, BWM and BWM+O had less efficient adaptation than CBWM. 
This can be explained because the change to the world model and policy largely concerns a small subset of object concepts, and the only model in which those concepts are learned to be separated is the CBWM.

To measure the amount of change in concepts before and after adaptation to the target task as the cosine distance between two models' embeddings of that concept. 

\begin{equation*}
    \text{Cosine Distance} = 1 - \frac{\mathbf{e_{c,1}} \cdot \mathbf{e_{c,2}}}{\|\mathbf{e_{c,1}}\| \|\mathbf{e_{c,2}}\|}
\end{equation*}

\noindent where $e_{c,i}$ is the embedding vector of concept $c$ in model $M_i$. 
In comparing concepts, because the original Dreamer approach has no separated knowledge, we compare the concept and adaptive performance of the BWM, BW+O, and CBWM models. 
Across all of these tasks, we examined the preservation of concept information.
This data is shown for all individual concepts in Figure~\ref{fig:cbwm:con_acc}
For all tasks, the CBWM approach most clearly exhibits a change in task-dependent concepts that was greater than in task-independent concepts. 
This shows that concept supervision indeed helps to preserve knowledge. 


\begin{toexclude}

For these results, we compared the embeddings of both $(M_1, M_2)$ and $(M_2, M_1')$ model pairs. 
Generally we observed there was a greater magnitude of change overall for $(M_1, M_2)$ than $(M_2, M_1')$. 

Beyond solely preserving knowledge, our results also confirm our hypothesis that better preservation of knowledge correlates with faster task adaptation time.
\jb{todo this is a perfect place for a scatter plot. x axis is adaptation efficiency, y axis is *total amount* of change in an embedding space the sum of cosine distances for all concepts in a pair. Different icons/colors for different methods}. 
As the Figure~\ref{fig:cbwm:scatterplot} shows, where we plot all seeds of all model pairs to compare the total embedding change to the task adaptation performance, there is a clear trend that less embedding change can be adapted more quickly. 
This is a logical conclusion; after all, less change should be easier to optimize for. 
However, the plot also shows that the CBWM clearly produces the largest number of high efficiency, low difference embeddings, demonstrating that the CBWM is better at producing these low-change embeddings.

Finally, we compare pairs of models trained for the same task but with different numbers of fine-tuning processes $(M_1$ $M_1')$, where for ideally disentangled models we would expect to see no difference in the concepts at all. 
\jb{these are the results I am least confident in so I will keep the ``anticipated analysis'' brief}.
Interestingly, we find that while the CBWM does produce the most similar $(M_1$ $M_1')$ embedding pairs, the difference from the BWM and BWM+O methods is less pronounced than in the other embedding results. 
We believe that this is because, while the BWM and BWM+O concept embedding spaced are more entangled, the concept models for a given task are fairly consistent and can return to something close to the original embedding if the embedding space is not changed too extremely, as is the case in all of these adaptation scenarios.

These results suggest that embeddings may have something analogous to a ``yield point'', i.e. a point beyond which additional changes through fine tuning prevent it from returning to its original distribution through gradient descent. 
Given such a point-of-no-return, we believe these results suggest that concept bottleneck disentanglement can help prevent embeddings from changing irreparably.

\end{toexclude}

\section{Key Takeaways}

In this chapter, we introduced Concept Bottleneck World Models (CBWMs), a novel approach to model-based reinforcement learning that incorporates human-interpretable concepts into the world model architecture. 
Our approach addresses the challenge of entangled concept knowledge in neural network-based world models, which can lead to unnecessary loss of information during task transfer and adaptation to novel scenarios.
Through our experiments with the LIBERO dataset, we have demonstrated several key findings using the CBWM approach:
\begin{enumerate}
    \item High concept accuracy: Despite the challenging nature of the concepts in the LIBERO-90 dataset, including small and often occluded objects, our model achieved a mean accuracy of 91.9\% across all concepts. This demonstrates that CBWMs can effectively learn and utilize difficult, non-curated concepts. 
    \item Preserved observation prediction quality: The integration of the concept bottleneck did not significantly deteriorate the model's ability to predict observations. Our qualitative results show that the model can faithfully reproduce salient objects in the scene, indicating that the concept bottleneck effectively structures the learned representations without compromising performance.
    \item Preservation of knowledge: when adapting from source to target task, concept cosine similarity is improved by enforcing concept orthogonality in the latent world model bottleneck and significantly improved when also supervising the concepts.
    \item Knowledge preservation improves adaptive efficiency: By constraining the information in the bottleneck module with the concept and orthogonality losses, CWBM avoids overwriting of reusable concept knowledge.
\end{enumerate}

\noindent These results suggest that CBWMs offer a promising approach to improving the adaptability, interpretability, and efficiency of model-based reinforcement learning systems. 
By explicitly representing interpretable concepts, CBWMs can help mitigate the problem of unnecessary knowledge loss during task transfer and enhance the model's ability to adapt to novel scenarios.

The work presented in this Chapter, as with all work, has limitations that provide opportunities for future investigation, specifically in the sourcing, interpretation, and preservation of concepts in OTTA.
Critically, CBWMs simply model the environment as a linear embedding space of selected concepts by the researchers with knowledge of the task. 
This of course begs the question: what if you do not have prior knowledge on what concepts are well suited to the task or OTTA problems, or have no concept supervision at all?
It is important, therefore, that CBWMs are extended to ``discover'' concepts.
By modeling the concepts in the bottleneck with unsupervised object-centric modeling techniques, such as slot attention, and grounding techniques, such as CLIP, CBWMs would be able to bottleneck and reason over knowledge. 
In addition, one of the important attributes of concept bottleneck models is the addition of interpretability in an otherwise uninterpretable black box.
As both adaptation and reinforcement learning are generally avoided in critical systems as the process is uninterpretable, studying the interpretability inherent to CBWMs can be a path to added human trust in sequential decision making agents. 
Lastly, knowledge preservation in this work is mainly guide by the concept bottleneck's loss functions.
However, there is a long history of excellent work studying very similar problems in continual learning and active learning research. 
By extending the knowledge preservation properties of CBWMs with continual learning and active learning techniques, the knowledge preservation and adaptive efficiency properties of CBWMs can be further improved.

The key insight provided by the contributions in this final Chapter is that efficiently adapting agents benefit from an understanding of the knowledge in the network and how that knowledge relates to available data.
In CBWMs, concept label supervision is not just another downstream task, as is common in many neural network approaches where input data have rich label information; if we had instead simply had another concept head, we may be able to benefit from that data for \textit{tabula rasa} learning, but it would not be beneficial to adaptation. 
By understanding that there is an underlying relationship between a concept-base decomposition of the agent's environments, using concept labels as a means of constraining and grounding the latent instead of just predicting the concepts independently from the other downstream tasks, we are able to improve adaptation with data that otherwise may not be very beneficial.

The ability of CBWMs to efficiently adapt without sacrificing performance on the challenging task of vision-based manipulation provides strong validation for the thesis.
By applying the principles revealed by using symbolic models in Chapter~\ref{chapt:knowledge} to neural architectures, CBWM demonstrates that neural models can also improve adaptation if knowledge is preserved.
The results demonstrate that by enforcing concept representations in the bottleneck architecture, CBWMs can disentangle polysemantic knowledge and thereby preserve some concepts during adaptation while allowing other concepts to update appropriately. 
The success of CBWMs shows that by selectively preservating important prior knowledge through constraints on latent world model representations we can increase the efficiency adaptation. 
The finding that concept supervision and orthogonality constraints improved adaptation more than architectural changes alone (as shown by comparison with BWM and BWM+O baselines) confirms our thesis that explicitly regulating how prior knowledge is preserved and updated is critical for efficient adaptation. 
These results demonstrate that structured approaches to knowledge representation and preservation, as proposed in our thesis, can significantly improve an agent's ability to adapt to environmental changes while maintaining important prior capabilities.

    \chapter{Conclusions}
\label{chapt:conclusion}

\section{Contributions}

In this dissertation, we investigated methods to improve the efficiency and overall performance of reinforcement learning agents when adapting at test time to unexpected and previously unseen changes in the environment.
This dissertation has advanced the understanding of rapid adaptation to novelty and the role played by data sampling and transferring prior knowledge through the formulation of testing frameworks, clearer definition of the problem of online test time adaptation to novelty in reinforcement learning, and rigorous experimentation and evaluation, . 

Specifically, this dissertation shows that two critical yet separate components of effective OTTA in reinforcement learning are (1) exploration and (2) knowledge preservation. 
We show that within the large set of exploration methods already proposed to improve traditional reinforcement learning, stochasticity and diversity play a key role in the adaptation of on-policy model-free reinforcement learning (Chapter~\ref{chapt:transx}). 
Considering the sample efficiency potential of model-based reinforcement learning, this dissertation then demonstrates how the findings of Chapter~\ref{chapt:transx} can not only be applied to MBRL as well, but how the use of exploration is linked to the higher-level issue of sampling in reinforcement learning. 
We show in Chapter~\ref{chapt:dops} that, since MBRL approaches like the Dreamer family of models separate the environment interaction and data sampling processes, improving the efficiency of data sampling for policy and world model learning is as critical for OTTA as it is the environment interaction with exploration. 

Knowledge preservation is a widely desired attribute in all of transfer learning, and the complexity and concept entanglement inherent to deep neural networks is a hindrance to updating only incorrect prior knowledge. 
However, unlike in transfer learning problems where the most important outcome is strong converged performance, OTTA solutions seek to minimize drop in performance over the period of learning as well. 
This makes catastrophic forgetting of prior knowledge a more critical problem. 
Our work in Chapter~\ref{chapt:knowledge} addresses this problem explicitly in MBRL by using a grounded symbolic representation and learning method for the world model, while using a neural network for the policy model. 
We show that this hybrid approach allows us to improve OTTA performance by maintaining high task performance with the policy while avoiding forgetting in a rapidly-updating world model.
Finally, we apply the lessons learned in Chapters~\ref{chapt:transx},~\ref{chapt:dops}, and~\ref{chapt:knowledge} to build an end-to-end learnable neural model with grounded representations in the concept bottleneck world model. 
In Chapter~\ref{chapt:cbwm}, we show that the addition of a grounded bottleneck in world model learning adds interpretability and improved adaptation efficiency, while also providing a direct link between knowledge preservation and adaptive efficiency in OTTA scenarios.

Beyond advancing the understanding of improving the efficiency and performance of OTTA in RL, the work described in this dissertation has meaningful implications for the application of RL in real-world scenarios. 
Even as we are living in what many call an ``AI Revolution,'' and even with reinforcement learning from human feedback playing a critical role in that ``revolution''~\cite{ouyang2022training}, the application of RL to solve real world problems is limited~\cite{casper2023rlhf,levine2020offline,rlroboticssurvey,Dulac-Arnold2021rwrl}.
Real-world decision-making problems can rarely be well defined as a closed system that experiences no changes through time, and as such need to be able to adapt. 
Whether it is the deterioration of warehouse robots or HVAC systems managed by a behavior controller, language changing as new slang is introduced, or shifting behavior of online crowds modeled by a recommender system, change in real world machine learning applications is inevitable.
The work in this dissertation represents a critical step toward enabling reinforcement learning agents to adapt on the fly to unexpected changes such as these, and hopefully an improvement to our world as a result. 

\section{Key Takeaways}

Readers of this dissertation should take away a few critic lessons from this work.
Adaptation to non-stationarity is a highly-relevant but challenging problem for real-world applications of reinforcement learning. 
To make practical progress in adaptation to non-stationarity we must examine ways to simplify the characterizations of non-stationary phenomena so we can in turn develop efficient adaptive solutions.
NovGrid, the ontology of novelties, and the proposed metrics for measuring OTTA performance provide a starting point for researchers to test the OTTA performance of existing methods and develop novel solutions.
As non-stationarity is an undeniable reality all real world agents will face, this work provides a means of characterizing adaptive response, but also a template for how online test time adaptation can be measured and investigated in other domains.

First, it is critical to consider the data from which RL agents adapt. 
As data for adaptation is acquired by agent interaction in OTTA, the work in Chapter~\ref{chapt:transx} shows that the exploration of RL agents in OTTA settings must take into consideration the relationship between the problem setting and the agent's capacity to explore. 
The data sampled through exploration is critical for adaptation, and, as the results in Chapter~\ref{chapt:transx} show, exploration methods ideal 
for pre-novelty policy convergence 
are not necessarily best suited to adaptation.
By considering exploration as something that depends on the characteristics of the environment and potential novelties agents can adapt more capably either through a single ideal exploration method or exploration specifically selected according to a novelty. 

The other side of the learning data coin is how to choose which data to use for adaptation.  
The findings in Chapter~\ref{chapt:dops} show that sampling strategies in model-based RL must be tailored to the distinct learning objectives they serve. 
The success of DOPS suggests that the conventional approach of using identical sampling distributions for world model and policy learning may be fundamentally limiting. 
Instead of conceptualizing end-to-end architectures as monolithic, researchers should consider how gradients from different objectives impact different parts of an architecture, and train with data that balances the needs of the overall architecture with specific model parts.
In the same way that we designed DOPS by first examining the distinctions between the learning, designers of all neural architectures with multiple objectives or ``heads''---not just deep RL---should consider how to properly handle parts of an architecture that are differently affected by these objectives. 
By combining these insights with the insights on exploration from Chapter~\ref{chapt:transx}, future work can continue to progress toward real-time adapting RL agents.

Second, it is critical to weigh the value of prior knowledge already available to agents when they are adapting. 
The key insight from Chapters~\ref{chapt:knowledge} and~\ref{chapt:cbwm} is that architecture design of data-driven OTTA agents has an outsized impact on what prior knowledge can be preserved and therefore what prior knowledge needs to be updated. 
In the specific case of WorldCloner from Chapter~\ref{chapt:knowledge}, this is demonstrated with improved agent adaptation using a symbolic world model and rule learner that does not use gradients that impact the entire representation.
In Chapter~\ref{chapt:cbwm}, the concept bottleneck in the world model latent space affords CBWMs the ability to efficiently adapt without sacrificing performance on the challenging task of vision-based manipulation.
Both share a common attribute: the changes do not manifest as just another downstream task, as is common in many neural network approaches but instead fundamentally modify the agents internal representation of the MDP.
When considering whether parts of an agent's architecture ought to be use symbolic, neural, or other representations, we cannot maximize the adaptive efficiency of the agent without consideringabout how that particular architecture is suited to this specifics of the OTTA setting.

Taken together, these insights reflect the thesis of this dissertation:
to efficiently adapt online to changes in the environment, reinforcement learning agents must (1) use exploration and sampling strategies that prioritize task-agnostic interactions and learning data to reduce distribution shift, and (2) identify and selectively preserve reusable prior knowledge in symbolic and learned representations.

\section{Future Work}

This dissertation also sets the stage for future scientific inquiries that would expand the reach and effectiveness of the techniques and problems described herein.

\subsection{An Extended Definition of Online Test-Time Adaptation to Novelty}

An important next step in the work of online test-time adaptation to novelty---as presented in Chapters~\ref{chapt:novgrid}~and~\ref{chapt:transx}---is \textbf{a more complete and precise definition of novelty}. 
Specifically, before applying OTTA solutions to real world problems, we must:
\begin{enumerate}[(a)]
    \item more precisely quantify how and how much environments change,
    \item extend the definitions of OTTA and novelty to include \textit{continuous change}, and
    \item investigate solutions for adapting to behavior change in multi-agent settings.
\end{enumerate} 
New works on measuring task complexity in deep reinforcement learning~\cite{pmlr-v139-furuta2021fim} and quantifying disentanglement in deep neural networks~\cite{carbonneau2022measuring} represent a promising starting point for quantifying novelty. 
In addition, the large body of theoretical work on solving non-stationary processes touched on briefly in Chapter~\ref{chapt:background} can serve as a foundation for solutions to more precise and complete novelty definitions.
By combining this with new ideas such as measuring a model's potential to accommodate change Lipschitz bounds~\cite{lecarpentier2021lipschitz}, future work can develop OTTA solutions with RL that apply to a broader set of tasks while having more specific expectations of behavior. 

In the multi-agent setting, there already exists prior work investigating non-stationary agent behavior, as behavior distributions are only stationary in a small set of situations~\cite{torrey2013teaching,vinyals2019grandmaster,kejriwal2021multi,sarathy2021spotter}. 
Knowledge preservation and concept modeling could help by modeling concepts of agent behavior, perhaps even initialized trivially from self-play.  
Moreover, taking action to investigate changes in external agent behavior gives a more complex meaning to ``exploration'' of novelty---resembling how children learn by eliciting behaviors of adults~\cite{marcus1975child,yurkovic2021multimodal}---which presents a highly impactful direction of research. 

\subsection{Learning from Safe Exploration of Specific Phenomena}

For autonomous agents in safety-critical situations, such as a robot surveying the site of a natural disaster, the promise of OTTA is very attractive. 
Usually, if an autonomous agent is being used for a task, it is often because human participation is dangerous or undesirable and real-time intervention is not possible. 
However, in safety-critical situations the way autonomous agents react to and interact with novel phenomena depends heavily on the specific nature of the novelty, and could mean the difference between success and failure. 
That said, there is reason to believe this is a learnable skill; studies in animal behavior show that in biological intelligence~\cite{reale2001temperament} exploration of novel phenomenon and safety are inextricably linked. 

The work discussed in Chapters~\ref{chapt:transx}~and~\ref{chapt:dops} provides a first step toward understanding the connection between agent exploration, data sampling, and the use of latent space ``imagination'' in model-based RL. 
In addition to RL methods such as DOPS described in Chapter~\ref{chapt:dops} and its baseline Curious Replay~\cite{kauvar2023curious}, many research areas are concerned with, given a learning goal, identifying the most effective interactions and data for learning. 
Prior solutions developed for active learning and streaming learning settings often must find the best way to sample from a pool or stream of data to maximize notions of coverage and efficiency and remove bias from sampling~\cite{pmlr-v108-shui20deepactive,chen2021active,ren2021survey}. 
The next step will be to combine these exploration and sampling approaches with methods on novelty characterization and safety-critical systems.
One approach enabled by recent work is the use of task-agnostic sources of general knowledge such as Large Pretrained Models (LPM). 
While LPMs cannot be expected to distinguish safe and unsafe novelties in every scenario, compared to a task-specific agent, models trained on a massive set of task-agnostic data are likely to provide a less biased prior about whether a novelty is safe and how confident it should be in a given safety estimate. 

\subsection{Latent Concepts for Agent Introspection and Interpretability}

Lastly, the work in Chapters~\ref{chapt:knowledge}~and~\ref{chapt:cbwm} opens up a wide range of new research directions in the use of concepts---and intermediate representations in general---for reinforcement learning. 
Most pressing, I believe, are more human-focused studies on the added practicality of the interpretability and utility of a concept bottleneck in reinforcement learning. 
Like the disentanglement research that preceded it, concept bottlenecks offer grand promises of neural model interpretability, but rarely test this in human studies. 
Reinforcement learning agents, as with all decision-making systems, are designed primarily to take action without the intervention or involvement of a human agent. 
For human-centric or historically human-controlled systems, this handover of decision-making power requires \textit{trust}. 
While some of that trust will come with exposure to AI systems over time, it is critical for trust and adoption that RL agents can provide human decision makers with explanations and interpretations of why a decision was made~\cite{schmidt2019quantifying,ehsan2021expanding,li2022interpretable,ehsan2023charting}, and models like CBWM provide a starting point to investigate this in reinforcement learning systems. 

Beyond interpretability, there are many questions that still need to be answered about concept bottlenecks before they are used more widely in MBRL, both for adaptation and in general. 
Most specific to the work in this thesis, the question remains: what should be done to more actively preserve prior concept knowledge given an understanding of weight importance and semantic understanding of concepts? 
Work on continual learning provides a strong foundation for this research direction, especially given the recent interest in continual reinforcement learning in general~\cite{parisi2019continual,khetarpal2022towards,abel2024continual}. 
Using concept bottlenecks to disentangle and specifically force a dependence of downstream tasks on grounded intermediate representations is fertile ground to reconsider how continual reinforcement learning techniques can be used to improve adaptation to changing environments.

More broadly, what should constitute a ``concept'' in reinforcement learning and how can concepts best constrain downstream tasks? 
The work in this dissertation assumes, like most concept bottleneck work, that there is a single bottleneck where all concepts are predicted, and that concepts are easily grounded, perceivable phenomena.  
This is a reasonable approach as concepts can be assumed to depend on a shared encoding of the input observation; 
however, if we consider machine learning models designed solely to model concept phenomena, we see a wide range of differences in architecture and learning method, from transformers modeling language~\cite{vaswani2017attention}, to diffusion models for visual data~\cite{croitoru2023diffusion}, to CNNs and state space machines for audio and raw signal data~\cite{gu2022efficiently}. 
Given these differences, it begs the question: is the shared-encoder, single-bottleneck approach the correct solution, 
or are methods like Capsule Networks from Sabour, Frost, and Hinton~\cite{sabour2017dynamic} better suited to the task? 
Moreover, what if concept phenomena are not independent, such as hierarchical concepts? 
Would representation of the bottleneck as, for example, a graph affect performance or induce leakage~\cite{havasi2022addressing}, effectively making the bottleneck obsolete? 
For us to realize the full potential of concept bottlenecks---which I believe could be the most impactful technique for adding interpretability and adaptability to black-box machine learning methods like neural networks---these are all questions that need to be investigated. 

\subsection{Symbolic Concept Relationships for Offline-to-Online Reinforcement Learning}

An interesting future direction for this work would be to address the problem of offline-to-online model-based RL using concept learning to address both objective mismatch~\cite{lambert2020mismatch} and the issues of diversity in offline RL data. 
Given some prior knowledge of the local dynamics that relate a concept and action---for example how a joint angle changes as a result of the joint velocity controlled by the policy---local symbolic dynamics could be used to supervise or simulate the neural world model's prediction of the next state. 
This single, simple relationship would contribute in multiple ways.

Firstly, unlike every other loss in the Dreamer learning algorithm, this would apply in both world model learning and policy learning.
If so desired, this would even allow policy gradients to be directly propogated back to the learning of the world model, fully removing the separation between world model and policy learning as separate properties.
The shared loss would serve to better align the gradients of the world model and policy learning objectives, and therefore avoid performance degradation resulting from objective mismatch in model-based RL.

Secondly, the relationship between concept dynamics and actions could be used to diversify the data used for offline training. 
Many local dynamics relationship hold in most if not all situations; if you can frame a localized subpart of the dynamics as a closed system its much easier to define simple rules that govern that system~\cite{precup1997multi,khetarpal2021partial,alver2024partial}. 
As a result, without much risk of error, synthetic counterfactual action data can be generated with respect to actions from an offline dataset because it is known how the the dynamics concepts ought to evolve given alternative actions. 
In addition, concepts could be used to generate artificial data to diversify world model learning.
Given a single offline state-action sequence, noise can be added to the actions that can then be compensated for in the concept layer by intervening on concepts with the known value that should result from the changed action. 

Although there is no direct analog to predicting reward and critic values, one way to use local dynamics knowledge for value learning could be to use synthetic data generation to learn more ``robust'' reward functions with inverse reinforcement learning~\cite{abbeel2004apprenticeship}.
Given known reward states from offline data, synthetic trajectories could be generated using bidirectional search over the configuration space to identify variations in the actions and concept dynamics that would result in a new trajectory with the same start state, end state, and final reward. 
While neural networks generalize well on their own, because the synthetic trajectories connect novel states to known rewards, they would form a conservative lower bound for reward prediction, which would ease issues with critics overvaluing out-of-distribution states navigated to by offline RL policies~\cite{laroche2019safe,kumar2020cql} 
Augmenting the offline data with the synthetic variational data for the reward function learning, one could train a reward function that was more likely to be accurate off-policy, resulting in stronger critic training.
    

\begin{theappendices}

\section{Transfer Exploration: Algorithmic Instantiation Exploration Characteristics}
\label{app:transx:chars}

\textbf{Algorithmic instantiation} 
characterizes the mechanism within the reinforcement learning process that alters the typical greedy mechanisms.
Fundamentally the reinforcement learning process can be thought of as cycle with two directions: ``forward,'' where the agent interacts with the environment, receives reward, and collects samples for learning, and ``backward,'' where the agent's models are updated according to the update function based on reward and a loss is calculated and applied based on the reward and update. 
We consider three means of algorithmic instantiation. 
(1) Exploration-based \textit{environment sampling}.
Different means of sampling non-greedily, for example randomly or for explicity diversity, affect the forward process, making the data distribution more amenable to finding the optimum. 
(2) A modification of the \textit{update function}.
Modifying the reinforcement update process, including but not limited to the loss function, affects the forward process by propagating incentives to the agent that are not greedy reward maximization. 
(3) The addition of an \textit{intrinsic reward}. 
Intrinsic motivation is a quality of exploration methods that incentivize visitation of sub-optimal transitions by reweighting the rewards experienced by the agent at those transitions. Intrinsic reward is unique because it is not definitively part of the forward or backward processes: exploration can just as easily sample states and actions according to an intrinsic reward and alter the agent update with intrinsic reward. 

We do not report many interesting findings on algorithmic instantiation, partially because our results show that in general algorithmic instantiation does not have an outsized impact on the final results.
While NoisyNets with an update function instantiation is consistently high performing in different transfer problems, so is RE3 using an intrinsic reward. 
Moreover, considering a within-group evaluation of all of the intrinsic reward algorithms, we can see that there is a very high variance over average performance across all metrics; ICM consistently performing poorly, RE3 and REVD consistently performing well, and many of the others performing inconsistently with respect to one another.
Maybe most critically, however, we do not think it wise to generalize over conclusions about algorithmic instantiation from this work because of all of the characteristic categories, algorithmic instantiation is the most unbalanced. 
The vast majority of the algorithms evaluated in this paper are intrinsic reward, while only one, NoisyNets, has a modified update function, and even DIAYN, while altering the environment sampling process by a policy conditioned on a random skill vector, still uses an intrinsic reward as well. 
This imbalance is accidental, but not unexpected; the vast majority of modern exploration algorithm that generalize to different problems like we used here use intrinsic reward. 
An important direction of future work will be to construct fair means of comparison with offline algorithms and algorithms only suited for continuous control or discrete control so that more methods like $\epsilon$-greedy~\cite{sutton2018reinforcement}, maximum entropy RL~\cite{hazan2019provably, haarnoja2018soft}, and replay methods like hindsight experience replay~\cite{andrychowicz2017hindsight} can also be compared.

\newpage

\section{Transfer Exploration: Algorithm Descriptions}\label{app:transx:algos}

\textbf{RND:} Random Network Distillation is an exploration algorithm that uses the error of a randomly generated prediction problem as an intrinsic reward for the agent. The prediction problem is set up with two neural networks: a randomly initialized fixed target network and a predictor network that is attempting to approximate the target network. Both networks take an observation and output a $k$-dimensional latent vector. The predictor network is trained on observations collected from the agent using gradient descent to minimize the MSE between the outputs of the two neural networks. This MSE loss is used as the intrinsic reward, which will be higher when the predictor network and target network have not been trained on an observation enough to learn the latent yet.

\textbf{REVD:} Rewarding Episodic Visitation Discrepancy is an exploration method that uses intrinsic rewards to motivate the agent to maximize the discrepancy between the set of states visited in consecutive episodes. The discrepancy between consecutive episodes is measured by an estimate of the Renyi divergence using samples from the two episodes. The intrinsic reward is calculated by using the term in the divergence estimate that has to do with the current state, incentivising the agent to visit states that will increase the divergence estimate between the current episode and the previous one.

\textbf{RE3:} Random Encoders for Efficient Exploration is an exploration method that sets the intrinsic reward to an estimate of state entropy. To estimate state entropy, the method applies a k-nearest neighbor entropy estimator in a low-dimensional space the observations are mapped to using a randomly initialized fixed convolutional encoder. The encoder does not need to be trained and instead relies on the convolutional structure of the network, making the algorithm computationally efficient.

\textbf{RIDE:} Rewarding Impact-Driven Exploration is an exploration method that uses intrinsic rewards to incentivize the agent to take actions that lead to large changes in a learned state representation. The learned state representation comes from an encoder that allows for learning of both the forward and inverse models (taken from ICM). The learning problems the state representation is used for only incentivizes the encoder to retain features of the environment that are influenceable by the agent’s actions. Thus, the intrinsic reward is defined as the difference in said state representation, allowing the agent to experience a diverse set of states.

\textbf{ICM:} Intrinsic Curiosity Module is an exploration method that uses the prediction error of a forward model that acts on state embeddings as the intrinsic reward. The state embeddings are learned by using these embeddings to learn an inverse model to predict the action that takes a state embedding to the state embedding in the next time step. These state embeddings are learned to only contain information relevant to the inverse model, effectively solving the noisy-tv problem. The prediction error of the forward model as an intrinsic reward motivates the agent to explore states that it has a poor estimate of the forward dynamics, which should correlate with states the agent has observed less.

\textbf{NGU:} Never Give Up is an exploration algorithm that constructs an intrinsic reward to strongly discourage revisiting the same state more than once within an episode and discourage visiting states that have been visited many times before. These goals are achieved by an episodic novelty module and a life-long novelty module respectively. These use the embedding networks trained in the same manner as ICM to generate a meaningful lower dimensional state representation. The episodic novelty module uses episodic memory and a k-nearest neighbors pseudo-count method to calculate the intrinsic reward. The life-long novelty module uses the same method as RND. Then these two values are combined using multiplicative modulation for the final intrinsic reward.

\textbf{NoisyNets:} Noisy Networks is an exploration algorithm that applies parametric noise to the weights to introduce stochasticity in the agent's policy. This method adds very little overhead since all it requires is a few extra noise parameters in a few layers of the network. This added stochasticity in the weights propagates to the agent's policy to lead to the agent exploring more unknown states instead of only acting greedily.

\textbf{GIRL:} Generative Intrinsic Reward Learning is an exploration algorithm that motivates the agent to visit areas in which a separate model attempting to model the conditional state distribution performs poorly. The method does this by adding an intrinsic reward of the reconstruction error of each state to the extrinsic reward from the task. The model used to model the state distribution is a conditional VAE conditioned on the previous state and a latent variable.

\textbf{RISE:} Renyi State Entropy Maximization is an exploration algorithm that uses intrinsic rewards to maximize the estimate of intra episode Renyi state entropy. This estimate is calculated on latent embeddings of the states within an episode, where the latents are taken from a VAE trained to reconstruct the states. Further, the algorithm automatically searches the different possibilities for the value of k used in the KNN for the Renyi state entropy estimation that guarantees estimation accuracy. Lastly, RISE uses the distance between each state and its k-nearest neighbors as an estimate for entropy and sets the intrinsic reward to this value. The goal of this reward is to motivate the agent to visit a diverse set of states that increases the entropy of the agent’s state visitations. This method is computationally efficient and does not require any additional memory or networks to backpropagate through.

\textbf{DIAYN:} Diversity Is All You Need is an exploration pre-training method that learns a skill-conditioned policy with the goal to produce diverse skills. This is done by setting the reward to something correlated with the performance of a discriminator model that attempts to predict the skill by using the current state as input. Each episode a new skill is sampled for the policy to use, and the discriminator must attempt to predict the skill. Theoretically, this should lead to the policy attempting to make the job of the discriminator as easy as possible by creating diverse skills. Note that in the original paper this reward and skill-conditioned policy was used before any task reward was introduced. Then, these diverse skills were used to learn a task. However, in our work, we adapt DIAYN to be an online algorithm where this reward is trained simultaneously with the task reward. This motivates the agent to both solve the task while keeping the discriminator's job easy by ensuring different skills cover different areas of the state space. This online adaptation of DIAYN works as a traditional exploration algorithm by motivating the agent to take diverse paths throughout training by sampling different diverse skills to use each episode.

\subsection{A note on ``online'' DIAYN}

The effectiveness of explicit diversity and stochasticity methods is consistent throughout our results; however, this does not mean that adding diversity or stochasticity to any algorithm in any way will guarantee improvement to that algorithm's efficiency in novelty adaptation. 
The fundamental design of an algorithm to succeed in a specific RL problem, such as online task transfer, is as important as the selection of exploration principle and instantiation. For example, online DIAYN has average efficiency in both pre and post-novelty for all tasks we tested it on. However, based on the fact that it blends stochasticity with diverse skills could be interpreted to mean that it ought to have performed better post-novelty. In reality, DIAYN's absence of better performance is more likely due to its implementation; as an algorithm originally designed for reward-free pretraining, naive conversion to an online algorithm, while consistent with the original work and able to learn, is a handicap that cannot be solely compensated for by the potential of its exploration approach. A more transfer-appropriate version of DIAYN—as with all of these algorithms—can be designed from scratch and would likely outperform even the best exploration method investigated here. However, this level of algorithmic design ought to be carefully done with the learning problem in mind and is beyond the scope of this work.

\newpage

\section{Transfer Exploration: Additional Results}
\label{app:transx:results}

\begin{table}[ht]
    \centering
    \footnotesize
    \begin{tabular}{|c|c|c|c|c|c|}
        \hline
          & \multicolumn{5}{|c|}{Convergence Efficiency $\downarrow$} \\ 
         \cline{2-6} 
        Exploration & DoorKeyChange & LavaNotSafe & LavaProof & CrossingBarrier & ThighIncrease \\ 
        Algorithm & ($10^{6}$) & ($10^{5}$) & ($10^{6}$) & ($10^{5}$) & ($10^{6}$) \\ \hline 
        \hline
        None (PPO) & 2.56 $\pm$ 0.584 & 0.707 $\pm$ 0.35 & 1.7 $\pm$ 0.683 & 5.43 $\pm$ 1.69 & 7.99 $\pm$ 1.09 \\
        \hline
        NoisyNets & 2.45 $\pm$ 0.908 & 1.02 $\pm$ 0.911 & 1.31 $\pm$ 1.14 & 4.92 $\pm$ 2.13 & 7.17 $\pm$ 1.72 \\
        ICM & 2.12 $\pm$ 0.595 & 0.604 $\pm$ 0.0966 & 1.8 $\pm$ 1.46 & 4.66 $\pm$ 1.14 & 7.34 $\pm$ 1.02 \\
        DIAYN & 2.19 $\pm$ 0.808 & 0.707 $\pm$ 0.265 & 3.44 $\pm$ 1.57 & 5.47 $\pm$ 2.37 & \textbf{6.87 $\pm$ 2.41} \\
        RND & 2.41 $\pm$ 0.956 & 0.635 $\pm$ 0.0893 & 0.976 $\pm$ 0.803 & 5.11 $\pm$ 0.95 & 7.5 $\pm$ 2.04 \\
        NGU & 2.14 $\pm$ 0.289 & 0.768 $\pm$ 0.291 & 2.34 $\pm$ 3.38 & 5.43 $\pm$ 2.03 & 7.72 $\pm$ 1.52 \\
        RIDE & 2.39 $\pm$ 0.975 & \textbf{0.563 $\pm$ 0.0687} & \textbf{0.73 $\pm$ 0.293} & 5.65 $\pm$ 2.11 & 8.24 $\pm$ 1.24 \\
        GIRL & 2.4 $\pm$ 0.855 & 0.676 $\pm$ 0.173 & 2.43 $\pm$ 1.69 & 4.63 $\pm$ 0.979 & 7.61 $\pm$ 1.99 \\
        RE3 & 2.14 $\pm$ 0.616 & 0.604 $\pm$ 0.107 & 1.86 $\pm$ 0.669 & 5.42 $\pm$ 1.37 & 7.78 $\pm$ 0.642 \\
        RISE & 2.32 $\pm$ 0.764 & 0.614 $\pm$ 0.145 & 3.14 $\pm$ 1.89 & \textbf{4.29 $\pm$ 0.788} & 8.55 $\pm$ 0.441 \\
        REVD & \textbf{2.12 $\pm$ 0.891} & 0.635 $\pm$ 0.188 & 1.72 $\pm$ 1.66 & 4.8 $\pm$ 1.12 & 8.73 $\pm$ 0.934 \\
        \hline
    \end{tabular}
    \caption{This table shows the convergence efficiency on the pre-novelty task. It is computed by calculating the number of steps from the start of training until convergence on the first task. Thus, lower numbers are better here. Only runs that converged on the first task are taken into account for this metric.}
    \label{tab:transx:convergence_efficiency}
\end{table}

\begin{table}[ht]
    \centering
    \footnotesize
    \begin{tabular}{|c|c|c|c|c|c|}
        \hline
         & \multicolumn{5}{|c|}{Adaptive Freq $\uparrow$} \\ \cline{2-6} 
        Exploration & DoorKeyChange & LavaNotSafe & LavaProof & CrossingBarrier & ThighIncrease \\ 
        Algorithm &  & ($10^{-1}$) &  &  &  \\ \hline 

        \hline
        None (PPO) & \textbf{1.0 $\pm$ 0.0} & 6.0 $\pm$ 4.9 & \textbf{1.0 $\pm$ 0.0} & \textbf{1.0 $\pm$ 0.0} & \textbf{1.0 $\pm$ 0.0} \\
        \hline
        NoisyNets & 0.889 $\pm$ 0.314 & \textbf{8.0 $\pm$ 4.0} & 0.714 $\pm$ 0.452 & \textbf{1.0 $\pm$ 0.0} & \textbf{1.0 $\pm$ 0.0} \\
        ICM & 0.889 $\pm$ 0.314 & 3.0 $\pm$ 4.58 & 0.875 $\pm$ 0.331 & \textbf{1.0 $\pm$ 0.0} & \textbf{1.0 $\pm$ 0.0} \\
        DIAYN & \textbf{1.0 $\pm$ 0.0} & 3.0 $\pm$ 4.58 & \textbf{1.0 $\pm$ 0.0} & \textbf{1.0 $\pm$ 0.0} & \textbf{1.0 $\pm$ 0.0} \\
        RND & \textbf{1.0 $\pm$ 0.0} & 3.0 $\pm$ 4.58 & 0.714 $\pm$ 0.452 & \textbf{1.0 $\pm$ 0.0} & \textbf{1.0 $\pm$ 0.0} \\
        NGU & \textbf{1.0 $\pm$ 0.0} & 4.0 $\pm$ 4.9 & \textbf{1.0 $\pm$ 0.0} & 0.9 $\pm$ 0.3 & \textbf{1.0 $\pm$ 0.0} \\
        RIDE & \textbf{1.0 $\pm$ 0.0} & 6.0 $\pm$ 4.9 & \textbf{1.0 $\pm$ 0.0} & \textbf{1.0 $\pm$ 0.0} & \textbf{1.0 $\pm$ 0.0} \\
        GIRL & \textbf{1.0 $\pm$ 0.0} & 2.0 $\pm$ 4.0 & \textbf{1.0 $\pm$ 0.0} & \textbf{1.0 $\pm$ 0.0} & \textbf{1.0 $\pm$ 0.0} \\
        RE3 & \textbf{1.0 $\pm$ 0.0} & 2.0 $\pm$ 4.0 & \textbf{1.0 $\pm$ 0.0} & \textbf{1.0 $\pm$ 0.0} & \textbf{1.0 $\pm$ 0.0} \\
        RISE & 0.857 $\pm$ 0.35 & 3.0 $\pm$ 4.58 & \textbf{1.0 $\pm$ 0.0} & \textbf{1.0 $\pm$ 0.0} & \textbf{1.0 $\pm$ 0.0} \\
        REVD & \textbf{1.0 $\pm$ 0.0} & 5.0 $\pm$ 5.0 & \textbf{1.0 $\pm$ 0.0} & \textbf{1.0 $\pm$ 0.0} & \textbf{1.0 $\pm$ 0.0} \\
        \hline
    \end{tabular}
    \caption{This is the frequency that the agent converges on the second task using this exploration algorithm conditioned on the fast it converged on the first task. Higher numbers are better.}
    \label{tab:transx:adaptive_freq}
\end{table}

\newpage

\section{Transfer Exploration: Additional Analysis}
\label{app:transx:analysis}


\subsection{Difference in course-target task performance between continuous and discrete action spaces}

Beyond exploration characteristics, one of the biggest differences between novelty adaptation in discrete vs continuous control is the loose correlation between pre- and post-novelty performance. 
While the Tr-AUC metric is motivated by the presumption that poor performance on the source task will lead to deceptively good performance on the target task, we find in our continuous control environment that the opposite is true.
Based on this finding, we suggest that the fundamental knowledge of continuous control is perhaps more inherently transferable. 
Adapting to suddenly long legs forces the agents to forget some of their prior policies, however much of the challenge in continuous control is learning that relationships between action and effect is broadly applicable; moving one joint with an effort of $E$ will be more similar to moving a different joint with the same effort than comparing any two actions in discrete environments. 
Thus, the relationship between action and exploration, as we saw in the characteristic analysis, seems to be far more tightly bound for continuous control than discrete control.
As a result, inductive biases from separate objectives and controllability assumptions are less problematic, and characteristics that remove time dependence and favor knowledge preservation are more useful. 

\subsection{Shortcut novelties}

We also examined the shortcut LavaProof novelty as compared to the other novelties, and we see some interesting behavior very specific to the notion of a shortcut. 
As identified in prior work, shortcuts can be notoriously hard exploration problems for transfer learning because the novelty is injected and the learner's prior optimum is undisturbed. 
As we have noted, if we used exploration decay in our algorithm implementations, as is common in single-task RL, there is a chance most or even all of the algorithms in this study would ignore the new shortcut and continue with the sub-optimal solution. 
Even without exploration decay, NGU, GIRL, and ICM all fail to consistently identify the shortcut over the safe lava in spite of learning how to safely navigate around it. 
Atypically, NoisyNets also performed poorly and was unable to consistently find the novelty. 
Of those that performed well, in addition to RE3, DIAYN and RIDE performed unusually well. 
These observations together serve as strong evidence that the main difference in characteristic importance for shortcuts is an even stronger emphasis on the importance of explicit diversity. 
For a shortcut, the critical steps are to (1) identify that a shortcut exists, and (2) consider it worth exploring. 
Although intuitively the stochastic nature of NoisyNets may thrive at shortcut identification, it is less likely that a time-independent method like NoisyNets would be able to value exploring something just because it was novel. 
In this way, the lack of temporal locality in NoisyNets overcomes its potential for exploring the novelty.
Interestingly, the reverse happens for DIAYN. 
DIAYN's core motivation is to learn separable distinguishable policy skills, which for a single task learning problem becomes progressively harder as the policy converges. 
When a shortcut is identified, there is a novel opportunity for DIAYN to suddenly learn more diverse separable skills. 
As a result, the DIAYN's specific implementation of explicit diversity is able to overcome its time-independent exploration nature.

\newpage

\section{Transfer Exploration: Implementation Details}
\label{app:transx:impl}
\subsection{Hyperparameters}
\label{app:transx:impl:hyperparams}


We sweeped through the hyperparameter configurations for each exploration algorithm using Bayesian hyperparameter optimization. We ran a minimum of 10 hyperparameter configurations (using more for the algorithms with many parameters), each with six runs (three seeds on MiniGrid-DoorKey-8x8-v0 and three seeds on MiniGrid-SimpleCrossingS9N2-v0), for each algorithm. Each successive configuration was calculated using the weights and biases Bayesian sweep method within reasonable preset range around parameters pulled from prior work. The metric optimized for to minimize the average (over the 6 runs) number of steps needed for the \href{https://stable-baselines3.readthedocs.io/en/master/guide/callbacks.html#stoptrainingonrewardthreshold}{StopTrainingOnRewardThreshold} callback from \href{https://stable-baselines3.readthedocs.io/en/master/}{stable-baselines3} to stop the run with a reward threshold set to $0.35$ (capped at 3M steps). Once the sweeps were finished we chose reasonable hyperparameters that followed the trends of the other runs in the sweep to ensure the chosen parameter configuration was not just an outlier.

Here is a table consisting of the ranges of hyperparameters we sweeped through and our final chosen value for them based on the (limited) number of runs we used. The distribution type column refers to the distribution parameter provided to the \href{https://wandb.ai/}{wandb} sweep agent. For specifics about what each parameter does see the individual papers or the implementations in \href{https://github.com/balloch/rl-exploration-transfer/tree/noisy-net-implementation/rlexplore}{our codebase}. Note that latent\_dim, batch\_size, and learning\_rate parameters refer to networks trained specifically for exploration and have nothing to do with the parameters used for policy training.

\begin{table}[ht]
    \centering
    \footnotesize
    \begin{tabular}{|c|c|c|c|c|}
        \hline
        \textbf{Algorithm} & \textbf{Parameter Name} & \textbf{Distribution Type} & \textbf{Range} & \textbf{Final Value} \\
        \hline
        \multirow{1}{*}{\textbf{PPO}} & learning\_rate & q\_uniform & [0.0003, 0.0008] & 0.00075 \\
        \hline
        \multirow{2}{*}{\textbf{RE3}} & beta & q\_log\_uniform\_values & [0.00001, 0.1] & 0.01 \\
        & latent\_dim & categorical & [16, 32, 64, 128, 256] & 64 \\
        \hline
        \multirow{2}{*}{\textbf{RIDE}} & beta & q\_log\_uniform\_values & [0.00001, 0.1] & 0.001 \\
        & latent\_dim & categorical & [16, 32, 64, 128, 256] & 128 \\
        \hline
        \multirow{2}{*}{\textbf{RISE}} & beta & q\_log\_uniform\_values & [0.00001, 0.1] & 0.002 \\
        & latent\_dim & categorical & [16, 32, 64, 128, 256] & 64 \\
        \hline
        \multirow{5}{*}{\textbf{RND}} & beta & q\_log\_uniform\_values & [0.00001, 0.1] & 0.002 \\
        & learning\_rate & q\_log\_uniform\_values & [0.0001, 0.01] & 0.0003 \\
        & batch\_size & categorical & [16, 32, 64] & 64 \\
        & latent\_dim & categorical & [16, 32, 64, 128, 256] & 128 \\
        \hline
        \multirow{1}{*}{\textbf{Noisy Nets}} & num\_noisy\_layers & categorical & [1, 2, 3] & 2 \\
        \hline
        \multirow{5}{*}{\textbf{NGU}} & beta & q\_log\_uniform\_values & [0.0001, 0.5] & 0.0005 \\
        & learning\_rate & q\_log\_uniform\_values & [0.0001, 0.01] & 0.0006 \\
        & batch\_size & categorical & [16, 32, 64] & 64 \\
        & latent\_dim & categorical & [16, 32, 64, 128, 256] & 128 \\
        \hline
        \multirow{3}{*}{\textbf{ICM}} & beta & q\_log\_uniform\_values & [0.00001, 0.1] & 0.0003 \\
        & learning\_rate & q\_log\_uniform\_values & [0.0001, 0.01] & 0.0003 \\
        & batch\_size & categorical & [16, 32, 64] & 64 \\
        \hline
        \multirow{4}{*}{\textbf{GIRL}} & beta & q\_log\_uniform\_values & [0.00001, 0.1] & 0.0005 \\
        & learning\_rate & q\_log\_uniform\_values & [0.0001, 0.01] & 0.002 \\
        & lambda & q\_log\_uniform\_values & [0.001, 0.1] & 0.05 \\
        & latent\_dim & categorical & [32, 64, 128] & 64 \\
        \hline
        \multirow{3}{*}{\textbf{REVD}} & beta & q\_log\_uniform\_values & [0.00001, 0.1] & 0.00005 \\
        & latent\_dim & categorical & [16, 32, 64, 128, 256] & 64 \\
        \hline
        \multirow{3}{*}{\textbf{RIDE}} & beta & q\_log\_uniform\_values & [0.00001, 0.1] & 0.001 \\
        & latent\_dim & categorical & [16, 32, 64, 128, 256] & 128 \\
        \hline
        \multirow{3}{*}{\textbf{RISE}} & beta & q\_log\_uniform\_values & [0.00001, 0.1] & 0.002 \\
        & latent\_dim & categorical & [16, 32, 64, 128, 256] & 64 \\
        \hline
    \end{tabular}
    \caption{Hyperparameter Sweeps for Exploration Algorithms.}
    \label{tab:transx:hyperparameters}
\end{table}

For the continuous control task (Walker), we ran a targeted sweep on CartPole, mainly tuning parameters that were important to our results such as beta and other exploration algorithm specific parameters. We used prior work, results from our MiniGrid sweep, and other heuristics to estimate the ranges to sweep for different parameters. The main parameters that changed relative to the table above were the beta's for each algorithm as the reward scale is very different in walker as opposed to any MiniGrid tasks.

\subsection{Experimental Setup}
\label{app:transx:impl:setup}

For a valid comparison, all the experiments were run using PPO with the same PPO hyperparameters (listed below). Further, the experiments use the \href{https://stable-baselines3.readthedocs.io/en/master/guide/custom_policy.html#default-network-architecture}{default MLP policy} network shapes from the stable-baselines3 PPO class for the experiments and any hyperparameters not specified below were left as default. 

\begin{table}[ht]
    \centering
    \begin{tabular}{|c|c|}
        \hline
        \textbf{Parameter} & \textbf{Value} \\
        \hline
        learning\_rate & 0.00075 \\
        n\_steps & 2048 \\
        batch\_size & 256 \\
        n\_epochs & 4 \\
        gamma & 0.99 \\
        gae\_lambda & 0.95 \\
        clip\_range & 0.2 \\
        ent\_coef & 0.01 \\
        vf\_coef & 0.5 \\
        max\_grad\_norm & 0.5 \\
        \hline
    \end{tabular}
    \caption{PPO Configuration}
    \label{tab:transx:ppo_rl_alg_kwargs}
\end{table}

Each experiment on MiniGrid used 10 seeds with 5 parallel environments each to ensure reliable results, logging all results to wandb for future aggregation and analysis.

Each experiment on Walker used 5 seeds with 10 parallel environments.

For each of the environments, we ran the experiments with a number of steps that led to a high convergence rate with the implemented algorithms so fair comparisons between algorithms could be used on the task two results.

\begin{table}[ht]
    \centering
    \footnotesize
    \begin{tabular}{|c|c|c|c|}
        \hline
        \textbf{Environment Name} & \textbf{Pre Novelty Steps} & \textbf{Post Novelty Steps} & \textbf{MiniGrid Size} \\
        \hline
        \textbf{door\_key\_change} & 5M & 3M & 8x8 \\
        \textbf{simple\_to\_lava\_crossing} & 2M & 3M & 9x9 \\
        \textbf{lava\_maze\_safe\_to\_hurt} & 500,000 & 5M & 8x8 \\
        \textbf{lava\_maze\_hurt\_to\_safe} & 5M & 2M & 8x8 \\
        \textbf{walker\_thigh\_length} & 10M & 10M & N/A \\
        \hline
    \end{tabular}
    \caption{Environment Details}
    \label{tab:transx:environment_details}
\end{table}

We used a few observation wrappers on the environments in the experiment to set the observation space to be the flattened observed image (to work with simple MLP policies).

\newpage
\section{Extended Related Work}
\label{app:related}

\subsection{Plasticity and Replay Ratios in Deep Reinforcement learning}
\label{app:related:plasticity}

Monolithic prior knowledge can be helpful but is not always a good ``warm start'' for learning in deep neural networks~\cite{ash2020warm}.  
Parameterized models like neural networks have a tendency to be overly-influenced by the early training process, often described as 
\textit{plasticity loss}~\cite{achille2017critical}. 
This is particularly problematic in reinforcement learning, and doubly so in OTTA, where the data distribution shifts throughout the learning process. 
Dohare et al. \cite{dohare2023loss}
shows that in deep continual learning, plasticity correlates with low weight magnitude and density---i.e. avoiding ``dead neurons''--- and that L2-regularization and weight randomization techniques like shrink-and-perturb~\cite{ash2020warm} do much to mitigate plasticity loss. 
Recent research describes this phenomenon in reinforcement learning as 
\textit{primacy bias}~\cite{nikishin2022primacy}.  
Even in the typical formulation of reinforcement learning with a stationary MDP, the evolving data distribution that comes from sampling using an evolving policy makes it a more challenging learning environment than supervised learning~\cite{pmlr-v162-fan22c}. 
For effective online test time adaptation in reinforcement learning, primacy bias and loss of plasticity take on even greater importance, as it is clear the models must update to succeed in the new environment, but it is not clear what old model parameters or data should be preserved. 

One simple solution to the issues of plasticity loss and primacy bias---especially if you have the ability to learn from a replay buffer---is simply periodically resetting the weights of the neural network.
In the continual learning setting, partially resetting the network parameters has been shown to consistently improve learning performance~\cite{ash2020warm}, and in reinforcement learning partial and hard resets have been used to both improve sample efficiency and final performance~\cite{ash2020warm,dohare2021continual,nikishin2022primacy,doro2023sampleefficient}. 
We take advantage of research around parameter resetting to apply notions of increased plasticity to our OTTA setting.  

\subsection{Offline-to-Online Reinforcement Learning}
\label{app:related:offline}

The ability of deep neural networks to scale learning performance with data is one of the reasons why pre-training with non-task data is so effective~\cite{erhan2010pretraining}. 
Motivated by the widespread interest in using deep reinforcement learning for agents learning from visual observations, the most straightforward way to improve agent performance using offline data is to pre-train only the observation encoder~\cite{stooke2021decoupling,parisi2022pretrain}. 
While pre-training the visual encoder avoids many of the challenges and complexities of sequential decision making, the impact on agent performance is unreliable~\cite{shah2021rrl}. 

Offline pre-training reinforcement learning agent behavior can be largely divided into three types of approaches: imitation learning, task-agnostic exploration, or offline reinforcement learning. 
Imitation learning-based pre-training, using techniques like behavior cloning to establish a baseline policy, is often used to improve the sample efficiency of learning policies for complex problem spaces, such as the manipulation of arbitrary objects~\cite{kalashnikov2018scalable}. 
As a result, accumulating datasets of internet-scale demonstrations with different tasks shows great promise for GPT-like policy foundation models~\cite{openxembodiment2024}.
Video Pre-Training (VPT)~\cite{baker2022video} demonstrates that an inverse control-prediction model trained on a small set of demonstrations can be used to supervise imitation learning on a significantly larger dataset of action-free sequence data.
Imitation learning-based pre-training, however, makes the assumption that the state action space is sufficiently smooth such that expert demonstrations cover all situations an agent will encounter, which is not always true. 

Task agnostic exploration assumes a phase during which the reinforcement learning agent is not necessarily given access to the task on which it will be evaluated, but is given access to the environment for interaction. 
Also referred to in off-policy reinforcement learning as ``warm starting,'' unsupervised exploration pre-training usually attempts to cover the state-action space by exploring the space by maximizing surrogate objectives~\cite{andrychowicz2017hindsight,liu2021apt} 
(for a more detailed overview of exploration techniques see Chapters~\ref{chapt:background} and~\ref{chapt:transx}).
The most prominent examples of this employed in pre-training Dreamer are Plan2Explore~\cite{sekar2020planning} in which the policy maximizes uncertainty as the reward, and LEXA~\cite{mendonca2021lexa} which (provided a pre-trained world model) trains an exploration policy that learns to reach novel states. 
However, pre-training with task-agnostic exploration rarely reduces the total amount of environment interaction needed to learn a policy, and as such does not overall improve the interactive sample efficiency of reinforcement learning agents.
Moreover, if the environment is available for pre-training, there are many situations in which environment access is limited or too slow and difficult for interactive pre-training.

For offline reinforcement learning-based pre-training, offline model-free reinforcement learning has seen more research interest than model based methods. 
Two popular methods that address issues inherent in offline algorithms for fine tuning are Conservative Q-Learning (CQL)~\cite{kumar2020cql}, which addresses the distributional shift that occurs when an offline policy encounters novel states during online adaptation, and Batch-Constrained Q-learning (BCQ)~\cite{fujimoto2019bcq}, which constrains the policy in online training to actions close to those in the offline dataset. 
Fewer efforts have been made to formulate offline model-based reinforcement learning approaches~\cite{he2023surveyofflinemodelbasedreinforcement}. 
Notably, Model-Based Offline Reinforcement Learning (MOReL)~\cite{kidambi2020morel} learns a pessimistic MDP to provide lower-bound performance guarantees. 
Although all these approaches are theoretically sound, in practice they can be overly conservative and use suboptimal heuristics and hyperparameters~\cite{lu2022revisiting}, limiting performance on some tasks, especially when fine tuned. 

Fine tuning offline pre-trained RL agents is sufficiently challenging~\cite{lee2022offline,wolczyk24finetuning} that a recent body of work has focused on improving the two-step process of ``offline-to-online reinforcement learning.'' 
The difficulties that arise during the offline-to-online conversion are usually attributed to the sudden shift from the offline state-action distribution to the online state-action distribution (which can lead to bootstrapping errors in fine tuning), overfitting to on-policy demonstrations leading to problems reasoning over off-policy dynamics, and the impact of non-stationarity of rewards on the training of the value function~\cite{levine2020offline}.
One of the first efforts in offline-to-online RL is Advantage-Weighted Actor Critic (AWAC)~\cite{nair2021awac}, which effectively separates learning into supervised learning of the policy and reinforcement learning of the critic to mitigate the negative effects of shifting from the offline data distribution to the online data distribution.
Building off of the work by AWAC, methods reduce the impact of the offline-to-online distribution shift by adding uncertainty to reduce policy exploitation~\cite{rafailov23moto}, smoothing the transition using a blend of offline and online data~\cite{lee2022offline,mao2022moore}, and by adding constraints such as behavior cloning losses~\cite{wang2024o2ac} that can be gradually reduced over training.

There have been a small number of attempts to reformulate model-based RL methods like Dreamer to work both offline and online. 
The simplest method of doing this is to follow the training process of Ha and Schmidhuber~\cite{ha2018recurrent}, where instead of training Dreamer in an ``interleaved'' fashion as designed, where for every step the agent takes in the environment the world model \textit{and} the agent are updated, the agent is trained in ``phases.'' First, the world model is trained on interaction data, and then the world model is frozen and behavior training phase begins~\cite{lu2023challenges, wang2024making}. 
By splitting the training into phases, each phase can be executed offline. 
However, as highlighted by prior work on exploration for Dreamer agents~\cite{sekar2020planning,mendonca2021lexa}, the phase-based approach only works for tasks where exploration is trivial or unimportant. 
Moreover, this makes the assumption that the world model does not require the agent learning process to learn the task effectively, which is in some ways the assumption of imitation learning-based pre-training, but applied to an ``expert'' world model instead of policy. 
One work that pre-trains Dreamer using interaction data in the intended interleaved fashion is the APV method~\cite{seo2022pretraining}, which still modifies the process with an additional module to learn ``action-free'' models before fine tuning online.  
As of this writing, there has been no prior work focused on how to effectively pre-train an unmodified Dreamer agent using end-to-end interleaved training process using both offline and online interaction data.

\subsection{Mechanistic Interpretability}
\label{app:related:mech_int}

This work is similarly motivated to the parallel research area of \textit{mechanistic interpretability}~\cite{nanda_2022,elhage2022superposition} (MI), which studies the interpretation of neural network behavior by constructing an interpretation of a neural network based on an interpretation the internal structures of the network~\cite{olah2020zoomcircuits}. 
This stands in contrast to ``black-box''~\cite{holzinger2022explainable} interpretability approaches such as saliency maps of inputs from outputs~\cite{adebayo2018sanity,jain2019attention,wiegreffe2019attention}, which attempt to produce explanations of relationships between the inputs and outputs without considering the networks' internal mechanisms. 
Interpreting neural networks using codebooks falls in between these methods (in what is sometimes called ``white-box''~\cite{holzinger2022explainable,rauker2023toward} interpretability),  where (as in MI) the internal structures of the neural networks are constrained or investigated to establish what activations of the network are representing while using tools including saliency maps to convey feature importance. 
Indeed: codebooks can be viewed as an ``overcomplete basis'' of codes---a key underlying principles of mechanistic interpretability---over the distribution of latents with concept entanglement aligning with MI notions of superposition~\cite{elhage2022superposition}. 
Overcomplete bases,  while not true bases since they do not exhibit orthogonality, are still powerful concepts as the Johnson-Lindenstrauss (JL) lemma provides mathematical guarantees on the ``near orthogonality'' and therefore representational capacity of sets of vectors larger than the dimensionality of a space~\cite{johnson1984extensions,nanda_2022}. 
Superposition~\cite{elhage2022superposition} is a complementary idea that, given a latent feature space represented by a set of $d$-dimensional vectors $v$, optimization tries to use these vectors  to represent more features than they have the dimensional capacity. 
This leads to phenomena such as representing concepts with groups of features, or individual features alternately representing more than one concept depending on the input, both of which we observe in our work~\cite{elhage2022superposition,nanda_2022}. 
Although MI benefits from strict definitions of tasks, models, and theoretical components such as superposition and the JL lemma to interpret the entire internal structure of neural networks, white-box methods like codebook interpretability provide an attractive utilitarian approach to interpretability. 
By using robust concepts like vector representation and superposition to interpret the behavior of a limited internal part of the network white-box feature interpretation such as this paper's approach can be applied to evaluate the interpretability of almost any network structure without the need to understand all of a network's internal structures.

\newpage
\section{Concept Bottleneck World Models: Additional Methods}
\label{app:cbwm:methods} 

Revisiting the discussion from Chapter~\ref{chapt:background}, the combination of characteristics that most distinguishes the Dreamer~\cite{hafner2019dream} family of model-based reinforcement learning (MBRL) algorithms from other MBRL algorithms is:
\begin{enumerate}
    \item The RSSM world model architecture (Equation~\ref{eq:dreamer:wm}) that models the transition function as a recurrent variational state space,
    \item The use of observation reconstruction as a primary loss in learning the world model,
    \item The formulation of the behavior policy as a latent actor-critic learning only from rollouts in the world model latent space,
    \item Learning both the actor-critic and world model in an interleaved fashion as opposed to in separate phases. 
\end{enumerate}

Maintaining all of these characteristics for both pre-training and fine tuning is challenging and begs the question: ``Why \textit{not} use other methods?'' 
An effort is made to preserve this formulation because recent work postulates that the Dreamer formulation models adaptable human cognition more closely than other approaches~\cite{pearson2019human,mattar2022planning,kudithipudi2022biological,deperrois2024learning}.
Moreover, with the ability of the actor-critic to be trained solely in the ``imagination'' of the world model, Dreamer is well-suited to solving the problem of learning without interacting.
Provided a reasonable approximation of the true reward model and good coverage of the state-actions space, behavior learning theoretically does not require any environment interaction as the actor-critic learns entirely in the world model's embedding space.

Lastly, the Dreamer architecture is difficult to train piece-wise because the input to the policy is an embedded state, and the world model learning depends on agent behavior to explore so as to avoid overfitting the world model to only the dynamics of the optimal path.

\subsection{Impact of Partial Model Transfer on Interleaved Actor-Critic}
\label{app:cbwm:methods:pretrainingtheory}

One of the primary concerns with transferring a pre-trained world model agent is the cascading impact of partial transfers on the actor-critic.
Consider the problem of transferring solely the  world model and not the actor or critic models. 
This is a reasonable approach as prior work in offline-to-online RL caution against transferring overfit actors and critics with poor bootstrapping ability~\cite{wang2024o2ac}, and model-based exploration pre-training works do not transfer the exploration policy when fine tuning~\cite{sekar2020planning,mendonca2021lexa}.  
In actor-critic training, the data used to train the critic is drawn from initial embeddings of real-world states, followed by imagined latent states as generated by the transition predictor, all valued by the rewards from the reward predictor. 
The Dreamer algorithm assumes \textit{tabula rasa} learning, so the distribution of latent states sampled by a random policy $p_{\pi_U}(s)$, with random embeddings and rewards from the initial transition and reward predictors. 
This gives a noisy $\lambda$-error calculation (Equation~\ref{eq:dreamer:tdlambda}), which will produce high gradients that accurately reflect the need for the critic to make large changes to its parameters 

However, assuming instead that the transition and reward prediction models are trained on expert demonstrations, a different outcome emerges. 
When trained transition and reward prediction models are used in policy learning, these models will predict that latent data distributed according to random actions $p_{\pi_U}(s)$ are relatively close in latent space and all have rewards near zero. 
As a result, the critic will predict similar values for all states, leading to a low $\lambda$-error. 
The gradients for much of the early online learning period will be inaccurately small, leading to slower behavior learning than in tabula rasa learning.

\begin{theorem}
Let actor $\pi_{\theta}$ and critic $v_{\psi}$ be randomly initialized models such that their outputs are distributed as $U(s_t)$. If transition predictor $g_\phi^*$ and reward predictor $\mathcal{R}_\phi^*$ are optimized to predict the online task dynamics, then for small $\epsilon$, $\mathcal{L}_0(\psi) - \varepsilon = 0$ and $\mathcal{L}_0(\theta \mid \phi^*) < \mathcal{L}_0(\theta \mid \phi^*)$.
\end{theorem}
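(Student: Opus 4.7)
The plan is to work directly from the definitions of the $\lambda$-return $V_\tau^\lambda$ and the $\lambda$-error $\epsilon^\lambda$ in equation (4.7), and track how each loss term collapses or does not collapse when the world model components $g_\phi^*, \mathcal{R}_\phi^*$ are already near their task optima while the behavior networks are still random. I would first fix notation: let $a_\tau \sim \pi_\theta$ be drawn from a near-uniform distribution over $\mathcal{A}$ (since $\theta$ is a random init), let $v_\psi(s) \sim \mathcal{U}$ over some bounded range $[-V_{\max}, V_{\max}]$ independently of $s$ up to small variation, and let $\hat r_\tau = \mathbb{E}[\mathcal{R}_{\phi^*}(s_\tau)]$ and $\hat s_{\tau+1} \sim g_{\phi^*}(\cdot \mid h_\tau)$ denote the outputs of the trained predictors.

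For the first claim (critic loss is $\varepsilon$-small), I would argue in two steps. First, under a random actor and a task-accurate reward predictor, the expected imagined reward $\mathbb{E}_{\pi_U}[\hat r_\tau]$ is dominated by states reachable from near-initial latents via approximately uniform actions; for tasks with sparse or bounded shaped reward, $|\mathbb{E}[\hat r_\tau]| \le \delta_r$ for some small $\delta_r$. Second, because $v_\psi$ is randomly initialized and the trained transition model keeps $\hat s_{\tau+1}$ on the training manifold, the critic outputs $v_\psi(\hat s_{\tau+1})$ are statistically indistinguishable from $v_\psi(s_\tau)$ in expectation. Unrolling the recursion
\begin{equation*}
V_\tau^\lambda = \hat r_\tau + \hat\gamma_\tau\bigl((1-\lambda)v_\psi(\hat s_{\tau+1}) + \lambda V_{\tau+1}^\lambda\bigr),
\end{equation*}
and taking expectations gives $\mathbb{E}[V_\tau^\lambda] \approx \frac{\hat\gamma}{1-\hat\gamma}\delta_r + v_\psi(s_\tau)$, so $\mathbb{E}[\epsilon^\lambda] = v_\psi(s_\tau) - \mathrm{sg}(V_\tau^\lambda)$ is bounded by a quantity $\varepsilon$ on the order of $\delta_r/(1-\hat\gamma)$. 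Plugging into the Huber/MSE loss of equation (4.8) yields $\mathcal{L}_0(\psi) \le \varepsilon$.

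For the second claim, I would compare $\mathcal{L}_0(\theta \mid \phi^*)$ to $\mathcal{L}_0(\theta \mid \phi_0)$, where $\phi_0$ denotes a randomly initialized world model (this is the intended reading, since the printed inequality has identical sides). Writing out the actor loss from equation (4.9), both the REINFORCE term $\ln\pi_\theta(a\mid s)\epsilon^\lambda$ and the stochastic term $V_\tau^\lambda$ are driven by the magnitudes of predicted rewards and of differences $v_\psi(\hat s_{\tau+1}) - v_\psi(s_\tau)$. Under $\phi^*$, both magnitudes collapse as shown above, so each summand is $O(\varepsilon)$. Under $\phi_0$, however, $\hat r_\tau$ and $\hat s_{\tau+1}$ are essentially noise, so $v_\psi$ samples on $\hat s_{\tau+1}$ are independent of those on $s_\tau$ with non-vanishing variance, giving $|V_\tau^\lambda|$ and $|\epsilon^\lambda|$ of order $V_{\max}$, and hence $\mathcal{L}_0(\theta \mid \phi_0) = \Omega(V_{\max}) \gg \mathcal{L}_0(\theta \mid \phi^*)$. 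The entropy regularizer contributes identically in both cases and can be absorbed.

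The main obstacle, and the part that needs the most care, is making the ``near-uniform outputs'' assumption on $v_\psi$ precise enough that the inequality $v_\psi(\hat s_{\tau+1}) \stackrel{d}{\approx} v_\psi(s_\tau)$ actually holds when $\hat s_{\tau+1}$ stays on the training manifold. A clean way to do this is to assume $v_\psi$ is a randomly initialized MLP at standard initialization scale, so that by the neural-tangent or Lipschitz argument its outputs on any two points within the pre-training support differ by at most $L\|\hat s_{\tau+1} - s_\tau\|$, which is itself controlled by a known bound on the transition model's one-step predictions. Getting this constant tight enough to make $\varepsilon$ genuinely small (rather than just bounded) is what will drive the rigor of the statement, and is where I expect any formal critique to land.
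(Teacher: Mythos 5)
Your proposal follows the same skeleton as the paper's argument: unroll the TD($\lambda$) recursion, use the optimized reward predictor to drive every imagined $\hat r_\tau$ toward zero, argue that the critic's outputs along an imagined rollout are nearly constant, and conclude that the $\lambda$-error --- and hence the critic loss --- collapses under $\phi^*$ while remaining large under a random world model. You also correctly repair the garbled second inequality by reading it as a comparison against a randomly initialized $\phi_0$, which matches the paper's implicit $B^0$ versus $B^{\phi^*}$ contrast. Where you genuinely diverge is in justifying the key step, the near-constancy of $v_\psi$ along imagined trajectories. You use a per-trajectory Lipschitz argument: a randomly initialized MLP critic has bounded Lipschitz constant, and the trained transition predictor takes small steps, so $|v_\psi(\hat s_{\tau+1}) - v_\psi(s_\tau)| \le L\|\hat s_{\tau+1} - s_\tau\|$. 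The paper instead makes a population-level concentration argument: random actions through the smooth learned latent space form a (maximal-entropy) random walk, so over a short horizon $H$ the density of visited latents is bounded by a compact Gaussian of width $O(\sqrt{H})$ about the initial state, and the critic --- a random projection of a concentrated, zero-centered distribution --- outputs values that are all near zero, making every $\lambda$-error term zero in expectation. Both arguments ultimately hinge on the same quantity (the per-step latent displacement induced by the trained dynamics under random actions), so they buy roughly the same conclusion; the paper's version yields the distributional statement directly, while yours gives a deterministic per-trajectory bound that must then be aggregated, and --- as you flag yourself --- ``staying on the training manifold'' alone does not control $\|\hat s_{\tau+1} - s_\tau\|$, which is exactly the gap the paper's random-walk density bound is meant to fill. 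On the other hand, your treatment of the actor claim is more complete than the paper's: the paper's proof stops at ``value loss tends to zero regardless of critic accuracy'' and never returns to the REINFORCE, stochastic-backprop, and entropy terms of the actor objective, whereas you explicitly show the first two are $O(\varepsilon)$ under $\phi^*$ and order $V_{\max}$ under $\phi_0$, with the entropy term cancelling.
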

\begin{proof}

Let $o_t \sim p_{U}(s)$ be the set of initial observations distributed according to the sequential actions of a random actor and let the embeddings $\{s_1\} e_\phi(o_t)$ serve as initial states.
Batched horizon $H$ length trajectories of states, actions, and rewards,  
$\{s,a,r,s^{\prime}\}_\tau \in B(\theta,\phi), \tau=[t:t+H]$ 
are generated according to learnable models $\pi_{\theta}$, $g_\phi$, and $\mathcal{R}_\phi$.
For \textit{tabula rasa} learning---where by initialization $\pi_{\theta}$ and $g_\phi$ are distributed uniformly in $[-1,1]$, the critic $v_{\psi}$ is a deterministic random projection, and $\mathcal{R}_\phi$ is distributed as a univariate Gaussian with unit variance and $\mu=0.1$---call the data generated by these models $B^0$.
For pre-trained learning---where $g_\phi^*$ and $\mathcal{R}_\phi^*$ are similarly distributed but with optimized parameters---call the data generated by these models $B^{\phi^*}$.


Expanding the terms of the recursion over Equation~\ref{eq:dreamer:tdlambda} and sum from Equation~\ref{eq:dreamer:critic_loss} using the definition of the expected $\lambda$-return, we have: 

\begin{align*}
    V_t^\lambda =& r_t + \hat{\gamma}_t \left((1-\lambda) v_\psi\left(s_{t+1}\right)
    +\lambda V_{t+1}^\lambda\right) \\
    V_{t=H-1}^\lambda =& r_{H-1} + \hat{\gamma}_{H-1} v_{\psi}(s_H)\\
    V_{t=H-2}^\lambda =& r_{H-2} + \hat{\gamma}_{H-2} \left( 
    (1-\lambda) v_{\psi}(s_{H-1}) 
    + \lambda V_{H-1}^\lambda  
    \right) \\
    & ...  \\
    V_{t=2}^\lambda =& r_{2} + \hat{\gamma}_{2} \left( 
    (1-\lambda) v_{\psi}(s_{3}) 
    + \lambda V_{3}^\lambda  
    \right) \\
    V_{t=1}^\lambda =& r_{1} + \hat{\gamma}_{1} \left( 
    (1-\lambda) v_{\psi}(s_{2}) 
    + \lambda V_{2}^\lambda  
    \right) \\
\end{align*}

\noindent First considering $B^0$, given this weighted average---exponential in horizon length---when the  states $s_t$ are distributed uniform randomly the critic as a random projection of that state will result in a diverse set of values $v_{\psi}(s_t)$. Combined with $r_t$ as a stochastic random projection, in the calculation of $\lambda$-error, $v_{\psi}(s_t)$ and $V_t^\lambda$ are not close for any given $t$ leading to high loss. 
On the other hand, if $v_{\psi}(s_H)$ is well-trained, regardless of reward function, $v_{\psi}(s_t)$ and $V_t^\lambda$ will be close, so $\lambda$-error and loss will be low.

However, when considering $B^{\phi^*}$, the states $s_t$ are concentrated as due to the trained transition predictor. 
This is because---especially in continuous control---random actions through a smooth space such as that of the learned transition predictor constitutes a generic random walk, and with distributed actors diffuse Brownian motion. 
As such, given highly similar initial states and a fixed time horizon, in the limit of initial sample size and number of distributed actors the distribution of visited states will follow a compact normal distribution with $\mu=0$ in the transition predictor's embedding space~\cite{arfken2011mathematical} following the equation for the density of particles emanating from a single point:
\begin{equation*}
    p(s_t)_\pi \propto \frac{N}{\sqrt{4\pi H}} \exp\left(-\frac{x^2}{4H}\right)
\end{equation*}
\noindent where, assuming constant diffusivity, $N$ is the number of ``random walks,'' which in our case is a function of number of actors and number of initial states, and $H$ is the horizon.
As our policy has an entropy maximization term and therefore forms a maximal entropy random walk~\cite{duda2012maximal}, this normal assumption forms an upper bound on state distribution density and is a reasonable approximation for small values of $H$.

In addition, $B^{\phi^*}$ has a trained reward function. Even with a dense reward, but especially a with a sparse reward, task design is such that states near the initial position typically have zero reward. 
Returning then to our TD($\lambda$) equations, to consider how the error term changes for a random or optimized critic, we can see that with all of the reward terms near zero, all $\lambda$-error terms for all $t$ become random projections centered on zero, making them zero in expectation. 
When the critic is not randomly initialized, the same phenomenon occurs but with less standard deviation. 
As a result, value loss tends to zero regardless of critic accuracy. 


\end{proof}

\subsection{Transfer Learning and Partial Model Adaptation}
Motivated by this interleaved critic learning issue and the typical issues of distribution shift associated with offline-to-online learning, we conducted systematic experiments on the transferability of various pre-trained weights and submodules from within our world model architecture. 
This is similar to partial model learning in reinforcement learning \cite{precup1997multi,talvitie2008local,khetarpal2021partial,alver2024partial}, where dynamics models are specialized to predict behaviors within localized regions of the state-action space. 
Instead of learning a model specific to a subspace, the experiments in this work reveal the submodules of world model agents that represent the overlapping subspace between the pre-training and fine tuning MDPs.   
Using fully offline pre-trained Dreamer agents we evaluated freezing and reinitialization of different combinations of submodules. 
We focused our weight \textit{freezing} experiments on the key components of the dynamics learning framework: (1) the observation encoder, (2) the recurrent model, (3) the transition prediction model, and (4) the observation prediction model. 
We focused our weight \textit{resetting} experiments on the components of the behavior learning framework: (1) the actor model, (2) the critic model, (3) the transition prediction model, and (4) the discount prediction model. 
The reward prediction model is never frozen or reset because pre-training data was gathered with a sparse reward, while fine tuning used a shaped reward.

\section{Concept Bottleneck World Models: Additional Results}
\label{app:cbwm:results} 

\subsection{Concept Intervention}
\label{app:cbwm:results:intervention} 

One of the most interesting uses of concepts in a bottleneck is the ability to modify downstream outcomes by ``intervention.'' 
Intervention is manually changing a concept to affect change in the downstream task.
We force a scene with positive codes for ``Moka Pot'' to be zero (representing that they are not in the scene), and force the near-zero code ``Pan'' to be one (representing that it is in the scene).
As we can see in Figure~\ref{fig:int}, these changes, respectively, fade the moka pots out and the pan in. 
This shows that the concept bottleneck is forcing the downstream task to utilize concepts to make predictions, rather than ignoring the concepts or using them to represent unpredictable, entangled phenomena (as in non-CBM architectures). 

\begin{figure}
    \centering
    \begin{subfigure}[b]{0.4\textwidth}
        \centering
        \includegraphics[width=\textwidth]{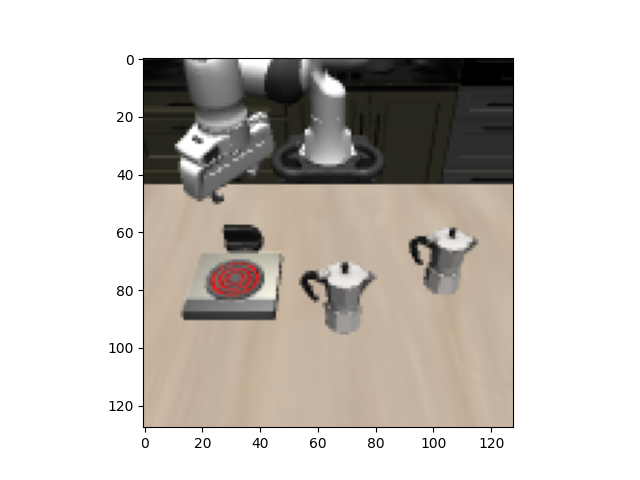}
        \label{fig:subfig1}
    \end{subfigure}
    \hfill
    \begin{subfigure}[b]{0.4\textwidth}
        \centering
        \includegraphics[width=\textwidth]{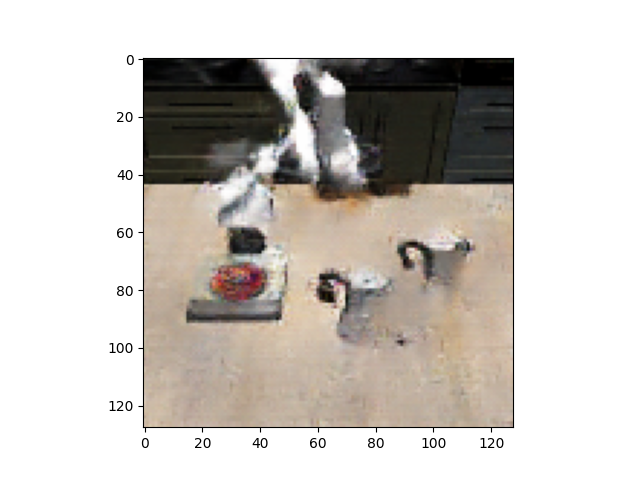}
        \label{fig:subfig2}
    \end{subfigure}
    \hfill
    \begin{subfigure}[b]{0.4\textwidth}
        \centering
        \includegraphics[width=\textwidth]{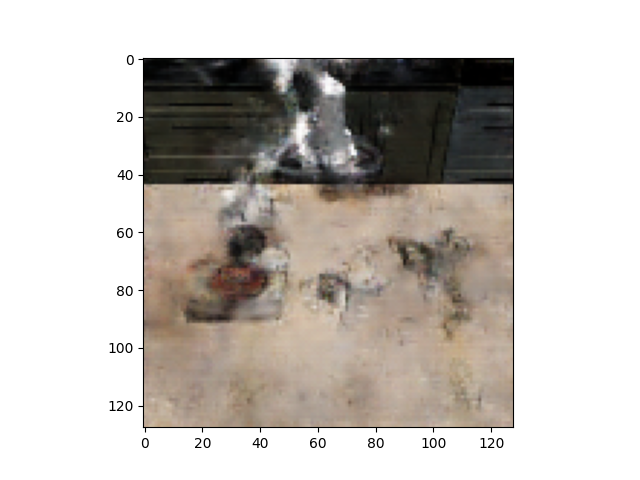}
        \label{fig:subfig3}
    \end{subfigure}
    \begin{subfigure}[b]{0.4\textwidth}
        \centering
        \includegraphics[width=\textwidth]{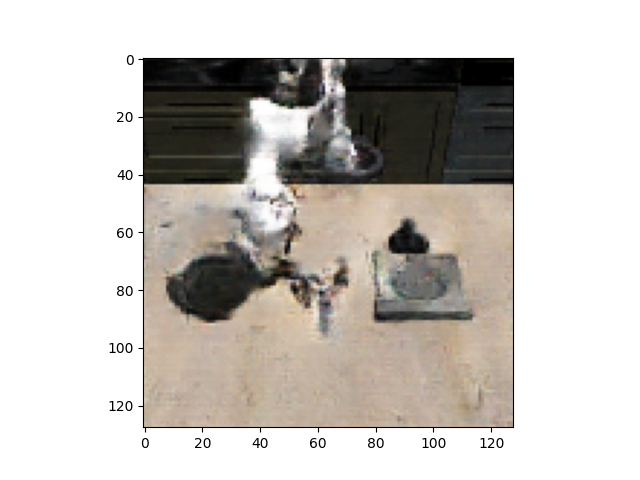}
        \label{fig:subfig4}
    \end{subfigure}
    \caption{In this figure, we see the ground truth observation in (a), followed by the unmodified predicted observation in (b), the moka pots removed in (c), and the pan added in (d).  }
    \label{fig:int}
\end{figure}

\newpage
\section{Concept Bottleneck World Model Tasks}
\label{app:cbwm:tasks}

\begin{figure}[ht]
    \centering
    
    \begin{subfigure}[b]{0.3\textwidth}
        \centering
        \includegraphics[width=\textwidth]{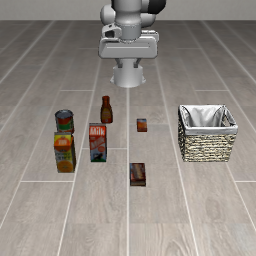}
        \caption{}
        \label{fig:a1}
    \end{subfigure}
    \begin{subfigure}[b]{0.3\textwidth}
        \centering
        \includegraphics[width=\textwidth]{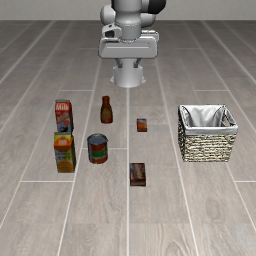}
        \caption{}
        \label{fig:a2}
    \end{subfigure}
    \begin{subfigure}[b]{0.3\textwidth}
        \centering
        \includegraphics[width=\textwidth]{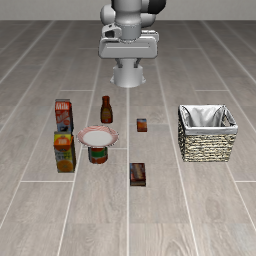}
        \caption{}
        \label{fig:a3}
    \end{subfigure}
    
    \caption*{A)}

    \begin{subfigure}[b]{0.3\textwidth}
        \centering
        \includegraphics[width=\textwidth]{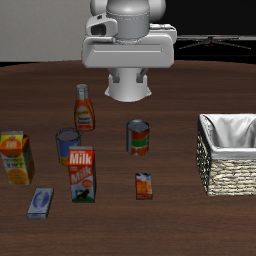}
        \caption{}
        \label{fig:b1}
    \end{subfigure}
    \begin{subfigure}[b]{0.3\textwidth}
        \centering
        \includegraphics[width=\textwidth]{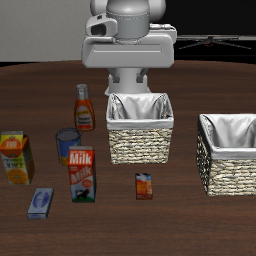}
        \caption{}
        \label{fig:b2}
    \end{subfigure}
    \begin{subfigure}[b]{0.3\textwidth}
        \centering
        \includegraphics[width=\textwidth]{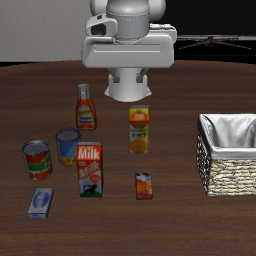}
        \caption{}
        \label{fig:b3}
    \end{subfigure}
    
    \caption*{B)}

    \begin{subfigure}[b]{0.3\textwidth}
        \centering
        \includegraphics[width=\textwidth]{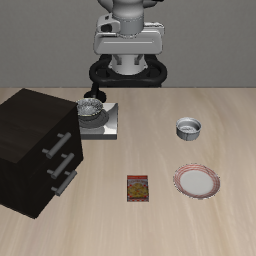}
        \caption{}
        \label{fig:c1}
    \end{subfigure}
    \begin{subfigure}[b]{0.3\textwidth}
        \centering
        \includegraphics[width=\textwidth]{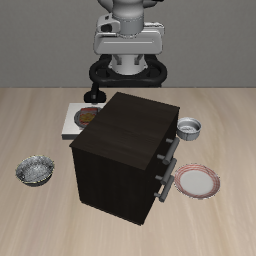}
        \caption{}
        \label{fig:c2}
    \end{subfigure}
    \begin{subfigure}[b]{0.3\textwidth}
        \centering
        \includegraphics[width=\textwidth]{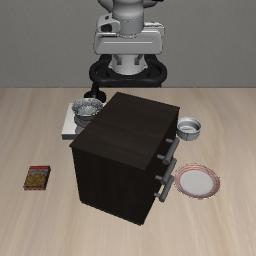}
        \caption{}
        \label{fig:c3}
    \end{subfigure}
    \caption{CBWM tasks designed for testing the impact of concepts on adaptation.}
    \label{fig:3x3}
\end{figure}

Here we describe the tasks implemented in LIBERO's BDDL description language for use in the Robosuite manipulation environment for testing knowledge preservation with Concept Bottleneck World Models.

To evaluate knowledge preservation in online test-time adaptation, we designed a series of test environments in Robosuite based on three core scenes from the LIBERO dataset. Each core scene was modified to create related scenes that introduce specific types of novelties, allowing us to systematically assess how well different approaches preserve and adapt knowledge.

The first set of scenes builds on LIBERO\_OBJECT\_SCENE. The base scene (LIBERO\_OBJECT\_SCENE\_pick\_up\_the\_tomato\_sauce\_and\_place\_it\_in\_the\_basket) tasks the agent with moving a tomato sauce bottle to a basket. We created three variants: LIBERO\_OBJECT\_SCENE\_pick\_up\_the\_milk\_and\_place\_it\_in\_the\_basket, LIBERO\_OBJECT\_SCENE\_pick\_up\_the\_butter\_and\_place\_it\_in\_the\_basket, and LIBERO\_OBJECT\_SCENE\_pick\_up\_the\_tomato\_sauce\_and\_place\_it\_in\_the\_basket\_starting\_with\_plate\_on\_top. The first two variants test adaptation to different target objects while maintaining the same basic task structure. The third variant introduces a barrier novelty by requiring the robot to unstack objects before completing the original task.

The second group derives from LIBERO\_LIVING\_ROOM\_SCENE2, where the base task (LIVING\_ROOM\_SCENE2\_put\_the\_tomato\_sauce\_in\_the\_basket) involves putting tomato sauce in a basket. We developed two variations to test adaptation to increased task complexity: LIVING\_ROOM\_SCENE2\_put\_the\_tomato\_sauce\_in\_the\_basket\_starting\_in\_another\_basket, which requires the robot to navigate around the basket's edges during grasping, and LIVING\_ROOM\_SCENE2\_put\_the\_tomato\_sauce\_in\_the\_basket\_where\_sauce\_spawns\_farther, which tests adaptation to spatial changes by placing the target object farther from the robot's initial position.

The third set extends LIBERO\_SPATIAL\_SCENE. The base task (LIBERO\_SPATIAL\_SCENE\_pick\_up\_the\_black\_bowl\_on\_the\_stove\_and\_place\_it\_on\_the\_plate) requires picking up a black bowl from a stove and placing it on a plate, modified from the original LIBERO scene by removing a duplicate black bowl to avoid ambiguity. We created three variants: LIBERO\_SPATIAL\_SCENE\_pick\_up\_the\_cookies\_on\_the\_stove\_and\_place\_it\_on\_the\_plate replacing the black bowl with cookies, LIBERO\_SPATIAL\_SCENE\_pick\_up\_the\_black\_bowl\_on\_the\_stove\_and\_place\_it\_on\_the\_plate\_blocked\_by\_cabinet adding a cabinet that blocks direct access to the black bowl, and LIBERO\_SPATIAL\_SCENE\_pick\_up\_the\_cookies\_on\_the\_stove\_and\_place\_it\_on\_the\_plate\_blocked\_by\_cabinet combining both changes. These variations test adaptation to both semantic changes (different target objects) and geometric changes (obstacle avoidance) independently and in combination.

This collection of scenes enables us to evaluate how well agents preserve their knowledge across different types of novelties: semantic changes in target objects, increases in task complexity, and modifications to the spatial layout of the environment. Each variant was carefully designed to isolate specific aspects of adaptation while maintaining enough similarity to the base scene to make knowledge transfer beneficial.

\end{theappendices}
    \makeBibliography

\begin{vita}

Jonathan Clifford Balloch was born in 1989 to parents Hugh and Susan Balloch in New York, NY. 
At the time of writing, Jonathan resides in Atlanta, Georgia with his wife Yelena and their daughters Mariana and Natalia. 

\end{vita}
\end{thesisbody}


@book{sutton2018reinforcement,
  title={Reinforcement learning: An introduction},
  author={Sutton, Richard S and Barto, Andrew G},
  year={2018},
  publisher={MIT press}
}

@Article{bellemare13arcade,
    author = {{Bellemare}, M.~G. and {Naddaf}, Y. and {Veness}, J. and {Bowling}, M.},
    title = {The Arcade Learning Environment: An Evaluation Platform for General Agents},
    journal = {Journal of Artificial Intelligence Research},
    year = "2013",
    volume = "47",
    pages = "253--279",
}

@inproceedings{boult2021towards,
  title={Towards a Unifying Framework for Formal Theories of Novelty},
  author={Boult, Terrance and Grabowicz, Przemyslaw and Prijatelj, Derek and Stern, Roni and Holder, Lawrence and Alspector, Joshua and Jafarzadeh, Mohsen M and Ahmad, Toqueer and Dhamija, Akshay and Li, Chunchun and others},
  booktitle={Proceedings of the AAAI Conference on Artificial Intelligence},
  volume={35},
  pages={15047--15052},
  year={2021}
}

@inproceedings{kemker2018measuring,
  title={Measuring catastrophic forgetting in neural networks},
  author={Kemker, Ronald and McClure, Marc and Abitino, Angelina and Hayes, Tyler and Kanan, Christopher},
  booktitle={Proceedings of the AAAI Conference on Artificial Intelligence},
  volume={32},
  number={1},
  year={2018}
}

@inproceedings{hayes2019memory,
  title={Memory efficient experience replay for streaming learning},
  author={Hayes, Tyler L and Cahill, Nathan D and Kanan, Christopher},
  booktitle={2019 International Conference on Robotics and Automation (ICRA)},
  pages={9769--9776},
  year={2019},
  organization={IEEE}
}

@misc{chen2021active,
      title={Active Online Learning with Hidden Shifting Domains}, 
      author={Yining Chen and Haipeng Luo and Tengyu Ma and Chicheng Zhang},
      year={2021},
      eprint={2006.14481},
      archivePrefix={arXiv},
      primaryClass={cs.LG}
}

@article{hazan2016introduction,
  title={Introduction to online convex optimization},
  author={Hazan, Elad},
  journal={Foundations and Trends{\textregistered} in Optimization},
  volume={2},
  number={3-4},
  pages={157--325},
  year={2016},
  publisher={Now Publishers, Inc.}
}

@inproceedings{silver2013lifelong,
  title={Lifelong machine learning systems: Beyond learning algorithms},
  author={Silver, Daniel L and Yang, Qiang and Li, Lianghao},
  booktitle={2013 AAAI spring symposium series},
  year={2013}
}

@article{smith2021memory,
  title={Memory-Efficient Semi-Supervised Continual Learning: The World is its Own Replay Buffer},
  author={Smith, James and Balloch, Jonathan and Hsu, Yen-Chang and Kira, Zsolt},
  journal={arXiv preprint arXiv:2101.09536},
  year={2021}
}

@inproceedings{klenk2020model,
  title={Model-Based Novelty Adaptation for Open-World AI},
  author={Klenk, Matthew and Piotrowski, Wiktor and Stern, Roni and Mohan, Shiwali and de Kleer, Johan},
  booktitle={International Workshop on Principles of Diagnosis (DX)},
  year={2020}
}

@inproceedings{langley2020open,
  title={Open-world learning for radically autonomous agents},
  author={Langley, Pat},
  booktitle={Proceedings of the AAAI Conference on Artificial Intelligence},
  volume={34},
  pages={13539--13543},
  year={2020}
}

@inproceedings{loyall2021integrated,
  title={An Integrated Architecture for Online Adaptation to Novelty in Open Worlds using Probabilistic Programming and Novelty-Aware Planning},
  author={Loyall, Bryan and Pfeffer, Avi and Niehaus, James and Mayer, Tyler and Rizzo, Paola and Gee, Alex and Cvijic, Sanja and Manning, William and Skitka, Mary Kate and Becker, McCoy and others},
  booktitle={In Proceedings of AAAI Symposium, Designing Artificial Intelligence for Open Worlds},
  year={2022},
}

@incollection{mccloskey1989catastrophic,
  title={Catastrophic interference in connectionist networks: The sequential learning problem},
  author={McCloskey, Michael and Cohen, Neal J},
  booktitle={Psychology of learning and motivation},
  volume={24},
  pages={109--165},
  year={1989},
  publisher={Elsevier}
}

@article{mnih2015human,
  title={Human-level control through deep reinforcement learning},
  author={Mnih, Volodymyr and Kavukcuoglu, Koray and Silver, David and Rusu, Andrei A and Veness, Joel and Bellemare, Marc G and Graves, Alex and Riedmiller, Martin and Fidjeland, Andreas K and Ostrovski, Georg and others},
  journal={nature},
  volume={518},
  number={7540},
  pages={529--533},
  year={2015},
  publisher={Nature Publishing Group}
}

@inproceedings{ng1999policy,
  title={Policy invariance under reward transformations: Theory and application to reward shaping},
  author={Ng, Andrew Y and Harada, Daishi and Russell, Stuart},
  booktitle={Icml},
  volume={99},
  pages={278--287},
  year={1999}
}

@inproceedings{peng2021detecting,
  title={Detecting and Adapting to Novelty in Games},
  author={Peng, Xiangyu and Balloch, Jonathan C and Riedl, Mark O},
  booktitle={AAAI Workshop on Reinforcement Learning in Games},
  year={2021}
}

@article{pimentel2014review,
  title={A review of novelty detection},
  author={Pimentel, Marco AF and Clifton, David A and Clifton, Lei and Tarassenko, Lionel},
  journal={Signal Processing},
  volume={99},
  pages={215--249},
  year={2014},
  publisher={Elsevier}
}

@inproceedings{sarathy2021spotter,
author = {Sarathy, Vasanth and Kasenberg, Daniel and Goel, Shivam and Sinapov, Jivko and Scheutz, Matthias},
title = {SPOTTER: Extending Symbolic Planning Operators through Targeted Reinforcement Learning},
year = {2021},
booktitle = {Proceedings of the 20th International Conference on Autonomous Agents and Multi-Agent Systems},
pages = {1118–1126},
}

@article{shalev2012online,
  title={Online learning and online convex optimization},
  author={Shalev-Shwartz, Shai and others},
  journal={Foundations and Trends{\textregistered} in Machine Learning},
  volume={4},
  number={2},
  pages={107--194},
  year={2012},
  publisher={Now Publishers, Inc.}
}

@article{tunyasuvunakool2020dmcontrol,
         title = {dm\_control: Software and tasks for continuous control},
         journal = {Software Impacts},
         volume = {6},
         pages = {100022},
         year = {2020},
         issn = {2665-9638},
         doi = {https://doi.org/10.1016/j.simpa.2020.100022},
         url = {https://www.sciencedirect.com/science/article/pii/S2665963820300099},
         author = {Saran Tunyasuvunakool and Alistair Muldal and Yotam Doron and
                  Siqi Liu and Steven Bohez and Josh Merel and Tom Erez and
                  Timothy Lillicrap and Nicolas Heess and Yuval Tassa},
        }

@misc{zhu2021transfer,
      title={Transfer Learning in Deep Reinforcement Learning: A Survey}, 
      author={Zhuangdi Zhu and Kaixiang Lin and Jiayu Zhou},
      year={2021},
      eprint={2009.07888},
      archivePrefix={arXiv},
      primaryClass={cs.LG}
}

@inproceedings{engel2005reinforcement,
  title={Reinforcement learning with Gaussian processes},
  author={Engel, Yaakov and Mannor, Shie and Meir, Ron},
  booktitle={Proceedings of the 22nd international conference on Machine learning},
  pages={201--208},
  year={2005}
}

@misc{gym_minigrid,
  author = {Chevalier-Boisvert, Maxime and Willems, Lucas and Pal, Suman},
  title = {Minimalistic Gridworld Environment for OpenAI Gym},
  year = {2018},
  publisher = {GitHub},
  journal = {GitHub repository},
  howpublished = {\url{https://github.com/maximecb/gym-minigrid}},
}

@article{kejriwal2021multi,
  title={A multi-agent simulator for generating novelty in monopoly},
  author={Kejriwal, Mayank and Thomas, Shilpa},
  journal={Simulation Modelling Practice and Theory},
  pages={102364},
  year={2021},
  publisher={Elsevier}
}

@inproceedings{gamagenovelty,
  title={Novelty Generation Framework for AI Agents in Angry Birds Style Physics Games},
  author={Gamage, Chathura and Pinto, Vimukthini and Xue, Cheng and Stephenson, Matthew and Zhang, Peng and Renz, Jochen},
  booktitle={Conference on Games},
  year={2021}
}

@inproceedings{badia2020agent57,
  title={Agent57: Outperforming the human Atari benchmark},
  author={Badia, A and Piot, B and Kapturowski, S and Sprechmann, P and Vitvitskyi, A and Guo, D and Blundell, C},
  booktitle={Proceedings of the 37th International Conference on Machine Learning, Online, PMLR},
  volume={119},
  pages={2020},
  year={2020}
}

@article{braun2009learning,
  title={Learning optimal adaptation strategies in unpredictable motor tasks},
  author={Braun, Daniel A and Aertsen, Ad and Wolpert, Daniel M and Mehring, Carsten},
  journal={Journal of Neuroscience},
  volume={29},
  number={20},
  pages={6472--6478},
  year={2009},
  publisher={Soc Neuroscience}
}

@article{silver2017alphagozero,
  title={Mastering the game of go without human knowledge},
  author={Silver, David and Schrittwieser, Julian and Simonyan, Karen and Antonoglou, Ioannis and Huang, Aja and Guez, Arthur and Hubert, Thomas and Baker, Lucas and Lai, Matthew and Bolton, Adrian and others},
  journal={nature},
  volume={550},
  number={7676},
  pages={354--359},
  year={2017},
  publisher={Nature Publishing Group}
}

@article{vinyals2019grandmaster,
  title={Grandmaster level in StarCraft II using multi-agent reinforcement learning},
  author={Vinyals, Oriol and Babuschkin, Igor and Czarnecki, Wojciech M and Mathieu, Micha{\"e}l and Dudzik, Andrew and Chung, Junyoung and Choi, David H and Powell, Richard and Ewalds, Timo and Georgiev, Petko and others},
  journal={Nature},
  volume={575},
  number={7782},
  pages={350--354},
  year={2019},
  publisher={Nature Publishing Group}
}

@inproceedings{balloch2022role,
  title={The Role of Exploration for Task Transfer in Reinforcement Learning},
  author={Balloch, Jonathan C and Kim, Julia and Inman, Jessica L and Riedl, Mark O},
  journal={IROS Workshop on Lifelong Learning of High-level Cognitive and Reasoning Skills},
  year={2022}
}

@inproceedings{balloch2022novgrid,
  title={NovGrid: A Flexible Grid World for Evaluating Agent Response to Novelty},
  author={Balloch, Jonathan C and Lin, Zhiyu and Hussain, Mustafa and Srinivas, Aarun and Peng, Xiangyu and Kim, Julia and Riedl, Mark},
  booktitle={In Proceedings of AAAI Symposium, Designing Artificial Intelligence for Open Worlds},
  year={2022},
}

@article{McDermott2000The1A,
  title={The 1998 AI Planning Systems Competition},
  author={Drew McDermott},
  journal={AI Mag.},
  year={2000},
  volume={21},
  pages={35-55}
}

@article{ball1997elementary,
  title={An elementary introduction to modern convex geometry},
  author={Ball, Keith and others},
  journal={Flavors of geometry},
  volume={31},
  number={1-58},
  pages={26},
  year={1997}
}

@book{coxeter1973regular,
  title={Regular polytopes},
  author={Coxeter, Harold Scott Macdonald},
  year={1973},
  publisher={Courier Corporation}
}

@InProceedings{guzdial:ijcai2017,
author    = {Guzdial, Matthew and Riedl, Mark O.},
title     = {Game Engine Learning from Video},
booktitle = {Proceedings of the 2017 International Conference on Artificial Intelligence},
year      = {2017},
owner     = {riedl},
timestamp = {2017.02.26},
url       = {http://www.cc.gatech.edu/~riedl/pubs/ijcai17.pdf},
}

@book{hastie2009elements,
  title={The elements of statistical learning: data mining, inference, and prediction},
  author={Hastie, Trevor and Tibshirani, Robert and Friedman, Jerome H and Friedman, Jerome H},
  volume={2},
  year={2009},
  publisher={Springer}
}

@article{schulman2017proximal,
  title={Proximal policy optimization algorithms},
  author={Schulman, John and Wolski, Filip and Dhariwal, Prafulla and Radford, Alec and Klimov, Oleg},
  journal={arXiv preprint arXiv:1707.06347},
  year={2017}
}

@inproceedings{hafner2020dv2,
  title={Mastering Atari with Discrete World Models},
  author={Hafner, Danijar and Lillicrap, Timothy P and Norouzi, Mohammad and Ba, Jimmy},
  booktitle={International Conference on Learning Representations},
  year={2021}
}

@inproceedings{hafner2019dream,
  title={Dream to Control: Learning Behaviors by Latent Imagination},
  author={Hafner, Danijar and Lillicrap, Timothy and Ba, Jimmy and Norouzi, Mohammad},
  booktitle={International Conference on Learning Representations},
  year={2019}
}

@article{alspector2021representation,
  title={Representation Edit Distance as a Measure of Novelty},
  author={Alspector, Joshua},
  journal={arXiv preprint arXiv:2111.02770},
  year={2021}
}

@misc{willems_2020,
title={lcswillems/torch-ac: Recurrent and multi-process PyTorch implementation of deep reinforcement Actor-Critic algorithms A2C and PPO},
url={https://github.com/lcswillems/torch-ac},
journal={GitHub},
author={Willems, Lucas},
year={2020}
}

@article{hinton2015distilling,
  title={Distilling the knowledge in a neural network},
  author={Hinton, Geoffrey and Vinyals, Oriol and Dean, Jeff and others},
  journal={arXiv preprint arXiv:1503.02531},
  volume={2},
  number={7},
  year={2015}
}

@article{gou2021knowledge,
  title={Knowledge distillation: A survey},
  author={Gou, Jianping and Yu, Baosheng and Maybank, Stephen J and Tao, Dacheng},
  journal={International Journal of Computer Vision},
  volume={129},
  number={6},
  pages={1789--1819},
  year={2021},
  publisher={Springer}
}

@article{taylor2009transfer,
  title={Transfer learning for reinforcement learning domains: A survey.},
  author={Taylor, Matthew E and Stone, Peter},
  journal={Journal of Machine Learning Research},
  volume={10},
  number={7},
  year={2009}
}

@article{pan2009survey,
  title={A survey on transfer learning},
  author={Pan, Sinno Jialin and Yang, Qiang},
  journal={IEEE Transactions on knowledge and data engineering},
  volume={22},
  number={10},
  pages={1345--1359},
  year={2009},
  publisher={IEEE}
}

@article{chen2021cross,
  title={Cross-modal Domain Adaptation for Cost-Efficient Visual Reinforcement Learning},
  author={Chen, Xiong-Hui and Jiang, Shengyi and Xu, Feng and Zhang, Zongzhang and Yu, Yang},
  journal={Advances in Neural Information Processing Systems},
  volume={34},
  pages={12520--12532},
  year={2021}
}

@inproceedings{zhan2015online,
  title={Online transfer learning in reinforcement learning domains},
  author={Zhan, Yusen and Taylor, Matthew E},
  booktitle={2015 AAAI Fall Symposium Series},
  year={2015}
}

@Inbook{Lazaric2012,
author="Lazaric, Alessandro",
editor="Wiering, Marco
and van Otterlo, Martijn",
title="Transfer in Reinforcement Learning: A Framework and a Survey",
bookTitle="Reinforcement Learning: State-of-the-Art",
year="2012",
publisher="Springer Berlin Heidelberg",
pages="143--173",
abstract="Transfer in reinforcement learning is a novel research area that focuses on the development of methods to transfer knowledge from a set of source tasks to a target task. Whenever the tasks are similar, the transferred knowledge can be used by a learning algorithm to solve the target task and significantly improve its performance (e.g., by reducing the number of samples needed to achieve a nearly optimal performance). In this chapter we provide a formalization of the general transfer problem, we identify the main settings which have been investigated so far, and we review the most important approaches to transfer in reinforcement learning.",
}

@inproceedings{torrey2013teaching,
  title={Teaching on a budget: Agents advising agents in reinforcement learning},
  author={Torrey, Lisa and Taylor, Matthew},
  booktitle={Proceedings of the 2013 international conference on Autonomous agents and multi-agent systems},
  pages={1053--1060},
  year={2013}
}

@article{levine2020offline,
  title={Offline reinforcement learning: Tutorial, review, and perspectives on open problems},
  author={Levine, Sergey and Kumar, Aviral and Tucker, George and Fu, Justin},
  journal={arXiv preprint arXiv:2005.01643},
  year={2020}
}

@inproceedings{agarwal_deep_2021,
	title = {Deep Reinforcement Learning at the Edge of the Statistical Precipice},
	volume = {34},
	url = {https://proceedings.neurips.cc/paper/2021/hash/f514cec81cb148559cf475e7426eed5e-Abstract.html},
	abstract = {Deep reinforcement learning ({RL}) algorithms are predominantly evaluated by comparing their relative performance on a large suite of tasks. Most published results on deep {RL} benchmarks compare point estimates of aggregate performance such as mean and median scores across tasks, ignoring the statistical uncertainty implied by the use of a finite number of training runs. Beginning with the Arcade Learning Environment ({ALE}), the shift towards computationally-demanding benchmarks has led to the practice of evaluating only a small number of runs per task, exacerbating the statistical uncertainty in point estimates. In this paper, we argue that reliable evaluation in the few run deep {RL} regime cannot ignore the uncertainty in results without running the risk of slowing down progress in the field. We illustrate this point using a case study on the Atari 100k benchmark, where we find substantial discrepancies between conclusions drawn from point estimates alone versus a more thorough statistical analysis. With the aim of increasing the field's confidence in reported results with a handful of runs, we advocate for reporting interval estimates of aggregate performance and propose performance profiles to account for the variability in results, as well as present more robust and efficient aggregate metrics, such as interquartile mean scores, to achieve small uncertainty in results. Using such statistical tools, we scrutinize performance evaluations of existing algorithms on other widely used {RL} benchmarks including the {ALE}, Procgen, and the {DeepMind} Control Suite, again revealing discrepancies in prior comparisons. Our findings call for a change in how we evaluate performance in deep {RL}, for which we present a more rigorous evaluation methodology, accompanied with an open-source library rliable, to prevent unreliable results from stagnating the field. This work received an outstanding paper award at {NeurIPS} 2021.},
	pages = {29304--29320},
	booktitle = {Advances in Neural Information Processing Systems},
	publisher = {Curran Associates, Inc.},
	author = {Agarwal, Rishabh and Schwarzer, Max and Castro, Pablo Samuel and Courville, Aaron C and Bellemare, Marc},
	urldate = {2022-07-21},
	date = {2021},
	file = {Agarwal et al. - 2021 - Deep Reinforcement Learning at the Edge of the Sta.pdf:/home/balloch/Zotero/storage/VKKYPIKP/Agarwal et al. - 2021 - Deep Reinforcement Learning at the Edge of the Sta.pdf:application/pdf}
}

@article{ha2018recurrent,
  title={Recurrent world models facilitate policy evolution},
  author={Ha, David and Schmidhuber, J{\"u}rgen},
  journal={Advances in neural information processing systems},
  volume={31},
  year={2018}
}

@inproceedings{ma2022maxent,
  title={MaxEnt Dreamer: Maximum Entropy Reinforcement Learning with World Model},
  author={Ma, Hongying and Xue, Wuyang and Ying, Rendong and Liu, PeiLin},
  booktitle={2022 International Joint Conference on Neural Networks (IJCNN)},
  pages={1--9},
  year={2022},
  organization={IEEE}
}

@article{sutton1991dyna,
author = {Sutton, Richard S.},
title = {Dyna, an Integrated Architecture for Learning, Planning, and Reacting},
year = {1991},
issue_date = {Aug. 1991},
publisher = {Association for Computing Machinery},
address = {New York, NY, USA},
volume = {2},
number = {4},
issn = {0163-5719},
url = {https://doi.org/10.1145/122344.122377},
doi = {10.1145/122344.122377},
abstract = {Dyna is an AI architecture that integrates learning, planning, and reactive execution. Learning methods are used in Dyna both for compiling planning results and for updating a model of the effects of the agent's actions on the world. Planning is incremental and can use the probabilistic and ofttimes incorrect world models generated by learning processes. Execution is fully reactive in the sense that no planning intervenes between perception and action. Dyna relies on machine learning methods for learning from examples---these are among the basic building blocks making up the architecture---yet is not tied to any particular method. This paper briefly introduces Dyna and discusses its strengths and weaknesses with respect to other architectures.},
journal = {SIGART Bull.},
pages = {160–163},
numpages = {4}
}

@inproceedings{hessel2018rainbow,
  title={Rainbow: Combining improvements in deep reinforcement learning},
  author={Hessel, Matteo and Modayil, Joseph and Van Hasselt, Hado and Schaul, Tom and Ostrovski, Georg and Dabney, Will and Horgan, Dan and Piot, Bilal and Azar, Mohammad and Silver, David},
  booktitle={Thirty-second AAAI conference on artificial intelligence},
  year={2018}
}

@inproceedings{sekar2020planning,
  title={Planning to explore via self-supervised world models},
  author={Sekar, Ramanan and Rybkin, Oleh and Daniilidis, Kostas and Abbeel, Pieter and Hafner, Danijar and Pathak, Deepak},
  booktitle={International Conference on Machine Learning},
  pages={8583--8592},
  year={2020},
  organization={PMLR}
}

@article{mendonca2021lexa,
  title={Discovering and achieving goals via world models},
  author={Mendonca, Russell and Rybkin, Oleh and Daniilidis, Kostas and Hafner, Danijar and Pathak, Deepak},
  journal={Advances in Neural Information Processing Systems},
  volume={34},
  pages={24379--24391},
  year={2021}
}

@inproceedings{abbeel2004apprenticeship,
  title={Apprenticeship learning via inverse reinforcement learning},
  author={Abbeel, Pieter and Ng, Andrew Y},
  booktitle={Proceedings of the twenty-first international conference on Machine learning},
  pages={1},
  year={2004}
}

@inproceedings{laroche2019safe,
  title={Safe policy improvement with baseline bootstrapping},
  author={Laroche, Romain and Trichelair, Paul and Des Combes, Remi Tachet},
  booktitle={International conference on machine learning},
  pages={3652--3661},
  year={2019},
  organization={PMLR}
}

@article{hafner2022director,
  title={Deep hierarchical planning from pixels},
  author={Hafner, Danijar and Lee, Kuang-Huei and Fischer, Ian and Abbeel, Pieter},
  journal={Advances in Neural Information Processing Systems},
  volume={35},
  pages={26091--26104},
  year={2022}
}

@article{hafner2023dv3,
  title={Mastering diverse domains through world models},
  author={Hafner, Danijar and Pasukonis, Jurgis and Ba, Jimmy and Lillicrap, Timothy},
  journal={arXiv preprint arXiv:2301.04104},
  year={2023}
}

@InProceedings{hafner2019planet,
  title = 	 {Learning Latent Dynamics for Planning from Pixels},
  author =       {Hafner, Danijar and Lillicrap, Timothy and Fischer, Ian and Villegas, Ruben and Ha, David and Lee, Honglak and Davidson, James},
  booktitle = 	 {Proceedings of the 36th International Conference on Machine Learning},
  pages = 	 {2555--2565},
  year = 	 {2019},
  editor = 	 {Chaudhuri, Kamalika and Salakhutdinov, Ruslan},
  volume = 	 {97},
  abstract = 	 {Planning has been very successful for control tasks with known environment dynamics. To leverage planning in unknown environments, the agent needs to learn the dynamics from interactions with the world. However, learning dynamics models that are accurate enough for planning has been a long-standing challenge, especially in image-based domains. We propose the Deep Planning Network (PlaNet), a purely model-based agent that learns the environment dynamics from images and chooses actions through fast online planning in latent space. To achieve high performance, the dynamics model must accurately predict the rewards ahead for multiple time steps. We approach this using a latent dynamics model with both deterministic and stochastic transition components. Moreover, we propose a multi-step variational inference objective that we name latent overshooting. Using only pixel observations, our agent solves continuous control tasks with contact dynamics, partial observability, and sparse rewards, which exceed the difficulty of tasks that were previously solved by planning with learned models. PlaNet uses substantially fewer episodes and reaches final performance close to and sometimes higher than strong model-free algorithms.}
}

@inproceedings{haarnoja2018soft,
  title={Soft actor-critic: Off-policy maximum entropy deep reinforcement learning with a stochastic actor},
  author={Haarnoja, Tuomas and Zhou, Aurick and Abbeel, Pieter and Levine, Sergey},
  booktitle={International conference on machine learning},
  pages={1861--1870},
  year={2018},
  organization={PMLR}
}

@inproceedings{hazan2019provably,
  title={Provably efficient maximum entropy exploration},
  author={Hazan, Elad and Kakade, Sham and Singh, Karan and Van Soest, Abby},
  booktitle={International Conference on Machine Learning},
  pages={2681--2691},
  year={2019},
  organization={PMLR}
}

@inproceedings{pathak2017curiousity,
author = {Pathak, Deepak and Agrawal, Pulkit and Efros, Alexei A. and Darrell, Trevor},
title = {Curiosity-Driven Exploration by Self-Supervised Prediction},
year = {2017},
publisher = {JMLR.org},
abstract = {In many real-world scenarios, rewards extrinsic to the agent are extremely sparse, or absent altogether. In such cases, curiosity can serve as an intrinsic reward signal to enable the agent to explore its environment and learn skills that might be useful later in its life. We formulate curiosity as the error in an agent's ability to predict the consequence of its own actions in a visual feature space learned by a self-supervised inverse dynamics model. Our formulation scales to high-dimensional continuous state spaces like images, bypasses the difficulties of directly predicting pixels, and, critically, ignores the aspects of the environment that cannot affect the agent. The proposed approach is evaluated in two environments: VizDoom and Super Mario Bros. Three broad settings are investigated: 1) sparse extrinsic reward, where curiosity allows for far fewer interactions with the environment to reach the goal; 2) exploration with no extrinsic reward, where curiosity pushes the agent to explore more efficiently; and 3) generalization to unseen scenarios (e.g. new levels of the same game) where the knowledge gained from earlier experience helps the agent explore new places much faster than starting from scratch.},
booktitle = {Proceedings of the 34th International Conference on Machine Learning - Volume 70},
pages = {2778–2787},
numpages = {10},
location = {Sydney, NSW, Australia},
series = {ICML'17}
}

@inproceedings{bellemare2016unifying,
author = {Bellemare, Marc G. and Srinivasan, Sriram and Ostrovski, Georg and Schaul, Tom and Saxton, David and Munos, R\'{e}mi},
title = {Unifying Count-Based Exploration and Intrinsic Motivation},
year = {2016},
abstract = {We consider an agent's uncertainty about its environment and the problem of generalizing this uncertainty across states. Specifically, we focus on the problem of exploration in non-tabular reinforcement learning. Drawing inspiration from the intrinsic motivation literature, we use density models to measure uncertainty, and propose a novel algorithm for deriving a pseudo-count from an arbitrary density model. This technique enables us to generalize count-based exploration algorithms to the non-tabular case. We apply our ideas to Atari 2600 games, providing sensible pseudo-counts from raw pixels. We transform these pseudo-counts into exploration bonuses and obtain significantly improved exploration in a number of hard games, including the infamously difficult MONTEZUMA'S REVENGE.},
booktitle = {Proceedings of the 30th International Conference on Neural Information Processing Systems},
}

@inproceedings{andrychowicz2017hindsight,
 author = {Andrychowicz, Marcin and Wolski, Filip and Ray, Alex and Schneider, Jonas and Fong, Rachel and Welinder, Peter and McGrew, Bob and Tobin, Josh and Pieter Abbeel, OpenAI and Zaremba, Wojciech},
 booktitle = {Advances in Neural Information Processing Systems},
 editor = {I. Guyon and U. Von Luxburg and S. Bengio and H. Wallach and R. Fergus and S. Vishwanathan and R. Garnett},
 pages = {},
 publisher = {Curran Associates, Inc.},
 title = {Hindsight Experience Replay},
 volume = {30},
 year = {2017}
}

@misc{yang2021exploration,
  doi = {10.48550/ARXIV.2109.06668},
  url = {https://arxiv.org/abs/2109.06668},
  author = {Yang, Tianpei and Tang, Hongyao and Bai, Chenjia and Liu, Jinyi and Hao, Jianye and Meng, Zhaopeng and Liu, Peng and Wang, Zhen},
  keywords = {Artificial Intelligence (cs.AI), Machine Learning (cs.LG), Multiagent Systems (cs.MA), FOS: Computer and information sciences, FOS: Computer and information sciences},
  title = {Exploration in Deep Reinforcement Learning: A Comprehensive Survey},
  publisher = {arXiv},
  year = {2021},
  copyright = {arXiv.org perpetual, non-exclusive license}
}

@article{schrittwieser2020muzero,
  title={Mastering atari, go, chess and shogi by planning with a learned model},
  author={Schrittwieser, Julian and Antonoglou, Ioannis and Hubert, Thomas and Simonyan, Karen and Sifre, Laurent and Schmitt, Simon and Guez, Arthur and Lockhart, Edward and Hassabis, Demis and Graepel, Thore and others},
  journal={Nature},
  volume={588},
  number={7839},
  pages={604--609},
  year={2020},
  publisher={Nature Publishing Group}
}

@article{ecoffet2021first,
  title={First return, then explore},
  author={Ecoffet, Adrien and Huizinga, Joost and Lehman, Joel and Stanley, Kenneth O and Clune, Jeff},
  journal={Nature},
  volume={590},
  number={7847},
  pages={580--586},
  year={2021},
  publisher={Nature Publishing Group}
}

@article{schmidhuber2015learning,
  title={On learning to think: Algorithmic information theory for novel combinations of reinforcement learning controllers and recurrent neural world models},
  author={Schmidhuber, J{\"u}rgen},
  journal={arXiv preprint arXiv:1511.09249},
  year={2015}
}

@inproceedings{schmidhuber1991curious,
  title={Curious model-building control systems},
  author={Schmidhuber, J{\"u}rgen},
  booktitle={Proc. international joint conference on neural networks},
  pages={1458--1463},
  year={1991}
}

@article{balloch2023worldcloner,
  title={Neuro-Symbolic World Models for Adapting to Open World Novelty },
  author={Balloch, Jonathan and Lin, Zhiyu and Wright, Robert and Peng, Xiangyu and Hussain, Mustafa and Srinivas, Aarun and Kim, Julia and Riedl, Mark O},
  journal={arXiv preprint arXiv:2301.18536},
  year={2023}
}

@article{van2017vqvae,
  title={Neural discrete representation learning},
  author={Van Den Oord, Aaron and Vinyals, Oriol and others},
  journal={Advances in neural information processing systems},
  volume={30},
  year={2017}
}

@article{williams1992simple,
  title={Simple statistical gradient-following algorithms for connectionist reinforcement learning},
  author={Williams, Ronald J},
  journal={Machine learning},
  volume={8},
  pages={229--256},
  year={1992},
  publisher={Springer}
}

@article{tsitsiklis1996analysis,
  title={Analysis of temporal-diffference learning with function approximation},
  author={Tsitsiklis, John and Van Roy, Benjamin},
  journal={Advances in neural information processing systems},
  volume={9},
  year={1996}
}

@article{lin1992self,
  title={Self-improving reactive agents based on reinforcement learning, planning and teaching},
  author={Lin, Long-Ji},
  journal={Machine learning},
  volume={8},
  pages={293--321},
  year={1992},
  publisher={Springer}
}

@inproceedings{schaul2016per,
title={Prioritized Experience Replay},
author={Tom Schaul and John Quan and Ioannis Antonoglou and David Silver},
booktitle={International Conference on Learning Representations},
year={2016},
url={https://arxiv.org/abs/1511.05952}
}

@InProceedings{fedus2020revisiting,
  title = 	 {Revisiting Fundamentals of Experience Replay},
  author =       {Fedus, William and Ramachandran, Prajit and Agarwal, Rishabh and Bengio, Yoshua and Larochelle, Hugo and Rowland, Mark and Dabney, Will},
  booktitle = 	 {Proceedings of the 37th International Conference on Machine Learning},
  pages = 	 {3061--3071},
  year = 	 {2020},
  editor = 	 {III, Hal Daumé and Singh, Aarti},
  volume = 	 {119},
  series = 	 {Proceedings of Machine Learning Research},
  month = 	 {7},
  publisher =    {PMLR},
  pdf = 	 {http://proceedings.mlr.press/v119/fedus20a/fedus20a.pdf},
  url = 	 {https://proceedings.mlr.press/v119/fedus20a.html},
  abstract = 	 {Experience replay is central to off-policy algorithms in deep reinforcement learning (RL), but there remain significant gaps in our understanding. We therefore present a systematic and extensive analysis of experience replay in Q-learning methods, focusing on two fundamental properties: the replay capacity and the ratio of learning updates to experience collected (replay ratio). Our additive and ablative studies upend conventional wisdom around experience replay {—} greater capacity is found to substantially increase the performance of certain algorithms, while leaving others unaffected. Counterintuitively we show that theoretically ungrounded, uncorrected n-step returns are uniquely beneficial while other techniques confer limited benefit for sifting through larger memory. Separately, by directly controlling the replay ratio we contextualize previous observations in the literature and empirically measure its importance across a variety of deep RL algorithms. Finally, we conclude by testing a set of hypotheses on the nature of these performance benefits.}
}

@article{zhang2017deeper,
  title={A deeper look at experience replay},
  author={Zhang, Shangtong and Sutton, Richard S},
  journal={arXiv preprint arXiv:1712.01275},
  year={2017}
}

@inproceedings{lillicrap2016ddpg,
  author       = {Timothy P. Lillicrap and
                  Jonathan J. Hunt and
                  Alexander Pritzel and
                  Nicolas Heess and
                  Tom Erez and
                  Yuval Tassa and
                  David Silver and
                  Daan Wierstra},
  editor       = {Yoshua Bengio and
                  Yann LeCun},
  title        = {Continuous control with deep reinforcement learning},
  booktitle    = {4th International Conference on Learning Representations, {ICLR} 2016,
                  San Juan, Puerto Rico, May 2-4, 2016, Conference Track Proceedings},
  year         = {2016},
  url          = {http://arxiv.org/abs/1509.02971},
  timestamp    = {Thu, 25 Jul 2019 14:25:37 +0200},
  biburl       = {https://dblp.org/rec/journals/corr/LillicrapHPHETS15.bib},
  bibsource    = {dblp computer science bibliography, https://dblp.org}
}

@article{li2024prioritized,
  title={Prioritized experience replay based on dynamics priority},
  author={Li, Hu and Qian, Xuezhong and Song, Wei},
  journal={Scientific Reports},
  volume={14},
  number={1},
  pages={6014},
  year={2024},
  publisher={Nature Publishing Group UK London}
}

@INPROCEEDINGS{gao2021expreward,
  author={Gao, Jiashan and Li, Xiaohui and Liu, Weihui and Zhao, Jingchao},
  booktitle={2021 International Conference on Machine Learning and Intelligent Systems Engineering (MLISE)}, 
  title={Prioritized Experience Replay Method Based on Experience Reward}, 
  year={2021},
  volume={},
  number={},
  pages={214-219},
  keywords={Training;Reinforcement learning;Sampling methods;Stability analysis;Timing;Labeling;Task analysis;Reinforcement Learning;experience replay;Training acceleration method},
  doi={10.1109/MLISE54096.2021.00045}}

@inproceedings{oh2022modelaugmented,
    title={Model-augmented Prioritized Experience Replay},
    author={Youngmin Oh and Jinwoo Shin and Eunho Yang and Sung Ju Hwang},
    booktitle={International Conference on Learning Representations},
    year={2022},
    url={https://openreview.net/forum?id=WuEiafqdy9H}
}

@inproceedings{wang2016acer,
  title={Sample Efficient Actor-Critic with Experience Replay},
  author={Wang, Ziyu and Bapst, Victor and Heess, Nicolas and Mnih, Volodymyr and Munos, Remi and Kavukcuoglu, Koray and de Freitas, Nando},
  booktitle={International Conference on Learning Representations},
  year={2016}
}

@inproceedings{hansen2022temporal,
  title={Temporal Difference Learning for Model Predictive Control},
  author={Hansen, Nicklas A and Su, Hao and Wang, Xiaolong},
  booktitle={International Conference on Machine Learning},
  pages={8387--8406},
  year={2022},
  organization={PMLR}
}

@inproceedings{
yarats2021image,
title={Image Augmentation Is All You Need: Regularizing Deep Reinforcement Learning from Pixels},
author={Denis Yarats and Ilya Kostrikov and Rob Fergus},
booktitle={International Conference on Learning Representations},
year={2021},
url={https://openreview.net/forum?id=GY6-6sTvGaf}
}

@InProceedings{lambert2020mismatch,
  title = 	 {Objective Mismatch in Model-based Reinforcement Learning},
  author =       {Lambert, Nathan and Amos, Brandon and Yadan, Omry and Calandra, Roberto},
  booktitle = 	 {Proceedings of the 2nd Conference on Learning for Dynamics and Control},
  pages = 	 {761--770},
  year = 	 {2020},
  editor = 	 {Bayen, Alexandre M. and Jadbabaie, Ali and Pappas, George and Parrilo, Pablo A. and Recht, Benjamin and Tomlin, Claire and Zeilinger, Melanie},
  volume = 	 {120},
  series = 	 {Proceedings of Machine Learning Research},
  month = 	 {6},
  publisher =    {PMLR},
  pdf = 	 {http://proceedings.mlr.press/v120/lambert20a/lambert20a.pdf},
  url = 	 {https://proceedings.mlr.press/v120/lambert20a.html},
  abstract = 	 {Model-based reinforcement learning (MBRL) has been shown to be a powerful framework for data-efficiently learning control of continuous tasks. Recent work in MBRL has mostly focused on using more advanced function approximators and planning schemes, with little development of the general framework.In this paper, we identify a fundamental issue of the standard MBRL framework – what we call the objective mismatch issue. Objective mismatch arises when one objective is optimized in the hope that a second, often uncorrelated, metric will also be optimized. In the context of MBRL, we characterize the objective mismatch between training the forward dynamics model w.r.t. the likelihood of the one-step ahead prediction, and the overall goal of improving performance on a downstream control task. For example, this issue can emerge with the realization that dynamics models effective for a specific task do not necessarily need to be globally accurate, and vice versa globally accurate models might not be sufficiently accurate locally to obtain good control performance on a specific task. In our experiments, we study this objective mismatch issue and demonstrate that the likelihood of one-step ahead predictions is not always correlated with control performance. This observation highlights a critical limitation in the MBRL framework which will require further research to be fully understood and addressed. We propose an initial method to mitigate the mismatch issue by re-weighting dynamics model training. Building on it, we conclude with a discussion about other potential directions of research for addressing this issue.}
}

@inproceedings{locatello2019challenging,
  title={Challenging common assumptions in the unsupervised learning of disentangled representations},
  author={Locatello, Francesco and Bauer, Stefan and Lucic, Mario and Raetsch, Gunnar and Gelly, Sylvain and Sch{\"o}lkopf, Bernhard and Bachem, Olivier},
  booktitle={international conference on machine learning},
  pages={4114--4124},
  year={2019},
  organization={PMLR}
}

@article{carbonneau2022measuring,
  title={Measuring disentanglement: A review of metrics},
  author={Carbonneau, Marc-Andr{\'e} and Zaidi, Julian and Boilard, Jonathan and Gagnon, Ghyslain},
  journal={IEEE transactions on neural networks and learning systems},
  year={2022},
  publisher={IEEE}
}

@inproceedings{higgins2017betavae,
title={beta-{VAE}: Learning Basic Visual Concepts with a Constrained Variational Framework},
author={Irina Higgins and Loic Matthey and Arka Pal and Christopher Burgess and Xavier Glorot and Matthew Botvinick and Shakir Mohamed and Alexander Lerchner},
booktitle={International Conference on Learning Representations},
year={2017},
url={https://openreview.net/forum?id=Sy2fzU9gl}
}

@inproceedings{koh2020concept,
  title={Concept bottleneck models},
  author={Koh, Pang Wei and Nguyen, Thao and Tang, Yew Siang and Mussmann, Stephen and Pierson, Emma and Kim, Been and Liang, Percy},
  booktitle={International conference on machine learning},
  pages={5338--5348},
  year={2020},
  organization={PMLR}
}

@inproceedings{burda2018exploration,
  title={Exploration by random network distillation},
  author={Burda, Yuri and Edwards, Harrison and Storkey, Amos and Klimov, Oleg},
  booktitle={International Conference on Learning Representations},
  year={2018}
}

@article{RISE,
  title={R$\backslash$'enyi State Entropy for Exploration Acceleration in Reinforcement Learning},
  author={Yuan, Mingqi and Pun, Man-on and Wang, Dong},
  journal={arXiv preprint arXiv:2203.04297},
  year={2022}
}

@misc{REVD,
      title={Rewarding Episodic Visitation Discrepancy for Exploration in Reinforcement Learning}, 
      author={Mingqi Yuan and Bo Li and Xin Jin and Wenjun Zeng},
      year={2022},
      eprint={2209.08842},
      archivePrefix={arXiv},
      primaryClass={cs.LG}
}

@MISC{Schmidhuber91apossibility,
    author = {Jürgen Schmidhuber},
    title = {A Possibility for Implementing Curiosity and Boredom in Model-Building Neural Controllers},
    year = {1991}
}

@inproceedings{pathak2017icm,
author = {Pathak, Deepak and Agrawal, Pulkit and Efros, Alexei A. and Darrell, Trevor},
title = {Curiosity-Driven Exploration by Self-Supervised Prediction},
year = {2017},
publisher = {JMLR.org},
abstract = {In many real-world scenarios, rewards extrinsic to the agent are extremely sparse, or absent altogether. In such cases, curiosity can serve as an intrinsic reward signal to enable the agent to explore its environment and learn skills that might be useful later in its life. We formulate curiosity as the error in an agent's ability to predict the consequence of its own actions in a visual feature space learned by a self-supervised inverse dynamics model. Our formulation scales to high-dimensional continuous state spaces like images, bypasses the difficulties of directly predicting pixels, and, critically, ignores the aspects of the environment that cannot affect the agent. The proposed approach is evaluated in two environments: VizDoom and Super Mario Bros. Three broad settings are investigated: 1) sparse extrinsic reward, where curiosity allows for far fewer interactions with the environment to reach the goal; 2) exploration with no extrinsic reward, where curiosity pushes the agent to explore more efficiently; and 3) generalization to unseen scenarios (e.g. new levels of the same game) where the knowledge gained from earlier experience helps the agent explore new places much faster than starting from scratch.},
booktitle = {Proceedings of the 34th International Conference on Machine Learning - Volume 70},
pages = {2778–2787},
numpages = {10},
location = {Sydney, NSW, Australia},
series = {ICML'17}
}

@Article{Dulac-Arnold2021rwrl,
author={Dulac-Arnold, Gabriel
and Levine, Nir
and Mankowitz, Daniel J.
and Li, Jerry
and Paduraru, Cosmin
and Gowal, Sven
and Hester, Todd},
title={Challenges of real-world reinforcement learning: definitions, benchmarks and analysis},
journal={Machine Learning},
year={2021},
month={9},
day={01},
volume={110},
number={9},
pages={2419-2468},
abstract={Reinforcement learning (RL) has proven its worth in a series of artificial domains, and is beginning to show some successes in real-world scenarios. However, much of the research advances in RL are hard to leverage in real-world systems due to a series of assumptions that are rarely satisfied in practice. In this work, we identify and formalize a series of independent challenges that embody the difficulties that must be addressed for RL to be commonly deployed in real-world systems. For each challenge, we define it formally in the context of a Markov Decision Process, analyze the effects of the challenge on state-of-the-art learning algorithms, and present some existing attempts at tackling it. We believe that an approach that addresses our set of proposed challenges would be readily deployable in a large number of real world problems. Our proposed challenges are implemented in a suite of continuous control environments called realworldrl-suite which we propose an as an open-source benchmark.},
issn={1573-0565},
doi={10.1007/s10994-021-05961-4},
url={https://doi.org/10.1007/s10994-021-05961-4}
}

@article{zhu2023transfer,
  title={Transfer learning in deep reinforcement learning: A survey},
  author={Zhu, Zhuangdi and Lin, Kaixiang and Jain, Anil K and Zhou, Jiayu},
  journal={IEEE Transactions on Pattern Analysis and Machine Intelligence},
  year={2023},
  publisher={IEEE}
}

@article{reale2001temperament,
author = {Réale, Denis and Reader, Simon M. and Sol, Daniel and McDougall, Peter T. and Dingemanse, Niels J.},
title = {Integrating animal temperament within ecology and evolution},
journal = {Biological Reviews},
volume = {82},
number = {2},
pages = {291-318},
keywords = {temperament, personality, individual differences, behavioural syndromes, coping styles, context-specificity, shyness-boldness, exploration, aggressiveness, sociability},
doi = {https://doi.org/10.1111/j.1469-185X.2007.00010.x},
url = {https://onlinelibrary.wiley.com/doi/abs/10.1111/j.1469-185X.2007.00010.x},
eprint = {https://onlinelibrary.wiley.com/doi/pdf/10.1111/j.1469-185X.2007.00010.x},
abstract = {Abstract Temperament describes the idea that individual behavioural differences are repeatable over time and across situations. This common phenomenon covers numerous traits, such as aggressiveness, avoidance of novelty, willingness to take risks, exploration, and sociality. The study of temperament is central to animal psychology, behavioural genetics, pharmacology, and animal husbandry, but relatively few studies have examined the ecology and evolution of temperament traits. This situation is surprising, given that temperament is likely to exert an important influence on many aspects of animal ecology and evolution, and that individual variation in temperament appears to be pervasive amongst animal species. Possible explanations for this neglect of temperament include a perceived irrelevance, an insufficient understanding of the link between temperament traits and fitness, and a lack of coherence in terminology with similar traits often given different names, or different traits given the same name. We propose that temperament can and should be studied within an evolutionary ecology framework and provide a terminology that could be used as a working tool for ecological studies of temperament. Our terminology includes five major temperament trait categories: shyness-boldness, exploration-avoidance, activity, sociability and aggressiveness. This terminology does not make inferences regarding underlying dispositions or psychological processes, which may have restrained ecologists and evolutionary biologists from working on these traits. We present extensive literature reviews that demonstrate that temperament traits are heritable, and linked to fitness and to several other traits of importance to ecology and evolution. Furthermore, we describe ecologically relevant measurement methods and point to several ecological and evolutionary topics that would benefit from considering temperament, such as phenotypic plasticity, conservation biology, population sampling, and invasion biology.},
year = {2007}
}

@article{rlroboticssurvey,
author = {Julian Ibarz and Jie Tan and Chelsea Finn and Mrinal Kalakrishnan and Peter Pastor and Sergey Levine},
title ={How to train your robot with deep reinforcement learning: lessons we have learned},
journal = {The International Journal of Robotics Research},
volume = {40},
number = {4-5},
pages = {698-721},
year = {2021},
doi = {10.1177/0278364920987859},
URL = { https://doi.org/10.1177/0278364920987859},
eprint = {         https://doi.org/10.1177/0278364920987859},
abstract = { Deep reinforcement learning (RL) has emerged as a promising approach for autonomously acquiring complex behaviors from low-level sensor observations. Although a large portion of deep RL research has focused on applications in video games and simulated control, which does not connect with the constraints of learning in real environments, deep RL has also demonstrated promise in enabling physical robots to learn complex skills in the real world. At the same time, real-world robotics provides an appealing domain for evaluating such algorithms, as it connects directly to how humans learn: as an embodied agent in the real world. Learning to perceive and move in the real world presents numerous challenges, some of which are easier to address than others, and some of which are often not considered in RL research that focuses only on simulated domains. In this review article, we present a number of case studies involving robotic deep RL. Building off of these case studies, we discuss commonly perceived challenges in deep RL and how they have been addressed in these works. We also provide an overview of other outstanding challenges, many of which are unique to the real-world robotics setting and are not often the focus of mainstream RL research. Our goal is to provide a resource both for roboticists and machine learning researchers who are interested in furthering the progress of deep RL in the real world. }
}

@article{casper2023rlhf,
  title={Open Problems and Fundamental Limitations of Reinforcement Learning from Human Feedback},
  author={Casper, Stephen and Davies, Xander and Shi, Claudia and Gilbert, Thomas Krendl and Scheurer, J{\'e}r{\'e}my and Rando, Javier and Freedman, Rachel and Korbak, Tomasz and Lindner, David and Freire, Pedro and others},
  journal={Transactions on Machine Learning Research},
  year={2023}
}

@article{ladosz2022exploration,
  title={Exploration in deep reinforcement learning: A survey},
  author={Ladosz, Pawel and Weng, Lilian and Kim, Minwoo and Oh, Hyondong},
  journal={Information Fusion},
  volume={85},
  pages={1--22},
  year={2022},
  publisher={Elsevier}
}

@article{da2019survey,
  title={A survey on transfer learning for multiagent reinforcement learning systems},
  author={Da Silva, Felipe Leno and Costa, Anna Helena Reali},
  journal={Journal of Artificial Intelligence Research},
  volume={64},
  pages={645--703},
  year={2019}
}

@inproceedings{zhao2020sim,
  title={Sim-to-real transfer in deep reinforcement learning for robotics: a survey},
  author={Zhao, Wenshuai and Queralta, Jorge Pe{\~n}a and Westerlund, Tomi},
  booktitle={2020 IEEE symposium series on computational intelligence (SSCI)},
  pages={737--744},
  year={2020},
  organization={IEEE}
}

@inproceedings{burda2018large,
  title={Large-Scale Study of Curiosity-Driven Learning},
  author={Burda, Yuri and Edwards, Harri and Pathak, Deepak and Storkey, Amos and Darrell, Trevor and Efros, Alexei A},
  booktitle={International Conference on Learning Representations},
  year={2018}
}

@article{konidaris2012transfer,
  author  = {George Konidaris and Ilya Scheidwasser and Andrew Barto},
  title   = {Transfer in Reinforcement Learning via Shared Features},
  journal = {Journal of Machine Learning Research},
  year    = {2012},
  volume  = {13},
  number  = {45},
  pages   = {1333--1371},
  url     = {http://jmlr.org/papers/v13/konidaris12a.html}
}

@article{barreto2017successor,
  title={Successor features for transfer in reinforcement learning},
  author={Barreto, Andr{\'e} and Dabney, Will and Munos, R{\'e}mi and Hunt, Jonathan J and Schaul, Tom and van Hasselt, Hado P and Silver, David},
  journal={Advances in neural information processing systems},
  volume={30},
  year={2017}
}

@article{MinigridMiniworld23,
  author       = {Maxime Chevalier-Boisvert and Bolun Dai and Mark Towers and Rodrigo de Lazcano and Lucas Willems and Salem Lahlou and Suman Pal and Pablo Samuel Castro and Jordan Terry},
  title        = {Minigrid \& Miniworld: Modular \& Customizable Reinforcement Learning Environments for Goal-Oriented Tasks},
  journal      = {CoRR},
  volume       = {abs/2306.13831},
  year         = {2023},
}

@article{yuan2022renyi,
  title={R{\'e}nyi state entropy maximization for exploration acceleration in reinforcement learning},
  author={Yuan, Mingqi and Pun, Man-On and Wang, Dong},
  journal={IEEE Transactions on Artificial Intelligence},
  year={2022},
  publisher={IEEE}
}

@misc{lillicrap2019continuous,
      title={Continuous control with deep reinforcement learning}, 
      author={Timothy P. Lillicrap and Jonathan J. Hunt and Alexander Pritzel and Nicolas Heess and Tom Erez and Yuval Tassa and David Silver and Daan Wierstra},
      year={2019},
      eprint={1509.02971},
      archivePrefix={arXiv},
      primaryClass={cs.LG}
}

@inproceedings{
eysenbach2018diversity,
title={Diversity is All You Need: Learning Skills without a Reward Function},
author={Benjamin Eysenbach and Abhishek Gupta and Julian Ibarz and Sergey Levine},
booktitle={International Conference on Learning Representations},
year={2019},
url={https://openreview.net/forum?id=SJx63jRqFm},
}

@inproceedings{yuan2022rewarding,
  title={Rewarding Episodic Visitation Discrepancy for Exploration in Reinforcement Learning},
  author={Yuan, Mingqi and Li, Bo and Jin, Xin and Zeng, Wenjun},
  booktitle={Deep Reinforcement Learning Workshop NeurIPS 2022},
  year={2022}
}

@inproceedings{raileanu2019ride,
  title={RIDE: Rewarding Impact-Driven Exploration for Procedurally-Generated Environments},
  author={Raileanu, Roberta and Rockt{\"a}schel, Tim},
  booktitle={International Conference on Learning Representations},
  year={2019}
}

@inproceedings{
plappert2018parameter,
title={Parameter Space Noise for Exploration},
author={Matthias Plappert and Rein Houthooft and Prafulla Dhariwal and Szymon Sidor and Richard Y. Chen and Xi Chen and Tamim Asfour and Pieter Abbeel and Marcin Andrychowicz},
booktitle={International Conference on Learning Representations},
year={2018},
url={https://openreview.net/forum?id=ByBAl2eAZ},
}

@inproceedings{
Badia2020Never,
title={Never Give Up: Learning Directed Exploration Strategies},
author={Adrià Puigdomènech Badia and Pablo Sprechmann and Alex Vitvitskyi and Daniel Guo and Bilal Piot and Steven Kapturowski and Olivier Tieleman and Martin Arjovsky and Alexander Pritzel and Andrew Bolt and Charles Blundell},
booktitle={International Conference on Learning Representations},
year={2020},
url={https://openreview.net/forum?id=Sye57xStvB}
}

@InProceedings{yu20girl,
  title = 	 {Intrinsic Reward Driven Imitation Learning via Generative Model},
  author =       {Yu, Xingrui and Lyu, Yueming and Tsang, Ivor},
  booktitle = 	 {Proceedings of the 37th International Conference on Machine Learning},
  pages = 	 {10925--10935},
  year = 	 {2020},
  editor = 	 {III, Hal Daumé and Singh, Aarti},
  volume = 	 {119},
  series = 	 {Proceedings of Machine Learning Research},
  month = 	 {7},
  publisher =    {PMLR},
  pdf = 	 {http://proceedings.mlr.press/v119/yu20d/yu20d.pdf},
  url = 	 {https://proceedings.mlr.press/v119/yu20d.html},
  abstract = 	 {Imitation learning in a high-dimensional environment is challenging. Most inverse reinforcement learning (IRL) methods fail to outperform the demonstrator in such a high-dimensional environment, e.g., Atari domain. To address this challenge, we propose a novel reward learning module to generate intrinsic reward signals via a generative model. Our generative method can perform better forward state transition and backward action encoding, which improves the module’s dynamics modeling ability in the environment. Thus, our module provides the imitation agent both the intrinsic intention of the demonstrator and a better exploration ability, which is critical for the agent to outperform the demonstrator. Empirical results show that our method outperforms state-of-the-art IRL methods on multiple Atari games, even with one-life demonstration. Remarkably, our method achieves performance that is up to 5 times the performance of the demonstration.}
}

@article{raffin2021stable,
  title={Stable-baselines3: Reliable reinforcement learning implementations},
  author={Raffin, Antonin and Hill, Ashley and Gleave, Adam and Kanervisto, Anssi and Ernestus, Maximilian and Dormann, Noah},
  journal={Journal of Machine Learning Research},
  volume={22},
  number={268},
  pages={1--8},
  year={2021}
}

@article{chentanez2004intrinsically,
  title={Intrinsically motivated reinforcement learning},
  author={Chentanez, Nuttapong and Barto, Andrew and Singh, Satinder},
  journal={Advances in neural information processing systems},
  volume={17},
  year={2004}
}

@article{van2018deadly,
  title={Deep reinforcement learning and the deadly triad},
  author={Van Hasselt, Hado and Doron, Yotam and Strub, Florian and Hessel, Matteo and Sonnerat, Nicolas and Modayil, Joseph},
  journal={arXiv preprint arXiv:1812.02648},
  year={2018}
}

@article{sutton1988td,
  title={Learning to predict by the methods of temporal differences},
  author={Sutton, Richard S},
  journal={Machine learning},
  volume={3},
  pages={9--44},
  year={1988},
  publisher={Springer}
}

@article{cho2014properties,
  title={On the properties of neural machine translation: Encoder-decoder approaches},
  author={Cho, Kyunghyun and Van Merri{\"e}nboer, Bart and Bahdanau, Dzmitry and Bengio, Yoshua},
  journal={arXiv preprint arXiv:1409.1259},
  year={2014}
}

@article{lecun1995convolutional,
  title={Convolutional networks for images, speech, and time series},
  author={LeCun, Yann and Bengio, Yoshua and others},
  publisher={Citeseer}
}

@inproceedings{schulman2015trust,
  title={Trust region policy optimization},
  author={Schulman, John and Levine, Sergey and Abbeel, Pieter and Jordan, Michael and Moritz, Philipp},
  booktitle={International conference on machine learning},
  pages={1889--1897},
  year={2015},
  organization={PMLR}
}

@article{wu2017scalable,
  title={Scalable trust-region method for deep reinforcement learning using kronecker-factored approximation},
  author={Wu, Yuhuai and Mansimov, Elman and Grosse, Roger B and Liao, Shun and Ba, Jimmy},
  journal={Advances in neural information processing systems},
  volume={30},
  year={2017}
}

@inproceedings{mnih2016asynchronous,
  title={Asynchronous methods for deep reinforcement learning},
  author={Mnih, Volodymyr and Badia, Adria Puigdomenech and Mirza, Mehdi and Graves, Alex and Lillicrap, Timothy and Harley, Tim and Silver, David and Kavukcuoglu, Koray},
  booktitle={International conference on machine learning},
  pages={1928--1937},
  year={2016},
  organization={PMLR}
}

@inproceedings{barth2018distributed,
  title={Distributed Distributional Deterministic Policy Gradients},
  author={Barth-Maron, Gabriel and Hoffman, Matthew W and Budden, David and Dabney, Will and Horgan, Dan and Dhruva, TB and Muldal, Alistair and Heess, Nicolas and Lillicrap, Timothy},
  booktitle={International Conference on Learning Representations},
  year={2018}
}

@inproceedings{espeholt2018impala,
  title={Impala: Scalable distributed deep-rl with importance weighted actor-learner architectures},
  author={Espeholt, Lasse and Soyer, Hubert and Munos, Remi and Simonyan, Karen and Mnih, Vlad and Ward, Tom and Doron, Yotam and Firoiu, Vlad and Harley, Tim and Dunning, Iain and others},
  booktitle={International conference on machine learning},
  pages={1407--1416},
  year={2018},
  organization={PMLR}
}

@article{liang2023comprehensive,
  title={A comprehensive survey on test-time adaptation under distribution shifts},
  author={Liang, Jian and He, Ran and Tan, Tieniu},
  journal={arXiv preprint arXiv:2303.15361},
  year={2023}
}

@INPROCEEDINGS{10351689,
  author={Liu, Zi’ang and Li, Wei},
  booktitle={2023 IEEE 5th International Conference on Civil Aviation Safety and Information Technology (ICCASIT)}, 
  title={MixTBN: A Fully Test-Time Adaptation Method for Visual Reinforcement Learning on Robotic Manipulation}, 
  year={2023},
  volume={},
  number={},
  pages={228-234},
  keywords={Visualization;Adaptation models;Reinforcement learning;Data augmentation;Safety;Task analysis;Information technology;Visual Reinforcement Learning;Fully Test Time Adaptation;Robotic Manipulation},
  doi={10.1109/ICCASIT58768.2023.10351689}}

@inproceedings{
gui2024active,
title={Active Test-Time Adaptation: Theoretical Analyses and An Algorithm},
author={Shurui Gui and Xiner Li and Shuiwang Ji},
booktitle={The Twelfth International Conference on Learning Representations},
year={2024},
url={https://openreview.net/forum?id=YHUGlwTzFB}
}

@InProceedings{Jahan_2024_CVPR,
    author    = {Jahan, Chowdhury Sadman and Savakis, Andreas},
    title     = {Unknown Sample Discovery for Source Free Open Set Domain Adaptation},
    booktitle = {Proceedings of the IEEE/CVF Conference on Computer Vision and Pattern Recognition (CVPR) Workshops},
    month     = {6},
    year      = {2024},
    pages     = {1067-1076}
}

@inproceedings{littman1991adaptation,
  title={Adaptation in Constant Utility Non-Stationary Environments.},
  author={Littman, Michael L and Ackley, David H},
  booktitle={ICGA},
  pages={136--142},
  year={1991},
  organization={Citeseer}
}

@Inbook{Nareyek2004,
author="Nareyek, Alexander",
title="Choosing Search Heuristics by Non-Stationary Reinforcement Learning",
bookTitle="Metaheuristics: Computer Decision-Making",
year="2004",
publisher="Springer US",
address="Boston, MA",
pages="523--544",
abstract="Search decisions are often made using heuristic methods because real-world applications can rarely be tackled without any heuristics. In many cases, multiple heuristics can potentially be chosen, and it is not clear a priori which would perform best. In this article, we propose a procedure that learns, during the search process, how to select promising heuristics. The learning is based on weight adaptation and can even switch between different heuristics during search. Different variants of the approach are evaluated within a constraint-programming environment.",
isbn="978-1-4757-4137-7",
doi="10.1007/978-1-4757-4137-7_25",
url="https://doi.org/10.1007/978-1-4757-4137-7_25"
}

@article{choi2001hidden,
  title={Hidden-mode markov decision processes for nonstationary sequential decision making},
  author={Choi, Samuel PM and Yeung, Dit-Yan and Zhang, Nevin L},
  journal={Sequence learning: paradigms, algorithms, and applications},
  pages={264--287},
  year={2001},
  publisher={Springer}
}

@inproceedings{xie2020deep,
  title={Deep Reinforcement Learning amidst Lifelong Non-Stationarity},
  author={Xie, Annie and Harrison, James and Finn, Chelsea},
  booktitle={4th Lifelong Machine Learning Workshop at ICML 2020},
  year={2020}
}

@InProceedings{pmlr-v139-mao21b,
  title = 	 {Near-Optimal Model-Free Reinforcement Learning in Non-Stationary Episodic MDPs},
  author =       {Mao, Weichao and Zhang, Kaiqing and Zhu, Ruihao and Simchi-Levi, David and Basar, Tamer},
  booktitle = 	 {Proceedings of the 38th International Conference on Machine Learning},
  pages = 	 {7447--7458},
  year = 	 {2021},
  editor = 	 {Meila, Marina and Zhang, Tong},
  volume = 	 {139},
  series = 	 {Proceedings of Machine Learning Research},
  month = 	 {7},
  publisher =    {PMLR},
  pdf = 	 {http://proceedings.mlr.press/v139/mao21b/mao21b.pdf},
  url = 	 {https://proceedings.mlr.press/v139/mao21b.html},
  abstract = 	 {We consider model-free reinforcement learning (RL) in non-stationary Markov decision processes. Both the reward functions and the state transition functions are allowed to vary arbitrarily over time as long as their cumulative variations do not exceed certain variation budgets. We propose Restarted Q-Learning with Upper Confidence Bounds (RestartQ-UCB), the first model-free algorithm for non-stationary RL, and show that it outperforms existing solutions in terms of dynamic regret. Specifically, RestartQ-UCB with Freedman-type bonus terms achieves a dynamic regret bound of $\widetilde{O}(S^{\frac{1}{3}} A^{\frac{1}{3}} \Delta^{\frac{1}{3}} H T^{\frac{2}{3}})$, where $S$ and $A$ are the numbers of states and actions, respectively, $\Delta&gt;0$ is the variation budget, $H$ is the number of time steps per episode, and $T$ is the total number of time steps. We further show that our algorithm is \emph{nearly optimal} by establishing an information-theoretical lower bound of $\Omega(S^{\frac{1}{3}} A^{\frac{1}{3}} \Delta^{\frac{1}{3}} H^{\frac{2}{3}} T^{\frac{2}{3}})$, the first lower bound in non-stationary RL. Numerical experiments validate the advantages of RestartQ-UCB in terms of both cumulative rewards and computational efficiency. We further demonstrate the power of our results in the context of multi-agent RL, where non-stationarity is a key challenge.}
}

@InProceedings{pmlr-v202-feng23e,
  title = 	 {Non-stationary Reinforcement Learning under General Function Approximation},
  author =       {Feng, Songtao and Yin, Ming and Huang, Ruiquan and Wang, Yu-Xiang and Yang, Jing and Liang, Yingbin},
  booktitle = 	 {Proceedings of the 40th International Conference on Machine Learning},
  pages = 	 {9976--10007},
  year = 	 {2023},
  editor = 	 {Krause, Andreas and Brunskill, Emma and Cho, Kyunghyun and Engelhardt, Barbara and Sabato, Sivan and Scarlett, Jonathan},
  volume = 	 {202},
  series = 	 {Proceedings of Machine Learning Research},
  month = 	 {7},
  publisher =    {PMLR},
  pdf = 	 {https://proceedings.mlr.press/v202/feng23e/feng23e.pdf},
  url = 	 {https://proceedings.mlr.press/v202/feng23e.html},
  abstract = 	 {General function approximation is a powerful tool to handle large state and action spaces in a broad range of reinforcement learning (RL) scenarios. However, theoretical understanding of non-stationary MDPs with general function approximation is still limited. In this paper, we make the first such an attempt. We first propose a new complexity metric called dynamic Bellman Eluder (DBE) dimension for non-stationary MDPs, which subsumes majority of existing tractable RL problems in static MDPs as well as non-stationary MDPs. Based on the proposed complexity metric, we propose a novel confidence-set based model-free algorithm called SW-OPEA, which features a sliding window mechanism and a new confidence set design for non-stationary MDPs. We then establish an upper bound on the dynamic regret for the proposed algorithm, and show that SW-OPEA is provably efficient as long as the variation budget is not significantly large. We further demonstrate via examples of non-stationary linear and tabular MDPs that our algorithm performs better in small variation budget scenario than the existing UCB-type algorithms. To the best of our knowledge, this is the first dynamic regret analysis in non-stationary MDPs with general function approximation.}
}

@InProceedings{pmlr-v134-wei21b,
  title = 	 {Non-stationary Reinforcement Learning without Prior Knowledge: an Optimal Black-box Approach},
  author =       {Wei, Chen-Yu and Luo, Haipeng},
  booktitle = 	 {Proceedings of Thirty Fourth Conference on Learning Theory},
  pages = 	 {4300--4354},
  year = 	 {2021},
  editor = 	 {Belkin, Mikhail and Kpotufe, Samory},
  volume = 	 {134},
  series = 	 {Proceedings of Machine Learning Research},
  month = 	 {8},
  publisher =    {PMLR},
  pdf = 	 {http://proceedings.mlr.press/v134/wei21b/wei21b.pdf},
  url = 	 {https://proceedings.mlr.press/v134/wei21b.html},
  abstract = 	 {We propose a black-box reduction that turns a certain reinforcement learning algorithm with optimal regret in a (near-)stationary environment into another algorithm with optimal dynamic regret in a non-stationary environment, importantly without any prior knowledge on the degree of non-stationarity.  By plugging different algorithms into our black-box, we provide a list of examples showing that our approach not only recovers recent results for (contextual) multi-armed bandits achieved by very specialized algorithms, but also significantly improves the state of the art for (generalzed) linear bandits, episodic MDPs, and infinite-horizon MDPs in various ways. Specifically, in most cases our algorithm achieves the optimal dynamic regret $\widetilde{\mathcal{O}}(\min\{\sqrt{LT}, \Delta^{\frac{1}{3}}T^{\frac{2}{3}}\})$ where $T$ is the number of rounds and $L$ and $\Delta$ are the number and amount of changes of the world respectively, while previous works only obtain suboptimal bounds and/or require the knowledge of $L$ and $\Delta$.}
}

@article{padakandla2020reinforcement,
  title={Reinforcement learning algorithm for non-stationary environments},
  author={Padakandla, Sindhu and KJ, Prabuchandran and Bhatnagar, Shalabh},
  journal={Applied Intelligence},
  volume={50},
  number={11},
  pages={3590--3606},
  year={2020},
  publisher={Springer}
}

@article{JMLR:v17:14-037,
  author  = {Sherief Abdallah and Michael Kaisers},
  title   = {Addressing Environment Non-Stationarity by Repeating Q-learning Updates},
  journal = {Journal of Machine Learning Research},
  year    = {2016},
  volume  = {17},
  number  = {46},
  pages   = {1--31},
  url     = {http://jmlr.org/papers/v17/14-037.html}
}

@inproceedings{NEURIPS2019_859b00ae,
 author = {Lecarpentier, Erwan and Rachelson, Emmanuel},
 booktitle = {Advances in Neural Information Processing Systems},
 editor = {H. Wallach and H. Larochelle and A. Beygelzimer and F. d\textquotesingle Alch\'{e}-Buc and E. Fox and R. Garnett},
 pages = {},
 publisher = {Curran Associates, Inc.},
 title = {Non-Stationary Markov Decision Processes, a Worst-Case Approach using Model-Based Reinforcement Learning},
 url = {https://proceedings.neurips.cc/paper_files/paper/2019/file/859b00aec8885efc83d1541b52a1220d-Paper.pdf},
 volume = {32},
 year = {2019}
}

@inproceedings{steinparz2022reactive,
  title={Reactive exploration to cope with non-stationarity in lifelong reinforcement learning},
  author={Steinparz, Christian Alexander and Schmied, Thomas and Paischer, Fabian and Dinu, Marius-Constantin and Patil, Vihang Prakash and Bitto-Nemling, Angela and Eghbal-zadeh, Hamid and Hochreiter, Sepp},
  booktitle={Conference on Lifelong Learning Agents},
  pages={441--469},
  year={2022},
  organization={PMLR}
}

@inproceedings{lecarpentier2021lipschitz,
  title={Lipschitz lifelong reinforcement learning},
  author={Lecarpentier, Erwan and Abel, David and Asadi, Kavosh and Jinnai, Yuu and Rachelson, Emmanuel and Littman, Michael L},
  booktitle={Proceedings of the AAAI Conference on Artificial Intelligence},
  volume={35},
  number={9},
  pages={8270--8278},
  year={2021}
}

@article{degrave2022magnetic,
  title={Magnetic control of tokamak plasmas through deep reinforcement learning},
  author={Degrave, Jonas and Felici, Federico and Buchli, Jonas and Neunert, Michael and Tracey, Brendan and Carpanese, Francesco and Ewalds, Timo and Hafner, Roland and Abdolmaleki, Abbas and de Las Casas, Diego and others},
  journal={Nature},
  volume={602},
  number={7897},
  pages={414--419},
  year={2022},
  publisher={Nature Publishing Group}
}

@article{afsar2022reinforcement,
  title={Reinforcement learning based recommender systems: A survey},
  author={Afsar, M Mehdi and Crump, Trafford and Far, Behrouz},
  journal={ACM Computing Surveys},
  volume={55},
  number={7},
  pages={1--38},
  year={2022},
  publisher={ACM New York, NY}
}

@inproceedings{45530,
title	= {Deep Neural Networks for YouTube Recommendations},
author	= {Paul Covington and Jay Adams and Emre Sargin},
year	= {2016},
booktitle	= {Proceedings of the 10th ACM Conference on Recommender Systems},
address	= {New York, NY, USA}}

@inproceedings{tobin2017domain,
  title={Domain randomization for transferring deep neural networks from simulation to the real world},
  author={Tobin, Josh and Fong, Rachel and Ray, Alex and Schneider, Jonas and Zaremba, Wojciech and Abbeel, Pieter},
  booktitle={2017 IEEE/RSJ international conference on intelligent robots and systems (IROS)},
  pages={23--30},
  year={2017},
  organization={IEEE}
}

@article{VERACHTERT2023100455,
title = {Scheduling on a budget: Avoiding stale recommendations with timely updates},
journal = {Machine Learning with Applications},
volume = {11},
pages = {100455},
year = {2023},
issn = {2666-8270},
doi = {https://doi.org/10.1016/j.mlwa.2023.100455},
url = {https://www.sciencedirect.com/science/article/pii/S2666827023000087},
author = {Robin Verachtert and Olivier Jeunen and Bart Goethals},
keywords = {Recommender systems, Scheduling, Online trials},
abstract = {Recommendation systems usually create static models from historical data. Due to concept drift and changes in the environment, such models are doomed to become stale, which causes their performance to degrade. In live production environments, models are therefore typically retrained at fixed time-intervals. Of course, every retraining comes at a significant computational cost, making very frequent model updates unrealistic in practice. In some cases, the cost is worth it, but in other cases an update could be redundant and the cost an unnecessary loss. The research question then consists of finding an acceptable update schedule for your recommendation system, given a limited budget. This work provides a pragmatic analysis of model staleness for a variety of collaborative filtering algorithms in news and retail domains, where concept drift is a known impediment. We highlight that the rate at which models become stale is highly dependent on the environment they perform in and that this property can be derived from data. These findings are corroborated by empirical observations from four large-scale online experiments. Instead of retraining at regular intervals, we propose an adaptive scheduling method that aims to maximise the accuracy of the recommendations within a fixed resource budget. Offline experiments show that our proposed approach improves recommendation performance while keeping the cost constant. Our findings can guide practitioners to spend their available resources more efficiently.}
}

@article{evans2016deepmind,
  title={Deepmind ai reduces google data centre cooling bill by 40\%},
  author={Evans, Richard and Gao, Jim},
  journal={DeepMind blog},
  volume={20},
  pages={158},
  year={2016}
}

@inproceedings{chen2019gnu,
  title={Gnu-rl: A precocial reinforcement learning solution for building hvac control using a differentiable mpc policy},
  author={Chen, Bingqing and Cai, Zicheng and Berg{\'e}s, Mario},
  booktitle={Proceedings of the 6th ACM international conference on systems for energy-efficient buildings, cities, and transportation},
  pages={316--325},
  year={2019}
}

@inproceedings{wang2023adversarial,
  title={Adversarial policies beat superhuman go AIs},
  author={Wang, Tony Tong and Gleave, Adam and Tseng, Tom and Pelrine, Kellin and Belrose, Nora and Miller, Joseph and Dennis, Michael D and Duan, Yawen and Pogrebniak, Viktor and Levine, Sergey and others},
  booktitle={International Conference on Machine Learning},
  pages={35655--35739},
  year={2023},
  organization={PMLR}
}

@article{berner2019dota,
  title={Dota 2 with large scale deep reinforcement learning},
  author={Berner, Christopher and Brockman, Greg and Chan, Brooke and Cheung, Vicki and D{\k{e}}biak, Przemys{\l}aw and Dennison, Christy and Farhi, David and Fischer, Quirin and Hashme, Shariq and Hesse, Chris and others},
  journal={arXiv preprint arXiv:1912.06680},
  year={2019}
}

@incollection{berridge2000reward,
  title={Reward learning: Reinforcement, incentives, and expectations},
  author={Berridge, Kent C},
  booktitle={Psychology of learning and motivation},
  volume={40},
  pages={223--278},
  year={2000},
  publisher={Elsevier}
}

@article{REZAPOUR2021312,
title = {Impact of timing in post-warning prepositioning decisions on performance measures of disaster management: A real-life application},
journal = {European Journal of Operational Research},
volume = {293},
number = {1},
pages = {312-335},
year = {2021},
issn = {0377-2217},
doi = {https://doi.org/10.1016/j.ejor.2020.11.051},
url = {https://www.sciencedirect.com/science/article/pii/S0377221720310110},
author = {Shabnam Rezapour and Reza Zanjirani Farahani and Nazanin Morshedlou},
keywords = {Disaster relief, Preparedness, Response, Disaster warning, Stochastic optimization},
abstract = {This research analyzes the impact of timing of post-warning and pre-disaster stock prepositioning decisions in disasters with an advance warning, such as hurricanes. From the warning time to the landfall time, disaster planners receive updated information regarding the hurricane's trajectory and the potential target region iteratively. The planners should decide when the best trigger time (TT) is to start the preparedness activities (prepositioning stocks of emergency goods). The quality of a given TT is evaluated concerning two performance measures: (i) the total logistics cost in the preparedness and response phases; and (ii) the minimum response time (RT) needed to transfer goods from the response facilities to affected areas in the response phase. A stochastic optimization model is designed to determine the best TT, preparedness decisions, and response operations. The computational complexity of the model is reduced using a graph-theoretic conversion. We test the converted model on experimental data based on major hurricanes from 2001 to 2017 on the United States southeastern coast. Sensitivity analysis of the model shows that delaying TT makes a non-linear reduction in the total logistics cost and non-linear increment in the shortest possible RT. The optimal TT makes the best compromise between the total logistics cost and the speed of the response operations, according to the preference of disaster planners.}
}

@inproceedings{ranasinghe2021orthogonal,
  title={Orthogonal projection loss},
  author={Ranasinghe, Kanchana and Naseer, Muzammal and Hayat, Munawar and Khan, Salman and Khan, Fahad Shahbaz},
  booktitle={Proceedings of the IEEE/CVF international conference on computer vision},
  pages={12333--12343},
  year={2021}
}

@article{liu2024libero,
  title={Libero: Benchmarking knowledge transfer for lifelong robot learning},
  author={Liu, Bo and Zhu, Yifeng and Gao, Chongkai and Feng, Yihao and Liu, Qiang and Zhu, Yuke and Stone, Peter},
  journal={Advances in Neural Information Processing Systems},
  volume={36},
  year={2024}
}

@inproceedings{lambert2020objective,
  title={Objective Mismatch in Model-based Reinforcement Learning},
  author={Lambert, Nathan and Amos, Brandon and Yadan, Omry and Calandra, Roberto},
  booktitle={Learning for Dynamics and Control},
  pages={761--770},
  year={2020},
  organization={PMLR}
}

@misc{dohare2023loss,
      title={Loss of Plasticity in Deep Continual Learning}, 
      author={Shibhansh Dohare and J. Fernando Hernandez-Garcia and Parash Rahman and Richard S. Sutton and A. Rupam Mahmood},
      year={2023},
      eprint={2306.13812},
      archivePrefix={arXiv},
      primaryClass={cs.LG}
}

@article{ash2020warm,
  title={On warm-starting neural network training},
  author={Ash, Jordan and Adams, Ryan P},
  journal={Advances in neural information processing systems},
  volume={33},
  pages={3884--3894},
  year={2020}
}

@inproceedings{nikishin2022primacy,
  title={The primacy bias in deep reinforcement learning},
  author={Nikishin, Evgenii and Schwarzer, Max and D’Oro, Pierluca and Bacon, Pierre-Luc and Courville, Aaron},
  booktitle={International conference on machine learning},
  pages={16828--16847},
  year={2022},
  organization={PMLR}
}

@inproceedings{larocheNEURIPS2021,
 author = {Laroche, Romain and Tachet des Combes, Remi},
 booktitle = {Advances in Neural Information Processing Systems},
 editor = {M. Ranzato and A. Beygelzimer and Y. Dauphin and P.S. Liang and J. Wortman Vaughan},
 pages = {24442--24454},
 publisher = {Curran Associates, Inc.},
 title = {Dr Jekyll, Mr Hyde: the strange case of off-policy policy updates},
 url = {https://proceedings.neurips.cc/paper_files/paper/2021/file/ccb421d5f36c5a412816d494b15ca9f6-Paper.pdf},
 volume = {34},
 year = {2021}
}

@article{achille2017critical,
  title={Critical learning periods in deep neural networks},
  author={Achille, Alessandro and Rovere, Matteo and Soatto, Stefano},
  journal={arXiv preprint arXiv:1711.08856},
  year={2017}
}

@article{espinosa2022concept,
  title={Concept embedding models: Beyond the accuracy-explainability trade-off},
  author={Espinosa Zarlenga, Mateo and Barbiero, Pietro and Ciravegna, Gabriele and Marra, Giuseppe and Giannini, Francesco and Diligenti, Michelangelo and Shams, Zohreh and Precioso, Frederic and Melacci, Stefano and Weller, Adrian and others},
  journal={Advances in Neural Information Processing Systems},
  volume={35},
  pages={21400--21413},
  year={2022}
}

@inproceedings{pmlr-v162-fan22c,
  title = {Generalized Data Distribution Iteration},
  author = {Fan, Jiajun and Xiao, Changnan},
  booktitle = {Proceedings of the 39th International Conference on Machine Learning},
  pages = {6103--6184},
  year = {2022},
  editor = {Chaudhuri, Kamalika and Jegelka, Stefanie and Song, Le and Szepesvari, Csaba and Niu, Gang and Sabato, Sivan},
  volume = {162},
  series = {Proceedings of Machine Learning Research},
  month = {7},
  publisher = {PMLR},
  url ={https://proceedings.mlr.press/v162/fan22c.html}
}

@article{StulpSigaud2013skill,
url = {https://doi.org/10.2478/pjbr-2013-0003},
title = {Robot Skill Learning: From Reinforcement Learning to Evolution Strategies},
title = {},
author = {Freek Stulp and Olivier Sigaud},
pages = {49--61},
volume = {4},
number = {1},
journal = {Paladyn, Journal of Behavioral Robotics},
doi = {doi:10.2478/pjbr-2013-0003},
year = {2013},
lastchecked = {2024-07-13}
}

@inproceedings{robosuite2020,
  title={robosuite: A Modular Simulation Framework and Benchmark for Robot Learning},
  author={Yuke Zhu and Josiah Wong and Ajay Mandlekar and Roberto Mart\'{i}n-Mart\'{i}n and Abhishek Joshi and Soroush Nasiriany and Yifeng Zhu},
  booktitle={arXiv preprint arXiv:2009.12293},
  year={2020}
}

@inproceedings{todorov2012mujoco,
  title={MuJoCo: A physics engine for model-based control},
  author={Todorov, Emanuel and Erez, Tom and Tassa, Yuval},
  booktitle={2012 IEEE/RSJ International Conference on Intelligent Robots and Systems},
  pages={5026--5033},
  year={2012},
  organization={IEEE},
  doi={10.1109/IROS.2012.6386109}
}

@inproceedings{
eaton2024icbinb,
title={The Interpretability of Codebooks in Model-Based Reinforcement Learning is Limited},
author={Kenneth Eaton and Jonathan C Balloch and Julia Kim and Mark Riedl},
booktitle={I Can't Believe It's Not Better Workshop: Failure Modes of Sequential Decision-Making in Practice (RLC 2024)},
year={2024},
url={https://openreview.net/forum?id=oFmtxHN2Et}
}

@article{mukhoti2023fine,
  title={Fine-tuning can cripple your foundation model; preserving features may be the solution},
  author={Mukhoti, Jishnu and Gal, Yarin and Torr, Philip HS and Dokania, Puneet K},
  journal={arXiv preprint arXiv:2308.13320},
  year={2023}
}

@article{elhage2022superposition,
   title={Toy Models of Superposition},
   author={Elhage, Nelson and Hume, Tristan and Olsson, Catherine and Schiefer, Nicholas and Henighan, Tom and Kravec, Shauna and Hatfield-Dodds, Zac and Lasenby, Robert and Drain, Dawn and Chen, Carol and Grosse, Roger and McCandlish, Sam and Kaplan, Jared and Amodei, Dario and Wattenberg, Martin and Olah, Christopher},
   year={2022},
   journal={Transformer Circuits Thread},
   url={https://transformer-circuits.pub/2022/toy_model/index.html}
}

@inproceedings{johnson1984extensions,
  title={Extensions of Lipshitz mapping into a Hilbert space},
  booktitle={Conference modern analysis and probability, 1984},
  author={Johnson, William B and Lindenstrauss, Joram},
  pages={189--206},
  year={1984}
}

@inproceedings{
schwarzer2021pretraining,
title={Pretraining Representations for Data-Efficient Reinforcement Learning},
author={Max Schwarzer and Nitarshan Rajkumar and Michael Noukhovitch and Ankesh Anand and Laurent Charlin and R Devon Hjelm and Philip Bachman and Aaron Courville},
booktitle={Advances in Neural Information Processing Systems},
editor={A. Beygelzimer and Y. Dauphin and P. Liang and J. Wortman Vaughan},
year={2021},
url={https://openreview.net/forum?id=XpSAvlvnMa}
}

@inproceedings{Smundsson2018MetaRL,
  title={Meta Reinforcement Learning with Latent Variable Gaussian Processes},
  author={Steind{\'o}r S{\ae}mundsson and Katja Hofmann and Marc Peter Deisenroth},
  booktitle={Conference on Uncertainty in Artificial Intelligence},
  year={2018},
  url={https://api.semanticscholar.org/CorpusID:260444053}
}

@inproceedings{
dorka2023dynamic,
title={Dynamic Update-to-Data Ratio: Minimizing World Model Overfitting},
author={Nicolai Dorka and Tim Welschehold and Wolfram Burgard},
booktitle={The Eleventh International Conference on Learning Representations },
year={2023},
url={https://openreview.net/forum?id=ZIkHSXzd9O7}
}

@inproceedings{kauvar2023curious,
  title={Curious replay for model-based adaptation},
  author={Kauvar, Isaac and Doyle, Chris and Zhou, Linqi and Haber, Nick},
  booktitle={Proceedings of the 40th International Conference on Machine Learning},
  pages={16018--16048},
  year={2023}
}

@inproceedings{kessler2023effectiveness,
  title={The effectiveness of world models for continual reinforcement learning},
  author={Kessler, Samuel and Ostaszewski, Mateusz and Bortkiewicz, Micha{\l}Pawe{\l} and {\.Z}arski, Mateusz and Wolczyk, Maciej and Parker-Holder, Jack and Roberts, Stephen J and Mi, Piotr and others},
  booktitle={Conference on Lifelong Learning Agents},
  pages={184--204},
  year={2023},
  organization={PMLR}
}

@inproceedings{doro2023sampleefficient,
title={Sample-Efficient Reinforcement Learning by Breaking the Replay Ratio Barrier},
author={Pierluca D'Oro and Max Schwarzer and Evgenii Nikishin and Pierre-Luc Bacon and Marc G Bellemare and Aaron Courville},
booktitle={The Eleventh International Conference on Learning Representations },
year={2023},
url={https://openreview.net/forum?id=OpC-9aBBVJe}
}

@article{dohare2021continual,
  title={Continual backprop: Stochastic gradient descent with persistent randomness},
  author={Dohare, Shibhansh and Sutton, Richard S and Mahmood, A Rupam},
  journal={arXiv preprint arXiv:2108.06325},
  year={2021}
}

@inproceedings{NEURIPS2022_935151cc,
 author = {Eysenbach, Benjamin and Khazatsky, Alexander and Levine, Sergey and Salakhutdinov, Russ R},
 booktitle = {Advances in Neural Information Processing Systems},
 editor = {S. Koyejo and S. Mohamed and A. Agarwal and D. Belgrave and K. Cho and A. Oh},
 pages = {23230--23243},
 publisher = {Curran Associates, Inc.},
 title = {Mismatched No More: Joint Model-Policy Optimization for Model-Based RL},
 url = {https://proceedings.neurips.cc/paper_files/paper/2022/file/935151cc6cb5d8b6816133b75233775a-Paper-Conference.pdf},
 volume = {35},
 year = {2022}
}

@article{ouyang2022training,
  title={Training language models to follow instructions with human feedback},
  author={Ouyang, Long and Wu, Jeffrey and Jiang, Xu and Almeida, Diogo and Wainwright, Carroll and Mishkin, Pamela and Zhang, Chong and Agarwal, Sandhini and Slama, Katarina and Ray, Alex and others},
  journal={Advances in neural information processing systems},
  volume={35},
  pages={27730--27744},
  year={2022}
}

@InProceedings{pmlr-v139-furuta2021fim,
  title = 	 {Policy Information Capacity: Information-Theoretic Measure for Task Complexity in Deep Reinforcement Learning},
  author =       {Furuta, Hiroki and Matsushima, Tatsuya and Kozuno, Tadashi and Matsuo, Yutaka and Levine, Sergey and Nachum, Ofir and Gu, Shixiang Shane},
  booktitle = 	 {Proceedings of the 38th International Conference on Machine Learning},
  pages = 	 {3541--3552},
  year = 	 {2021},
  editor = 	 {Meila, Marina and Zhang, Tong},
  volume = 	 {139},
  series = 	 {Proceedings of Machine Learning Research},
  month = 	 {07},
  publisher =    {PMLR},
  pdf = 	 {http://proceedings.mlr.press/v139/furuta21a/furuta21a.pdf},
  url = 	 {https://proceedings.mlr.press/v139/furuta21a.html},
  abstract = 	 {Progress in deep reinforcement learning (RL) research is largely enabled by benchmark task environments. However, analyzing the nature of those environments is often overlooked. In particular, we still do not have agreeable ways to measure the difficulty or solvability of a task, given that each has fundamentally different actions, observations, dynamics, rewards, and can be tackled with diverse RL algorithms. In this work, we propose policy information capacity (PIC) – the mutual information between policy parameters and episodic return – and policy-optimal information capacity (POIC) – between policy parameters and episodic optimality – as two environment-agnostic, algorithm-agnostic quantitative metrics for task difficulty. Evaluating our metrics across toy environments as well as continuous control benchmark tasks from OpenAI Gym and DeepMind Control Suite, we empirically demonstrate that these information-theoretic metrics have higher correlations with normalized task solvability scores than a variety of alternatives. Lastly, we show that these metrics can also be used for fast and compute-efficient optimizations of key design parameters such as reward shaping, policy architectures, and MDP properties for better solvability by RL algorithms without ever running full RL experiments.}
}

@article{marcus1975child,
  title={The child as elicitor of parental sanctions for independent and dependent behavior: A simulation of parent-child interaction.},
  author={Marcus, Robert F},
  journal={Developmental Psychology},
  volume={11},
  number={4},
  pages={443},
  year={1975},
  publisher={American Psychological Association}
}

@inproceedings{yurkovic2021multimodal,
  title={Multimodal behaviors from children elicit parent responses in real-time social interaction},
  author={Yurkovic, Julia R and Kennedy, Daniel P and Yu, Chen},
  booktitle={Proceedings of the Annual Meeting of the Cognitive Science Society},
  volume={43},
  number={43},
  year={2021}
}

@InProceedings{pmlr-v108-shui20deepactive,
  title = 	 {Deep Active Learning: Unified and Principled Method for Query and Training},
  author =       {Shui, Changjian and Zhou, Fan and Gagn\'e, Christian and Wang, Boyu},
  booktitle = 	 {Proceedings of the Twenty Third International Conference on Artificial Intelligence and Statistics},
  pages = 	 {1308--1318},
  year = 	 {2020},
  editor = 	 {Chiappa, Silvia and Calandra, Roberto},
  volume = 	 {108},
  series = 	 {Proceedings of Machine Learning Research},
  month = 	 {08},
  publisher =    {PMLR},
  pdf = 	 {http://proceedings.mlr.press/v108/shui20a/shui20a.pdf},
  url = 	 {https://proceedings.mlr.press/v108/shui20a.html},
  abstract = 	 {In this paper, we are proposing a unified and principled method for both the querying and training processes in deep batch active learning. We are providing theoretical insights from the intuition of modeling the interactive procedure in active learning as distribution matching, by adopting the Wasserstein distance. As a consequence, we derived a new training loss from the theoretical analysis, which is decomposed into optimizing deep neural network parameters and batch query selection through alternative optimization. In addition, the loss for training a deep neural network is naturally formulated as a min-max optimization problem through leveraging the unlabeled data information. Moreover, the proposed principles also indicate an explicit uncertainty-diversity trade-off in the query batch selection. Finally, we evaluate our proposed method on different benchmarks, consistently showing better empirical performances and a better time-efficient query strategy compared to the baselines.}
}

@article{ren2021survey,
  title={A survey of deep active learning},
  author={Ren, Pengzhen and Xiao, Yun and Chang, Xiaojun and Huang, Po-Yao and Li, Zhihui and Gupta, Brij B and Chen, Xiaojiang and Wang, Xin},
  journal={ACM computing surveys (CSUR)},
  volume={54},
  number={9},
  pages={1--40},
  year={2021},
  publisher={ACM New York, NY}
}

@article{ehsan2023charting,
  title={Charting the sociotechnical gap in explainable ai: A framework to address the gap in xai},
  author={Ehsan, Upol and Saha, Koustuv and De Choudhury, Munmun and Riedl, Mark O},
  journal={Proceedings of the ACM on human-computer interaction},
  volume={7},
  number={CSCW1},
  pages={1--32},
  year={2023},
  publisher={ACM New York, NY, USA}
}

@article{schmidt2019quantifying,
  title={Quantifying interpretability and trust in machine learning systems},
  author={Schmidt, Philipp and Biessmann, Felix},
  journal={arXiv preprint arXiv:1901.08558},
  year={2019}
}

@article{li2022interpretable,
  title={Interpretable deep learning: Interpretation, interpretability, trustworthiness, and beyond},
  author={Li, Xuhong and Xiong, Haoyi and Li, Xingjian and Wu, Xuanyu and Zhang, Xiao and Liu, Ji and Bian, Jiang and Dou, Dejing},
  journal={Knowledge and Information Systems},
  volume={64},
  number={12},
  pages={3197--3234},
  year={2022},
  publisher={Springer}
}

@inproceedings{ehsan2021expanding,
author = {Ehsan, Upol and Liao, Q. Vera and Muller, Michael and Riedl, Mark O. and Weisz, Justin D.},
title = {Expanding Explainability: Towards Social Transparency in AI systems},
year = {2021},
isbn = {9781450380966},
publisher = {Association for Computing Machinery},
address = {New York, NY, USA},
url = {https://doi.org/10.1145/3411764.3445188},
doi = {10.1145/3411764.3445188},
abstract = {As AI-powered systems increasingly mediate consequential decision-making, their explainability is critical for end-users to take informed and accountable actions. Explanations in human-human interactions are socially-situated. AI systems are often socio-organizationally embedded. However, Explainable AI (XAI) approaches have been predominantly algorithm-centered. We take a developmental step towards socially-situated XAI by introducing and exploring Social Transparency (ST), a sociotechnically informed perspective that incorporates the socio-organizational context into explaining AI-mediated decision-making. To explore ST conceptually, we conducted interviews with 29 AI users and practitioners grounded in a speculative design scenario. We suggested constitutive design elements of ST and developed a conceptual framework to unpack ST’s effect and implications at the technical, decision-making, and organizational level. The framework showcases how ST can potentially calibrate trust in AI, improve decision-making, facilitate organizational collective actions, and cultivate holistic explainability. Our work contributes to the discourse of Human-Centered XAI by expanding the design space of XAI.},
booktitle = {Proceedings of the 2021 CHI Conference on Human Factors in Computing Systems},
articleno = {82},
numpages = {19},
keywords = {sociotechnical, socio-organizational context, social transparency, human-AI interaction, explanations, Explainable AI, Artificial Intelligence},
location = {Yokohama, Japan},
series = {CHI '21}
}

@inproceedings{vaswani2017attention,
 author = {Vaswani, Ashish and Shazeer, Noam and Parmar, Niki and Uszkoreit, Jakob and Jones, Llion and Gomez, Aidan N and Kaiser, \L ukasz and Polosukhin, Illia},
 booktitle = {Advances in Neural Information Processing Systems},
 editor = {I. Guyon and U. Von Luxburg and S. Bengio and H. Wallach and R. Fergus and S. Vishwanathan and R. Garnett},
 pages = {},
 publisher = {Curran Associates, Inc.},
 title = {Attention is All you Need},
 url = {https://proceedings.neurips.cc/paper_files/paper/2017/file/3f5ee243547dee91fbd053c1c4a845aa-Paper.pdf},
 volume = {30},
 year = {2017}
}

@inproceedings{
gu2022efficiently,
title={Efficiently Modeling Long Sequences with Structured State Spaces},
author={Albert Gu and Karan Goel and Christopher Re},
booktitle={International Conference on Learning Representations},
year={2022},
url={https://openreview.net/forum?id=uYLFoz1vlAC}
}

@article{croitoru2023diffusion,
  title={Diffusion Models in Vision: A Survey},
  author={Croitoru, Florinel-Alin and Hondru, Vlad and Ionescu, Radu Tudor and Shah, Mubarak},
  journal={IEEE Transactions on Pattern Analysis and Machine Intelligence},
  volume={45},
  number={9},
  pages={10850--10869},
  year={2023},
  publisher={IEEE}
}

@article{sabour2017dynamic,
  title={Dynamic routing between capsules},
  author={Sabour, Sara and Frosst, Nicholas and Hinton, Geoffrey E},
  journal={Advances in neural information processing systems},
  volume={30},
  year={2017}
}

@article{havasi2022addressing,
  title={Addressing leakage in concept bottleneck models},
  author={Havasi, Marton and Parbhoo, Sonali and Doshi-Velez, Finale},
  journal={Advances in Neural Information Processing Systems},
  volume={35},
  pages={23386--23397},
  year={2022}
}

@article{parisi2019continual,
  title={Continual lifelong learning with neural networks: A review},
  author={Parisi, German I and Kemker, Ronald and Part, Jose L and Kanan, Christopher and Wermter, Stefan},
  journal={Neural networks},
  volume={113},
  pages={54--71},
  year={2019},
  publisher={Elsevier}
}

@article{khetarpal2022towards,
  title={Towards continual reinforcement learning: A review and perspectives},
  author={Khetarpal, Khimya and Riemer, Matthew and Rish, Irina and Precup, Doina},
  journal={Journal of Artificial Intelligence Research},
  volume={75},
  pages={1401--1476},
  year={2022}
}

@article{abel2024continual,
  title={A definition of continual reinforcement learning},
  author={Abel, David and Barreto, Andr{\'e} and Van Roy, Benjamin and Precup, Doina and van Hasselt, Hado P and Singh, Satinder},
  journal={Advances in Neural Information Processing Systems},
  volume={36},
  year={2024}
}

@inproceedings{ross2017right,
  title={Right for the right reasons: training differentiable models by constraining their explanations},
  author={Ross, Andrew Slavin and Hughes, Michael C and Doshi-Velez, Finale},
  booktitle={Proceedings of the 26th International Joint Conference on Artificial Intelligence},
  pages={2662--2670},
  year={2017}
}

@inproceedings{kalashnikov2018scalable,
    title={Scalable Deep Reinforcement Learning for Vision-Based Robotic Manipulation},
    author={Kalashnikov, Dmitry and Irpan, Alex and Pastor, Peter and Ibarz, Julian and Herzog, Alexander and Jang, Eric and Quillen, Deirdre and Holly, Ethan and Kalakrishnan, Mrinal and Vanhoucke, Vincent and Levine, Sergey},
    booktitle={Proceedings of The 2nd Conference on Robot Learning},
    year={2018}
}

@inproceedings{baker2022video,
    title={Video PreTraining ({VPT}): Learning to Act by Watching Unlabeled Online Videos},
    author={Baker, Bowen and Karamcheti, Siddharth and Maskell, Stuart and Johnston, Sarah and Haghtalab, Nika and Roberts, Kenden and Winder, Casey and Wohlwend, Jeremy and Tucker, Ian and Morris, Brayden and Chen, Zheng and Bakhtin, Anton and Xu, Ziyu and Plappert, Matthias and Barnette, Crismely and Sutskever, Ilya and Guu, Kelvin and Zitnick, C Lawrence and Chase, Joshua K and Tang, Yonatan and Singh, Rajeev and Pachocki, Jakub},
    booktitle={Advances in Neural Information Processing Systems},
    year={2022}
}

@inproceedings{kidambi2020morel,
    title={{MOReL}: Model-Based Offline Reinforcement Learning},
    author={Kidambi, Rahul and Rajeswaran, Aravind and Netrapalli, Praneeth and Joachims, Thorsten},
    booktitle={Advances in Neural Information Processing Systems},
    year={2020}
}

@inproceedings{lee2022offline,
    title={Offline-to-Online Reinforcement Learning via Balanced Replay and Pessimistic {Q}-Ensemble},
    author={Lee, Seunghyun and Seo, Youngsoo and Lee, Kimin and Abbeel, Pieter and Shin, Jinwoo},
    booktitle={Proceedings of the 5th Conference on Robot Learning},
    year={2022}
}

@article{mao2022moore,
    title={{MOORe}: Model-based Offline-to-Online Reinforcement Learning},
    author={Yihuan Mao and Chao Wang and Bin Wang and Chongjie Zhang},
    journal={arXiv preprint arXiv:2201.10070},
    year={2022}
}

@InProceedings{rafailov23moto,
  title = 	 {MOTO: Offline Pre-training to Online Fine-tuning for Model-based Robot Learning},
  author =       {Rafailov, Rafael and Hatch, Kyle Beltran and Kolev, Victor and Martin, John D. and Phielipp, Mariano and Finn, Chelsea},
  booktitle = 	 {Proceedings of The 7th Conference on Robot Learning},
  pages = 	 {3654--3671},
  year = 	 {2023},
  editor = 	 {Tan, Jie and Toussaint, Marc and Darvish, Kourosh},
  volume = 	 {229},
  series = 	 {Proceedings of Machine Learning Research},
  month = 	 {11},
  publisher =    {PMLR},
  pdf = 	 {https://proceedings.mlr.press/v229/rafailov23a/rafailov23a.pdf},
  url = 	 {https://proceedings.mlr.press/v229/rafailov23a.html},
}

@ARTICLE{wang2024o2ac,
  author={Wang, Xuesong and Hou, Diyuan and Huang, Longyang and Cheng, Yuhu},
  journal={IEEE Transactions on Artificial Intelligence}, 
  title={Offline–Online Actor–Critic}, 
  year={2024},
  volume={5},
  number={1},
  pages={61-69},
  keywords={Training;Degradation;Reinforcement learning;Cloning;Algorithm design and analysis;Actor–critic;behavior clone (BC) constraint;distribution shift;offline–online reinforcement learning (RL);policy performance degradation},
  doi={10.1109/TAI.2022.3225251}}

@inproceedings{wang2024making,
    title={Making Offline RL Online: Collaborative World Models for Offline Visual Reinforcement Learning},
    author={Qi Wang and Junming Yang and Yunbo Wang and Xin Jin and Wenjun Zeng and Xiaokang Yang},
    booktitle={Advances in Neural Information Processing Systems},
    year={2024}
}

@INPROCEEDINGS{openxembodiment2024,
  author={O’Neill, Abby and Rehman, Abdul and Maddukuri, Abhiram and Gupta, Abhishek and Padalkar, Abhishek and Lee, Abraham and Pooley, Acorn and Gupta, Agrim and Mandlekar, Ajay and Jain, Ajinkya and Tung, Albert and Bewley, Alex and Herzog, Alex and Irpan, Alex and Khazatsky, Alexander and Rai, Anant and Gupta, Anchit and Wang, Andrew and Singh, Anikait and Garg, Animesh and Kembhavi, Aniruddha and Xie, Annie and Brohan, Anthony and Raffin, Antonin and Sharma, Archit and Yavary, Arefeh and Jain, Arhan and Balakrishna, Ashwin and Wahid, Ayzaan and Burgess-Limerick, Ben and Kim, Beomjoon and Schölkopf, Bernhard and Wulfe, Blake and Ichter, Brian and Lu, Cewu and Xu, Charles and Le, Charlotte and Finn, Chelsea and Wang, Chen and Xu, Chenfeng and Chi, Cheng and Huang, Chenguang and Chan, Christine and Agia, Christopher and Pan, Chuer and Fu, Chuyuan and Devin, Coline and Xu, Danfei and Morton, Daniel and Driess, Danny and Chen, Daphne and Pathak, Deepak and Shah, Dhruv and Büchler, Dieter and Jayaraman, Dinesh and Kalashnikov, Dmitry and Sadigh, Dorsa and Johns, Edward and Foster, Ethan and Liu, Fangchen and Ceola, Federico and Xia, Fei and Zhao, Feiyu and Stulp, Freek and Zhou, Gaoyue and Sukhatme, Gaurav S. and Salhotra, Gautam and Yan, Ge and Feng, Gilbert and Schiavi, Giulio and Berseth, Glen and Kahn, Gregory and Wang, Guanzhi and Su, Hao and Fang, Hao-Shu and Shi, Haochen and Bao, Henghui and Ben Amor, Heni and Christensen, Henrik I and Furuta, Hiroki and Walke, Homer and Fang, Hongjie and Ha, Huy and Mordatch, Igor and Radosavovic, Ilija and Leal, Isabel and Liang, Jacky and Abou-Chakra, Jad and Kim, Jaehyung and Drake, Jaimyn and Peters, Jan and Schneider, Jan and Hsu, Jasmine and Bohg, Jeannette and Bingham, Jeffrey and Wu, Jeffrey and Gao, Jensen and Hu, Jiaheng and Wu, Jiajun and Wu, Jialin and Sun, Jiankai and Luo, Jianlan and Gu, Jiayuan and Tan, Jie and Oh, Jihoon and Wu, Jimmy and Lu, Jingpei and Yang, Jingyun and Malik, Jitendra and Silvério, João and Hejna, Joey and Booher, Jonathan and Tompson, Jonathan and Yang, Jonathan and Salvador, Jordi and Lim, Joseph J. and Han, Junhyek and Wang, Kaiyuan and Rao, Kanishka and Pertsch, Karl and Hausman, Karol and Go, Keegan and Gopalakrishnan, Keerthana and Goldberg, Ken and Byrne, Kendra and Oslund, Kenneth and Kawaharazuka, Kento and Black, Kevin and Lin, Kevin and Zhang, Kevin and Ehsani, Kiana and Lekkala, Kiran and Ellis, Kirsty and Rana, Krishan and Srinivasan, Krishnan and Fang, Kuan and Singh, Kunal Pratap and Zeng, Kuo-Hao and Hatch, Kyle and Hsu, Kyle and Itti, Laurent and Chen, Lawrence Yunliang and Pinto, Lerrel and Fei-Fei, Li and Tan, Liam and Fan, Linxi Jim and Ott, Lionel and Lee, Lisa and Weihs, Luca and Chen, Magnum and Lepert, Marion and Memmel, Marius and Tomizuka, Masayoshi and Itkina, Masha and Castro, Mateo Guaman and Spero, Max and Du, Maximilian and Ahn, Michael and Yip, Michael C. and Zhang, Mingtong and Ding, Mingyu and Heo, Minho and Srirama, Mohan Kumar and Sharma, Mohit and Kim, Moo Jin and Kanazawa, Naoaki and Hansen, Nicklas and Heess, Nicolas and Joshi, Nikhil J and Suenderhauf, Niko and Liu, Ning and Di Palo, Norman and Shafiullah, Nur Muhammad Mahi and Mees, Oier and Kroemer, Oliver and Bastani, Osbert and Sanketi, Pannag R and Miller, Patrick Tree and Yin, Patrick and Wohlhart, Paul and Xu, Peng and Fagan, Peter David and Mitrano, Peter and Sermanet, Pierre and Abbeel, Pieter and Sundaresan, Priya and Chen, Qiuyu and Vuong, Quan and Rafailov, Rafael and Tian, Ran and Doshi, Ria and Martín-Martín, Roberto and Baijal, Rohan and Scalise, Rosario and Hendrix, Rose and Lin, Roy and Qian, Runjia and Zhang, Ruohan and Mendonca, Russell and Shah, Rutav and Hoque, Ryan and Julian, Ryan and Bustamante, Samuel and Kirmani, Sean and Levine, Sergey and Lin, Shan and Moore, Sherry and Bahl, Shikhar and Dass, Shivin and Sonawani, Shubham and Song, Shuran and Xu, Sichun and Haldar, Siddhant and Karamcheti, Siddharth and Adebola, Simeon and Guist, Simon and Nasiriany, Soroush and Schaal, Stefan and Welker, Stefan and Tian, Stephen and Ramamoorthy, Subramanian and Dasari, Sudeep and Belkhale, Suneel and Park, Sungjae and Nair, Suraj and Mirchandani, Suvir and Osa, Takayuki and Gupta, Tanmay and Harada, Tatsuya and Matsushima, Tatsuya and Xiao, Ted and Kollar, Thomas and Yu, Tianhe and Ding, Tianli and Davchev, Todor and Zhao, Tony Z. and Armstrong, Travis and Darrell, Trevor and Chung, Trinity and Jain, Vidhi and Vanhoucke, Vincent and Zhan, Wei and Zhou, Wenxuan and Burgard, Wolfram and Chen, Xi and Wang, Xiaolong and Zhu, Xinghao and Geng, Xinyang and Liu, Xiyuan and Liangwei, Xu and Li, Xuanlin and Lu, Yao and Ma, Yecheng Jason and Kim, Yejin and Chebotar, Yevgen and Zhou, Yifan and Zhu, Yifeng and Wu, Yilin and Xu, Ying and Wang, Yixuan and Bisk, Yonatan and Cho, Yoonyoung and Lee, Youngwoon and Cui, Yuchen and Cao, Yue and Wu, Yueh-Hua and Tang, Yujin and Zhu, Yuke and Zhang, Yunchu and Jiang, Yunfan and Li, Yunshuang and Li, Yunzhu and Iwasawa, Yusuke and Matsuo, Yutaka and Ma, Zehan and Xu, Zhuo and Cui, Zichen Jeff and Zhang, Zichen and Lin, Zipeng},
  booktitle={2024 IEEE International Conference on Robotics and Automation (ICRA)}, 
  title={Open X-Embodiment: Robotic Learning Datasets and RT-X Models : Open X-Embodiment Collaboration0}, 
  year={2024},
  volume={},
  number={},
  pages={6892-6903},
  keywords={Learning systems;Adaptation models;Computer vision;Computational modeling;Collaboration;Data models;Task analysis},
  doi={10.1109/ICRA57147.2024.10611477}
}

@inproceedings{seo2022pretraining,
  title={Reinforcement Learning with Action-Free Pre-Training from Videos},
  author={Seo, Younggyo and Lee, Kimin and James, Stephen and Abbeel, Pieter},
  booktitle={International Conference on Machine Learning},
  year={2022}
}

@inproceedings{lu2022revisiting,
    title={Revisiting Design Choices in Offline Model Based Reinforcement Learning},
    author={Cong Lu and Philip Ball and Jack Parker-Holder and Michael Osborne and Stephen J. Roberts},
    booktitle={International Conference on Learning Representations},
    year={2022},
    url={https://openreview.net/forum?id=zz9hXVhf40}
}

@article{
lu2023challenges,
title={Challenges and Opportunities in Offline Reinforcement Learning from Visual Observations},
author={Cong Lu and Philip J. Ball and Tim G. J. Rudner and Jack Parker-Holder and Michael A Osborne and Yee Whye Teh},
journal={Transactions on Machine Learning Research},
issn={2835-8856},
year={2023},
url={https://openreview.net/forum?id=1QqIfGZOWu},
note={}
}

@misc{he2023surveyofflinemodelbasedreinforcement,
      title={A Survey on Offline Model-Based Reinforcement Learning}, 
      author={Haoyang He},
      year={2023},
      eprint={2305.03360},
      archivePrefix={arXiv},
      primaryClass={cs.LG},
      url={https://arxiv.org/abs/2305.03360}, 
}

@inproceedings{kumar2020cql,
 author = {Kumar, Aviral and Zhou, Aurick and Tucker, George and Levine, Sergey},
 booktitle = {Advances in Neural Information Processing Systems},
 editor = {H. Larochelle and M. Ranzato and R. Hadsell and M.F. Balcan and H. Lin},
 pages = {1179--1191},
 publisher = {Curran Associates, Inc.},
 title = {Conservative Q-Learning for Offline Reinforcement Learning},
 url = {https://proceedings.neurips.cc/paper_files/paper/2020/file/0d2b2061826a5df3221116a5085a6052-Paper.pdf},
 volume = {33},
 year = {2020}
}

@InProceedings{fujimoto2019bcq,
  title = 	 {Off-Policy Deep Reinforcement Learning without Exploration},
  author =       {Fujimoto, Scott and Meger, David and Precup, Doina},
  booktitle = 	 {Proceedings of the 36th International Conference on Machine Learning},
  pages = 	 {2052--2062},
  year = 	 {2019},
  editor = 	 {Chaudhuri, Kamalika and Salakhutdinov, Ruslan},
  volume = 	 {97},
  series = 	 {Proceedings of Machine Learning Research},
  month = 	 {6},
  publisher =    {PMLR},
  pdf = 	 {http://proceedings.mlr.press/v97/fujimoto19a/fujimoto19a.pdf},
  url = 	 {https://proceedings.mlr.press/v97/fujimoto19a.html},
}

@InProceedings{parisi2022pretrain,
  title = 	 {The Unsurprising Effectiveness of Pre-Trained Vision Models for Control},
  author =       {Parisi, Simone and Rajeswaran, Aravind and Purushwalkam, Senthil and Gupta, Abhinav},
  booktitle = 	 {Proceedings of the 39th International Conference on Machine Learning},
  pages = 	 {17359--17371},
  year = 	 {2022},
  editor = 	 {Chaudhuri, Kamalika and Jegelka, Stefanie and Song, Le and Szepesvari, Csaba and Niu, Gang and Sabato, Sivan},
  volume = 	 {162},
  series = 	 {Proceedings of Machine Learning Research},
  month = 	 {7},
  publisher =    {PMLR},
  pdf = 	 {https://proceedings.mlr.press/v162/parisi22a/parisi22a.pdf},
  url = 	 {https://proceedings.mlr.press/v162/parisi22a.html},
}

@InProceedings{stooke2021decoupling,
  title = 	 {Decoupling Representation Learning from Reinforcement Learning},
  author =       {Stooke, Adam and Lee, Kimin and Abbeel, Pieter and Laskin, Michael},
  booktitle = 	 {Proceedings of the 38th International Conference on Machine Learning},
  pages = 	 {9870--9879},
  year = 	 {2021},
  editor = 	 {Meila, Marina and Zhang, Tong},
  volume = 	 {139},
  series = 	 {Proceedings of Machine Learning Research},
  month = 	 {7},
  publisher =    {PMLR},
  pdf = 	 {http://proceedings.mlr.press/v139/stooke21a/stooke21a.pdf},
  url = 	 {https://proceedings.mlr.press/v139/stooke21a.html},
}

@inproceedings{shah2021rrl,
  title={RRL: Resnet as representation for Reinforcement Learning},
  author={Shah, Rutav M and Kumar, Vikash},
  booktitle={International Conference on Machine Learning},
  pages={9465--9476},
  year={2021},
  organization={PMLR}
}

@InProceedings{wolczyk24finetuning,
  title = 	 {Fine-tuning Reinforcement Learning Models is Secretly a Forgetting Mitigation Problem},
  author =       {Wolczyk, Maciej and Cupia{\l}, Bart{\l}omiej and Ostaszewski, Mateusz and Bortkiewicz, Micha{\l} and Zaj\k{a}c, Micha{\l} and Pascanu, Razvan and Kuci\'{n}ski, {\L}ukasz and Mi{\l}o\'{s}, Piotr},
  booktitle = 	 {Proceedings of the 41st International Conference on Machine Learning},
  pages = 	 {53039--53078},
  year = 	 {2024},
  editor = 	 {Salakhutdinov, Ruslan and Kolter, Zico and Heller, Katherine and Weller, Adrian and Oliver, Nuria and Scarlett, Jonathan and Berkenkamp, Felix},
  volume = 	 {235},
  series = 	 {Proceedings of Machine Learning Research},
  month = 	 {7},
  publisher =    {PMLR},
  pdf = 	 {https://raw.githubusercontent.com/mlresearch/v235/main/assets/wolczyk24a/wolczyk24a.pdf},
  url = 	 {https://proceedings.mlr.press/v235/wolczyk24a.html},
}

@misc{nair2021awac,
      title={AWAC: Accelerating Online Reinforcement Learning with Offline Datasets}, 
      author={Ashvin Nair and Abhishek Gupta and Murtaza Dalal and Sergey Levine},
      year={2021},
      eprint={2006.09359},
      archivePrefix={arXiv},
      primaryClass={cs.LG},
      url={https://arxiv.org/abs/2006.09359}, 
}

@article{precup1997multi,
  title={Multi-time models for temporally abstract planning},
  author={Precup, Doina and Sutton, Richard S},
  journal={Advances in neural information processing systems},
  volume={10},
  year={1997}
}

@inproceedings{erhan2010pretraining,
  title={Why does unsupervised pre-training help deep learning?},
  author={Erhan, Dumitru and Courville, Aaron and Bengio, Yoshua and Vincent, Pascal},
  booktitle={Proceedings of the thirteenth international conference on artificial intelligence and statistics},
  pages={201--208},
  year={2010},
  organization={JMLR Workshop and Conference Proceedings}
}

@inproceedings{liu2021apt,
 author = {Liu, Hao and Abbeel, Pieter},
 booktitle = {Advances in Neural Information Processing Systems},
 editor = {M. Ranzato and A. Beygelzimer and Y. Dauphin and P.S. Liang and J. Wortman Vaughan},
 pages = {18459--18473},
 publisher = {Curran Associates, Inc.},
 title = {Behavior From the Void: Unsupervised Active Pre-Training},
 url = {https://proceedings.neurips.cc/paper_files/paper/2021/file/99bf3d153d4bf67d640051a1af322505-Paper.pdf},
 volume = {34},
 year = {2021}
}

@article{bengio2013estimating,
  title={Estimating or propagating gradients through stochastic neurons for conditional computation},
  author={Bengio, Yoshua and L{\'e}onard, Nicholas and Courville, Aaron},
  journal={arXiv preprint arXiv:1308.3432},
  year={2013}
}

@article{pearson2019human,
  title={The human imagination: the cognitive neuroscience of visual mental imagery},
  author={Pearson, Joel},
  journal={Nature reviews neuroscience},
  volume={20},
  number={10},
  pages={624--634},
  year={2019},
  publisher={Nature Publishing Group UK London}
}

@phdthesis{deperrois2024learning,
  title={Learning to Dream, Dreaming to Learn},
  author={Deperrois, Nicolas Rouben Pascal},
  year={2024},
  school={Universit{\"a}t Bern}
}

@article{mattar2022planning,
  title={Planning in the brain},
  author={Mattar, Marcelo G and Lengyel, M{\'a}t{\'e}},
  journal={Neuron},
  volume={110},
  number={6},
  pages={914--934},
  year={2022},
  publisher={Elsevier}
}

@article{kudithipudi2022biological,
  title={Biological underpinnings for lifelong learning machines},
  author={Kudithipudi, Dhireesha and Aguilar-Simon, Mario and Babb, Jonathan and Bazhenov, Maxim and Blackiston, Douglas and Bongard, Josh and Brna, Andrew P and Chakravarthi Raja, Suraj and Cheney, Nick and Clune, Jeff and others},
  journal={Nature Machine Intelligence},
  volume={4},
  number={3},
  pages={196--210},
  year={2022},
  publisher={Nature Publishing Group UK London}
}

@article{laskin2020reinforcement,
  title={Reinforcement learning with augmented data},
  author={Laskin, Misha and Lee, Kimin and Stooke, Adam and Pinto, Lerrel and Abbeel, Pieter and Srinivas, Aravind},
  journal={Advances in neural information processing systems},
  volume={33},
  pages={19884--19895},
  year={2020}
}

@inproceedings{laskin2020curl,
  title={Curl: Contrastive unsupervised representations for reinforcement learning},
  author={Laskin, Michael and Srinivas, Aravind and Abbeel, Pieter},
  booktitle={International conference on machine learning},
  pages={5639--5650},
  year={2020},
  organization={PMLR}
}

@book{arfken2011mathematical,
  title={Mathematical methods for physicists: a comprehensive guide},
  author={Arfken, George B and Weber, Hans J and Harris, Frank E},
  year={2011},
  publisher={Academic press}
}

@inproceedings{duda2012maximal,
  title={From Maximal Entropy Random Walk to quantum thermodynamics},
  author={Duda, Jarek},
  booktitle={Journal of Physics: Conference Series},
  volume={361},
  number={1},
  pages={012039},
  year={2012},
  organization={IOP Publishing}
}

@inproceedings{talvitie2008local,
 author = {Talvitie, Erik and Singh, Satinder},
 booktitle = {Advances in Neural Information Processing Systems},
 editor = {D. Koller and D. Schuurmans and Y. Bengio and L. Bottou},
 pages = {},
 publisher = {Curran Associates, Inc.},
 title = {Simple Local Models for Complex Dynamical Systems},
 url = {https://proceedings.neurips.cc/paper_files/paper/2008/file/f76a89f0cb91bc419542ce9fa43902dc-Paper.pdf},
 volume = {21},
 year = {2008}
}

@inproceedings{khetarpal2021partial,
 author = {Khetarpal, Khimya and Ahmed, Zafarali and Comanici, Gheorghe and Precup, Doina},
 booktitle = {Advances in Neural Information Processing Systems},
 editor = {M. Ranzato and A. Beygelzimer and Y. Dauphin and P.S. Liang and J. Wortman Vaughan},
 pages = {1979--1991},
 publisher = {Curran Associates, Inc.},
 title = {Temporally Abstract Partial Models},
 url = {https://proceedings.neurips.cc/paper_files/paper/2021/file/0f3d014eead934bbdbacb62a01dc4831-Paper.pdf},
 volume = {34},
 year = {2021}
}

@inproceedings{alver2024partial,
  title={Partial Models for Building Adaptive Model-Based Reinforcement Learning Agents},
  author={Alver, Safa and Rahimi-Kalahroudi, Ali and Precup, Doina},
  booktitle = {Proceedings of the Conference on Lifelong Learning Agents (CoLLAs) 2024},
  year = {2024},
  url = {https://lifelong-ml.cc/Conferences/2024/acceptedpapersandvideos/conf-2024-49}
}

@software{EclecticSheep_and_Angioni_SheepRL_2023,
author = {{EclecticSheep} and Angioni, Davide and Belotti, Federico and Can Malli, Refik and Milesi, Michele},
license = {Apache-2.0},
month = may,
title = {{SheepRL}},
url = {https://github.com/Eclectic-Sheep/sheeprl/},
version = {0.5.7},
year = {2023}
}

@article{saglam2023la3p,
  title={Actor prioritized experience replay},
  author={Saglam, Baturay and Mutlu, Furkan B and Cicek, Dogan C and Kozat, Suleyman S},
  journal={Journal of Artificial Intelligence Research},
  volume={78},
  pages={639--672},
  year={2023}
}

@inproceedings{micheli2023iris,
    title={Transformers are Sample-Efficient World Models},
    author={Vincent Micheli and Eloi Alonso and Fran{\c{c}}ois Fleuret},
    booktitle={The Eleventh International Conference on Learning Representations },
    year={2023},
    url={https://openreview.net/forum?id=vhFu1Acb0xb}
}

@article{fujimoto2020equivalence,
  title={An equivalence between loss functions and non-uniform sampling in experience replay},
  author={Fujimoto, Scott and Meger, David and Precup, Doina},
  journal={Advances in neural information processing systems},
  volume={33},
  pages={14219--14230},
  year={2020}
}

@inproceedings{konda1999actorcritic,
 author = {Konda, Vijay and Tsitsiklis, John},
 booktitle = {Advances in Neural Information Processing Systems},
 editor = {S. Solla and T. Leen and K. M\"{u}ller},
 pages = {},
 publisher = {MIT Press},
 title = {Actor-Critic Algorithms},
 url = {https://proceedings.neurips.cc/paper_files/paper/1999/file/6449f44a102fde848669bdd9eb6b76fa-Paper.pdf},
 volume = {12},
 year = {1999}
}

@misc{nanda_2022, 
title={A Comprehensive Mechanistic Interpretability Explainer and Glossary}, 
url={https://neelnanda.io/glossary}, 
author={Nanda, Neel}, 
year={2022}, 
month={12}
}

@article{adebayo2018sanity,
  title={Sanity checks for saliency maps},
  author={Adebayo, Julius and Gilmer, Justin and Muelly, Michael and Goodfellow, Ian and Hardt, Moritz and Kim, Been},
  journal={Advances in neural information processing systems},
  volume={31},
  year={2018}
}

@inproceedings{jain2019attention,
  title={Attention is not Explanation},
  author={Jain, Sarthak and Wallace, Byron C},
  booktitle={Proceedings of the 2019 Conference of the North American Chapter of the Association for Computational Linguistics: Human Language Technologies, Volume 1 (Long and Short Papers)},
  pages={3543--3556},
  year={2019}
}

@inproceedings{wiegreffe2019attention,
  title={Attention is not not Explanation},
  author={Wiegreffe, Sarah and Pinter, Yuval},
  booktitle={Proceedings of the 2019 Conference on Empirical Methods in Natural Language Processing and the 9th International Joint Conference on Natural Language Processing (EMNLP-IJCNLP)},
  pages={11--20},
  year={2019}
}

@inproceedings{holzinger2022explainable,
  title={Explainable AI methods-a brief overview},
  author={Holzinger, Andreas and Saranti, Anna and Molnar, Christoph and Biecek, Przemyslaw and Samek, Wojciech},
  booktitle={International workshop on extending explainable AI beyond deep models and classifiers},
  pages={13--38},
  year={2022},
  organization={Springer}
}

@article{olah2020zoomcircuits,
  author = {Olah, Chris and Cammarata, Nick and Schubert, Ludwig and Goh, Gabriel and Petrov, Michael and Carter, Shan},
  title = {Zoom In: An Introduction to Circuits},
  journal = {Distill},
  year = {2020},
  note = {https://distill.pub/2020/circuits/zoom-in},
  doi = {10.23915/distill.00024.001}
}

@inproceedings{rauker2023toward,
  title={Toward transparent ai: A survey on interpreting the inner structures of deep neural networks},
  author={R{\"a}uker, Tilman and Ho, Anson and Casper, Stephen and Hadfield-Menell, Dylan},
  booktitle={2023 ieee conference on secure and trustworthy machine learning (satml)},
  pages={464--483},
  year={2023},
  organization={IEEE}
}

@article{locatello2020slot,
  title={Object-centric learning with slot attention},
  author={Locatello, Francesco and Weissenborn, Dirk and Unterthiner, Thomas and Mahendran, Aravindh and Heigold, Georg and Uszkoreit, Jakob and Dosovitskiy, Alexey and Kipf, Thomas},
  journal={Advances in neural information processing systems},
  volume={33},
  pages={11525--11538},
  year={2020}
}

@article{das2023state2explanation,
  title={State2explanation: Concept-based explanations to benefit agent learning and user understanding},
  author={Das, Devleena and Chernova, Sonia and Kim, Been},
  journal={Advances in Neural Information Processing Systems},
  volume={36},
  pages={67156--67182},
  year={2023}
}

@article{chen2020concept,
  title={Concept whitening for interpretable image recognition},
  author={Chen, Zhi and Bei, Yijie and Rudin, Cynthia},
  journal={Nature Machine Intelligence},
  volume={2},
  number={12},
  pages={772--782},
  year={2020},
  publisher={Nature Publishing Group UK London}
}

@inproceedings{
raman2023do,
title={Do Concept Bottleneck Models Obey Locality?},
author={Naveen Raman and Mateo Espinosa Zarlenga and Juyeon Heo and Mateja Jamnik},
booktitle={XAI in Action: Past, Present, and Future Applications},
year={2023},
url={https://openreview.net/forum?id=F6RPYDUIZr}
}
\end{document}